\documentclass{article} %
\usepackage{kohakulab_arxiv,times}
\ClearShipoutPicture

\usepackage[utf8]{inputenc}
\usepackage[T1]{fontenc}
\usepackage{booktabs}
\usepackage{amsfonts}
\usepackage{amsmath}
\usepackage{amssymb}
\usepackage{amsthm}

\theoremstyle{plain}
\newtheorem{theorem}{Theorem}[section]
\newtheorem{proposition}{Proposition}[section]
\newtheorem{corollary}{Corollary}[section]
\newtheorem{lemma}{Lemma}[section]
\theoremstyle{definition}
\newtheorem{assumption}{Assumption}[section]

\usepackage{nicefrac}
\usepackage{microtype}
\usepackage{xcolor}
\usepackage{colortbl}
\usepackage{graphicx}
\usepackage{float}
\usepackage{placeins}
\usepackage{adjustbox}
\usepackage{enumitem}
\usepackage{wrapfig}
\usepackage{multirow}
\usepackage{titletoc}

\usepackage{etoolbox}
\newlength{\tablecaptionskip}
\AtBeginEnvironment{table}{\setlength{\abovecaptionskip}{\tablecaptionskip}\setlength{\belowcaptionskip}{5pt plus 1pt minus 1pt}}
\newcommand{\bodyfloats}{%
  \renewcommand{\topfraction}{0.5}%
  \renewcommand{\bottomfraction}{0.7}%
  \renewcommand{\floatpagefraction}{0.8}%
  \setcounter{topnumber}{3}%
  \setcounter{bottomnumber}{1}%
  \setcounter{totalnumber}{4}%
  \setlength{\tablecaptionskip}{3pt plus 1pt minus 1pt}%
  \setlength{\abovecaptionskip}{5pt plus 1pt minus 1pt}%
  \setlength{\textfloatsep}{12pt plus 2pt minus 2pt}%
  \setlength{\floatsep}{12pt plus 2pt minus 2pt}}
\makeatletter
\newcommand{\appendixfloats}{%
  \renewcommand{\topfraction}{0.95}%
  \renewcommand{\bottomfraction}{0.95}%
  \renewcommand{\floatpagefraction}{0.9}%
  \setcounter{topnumber}{4}%
  \setcounter{bottomnumber}{2}%
  \setcounter{totalnumber}{6}%
  \setlength{\tablecaptionskip}{4pt plus 1pt minus 1pt}%
  \setlength{\abovecaptionskip}{5pt plus 1pt minus 1pt}%
  \setlength{\textfloatsep}{12pt plus 2pt minus 2pt}%
  \setlength{\floatsep}{5pt plus 1pt minus 1pt}%
  \setlength{\@fptop}{0pt}\setlength{\@fpsep}{\floatsep}\setlength{\@fpbot}{0pt plus 1fil}}
\makeatother
\bodyfloats

\definecolor{priorcolor}{RGB}{215,232,250}   %
\definecolor{regcolor}{RGB}{225,243,220}     %
\definecolor{ourscolor}{RGB}{255,236,200}    %
\definecolor{ownprior}{RGB}{170,200,240}     %
\definecolor{ownours}{RGB}{255,210,140}      %

\makeatletter
\newcommand{\figcaption}{\def\@captype{figure}\caption}
\newcommand{\tabcaption}{\def\@captype{table}\caption}
\makeatother

\definecolor{kbgreen}{RGB}{0,128,64}
\providecommand{\cyl}[1]{#1}
\providecommand{\kb}[1]{#1}

\newcommand{\method}{Trinity}
\newcommand{\Vrel}{V_{\mathrm{rel}}}
\newcommand{\hpwl}{\mathrm{HPWL}}
\newcommand{\nfe}{\mathrm{NFE}}

\usepackage{hyperref}
\makeatletter
\edef\kl@ptcsum{\pdfmdfivesum file {\jobname.ptc}}
\AtBeginDocument{\immediate\write\@auxout{\gdef\string\kl@ptcprev{\kl@ptcsum}}}
\AtEndDocument{\ifdefined\kl@ptcprev\ifnum\pdfstrcmp{\kl@ptcprev}{\kl@ptcsum}=0 \else
  \@latex@warning@no@line{Label(s) may have changed. Rerun to get cross-references right}\fi\fi}
\makeatother
\newcommand{\amk}[1]{\ensuremath{\kern0.3pt\vcenter{\hbox{\small$\scriptstyle#1$}}}}
\newcommand{\amkk}[2]{\ensuremath{\kern0.3pt\vcenter{\small\offinterlineskip\halign{$\scriptstyle##$\hfil\cr#1\cr\noalign{\kern0.8pt}#2\cr}}}}
\usepackage{cleveref}
\crefname{figure}{Fig.}{Figs.}
\Crefname{figure}{Fig.}{Figs.}
\crefname{table}{Table}{Tables}
\Crefname{table}{Table}{Tables}
\crefname{section}{Section}{Sections}
\Crefname{section}{Section}{Sections}
\crefname{subsection}{Section}{Sections}
\Crefname{subsection}{Section}{Sections}
\crefname{appendix}{Appendix}{Appendices}
\Crefname{appendix}{Appendix}{Appendices}
\usepackage{url}

\title{\hyphenpenalty=10000 \exhyphenpenalty=10000 \method: One Differentiable Physics for Training, Refining and Scoring Generative Floorplanners \vspace{-0.5em}}

\author{\begin{minipage}[t]{\dimexpr\textwidth-2\tabcolsep\relax}\vspace{0pt}\centering\normalfont\setlength{\baselineskip}{1.25\baselineskip}
\mbox{\textbf{Shih-Ying Yeh}\amkk{\heartsuit\spadesuit}{\ast\dagger}}\hskip 0.55em plus 0.3em\relax 
\mbox{\textbf{Tzu-Sian Wang}\amkk{\heartsuit\spadesuit}{\ast}}\hskip 0.55em plus 0.3em\relax 
\mbox{\textbf{Xuehai Wang}\amk{\clubsuit\triangle}}\hskip 0.55em plus 0.3em\relax 
\mbox{\textbf{Jia-Hua Lee}\amk{\bigstar}}\hskip 0.55em plus 0.3em\relax 
\mbox{\textbf{Daniel Z.~Kaplan}\amk{\diamondsuit}}\hskip 0.55em plus 0.3em\relax 
\mbox{\textbf{Ming-Qi Xu}\amk{\heartsuit}}\hskip 0.55em plus 0.3em\relax 
\mbox{\textbf{Wuqian Tang}\amk{\heartsuit}}\hskip 0.55em plus 0.3em\relax 
\mbox{\textbf{Chun-Yao Wang}\amk{\heartsuit}}\hskip 0.55em plus 0.3em\relax 
\mbox{\textbf{Shang-Hong Lai}\amk{\heartsuit}}\hskip 0.55em plus 0.3em\relax 
\mbox{\textbf{Chun-Yi Lee}\amk{\bigstar}}\\[\dimexpr0.6em+1.5pt\relax]
{\small \mbox{\amk{\heartsuit}National Tsing Hua University}\hskip 0.55em plus 0.3em\relax  \mbox{\amk{\spadesuit}Kohaku Lab}\hskip 0.55em plus 0.3em\relax  \mbox{\amk{\clubsuit}Karolinska Institutet}\hskip 0.55em plus 0.3em\relax  \mbox{\amk{\triangle}Stockholm University}\hskip 0.55em plus 0.3em\relax  \mbox{\amk{\diamondsuit}realiz.ai}\hskip 0.55em plus 0.3em\relax  \mbox{\amk{\bigstar}National Taiwan University}\\[0.4em]
\mbox{\amk{\ast}Equal contribution}\hskip 0.55em plus 0.3em\relax  \mbox{\amk{\dagger}Corresponding author: \texttt{kohaku@kblueleaf.net}}\\[-1.5pt]\mbox{Project page: \url{https://kohaku-lab.github.io/Trinity/}}}
\end{minipage}}

\iclrfinalcopy %
\begin{document}

\maketitle

\vspace{-2em}
\begin{abstract}
\vspace{-1em}
Floorplanning arranges the blocks of a chip and decides their shapes under objectives that press blocks together, short wirelength and a small outline, and constraints that hold them apart, non-overlap, clusters, MIB shapes and boundary blocks. Recent diffusion placers train on reference layouts alone and leave this coupled system to guidance, post-hoc loops and a legalizer, reporting only the endpoint, which hides what the generator contributes. We re-implement four of them under one recipe on FloorSet, score raw, refined and legalized layouts on one scale, and propose \method, a flow-matching floorplanner whose six differentiable functions for the constraints and objectives are its training loss term, the energy of a closed-form refiner after sampling and the base of a soft cost for every stage. The network thus learns the correction prior placers apply in their samplers, and sampling needs no guidance. Stage by stage, the training term lowers a plain transformer's raw soft cost by 26\% and matters most at short budgets, the shared refiner decides more of the final cost than the generator and matches a ported placer's loop in 16 to 660 times fewer steps, \method's refined soft cost is 36\% below the best ported pipeline, the soft cost ranks settings as the contest's hard cost does, and on the FloorSet val set the pipeline reaches a mean hard cost of 1.014 in 1.63 s per case.\vspace{-0.7em}
\end{abstract}

\begin{figure}[H]
  \centering
  \includegraphics[width=\linewidth]{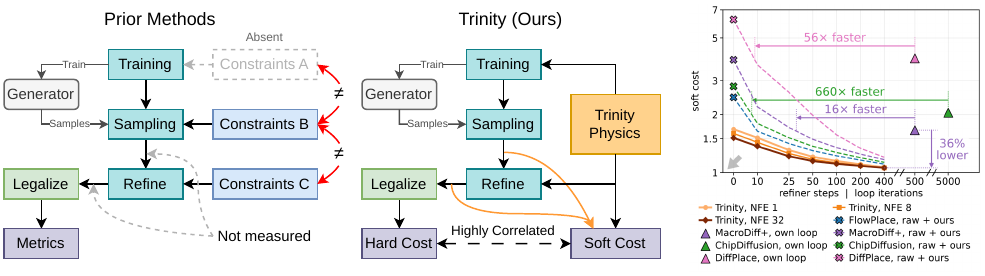}
  \vspace{-2em}
  \caption{\cyl{The network beside each pipeline is the generator that training updates and sampling evaluates. \textbf{Left:} In prior diffusion placers the constraints are absent from training (dashed), differ in form between stages ($\neq$), and the layouts in between are not measured. \textbf{Middle:} \method\ carries one set of constraint and objective functions, its differentiable physics, through training, refinement, and scoring. \textbf{Right:} Soft cost against refinement budget, each curve starting at the raw sample with zero refinement steps. \emph{Faster}: the step at which each prior generator under our refiner reaches the soft cost of its own full loop (triangle). \emph{Lower}: the refined soft cost of the best prior pipeline against ours.}}
  \label{fig:teaser}
   \vspace{-1em}
\end{figure}
\vspace{-0.95em}
\section{Introduction}
\label{sec:intro}
\vspace{-0.47em}

\cyl{Floorplanning is an early stage of the physical design flow in Electronic Design Automation (EDA). Given a netlist and each block's target area, floorplanning arranges the position and shape of every block on the chip, and the floorplan bounds what each later stage can achieve. A floorplanner pursues two objectives, \textit{short wirelength between connected blocks} and \textit{a small outline area}, and must reach them under hard constraints: a floorplan is feasible only if every block matches its target area, fixed-shape blocks retain their shapes, pre-placed blocks remain in place, and no two blocks overlap. Three soft constraints follow in practice: blocks of a cluster should stay adjacent, multi-instance blocks (MIB) should share one shape, and boundary blocks should touch the die boundary~\citep{iccad2026problemC}. The objectives and constraints pull in opposite directions, and the strongest floorplans sit where the hard constraints bind. Learning-based methods promise to amortize this per-instance optimization into a trained model. Floorplanning moreover suits a generative treatment: an instance admits many strong layouts rather than one optimum, and a generative model drafts complete layouts from this distribution~\citep{DBLP:conf/icml/LeeNEDAW25, trung2026diffplaceconditionaldiffusionframework, yoon2026physicsguidedgeometricdiffusionmacro, xie2026flowplaceflowmatchingchip}. However, such a model learns the distribution of reference layouts rather than the geometry of a single one: a network trained with a squared loss estimates a conditional mean of the reference layouts given its input, and a mean of feasible layouts is not necessarily feasible, since the feasible set is nonconvex. An infeasible draft leaves the constraints to corrections applied after sampling. Such corrections act on a landscape whose couplings belong to floorplanning itself rather than any model: as blocks compete for the same die area, a step that resolves an overlap can scatter a cluster or displace a pre-placed block, and a push toward feasibility may sacrifice the wirelength and outline area that the objectives demand. A generative floorplanner must therefore treat the constraints and objectives as one coupled system, and where this system enters the pipeline determines what the model learns.}

\cyl{Existing methods differ in where they place this coupled system, and each choice carries its own cost. Classic floorplanners place it inside the search procedure: simulated annealing~\citep{DBLP:conf/dac/WongL86} explores topological representations such as the sequence pair~\citep{DBLP:journals/tcad/MurataFNK96} and the B$^*$-tree~\citep{DBLP:conf/dac/ChangCWW00}, in which non-overlap holds by construction, and analytical placement~\citep{DBLP:journals/todaes/LuCCSHTC15, DBLP:journals/tcad/LinJGLDRKP21} minimizes a differentiable relaxation of overlap jointly with wirelength. Since every move faces the constraints and the objectives together, these solvers remain on the boundary, at the price of a search that begins anew on every netlist. Learning-based methods lift this per-instance cost, and the constraints begin to recede from training. Reinforcement learning stays nearest to the classic position, with the constraints written into its actions and the objectives into its reward~\citep{DBLP:journals/nature/MirhoseiniGYJSW21, DBLP:conf/nips/LaiM022, DBLP:conf/icml/LaiLTWH023}, although the two do not meet in one function that the policy learns, and the agent is retrained or fine-tuned for every new netlist. Regression abandons the constraints and trains on reference layouts alone~\citep{DBLP:journals/todaes/LiuJLDZWYZS23, DBLP:conf/kdd/AminiZPZHL22}, which leaves the model exposed to the conditional-mean difficulty above.}

\cyl{The recent diffusion placers~\citep{DBLP:conf/icml/LeeNEDAW25, trung2026diffplaceconditionaldiffusionframework, yoon2026physicsguidedgeometricdiffusionmacro, xie2026flowplaceflowmatchingchip} push this coupled system farthest from training: their losses only reproduce the reference layouts, and every correction arrives after the model. The first of these corrections, a hand-tuned guidance term, intervenes during sampling. The finished sample then enters post-hoc optimization loops, and the remaining violations are left to the legalizer. \Cref{fig:teaser} (left) draws this arrangement. These placers come from macro placement, a task where every shape is fixed: the shape freedom of floorplanning, and its soft constraints, stay outside their scope. The network's contribution is not measured stage by stage: their papers examine only the endpoint of the pipeline, and each stage expresses the constraints in its own form, which leaves the intermediate layouts without a common measure. At that endpoint, a weak generator behind a strong correction and a strong generator behind a weak one appear identical. The constraints thus migrate from the core of the classic search to the tail of the correction chain. Among the generative placers, the constraints have appeared everywhere except the training loss. Their place in that loss, and its effect on every later stage, nevertheless remain unexplored.}

\cyl{Given these observations, the remedy is to move the coupled system into training itself. The constraints can enter the training loss directly, since every objective and constraint of floorplanning measures a geometric quantity of the layout and admits a differentiable expression in the positions and shapes of the blocks. The overlap between two blocks, for instance, shrinks continuously as they separate and reaches zero exactly when the pair satisfies the non-overlap constraint. A weighted sum of these expressions defines an energy, one function that records the constraints and objectives together with the trade-off between them. The energy therefore assigns every layout a graded value rather than a binary indicator of feasibility. Analytical placement already minimizes an energy of this form that couples overlap with wirelength~\citep{DBLP:journals/tcad/KleinhansSJA91, DBLP:journals/todaes/LuCCSHTC15, DBLP:journals/tcad/LinJGLDRKP21}. A generative floorplanner can instead place such an energy in its training loss, where each gradient step responds to the reference layouts and the coupled system together. After training, the same expressions determine what a later refinement minimizes and provide a common measure for the layout that each stage produces (\Cref{fig:teaser}, middle). On this measure, the intermediate layouts reveal how much the generator and each later stage contribute, a distinction that the endpoint alone conceals.}

\cyl{To expose this distinction and place the coupled system in training rather than leave it to corrections after sampling, we propose \textbf{\method}, a flow-matching floorplanner whose six differentiable functions for the constraints and objectives are incorporated into its training loss, serve as the energy of its refiner, and form the basis of a soft cost that scores the layout of every stage. Trained with these functions, the network learns the correction that prior placers apply inside their samplers, and sampling runs without guidance (\Cref{sec:method}). Measured stage by stage, the pipeline reveals distinct roles for training and refinement: the training term contributes most when the refinement budget is short, and the refiner accounts for more of the final cost than the generator does. Added to the training of a plain transformer, the six functions alone lower the soft cost of its raw samples by 26\% (\Cref{sec:exp-generator}). With our refiner shared across the generators of four diffusion placers ported to our training recipe, their final costs converge despite a wide spread in raw costs, and on a ported placer's samples, the refiner reaches the soft cost of that placer's own loop in 16 to 660 times fewer iterations (\Cref{fig:teaser}, right). A comparison of generators therefore calls for a shared refiner. Against the ported pipelines with their own post-hoc loops, the full pipeline of \method\ reaches a refined soft cost 36\% below the best of them (\Cref{sec:exp-refine}). After legalization, \method\ remains ahead of these pipelines under the hard cost of the official contest~\citep{iccad2026problemC}, and on its validation set it reaches a mean hard cost of 1.014 at 1.63 seconds per case on one GPU with parallel legalization (\Cref{sec:exp-hard}).}

\vspace{-0.47em}
\textbf{Contributions.}
\cyl{\textbf{(1)} We locate the limit of generative placers in the position of their physics rather than the strength of their networks: the coupled objectives and constraints of floorplanning enter every stage except the training loss. \method\ instead trains on the coupled system directly, with one energy built from six differentiable functions of the layout, and feasibility becomes a property the network learns rather than one that corrections restore. \textbf{(2)} We establish the theory behind this choice: a network trained on reference layouts alone estimates a conditional mean that the nonconvex feasible set does not necessarily contain, whereas the energy term in training approximates constraint guidance inside the sampling loop at no sampling cost. The same analysis selects the clean prediction as the head, the one output an energy can score, and experiment confirms each prediction. \textbf{(3)} We replace the hand-tuned correction loops of prior placers with a refiner that descends the training energy itself, every gradient in closed form. One refiner serves every generator: it carries \method\ and each ported placer below the final cost their own loops reach, and supplies the common ground for comparing generators. \textbf{(4)} We extend the scale of the contest beyond feasibility: our soft cost keeps the objective factor of the hard cost and admits overlap and soft-constraint severity as smooth penalties, and every stage's layout shares one scale. Since the soft cost agrees with the hard cost on which settings perform better, the sampling and refinement budget can be chosen before the legalizer runs. \textbf{(5)} We lift the fixed-shape assumption that diffusion placers inherited from macro placement: \method\ treats positions and shapes alike and covers the full constraint family of floorplanning, from soft and pre-placed blocks to clusters, boundary blocks, and shared multi-instance shapes, and on the contest's validation set the full pipeline surpasses the classical annealer at a fraction of its runtime.}

\vspace{-0.95em}
\section{Related Work}\label{sec:related-work}
\vspace{-0.47em}
\cyl{Classical optimization and direct learned placement construct layouts through per-instance search, sequential decisions, or joint coordinate prediction: simulated annealing explores sequence pairs and B$^*$-trees~\citep{DBLP:conf/dac/WongL86, DBLP:journals/tcad/MurataFNK96, DBLP:conf/dac/ChangCWW00, DBLP:conf/dac/LinC01, DBLP:journals/tvlsi/AdyaM03}, analytical placement couples a continuous relaxation of overlap with legalization~\citep{DBLP:journals/tcad/KleinhansSJA91, DBLP:journals/tcad/ChenJHCC08, DBLP:journals/todaes/LuCCSHTC15, DBLP:journals/tcad/LinJGLDRKP21}, sequential construction commits one macro per decision under reinforcement learning~\citep{DBLP:journals/nature/MirhoseiniGYJSW21, DBLP:conf/nips/ChengY21, DBLP:conf/nips/LaiM022, DBLP:conf/icml/LaiLTWH023, DBLP:conf/icml/Geng0LXTYH0024, DBLP:conf/iclr/GengWLXTKYH025, ICLR2026_cbc1ad20, DBLP:conf/nips/Shi0L023}, and joint prediction regresses every block position from the netlist graph~\citep{DBLP:journals/todaes/LiuJLDZWYZS23, DBLP:conf/kdd/AminiZPZHL22}. Across the learned lines, block shapes stay fixed at placement time, whereas FloorSet requires the placer to settle a shape for every soft block~\citep{DBLP:conf/iccad/MallappaM0PM24}. Generative macro placement keeps the fixed-shape setting and learns layout distributions through noise or velocity prediction, with geometric corrections during or after sampling: velocity-based FlowPlace~\citep{xie2026flowplaceflowmatchingchip} applies overlap-free grid projection at each step, while the noise-based MacroDiff+~\citep{yoon2026physicsguidedgeometricdiffusionmacro}, ChipDiffusion~\citep{DBLP:conf/icml/LeeNEDAW25}, and DiffPlace~\citep{trung2026diffplaceconditionaldiffusionframework} steer sampling with overlap and wirelength gradient guidance before post-hoc loops and legalization. On the objective side, linear transformations between data and noise connect flow matching~\citep{DBLP:conf/iclr/LipmanCBNL23, DBLP:conf/iclr/LiuG023}, interpolants~\citep{DBLP:conf/eccv/MaGABVX24}, and noise prediction~\citep{DBLP:conf/nips/HoJA20, DBLP:conf/icml/NicholD21, DBLP:conf/iclr/SalimansH22}, LACE~\citep{DBLP:conf/iclr/ChenZZC24} and physics-informed diffusion~\citep{DBLP:conf/iclr/BastekSK25} supervise the clean prediction with constraint losses, and Diffusion Transformers~\citep{DBLP:conf/iccv/PeeblesX23} supply the backbone, conditioned on the netlist through Graphormer-style attention bias~\citep{DBLP:conf/nips/YingCLZKHSL21}, GraphGPS-style message passing~\citep{DBLP:conf/nips/RampasekGDLWB22}, and Laplacian positional encoding~\citep{DBLP:journals/corr/abs-2012-09699, DBLP:conf/nips/KreuzerBHLT21, DBLP:conf/iclr/DwivediL0BB22}. Following JiT~\citep{Li_2026_CVPR}, \method\ predicts the clean layout and trains on the velocity it implies. For variable-shape floorplanning, it combines netlist-conditioned clean prediction in a DiT backbone with constraint supervision through one set of geometric functions for training, refinement, and scoring, and it scores raw, refined, and legalized layouts separately. }\kb{\Cref{sec:appendix-related} }\cyl{surveys these lines in full.}

\vspace{-0.95em}
\section{Methodology}
\label{sec:method}
\vspace{-0.95em}
\subsection{Problem Formulation}
\label{sec:method-problem}
\vspace{-0.47em}

\cyl{The floorplanning problem takes as input $n$ blocks with target areas $a_i$, a weighted netlist $A_{ij}$ and the constraints of Section~\ref{sec:intro}. A layout assigns each block a box $(x_i, y_i, w_i, h_i)$ and is feasible when no two boxes overlap, every area is within 1\% of its target and fixed-shape and pre-placed blocks remain at their targets, while the three soft constraints are counted and not enforced. Every layout is scored, at every stage, by the relative gaps $\Delta_{\hpwl}$ and $\Delta_{\mathrm{area}}$ of its wirelength and outline area to the reference layout, by its overlap ratio and by one metric per constraint (Appendix~\ref{sec:appendix-scoring}), and two multiplicative aggregates of the same quantities rank every layout on one scale, as follows:}
\begin{equation}
\mathrm{Cost} = \bigl(1 + \tfrac{1}{2}(\Delta^{+}_{\hpwl} + \Delta^{+}_{\mathrm{area}})\bigr) e^{2\Vrel},
\quad
\mathrm{SoftCost} = \bigl(1 + \tfrac{1}{2}(\Delta^{+}_{\hpwl} + \Delta^{+}_{\mathrm{area}})\bigr) e^{2\,\mathrm{sev}_{\mathrm{rel}} + 10\,\mathrm{overlap}},
\label{eq:cost}
\end{equation}

\cyl{where $\Delta^{+}$ is the positive part of a gap. The \emph{hard cost} is defined as the cost of the ICCAD 2026 CAD Contest (Problem C)~\citep{iccad2026problemC}, unchanged, on feasible layouts only, with $\Vrel$ the fraction of soft constraints violated. The \emph{soft cost} keeps the objective factor, replaces the violated fraction $\Vrel$ by the mean severity $\mathrm{sev}_{\mathrm{rel}}$ of the soft constraints and penalizes overlap smoothly where the hard cost admits none, and is therefore defined on any layout, raw samples included.}

\vspace{-0.95em}
\subsection{Theoretical Analysis}
\label{sec:method-theory}
\vspace{-0.47em}

\cyl{The analysis grounds two design decisions before the framework (Section~\ref{sec:method-arch}): the latent that absorbs the area rule, and the training term that carries the energy prior placers defer to their samplers. The latent $z_i=(c_{x,i}/s,\, c_{y,i}/s,\, \rho_i)$, with the scale $s=\sqrt{\sum_i a_i}$, carries the normalized center and the log aspect ratio of block $i$, and the box decodes at the target area in closed form, as follows:\vspace{-0.2em}}
\begin{equation}
w_i = \sqrt{a_i}\,e^{\rho_i/2}, \qquad h_i = \sqrt{a_i}\,e^{-\rho_i/2}.
\label{eq:decode}
\vspace{-0.47em}
\end{equation}

\cyl{The decode is a bijection onto boxes of area $a_i$ and linear interpolation of $\rho$ keeps the area, and the area rule is the coordinate system rather than a constraint.} A head trained with a squared loss emits a conditional mean, and in the latent the mean decodes at area exactly $a_i$ while in width-height coordinates it decodes above $a_i$ unless the shape is deterministic (Proposition~\ref{prop:averaging}). 
\cyl{The feasible set $\mathcal{F}$ of the other constraints is not convex, so the posterior mean $\bar x_0=\mathbb{E}[x_0\mid x_t]$ is in general infeasible even for an exact denoiser, its squared distance from $\mathcal{F}$ at most $\operatorname{tr}\Sigma_{0|t}$ for feasible references, with $\Sigma_{0|t}=\operatorname{Cov}(x_0\mid x_t)$ (Proposition~\ref{prop:nonconvex}).} 
\cyl{A correction is therefore needed, and only its place is open.}

\cyl{The generator trains with rectified flow~\citep{DBLP:conf/iclr/LiuG023}, $x_t=(1-t)\,x_0+t\,x_1$, $x_1\sim\mathcal{N}(0,I)$, and predicts the clean layout $\hat x_0$. $U=U_{\mathrm{C}}+U_{\mathrm{O}}\ge 0$ denotes a smooth energy of the decoded layout, $U_{\mathrm{C}}$ its constraint terms, zero on feasible layouts, and $U_{\mathrm{O}}$ the objectives. Prior placers take gradient steps on $U$ in the sampler, whereas \method\ adds $\lambda\,U(\hat x_0)$ to the training loss (Eq.~\eqref{eq:aux}, Appendix~\ref{sec:appendix-theory}).}

\begin{proposition}\label{prop:body-prox}
If the network can represent any function of $(x,t)$, the minimizer of Eq.~\eqref{eq:aux} is $\hat x_0^{\star}=\operatorname{prox}_{\gamma_t U}(\bar x_0)$ with $\gamma_t=\lambda t^{2}/2$.
\end{proposition}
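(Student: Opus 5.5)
We need to guess Eq. (aux). The generator predicts $\hat x_0$ and trains on the velocity it implies. Under rectified flow, $x_t=(1-t)x_0+t x_1$, so $v=x_1-x_0=(x_t-x_0)/t$, and the implied prediction is $\hat v=(x_t-\hat x_0)/t$. The velocity loss is then
\[
\|\hat v-v\|^2=\|\hat x_0-x_0\|^2/t^2 .
\]
I take Eq.~\eqref{eq:aux} to be $\mathcal{L}=\mathbb{E}_{t,x_0,x_1}\bigl[\|\hat v-v\|^2+\lambda\,U(\hat x_0(x_t,t))\bigr]$; the constants must be checked against the appendix. With this weighting the claimed $\gamma_t=\lambda t^2/2$ comes out: multiplying the per-$t$ objective by $t^2/2$ gives $\tfrac12\|\hat x_0-x_0\|^2+\tfrac{\lambda t^2}{2}U(\hat x_0)$. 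If the appendix instead writes the loss directly in $x_0$-space, I would adjust the constant accordingly.

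The plan has three steps.

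\textbf{Step 1: reduce to a pointwise problem.} Since the network can represent any function of $(x,t)$, minimizing the expected loss reduces to minimizing, for almost every fixed $(x_t,t)$, the conditional objective
\[
J(y)=\mathbb{E}\bigl[\|y-x_0\|^2\mid x_t\bigr]/t^2+\lambda U(y)
\]
over $y\in\mathbb{R}^d$. I would justify this by the tower property: the loss equals $\mathbb{E}_{x_t,t}[J(\hat x_0(x_t,t))]$. A pointwise minimizer, chosen measurably, then attains the infimum.

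\textbf{Step 2: bias--variance split.} Expand
\[
\mathbb{E}\bigl[\|y-x_0\|^2\mid x_t\bigr]=\|y-\bar x_0\|^2+\operatorname{tr}\Sigma_{0|t}.
\]
The cross term vanishes because $\bar x_0=\mathbb{E}[x_0\mid x_t]$. The trace term does not depend on $y$, so
\[
\arg\min_y J=\arg\min_y\Bigl\{\tfrac12\|y-\bar x_0\|^2+\tfrac{\lambda t^2}{2}U(y)\Bigr\},
\]
which is by definition $\operatorname{prox}_{\gamma_t U}(\bar x_0)$ with $\gamma_t=\lambda t^2/2$.

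\textbf{Step 3: well-definedness of the prox.} This is the main obstacle. $U$ is smooth and nonnegative but not convex, since overlap energies are nonconvex, so the prox may be set-valued. Existence follows from coercivity: $\tfrac12\|y-\bar x_0\|^2\to\infty$ while $U\ge0$, and $U$ is continuous, so a minimizer exists. For uniqueness I would either read the statement as membership in the (possibly set-valued) prox, or invoke weak convexity: if $\nabla U$ is $L$-Lipschitz, then for $\gamma_t L<1$ the prox objective is strongly convex and the prox is single-valued. That covers small $t$ or small $\lambda$. I would state this caveat explicitly, since the decode of Eq.~\eqref{eq:decode} introduces exponentials whose Lipschitz constants are only local.
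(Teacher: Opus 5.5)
Your proposal is correct and follows essentially the paper's own proof. The clean head's velocity loss reduces to $t^{-2}\lVert m-x_0\rVert^2$, the conditional bias--variance split drops the $m$-independent $\operatorname{tr}\Sigma_{0|t}$, and rescaling by $t^2/2$ gives the prox objective, whose uniqueness the paper obtains from weak convexity ($\nabla^2U\succeq-\beta I$ with $\gamma_t\beta<1$, implied by your $\gamma_t L<1$). Your guess of Eq.~\eqref{eq:aux} matches the paper's, which also notes that averaging the flow loss over the $c$ latent coordinates, as implemented, rescales the step to $\gamma_t=c\lambda t^2/2$.
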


\begin{samepage}
\begin{proposition}\label{prop:body-loop}
Let a sampler apply $k$ gradient steps of size $\eta$ on $U$ to $\bar x_0$ at every step. 
\cyl{(i) The exact guidance admits the closed form $\nabla\log\mathbb{E}[e^{-U(x_0)}\mid x_t]=\frac{1-t}{t^{2}}\bigl(\tilde x_0-\bar x_0\bigr)$, with $\tilde x_0$ the posterior mean reweighted by $e^{-U}$; the loop's step $-\nabla U(\bar x_0)$ replaces this difference of means with a clean-space gradient, a substitution whose error persists even as $\Sigma_{0|t}$ vanishes. (ii) At a reference layout $x^{*}$, $\nabla U_{\mathrm{C}}(x^{*})=0$, so the $k$ gradient steps displace $x^{*}$ by $k\eta\lVert\nabla U_{\mathrm{O}}(x^{*})\rVert$ to first order in the total step, with $U_{\mathrm{O}}$ the wirelength and area part of $U$, independent of $t$ and nonzero whenever the objectives do not vanish at $x^{*}$, while the proximal displacement $\gamma_t\lVert\nabla U(\hat x_0^{\star})\rVert$ vanishes quadratically with $t$.}
\end{proposition}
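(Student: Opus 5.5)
Part (i) is an instance of Tweedie's formula under the rectified-flow interpolant. Part (ii) is a first-order expansion plus the proximal optimality condition from Proposition~\ref{prop:body-prox}. I would take the two in order.

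\textbf{Step 1 (closed form of the exact guidance).} Write $p_t(x_t)=\int p_0(x_0)\,\mathcal{N}(x_t;(1-t)x_0,t^2I)\,dx_0$ and the tilted marginal $p^U_t(x_t)\propto\int p_0(x_0)e^{-U(x_0)}\mathcal{N}(x_t;(1-t)x_0,t^2I)\,dx_0$. Then
\[
\nabla\log\mathbb{E}[e^{-U(x_0)}\mid x_t]=\nabla\log p^U_t(x_t)-\nabla\log p_t(x_t).
\]
Differentiate under the integral, using $\nabla_{x_t}\log\mathcal{N}=-(x_t-(1-t)x_0)/t^2$. Each score is then $-(x_t-(1-t)m)/t^2$, with $m$ the posterior mean under the corresponding weighting: $\bar x_0$ for $p_t$ and $\tilde x_0$ for $p^U_t$. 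The $x_t$ terms cancel in the difference, which leaves $\frac{1-t}{t^2}(\tilde x_0-\bar x_0)$.

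To see that the substitution error persists, take a degenerate posterior $\Sigma_{0|t}\to0$, i.e.\ a point mass at $\bar x_0$. The reweighting then has no effect, so $\tilde x_0=\bar x_0$ and the exact guidance vanishes. The loop step $-\nabla U(\bar x_0)$ is nonzero whenever $\bar x_0$ is not stationary for $U$. Hence the gap between the two stays bounded away from zero as $\Sigma_{0|t}$ vanishes. For intermediate cases I would add a cumulant expansion, $\tilde x_0-\bar x_0\approx-\Sigma_{0|t}\nabla U(\bar x_0)$, to show that the correct direction is covariance-weighted. A clean-space gradient matches it only when $\Sigma_{0|t}$ is proportional to the identity, with a specific $t$-dependent scale.

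\textbf{Step 2 (loop displacement at a reference layout).} A reference $x^*$ is feasible, so $U_{\mathrm C}\ge0$ attains its minimum $0$ there. Since $U_{\mathrm C}$ is smooth, this gives $\nabla U_{\mathrm C}(x^*)=0$, and therefore $\nabla U(x^*)=\nabla U_{\mathrm O}(x^*)$. Iterate $x^{(j+1)}=x^{(j)}-\eta\nabla U(x^{(j)})$ with $x^{(0)}=x^*$. By Lipschitz continuity of $\nabla U$,
\[
x^{(k)}-x^*=-k\eta\nabla U_{\mathrm O}(x^*)+O\bigl((k\eta)^2\bigr).
\]
The displacement is thus $k\eta\lVert\nabla U_{\mathrm O}(x^*)\rVert$ to first order. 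Neither $\eta$, $k$, nor the gradient depends on $t$, and the displacement is nonzero exactly when $\nabla U_{\mathrm O}(x^*)\neq0$. This is the sense in which I read ``the objectives do not vanish''.

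\textbf{Step 3 (proximal displacement).} By Proposition~\ref{prop:body-prox}, the optimality condition of the prox gives $\hat x_0^\star-\bar x_0=-\gamma_t\nabla U(\hat x_0^\star)$. The displacement is therefore $\gamma_t\lVert\nabla U(\hat x_0^\star)\rVert=\frac{\lambda t^2}{2}\lVert\nabla U(\hat x_0^\star)\rVert$. To call this quadratic in $t$, the gradient factor must stay bounded as $t\to0$. I would argue this from two facts. First, $\hat x_0^\star$ stays in a bounded set, because the prox is nonexpansive and $\bar x_0\to x_0$. Second, $\nabla U$ is continuous, hence bounded on compact sets.

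\textbf{Main obstacle.} The delicate point is the ``persists'' claim in (i), which needs a precise sense of the limit. I would state it on the degenerate-posterior case as above: the exact guidance vanishes while the loop step does not. I would not attempt a uniform bound.
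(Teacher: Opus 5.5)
Your argument for (i) and for the loop displacement in (ii) is the paper's. You insert the channel score $((1-t)x_0-x)/t^2$ into the difference of tilted and untilted scores and the $x_t$ terms cancel (Lemma~\ref{lem:tilt} and Corollary~\ref{thm:bias}). You use the point-mass example, in which $\tilde x_0=\bar x_0$ while $-\nabla U(\bar x_0)\neq 0$. For the loop you use the first-order condition at a zero of $U_{\mathrm{C}}$ followed by a Lipschitz expansion of the iterates (Lemma~\ref{lem:foc}, Theorem~\ref{thm:tradeoff}(i)); the paper makes your $O((k\eta)^2)$ explicit as $\tfrac12 L\lVert\nabla U_{\mathrm{O}}(x^{*})\rVert(k\eta)^2e^{k\eta L}$.

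You depart from the paper only in the proximal clause. The paper (Theorem~\ref{thm:tradeoff}(iii)) compares the prox objective at $\hat x_0^{\star}$ with its value at the candidate $\bar x_0$ and uses $U\ge 0$ to get $\lVert\hat x_0^{\star}-\bar x_0\rVert^2\le 2\gamma_t U(\bar x_0)$. This needs only that $U(\bar x_0)$ stay bounded and no control of $\nabla U$, but it gives a displacement of order $\sqrt{\gamma_t}\propto t$. You instead read the displacement off the optimality condition as $\gamma_t\lVert\nabla U(\hat x_0^{\star})\rVert$ and bound the gradient. That is what actually yields the $t^{2}$ rate the statement asserts, at the price of a boundedness argument.

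As written, that boundedness argument needs two repairs. First, for a $\beta$-weakly convex $U$ the prox is $(1-\gamma_t\beta)^{-1}$-Lipschitz rather than nonexpansive; this is harmless, since the constant tends to $1$. Second, ``$\bar x_0\to x_0$'' has no meaning at a fixed $x$. What you need is the hypothesis that $\bar x_0(x,t)$ stays in a bounded set, for instance data of bounded support so that the posterior mean lies in its convex hull; this is the analogue of the paper's proviso on $U(\bar x_0)$.

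The paper's inequality then closes your step directly. It keeps $\hat x_0^{\star}$ within $\sqrt{2\gamma_t U(\bar x_0)}$ of that bounded set, and on a neighbourhood of it the continuous $\nabla U$ is bounded. The two routes together give the quadratic rate.
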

\end{samepage}

\cyl{Training with $U$ thus turns the network into a proximal step on $U$ at no cost in sampling, with a step size that decays as $t^{2}$ and vanishes at $t=0$. $U_{\mathrm{C}}$ is zero on the data (up to the references' own residual violations, Figure~\ref{fig:theory}c): the constraint term moves no feasible reference, and $U_{\mathrm{O}}$ shifts each prediction toward the objectives by $\gamma_t\lVert\nabla U_{\mathrm{O}}\rVert$ to first order.} 
\cyl{A loop inside the sampler instead substitutes clean-space gradients for the exact guidance at every $t>0$ and moves even a reference layout}, and Section~\ref{sec:exp-refine} measures the step and the displacement. 
\cyl{The training step and the refiner step are one operation on $U$ at different noise levels $t$, and $U$ also measures how far a layout is from feasibility. One set of six functions therefore serves training, refinement and evaluation.}

\vspace{0em}
\subsection{Framework Overview}
\label{sec:method-physics}
\label{sec:method-arch}
\label{sec:method-loss}
\vspace{-0.47em}

\cyl{The framework shares a single set of physics functions across all three components: the generator trains on them, the refiner descends them, and the soft cost derived from them scores every stage on one scale (Figure~\ref{fig:arch}).} 
\cyl{\method\ expresses each constraint and objective once as a function $\mathcal{E}_k$ of the decoded boxes, positions and shapes alike and dimensionless in the layout scale $s$: a constraint function is zero when its rule holds, the intersection area over every pair of blocks, the gap to contact along a spanning tree of each cluster, the deviation of each MIB member from the median aspect of its group, and the distance of every boundary-coded side to its edge, while the wirelength and the outline area carry the objectives (Appendix~\ref{sec:appendix-refiner-derivatives}).} The weighted sum $U=\sum_k\omega_k\,\mathcal{E}_k$ is the physics term of the generator's loss, the energy the refiner descends and the base of the soft cost of Section~\ref{sec:method-problem} (Figure~\ref{fig:teaser}, middle). 
\cyl{It depends only on the constraint annotations and the netlist and never on the reference layout, and thus supervises the network with the rules rather than the reference answer.}

\begin{figure}[t]
  \centering
  \vspace{-1em}
  \includegraphics[width=0.96\linewidth]{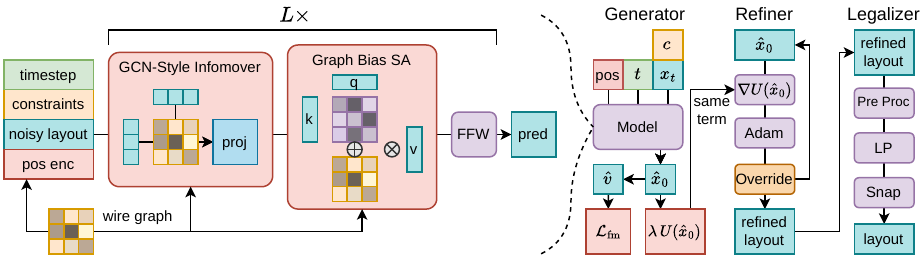}
  \vspace{-0.3em}\caption{\cyl{\textbf{Architecture} (left): $L$ blocks of the information mover on the wire graph, graph-biased attention and a feed-forward layer. \textbf{Pipeline} (right): each draw passes the generator, the refiner and the legalizer, top to bottom in each column. \textbf{Key components:} one physics term $U$ enters the loss on $\hat x_0$ and serves as the refiner's energy, and the legalizer is one linear program with a contact snap. $c$ constraints, $t$ time step, pos spectral coordinates, $x_t$ noisy layout, $\hat v$ its implied velocity, $\mathcal{L}_{\mathrm{fm}}$ the flow-matching loss, FFW the feed-forward layer, proj the projection, Pre Proc the relation extraction.}}
  \label{fig:arch}
  \vspace{-1em}
\end{figure}

\textbf{Generator.} \cyl{The generator drafts complete layouts from noise, conditioned on the netlist, and carries the physics term in its training loss.}
\cyl{A DiT \citep{DBLP:conf/iccv/PeeblesX23} with one token per block and the time step as a token \citep{Bao_2023_CVPR}.} The netlist enters as an additive attention bias \citep{DBLP:conf/nips/YingCLZKHSL21}, through the information mover, a gated one-hop graph convolution on the residual stream \citep{DBLP:conf/iclr/KipfW17, DBLP:conf/nips/RampasekGDLWB22}, and as two Laplacian eigenvectors in the input coordinates \citep{DBLP:journals/corr/abs-2012-09699, DBLP:conf/nips/KreuzerBHLT21} (Appendix~\ref{sec:appendix-architecture}). Following JiT \citep{Li_2026_CVPR}, the network predicts the clean layout $\hat x_0$ and trains on its implied velocity, 
\cyl{with $t$ logit-normal \citep{DBLP:conf/icml/EsserKBEMSLLSBP24}, $m$ the padding mask and the weights $\lambda$, $\omega_k$ of Section~\ref{sec:exp-setup}, as follows:\vspace{-0.2em}}
\begin{equation}
\mathcal{L} = \mathbb{E}_{t,\,x_0,\,x_1}\bigl[\mathcal{L}_{\mathrm{fm}} + \lambda\,U(\hat x_0)\bigr],
\qquad
\mathcal{L}_{\mathrm{fm}} = \bigl\|\bigl(\tfrac{x_t - \hat x_0}{t} - (x_1 - x_0)\bigr)\odot m\bigr\|_2^2.
\label{eq:aux}
\end{equation}
The clean head keeps the physics term inexpensive, since the loss needs a decoded layout and $\hat x_0$ is one, whereas a velocity head reaches the same minimizer through a gradient scaled by $-t$ and a target that still contains the noise (Proposition~\ref{prop:head}), which Section~\ref{sec:exp-generator} measures. Sampling follows plain Euler in uniform steps from $t=1$ to $0$ over $\nfe$ evaluations with no guidance term, and after every step the known answers of the instance enter the sample, a fixed-shape block keeping its $\rho_i$, a pre-placed block all three coordinates and the blocks of a MIB group the mean of their predicted $\rho_i$.

\textbf{Refiner.} \cyl{The refiner descends the training energy explicitly once sampling ends, at negligible cost.}
\cyl{From each sample at $t=0$, the refiner takes $K$ steps of Adam~\citep{DBLP:conf/iclr/KingmaB15} on $U$ under a heavier overlap weight than training, every gradient evaluated in closed form and the same sampling overrides written back after every step. Refinement thus resumes the descent of $U$ where the proximal step of Proposition~\ref{prop:body-prox} has vanished at $t=0$ (schedule, sweep and running cost in Appendix~\ref{sec:appendix-refiner}).}

\textbf{Legalizer.} \cyl{The legalizer converts each refined layout into a hard-feasible one while penalizing the displacement it introduces.}
\cyl{A sequence-pair relation, under which non-overlap holds by construction, is extracted for every pair of blocks from the refined layout, and one linear program positions the blocks at their decoded shapes under those relations, with the pre-placed blocks pinned, cluster members abutting along a spanning tree of the cluster and coded sides on the outline, and minimizes the chip outline plus the displacement from the refined positions~\citep{DBLP:journals/tcad/MurataFNK96, DBLP:conf/ispd/MurataK98}. An infeasible program first relaxes the soft rules, flips the conflicting relations and re-solves the program within a bounded number of rounds, and a final contact snap that moves a block only where no overlap appears completes each draw at a hard-feasible final layout (Appendix~\ref{sec:appendix-legalizer}).}

\section{Experimental Results}
\label{sec:experiments}
\vspace{-0.47em}

\cyl{The experiments evaluate every stage, since endpoint quality alone cannot identify the generator's contribution. Raw, refined, and legalized layouts therefore share the one scale of Section~\ref{sec:method-problem}, which tests whether pre-legalization quality persists in feasible layouts and separates generator quality from later optimization. Appendices~\ref{sec:appendix-generator} to~\ref{sec:appendix-seeds} hold the full tables, Appendix~\ref{sec:appendix-figures} every data view.}

\vspace{-0.95em}
\subsection{Experimental Setup}
\label{sec:exp-setup}
\vspace{-0.47em}
\textbf{Benchmarks/Datasets.}
\cyl{We train on MIB-consistent FloorSet-Lite~\citep{DBLP:conf/iccad/MallappaM0PM24}, whose 996,000 synthetic floorplans span 21 to 120 blocks per instance. Unless stated otherwise, every reported number on our held-out validation set of 12,000 cases averages four draws per case and then across all cases. We evaluate separately on the official FloorSet val set of 100 cases for the official comparison in Table~\ref{tab:official} and run one full GSRC evaluation as an external check in Table~\ref{tab:gsrc}.}

\vspace{-0.47em}
\textbf{Baselines.}
\cyl{We re-implement ChipDiffusion~\citep{DBLP:conf/icml/LeeNEDAW25}, DiffPlace~\citep{trung2026diffplaceconditionaldiffusionframework}, MacroDiff+~\citep{yoon2026physicsguidedgeometricdiffusionmacro}, and FlowPlace~\citep{xie2026flowplaceflowmatchingchip} as comparisons within the learned generative line. Under one training recipe, the ported placers retain their networks, objectives, and post-hoc loops while adopting our latent representation, conditioning, Euler sampler, and overrides, as specified in }\kb{\Cref{sec:appendix-ports}}\cyl{. The generator comparison excludes sampler-time guidance}\kb{, which \Cref{sec:appendix-guided} evaluates on its own, }\cyl{and grid projection, whereas Section~\ref{sec:exp-refine} includes the post-hoc loops under their released settings. To examine conditional-mean prediction, Reg-$z$ and Reg-$xywh$ predict latents and raw boxes on our backbone, with details in Appendix~\ref{sec:appendix-hard-lines}. The classical comparison uses the FloorSet team's PARSAC annealer \citep{mostafa2024parsacfasthumanqualityfloorplanning} only on the FloorSet val set, since its published budget of $10^7$ steps per worker is prohibitive at the 12,000-case scale, with details in Appendix~\ref{sec:appendix-classical}.}

\vspace{-0.47em}
\textbf{Metrics.}
\cyl{We evaluate every layout by the quantities in Section~\ref{sec:method-problem} for cross-stage comparison, with soft cost derived from the shared differentiable physics and hard cost restricted to feasible layouts after legalization. We also report FLD and MMD for the generator comparison as reference-distribution distances, with scoring details in Appendix~\ref{sec:appendix-scoring}. We report first-draw hard costs apart from best-of-$N$ hard costs selected by hard cost among the $N$ draws, with full tables in Appendices~\ref{sec:appendix-theory-measured} to~\ref{sec:appendix-size}.}

\vspace{-0.47em}
\textbf{Implementation Details.}
\cyl{Our three recurring comparisons distinguish the plain control, a plain transformer with a clean-sample head and no physics, from full \method\ with two graph components, six functions, and $\lambda{=}0.01$, and from \method\ at $\lambda{=}0$, the \method\ configuration without physics. All 66 models share a training recipe of 200,000 steps with batch size 256 and AdamW at an initial learning rate of $2\times10^{-4}$ with cosine decay. This shared recipe uses exponential moving average (EMA) weights and augmentation by rotation, flip, shift, constraint dropout, and wire dropout at ten GPU-hours per model on one RTX 5090, the machine behind every reported timing. The legalizer runs on 16 CPU workers in parallel, and every training $\omega_k$ equals one, with further training details in Appendix~\ref{sec:appendix-train}. We sample with $\nfe\in\{1,2,4,8,16,32\}$ and cap refinement at 400 Adam steps under a tenfold overlap weight, which keeps the sampling and refinement budgets distinct.}

\begin{table}[t]\centering\vspace{-1em}
\caption{\cyl{Raw samples: overlap ratio, wirelength (WL) and area gaps in percent, and soft cost. (a) The three heads on the plain transformer at $\nfe{=}32$ and $\nfe{=}1$. (b) The graph mechanisms added one at a time, without ($\lambda{=}0$) and with ($\lambda{=}0.01$) the six functions, and the last row is \method.}}
\label{tab:generator}
\scriptsize\setlength{\tabcolsep}{3pt}
\makebox[\linewidth]{\hfill%
\rotatebox[origin=c]{90}{(a) Objective}\hspace{2pt}%
\begin{tabular}{lrrrrr}
\toprule
Head & $\nfe$ & Overlap & WL & Area & Soft cost \\
\midrule
\rowcolor{ourscolor} $x_0$ (ours) & 32 & 0.026 & $-0.2$ & $+1.1$ & \textbf{2.11} \\
$v$ & 32 & 0.037 & $-0.5$ & $+1.2$ & 2.60 \\
$\epsilon$ & 32 & 0.036 & $-0.1$ & $+1.5$ & 2.39 \\
\midrule
\rowcolor{ourscolor} $x_0$ (ours) & 1 & 0.038 & $-1.5$ & $+0.9$ & \textbf{2.49} \\
$v$ & 1 & 0.250 & $+18$ & $+61$ & 77.2 \\
$\epsilon$ & 1 & 0.042 & $+591$ & $+2{,}030$ & 152 \\
\bottomrule
\end{tabular}\hfill%
\rotatebox[origin=c]{90}{(b) Graph mechanisms}\hspace{2pt}%
\begin{tabular}{lrrrrr}
\toprule
Backbone & $\lambda$ & Overlap & WL & Area & Soft cost \\
\midrule
Plain transformer & 0 & 0.026 & $-0.21$ & $+1.04$ & 2.09 \\
+ spectral encoding & 0 & 0.024 & $+0.12$ & $+1.91$ & 2.13 \\
+ information mover & 0 & 0.022 & $+0.22$ & $+1.81$ & \textbf{2.02} \\
\midrule
Plain transformer & 0.01 & 0.023 & $-0.15$ & $+0.20$ & 1.56 \\
+ spectral encoding & 0.01 & 0.023 & $-0.06$ & $+0.31$ & 1.59 \\
\rowcolor{ourscolor} + information mover & 0.01 & 0.023 & $-0.07$ & $+0.22$ & \textbf{1.51} \\
\bottomrule
\end{tabular}\hfill}
\vspace{-1.5em}
\end{table}

\vspace{-0.95em}
\subsection{Raw Generator Quality before Refinement}
\label{sec:exp-generator}
\label{sec:exp-prior}
\vspace{-0.47em}

\cyl{\Cref{tab:generator} compares the generator's design choices, while \Cref{tab:terms} reports the contribution of physics training to raw-sample quality.
The clean-sample head exposes a layout for direct energy evaluation, which ties its practical role to the theoretical choice of head.
The information mover complements this choice through improved constraint objectives.
Joint physics training improves every constraint metric without sacrificing the backbone's wirelength or area.
The resulting configuration combines the only head with a valid single-step layout and all six physics functions for the best trade-off.
These gains persist }\kb{down to the parameter count of the ported networks}\cyl{, where depth contributes more than width (\Cref{fig:size,tab:nfe,sec:appendix-size}).
The improvement also exceeds the variation across seeds in \Cref{sec:appendix-seeds,sec:appendix-generator}.
This stability thus supports the contribution of the shared differentiable physics at the raw stage, where its role in training is assessed ahead of any refinement.}

\begin{figure}[t]
\vspace{-1em}
\begin{minipage}[t]{0.40\linewidth}\centering
\scriptsize\setlength{\tabcolsep}{1.8pt}
\raisebox{\depth}[\totalheight][0pt]{\begin{tabular}{lrrrr}
\toprule
 & \multicolumn{4}{c}{Weight $\lambda$} \\
\cmidrule(lr){2-5}
Metric & none & 0.01 & 0.05 & 0.1 \\
\midrule
Overlap (\%) & 2.60 & 2.42 & 1.95 & \textbf{1.80} \\
Cluster gap (\smash{$10^{-3}$}) & 2.65 & 2.29 & 2.16 & \textbf{2.04} \\
MIB variance (\smash{$10^{-6}$}) & 8.75 & 3.51 & \textbf{0.69} & 1.80 \\
Boundary dist. (\smash{$10^{-3}$}) & 3.23 & 0.99 & \textbf{0.74} & 0.88 \\
WL gap (\%) & $-$0.21 & $-$0.20 & $-$0.41 & \textbf{$-$0.69} \\
Area gap (\%) & 1.04 & 0.63 & $-$0.13 & \textbf{$-$0.93} \\
\midrule
Soft cost & 2.09 & \textbf{1.56} & \textbf{1.56} & 1.63 \\
\bottomrule
\end{tabular}}
\tabcaption{\cyl{One physics function in the loss at a time, each one scored on its own metric, and all six together scored by the soft cost, on the plain control's raw samples at $\nfe{=}32$; bold marks the best weight in each metric row.}}
\label{tab:terms}
\end{minipage}\hfill
\begin{minipage}[t]{0.575\linewidth}\centering
\begin{minipage}[b]{0.535\linewidth}\centering
\scriptsize\setlength{\tabcolsep}{1.5pt}
\raisebox{\depth}[\totalheight][0pt]{\begin{tabular}{lrrrr}
\toprule
Generator & Params & WL & Area & Soft \\
\midrule
FlowPlace & 6.3M & $-0.2$ & $+0.9$ & 2.46 \\
ChipDiffusion & 6.3M & $-0.1$ & $+1.2$ & 2.80 \\
MacroDiff+ & 5.0M & $+0.2$ & $+2.4$ & 3.85 \\
DiffPlace & 23.9M & $-0.3$ & $+1.2$ & 6.23 \\
\midrule
\method\ 256/12 & 10.6M & $+0.1$ & $+0.3$ & 1.70 \\
\rowcolor{ourscolor} \method\ 768/12 & 92.9M & $-0.1$ & $+0.2$ & 1.51 \\
\midrule
\method\ 192/12 & 5.9M & $+0.1$ & $+0.3$ & 1.83 \\
\method\ 512/16 & 56.0M & $-0.2$ & $-0.2$ & 1.53 \\
\bottomrule
\end{tabular}%
}
\end{minipage}\hfill
\begin{minipage}[b]{0.452\linewidth}\centering
\includegraphics[width=\linewidth]{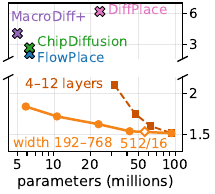}
\end{minipage}
\figcaption{\cyl{Left: parameters, wirelength and area gaps in percent and raw soft cost at $\nfe{=}32$ for the four prior placers and four \method\ rungs. Right: raw soft cost against parameters for the nine \method\ sizes and the four prior placers. The dashed rung varies depth (4 to 12 layers), the solid one width from 192 to 768.}}
\label{fig:size}
\end{minipage}
\vspace{-1em}
\end{figure}

\cyl{\Cref{tab:prior} compares raw layouts across generators, while \Cref{tab:nfe} reports whether the quality advantage persists under restricted sampling budgets.
Regressors average symmetric optima (\Cref{prop:nonconvex}), whereas the log-aspect latent preserves block areas (\Cref{prop:averaging}, \Cref{sec:appendix-hard-lines}).
Against the four ported placers, \method{} produces substantially better raw layouts and remains effective at the smallest evaluated sampling budgets, where the competing generators degrade severely or disperse their layouts.
PARSAC on the FloorSet val set illustrates the trade-off of annealing, which avoids overlap at the expense of wirelength and area.
Within the diffusion family, the models preserve the reference objectives, whereas the physics term distinguishes \method{} on overlap and soft constraints.
}\kb{Their own sampler-time guidance does not substitute for the physics: it lowers the term it targets, raises the terms it does not model, and leaves every guided sample above \method's free one (\Cref{tab:nfe,tab:refine}, \Cref{sec:appendix-guided}).
}\cyl{This separation identifies a training contribution before refinement and sets the raw starting point for assessing its short-budget benefit apart from the refiner's control of the endpoint.}

\begin{table}[tb]\centering
\caption{\cyl{Every method through its own pipeline at $\nfe{=}32$: the raw sample, the refined layout (own loop, released setting) and the legalized first draw; overlap ratio, WL and area gaps in percent, refined soft cost, $\Vrel$, hard cost as first / best of 4, and seconds per draw. FlowPlace and PARSAC run no loop. $^\ddagger$PARSAC on the FloorSet val set, one worker / best of 4, seconds per worker.}}
\label{tab:prior}
\scriptsize\setlength{\tabcolsep}{3.5pt}
\begin{tabular}{lrrrrrrrrrrrr}
\toprule
 & \multicolumn{3}{c}{Raw sample} & \multicolumn{4}{c}{Refined} & \multicolumn{4}{c}{Legalized} & \\
\cmidrule(lr){2-4}\cmidrule(lr){5-8}\cmidrule(lr){9-12}
Method & Overlap & WL & Area & Overlap & WL & Area & Soft & WL & Area & $\Vrel$ & Hard cost & s/draw \\
\midrule
\rowcolor{ourscolor} \textbf{\method} & 0.023 & $-0.1$ & $+0.2$ & 0.002 & $+0.3$ & $+1.1$ & \textbf{1.06} & $+9.3$ & $+10.5$ & 0.045 & \textbf{1.234} / \textbf{1.089} & \textbf{0.28} \\
\rowcolor{priorcolor} FlowPlace & 0.044 & $-0.2$ & $+0.9$ & -- & -- & -- & -- & $+18.5$ & $+21.6$ & 0.114 & 1.567 / 1.343 & 0.36 \\
\rowcolor{priorcolor} ChipDiffusion & 0.051 & $-0.1$ & $+1.2$ & 0.007 & $+5.3$ & $+11.7$ & 2.05 & $+15.5$ & $+18.2$ & 0.114 & 1.521 / 1.300 & 0.41 \\
\rowcolor{priorcolor} MacroDiff+ & 0.064 & $+0.2$ & $+2.4$ & 0.003 & $+7.2$ & $+12.7$ & 1.66 & $+15.9$ & $+18.3$ & 0.122 & 1.546 / 1.308 & 0.39 \\
\rowcolor{priorcolor} DiffPlace & 0.113 & $-0.3$ & $+1.2$ & 0.002 & $+6.7$ & $+29.9$ & 3.91 & $+13.9$ & $+19.1$ & 0.121 & 1.537 / 1.313 & 0.36 \\
PARSAC$^\ddagger$ & 0.000 & $+36.1$ & $+6.2$ & -- & -- & -- & -- & $+47.2$ & $+22.2$ & 0.091 & 1.685 / 1.348 & 1,120 \\
\bottomrule
\end{tabular}
\vspace{-1em}
\end{table}

\begin{figure}[t]
  \centering
  \vspace{-2em}
  \includegraphics[width=\linewidth]{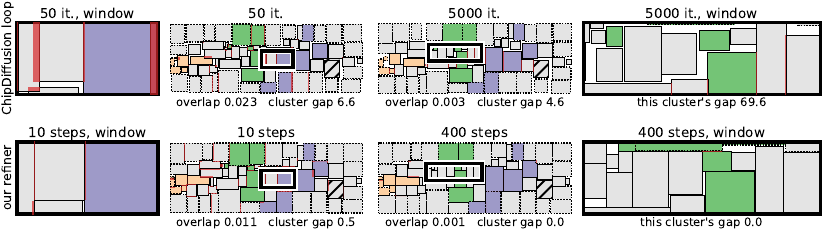}
  \caption{\cyl{One raw draw of a 67-block case under ChipDiffusion's loop (top) and our refiner (bottom), early and late, the boxed region magnified beside each layout: left where the loop's early state keeps the most overlap, right the cluster it opens most. Clusters coloured, pre-placed hatched, boundary-coded dashed, overlap red. Below each layout, the overlap ratio and the cluster gap in $10^{-3}$.}}
  \label{fig:motivation}
    \vspace{-0em}
\end{figure}
\begin{table}[t]
\vspace{-1em}
\begin{minipage}[b]{0.56\linewidth}\centering
\caption{\cyl{Raw soft cost at four sampling budgets $\nfe$, with FLD and MMD ($10^{-3}$) at $\nfe$ 32 / 1; the rungs 384 / 12 and 256 / 12 give width / depth, and the guided rows sample under each placer's own guidance.}}
\label{tab:nfe}
\scriptsize\setlength{\tabcolsep}{2.5pt}
\begin{tabular}{lrrrrrr}
\toprule
Generator & 1 & 4 & 8 & 32 & FLD & MMD \\
\midrule
\rowcolor{ourscolor} \textbf{\method} & \textbf{1.68} & \textbf{1.63} & \textbf{1.60} & \textbf{1.51} & \textbf{0.042} / \textbf{0.162} & \textbf{2.48} / \textbf{9.63} \\
\method\ at $\lambda{=}0$ & 2.43 & 2.25 & 2.16 & 2.02 & 0.112 / 0.426 & 21.7 / 35.1 \\
\method, 384 / 12 & 1.76 & 1.72 & 1.69 & 1.61 & 0.044 / 0.156 & 3.71 / 12.7 \\
\method, 256 / 12 & 1.81 & 1.78 & 1.76 & 1.70 & 0.055 / 0.157 & 5.00 / 10.7 \\
\rowcolor{priorcolor} FlowPlace & 11.5 & 5.09 & 3.82 & 2.46 & 0.227 / 3.77 & 24.4 / 350 \\
\rowcolor{priorcolor} ChipDiffusion & 152 & 11.9 & 3.83 & 2.80 & 0.273 / 102 & 31.2 / 1,223 \\
\rowcolor{priorcolor} MacroDiff+ & 150 & 71.7 & 9.89 & 3.85 & 0.624 / 102 & 81.5 / 1,216 \\
\rowcolor{priorcolor} DiffPlace & 150 & 99.5 & 14.6 & 6.23 & 2.86 / 102 & 247 / 1,230 \\
\addlinespace[2pt]
\rowcolor{priorcolor} ChipDiffusion, guided & 88.3 & 5.96 & 3.97 & 3.78 & 3.10 / 101 & 512 / 1,171 \\
\rowcolor{priorcolor} MacroDiff+, guided & 54.1 & 10.8 & 3.20 & 2.04 & 4.36 / 92.1 & 615 / 1,077 \\
\rowcolor{priorcolor} DiffPlace, guided & 158 & 228 & 418 & 132 & 19.7 / 102 & 594 / 1,231 \\
\bottomrule
\end{tabular}
\end{minipage}\hfill
\begin{minipage}[b]{0.42\linewidth}\centering
\caption{\cyl{Soft cost along our shared refiner at 0 (raw), 25, 100 and 400 steps; the guided rows start from guided samples. Bold marks the best soft cost per column.}}
\label{tab:refine}
\scriptsize\setlength{\tabcolsep}{4pt}
\begin{tabular}{lrrrr}
\toprule
Generator, sampling budget & raw & 25 & 100 & 400 \\
\midrule
\rowcolor{ourscolor} \textbf{\method}, $\nfe$ 1 & \textbf{1.68} & \textbf{1.31} & \textbf{1.15} & \textbf{1.07} \\
\rowcolor{ourscolor} \textbf{\method}, $\nfe$ 8 & \textbf{1.60} & \textbf{1.26} & \textbf{1.12} & \textbf{1.06} \\
\rowcolor{ourscolor} \textbf{\method}, $\nfe$ 32 & \textbf{1.51} & \textbf{1.22} & \textbf{1.11} & \textbf{1.06} \\
\method\ at $\lambda{=}0$, $\nfe$ 1 & 2.43 & 1.35 & 1.16 & 1.07 \\
\rowcolor{priorcolor} FlowPlace, $\nfe$ 32 & 2.46 & 1.42 & 1.22 & 1.11 \\
\rowcolor{priorcolor} ChipDiffusion, $\nfe$ 32 & 2.80 & 1.52 & 1.27 & 1.13 \\
\rowcolor{priorcolor} MacroDiff+, $\nfe$ 32 & 3.85 & 1.71 & 1.35 & 1.17 \\
\rowcolor{priorcolor} DiffPlace, $\nfe$ 32 & 6.23 & 2.61 & 1.57 & 1.20 \\
\addlinespace[2pt]
\rowcolor{priorcolor} ChipDiffusion, guided & 3.78 & 1.83 & 1.29 & 1.15 \\
\rowcolor{priorcolor} MacroDiff+, guided & 2.04 & 1.59 & 1.32 & 1.22 \\
\rowcolor{priorcolor} DiffPlace, guided & 132 & 30.1 & 5.55 & 1.87 \\
\bottomrule
\end{tabular}
\end{minipage}
\vspace{-0.6em}
\end{table}

\begin{table}[t!]
\vspace{-0.5em}
\begin{minipage}[b]{0.352\linewidth}\centering
\caption{\cyl{Generator $\times$ loop first-draw hard cost at $\nfe{=}32$, columns short for the rows; none: no loop, darker cells a placer's own loop, bold best per column.}}
\label{tab:genref}
\scriptsize\setlength{\tabcolsep}{1.6pt}
\begin{tabular}{lrrrrr}
\toprule
Gen $\backslash$ loop & none & CD & DP & MD+ & ours \\
\midrule
\rowcolor{ourscolor} \textbf{\method} & \textbf{1.513} & \textbf{1.456} & \textbf{1.442} & \textbf{1.442} & \cellcolor{ownours}\textbf{1.234} \\
\rowcolor{priorcolor} FlowPlace & \cellcolor{ownprior}1.567 & 1.495 & 1.454 & 1.471 & 1.296 \\
\rowcolor{priorcolor} ChipDiffusion & 1.609 & \cellcolor{ownprior}1.521 & 1.483 & 1.492 & 1.322 \\
\rowcolor{priorcolor} DiffPlace & 1.731 & 1.587 & \cellcolor{ownprior}1.537 & 1.539 & 1.355 \\
\rowcolor{priorcolor} MacroDiff+ & 1.672 & 1.571 & 1.536 & \cellcolor{ownprior}1.546 & 1.352 \\
\bottomrule
\end{tabular}
\end{minipage}\hfill
\begin{minipage}[b]{0.312\linewidth}\centering
\caption{\cyl{FloorSet val set: the best of $N$ draws per budget pair, with the mean hard cost, sec per case, and WL and area gaps in \%.}}
\label{tab:official}
\scriptsize\setlength{\tabcolsep}{1.5pt}
\begin{tabular}{rrrrrrr}
\toprule
$\nfe$ & Steps & $N$ & Mean & sec & WL & Area \\
\midrule
32 & 400 & 48 & 1.015 & 1.86 & $-0.3$ & $-0.1$ \\
8 & 400 & 48 & \textbf{1.014} & 1.63 & $-0.4$ & $+0.0$ \\
8 & 400 & 16 & 1.019 & 1.14 & $-0.2$ & $+0.4$ \\
4 & 100 & 16 & 1.043 & 0.63 & $+0.3$ & $+1.7$ \\
1 & 50 & 16 & 1.115 & \textbf{0.48} & $+2.7$ & $+5.5$ \\
\bottomrule
\end{tabular}
\end{minipage}\hfill
\begin{minipage}[b]{0.308\linewidth}\centering
\caption{\cyl{GSRC HPWL in mm: bold best, underline second; $^{*}$our reproduction, $^{**}$their published numbers, $^{\dagger}$outline above GSRC's 110\% area bound.}}
\label{tab:gsrc}
\scriptsize\setlength{\tabcolsep}{1.2pt}
\begin{tabular}{lrrr}
\toprule
Method & n100 & n200 & n300 \\
\midrule
\rowcolor{ourscolor} \method & 294.7 & \underline{554.9} & 681.3$^{\dagger}$ \\
\method\ at $\lambda{=}0$ & \underline{293.9} & 560.1$^{\dagger}$ & 679.8$^{\dagger}$ \\
PARSAC, Bo16$^{*}$ & 303.5 & 576.0 & 709.3 \\
PARSAC, Bo112$^{**}$ & \textbf{288} & \textbf{526} & \textbf{627} \\
GoodFloorplan$^{**}$ & 319 & 575 & \underline{705} \\
\bottomrule
\end{tabular}
\end{minipage}
\end{table}

\begin{figure}[t!]
  \centering
  \vspace{-1.5em}
  \includegraphics[width=\linewidth]{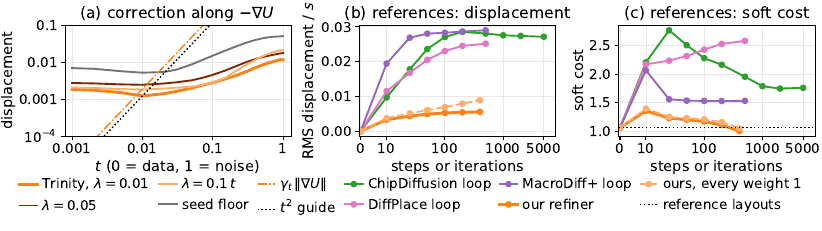}
  \vspace{-2.5em}
  \caption{\cyl{\textbf{(a)} The gap between \method\ and its $\lambda{=}0$ backbone along $-\nabla U$ at three physics weights, with the seed floor (grey), the proximal prediction $\gamma_t\|\nabla U\|$ and a $t^{2}$ guide. \textbf{(b, c)} The 12,000 reference layouts through each placer's loop, with the original references dotted in (c).}}
  \label{fig:theory}
  \vspace{-0.8em}
\end{figure}

\vspace{-0.95em}
\subsection{Refinement under the Training Energy}
\label{sec:exp-refine}
\vspace{-0.47em}

\begin{figure}[t!]
  \centering
  \vspace{-2em}
  \includegraphics[width=\linewidth]{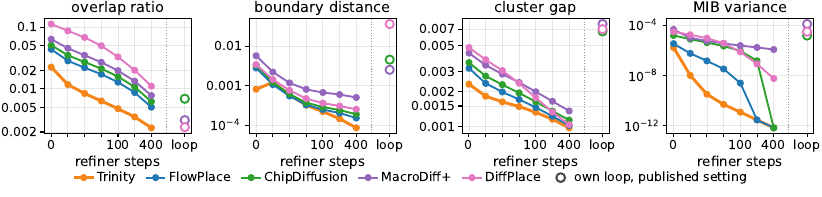}
  \vspace{-1.5em}
  \caption{\cyl{The four constraint quantities, overlap, cluster gap, MIB variance and boundary distance, along our refiner at $\nfe{=}32$, hollow markers each ported loop at its released count.}}
  \label{fig:constraints}
  \vspace{-1em}
\end{figure}

\begin{figure}[t!]
  \centering
  \includegraphics[width=\linewidth]{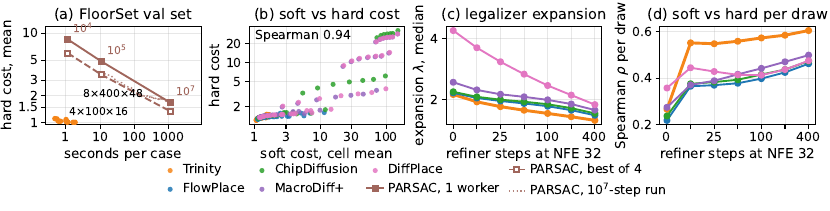}
  \vspace{-2em}
  \caption{\cyl{\textbf{(a)} FloorSet val set, every \method\ setting of $\nfe$, steps and $N$ against PARSAC at $10^{4}$--$10^{7}$ steps per worker. \textbf{(b)} One point per setting of $\nfe$ and steps. \textbf{(c, d)} Along our refiner, $\nfe{=}32$: the legalizer's median expansion factor and the per-draw soft-hard correlation.}}
  \label{fig:results}
  \vspace{-1em}
\end{figure}

\begin{table}[t!]
\begin{minipage}[b]{0.43\linewidth}\centering
\caption{\cyl{Each loop against ours: Iterations pairs the loop's released count with our steps to the same soft cost, and $\mu$s per step and the time advantage read at 32 / 1,000 layouts per batch on the RTX 5090.}}
\label{tab:matched_time}
\scriptsize\setlength{\tabcolsep}{3pt}
\begin{tabular}{lrrr}
\toprule
Loop & Iterations & $\mu$s / step & Less time \\
\midrule
\rowcolor{ourscolor} our refiner & -- & 50.8 / 2.6 & -- \\
DiffPlace & 500 / 9 & 60.7 / 3.1 & 67$\times$ / 67$\times$ \\
MacroDiff+ & 500 / 31 & 292.5 / 20.7 & 92$\times$ / 129$\times$ \\
ChipDiffusion & 5,000 / 8 & 69.6 / 5.3 & 904$\times$ / 1,359$\times$ \\
\bottomrule
\end{tabular}
\end{minipage}\hfill
\begin{minipage}[b]{0.555\linewidth}\centering
\caption{\cyl{Soft against legalized hard cost per generator line: Spearman $\rho$ and Kendall $\tau$, across the cell means of the $\nfe{\times}$steps grid and over every draw; CD, DP and MD+ as in Table~\ref{tab:genref}, and the all-eight column adds \method\ at $\lambda{=}0$ and the two regressors of Section~\ref{sec:exp-setup}.}}
\label{tab:corr_lines}
\scriptsize\setlength{\tabcolsep}{4pt}
\begin{tabular}{lrrrrrr}
\toprule
Correlation & \method & FlowPlace & CD & DP & MD+ & all eight \\
\midrule
across cells, $\rho$ & 0.938 & 0.923 & 0.989 & 0.975 & 0.985 & 0.947 \\
across cells, $\tau$ & 0.754 & 0.738 & 0.919 & 0.898 & 0.919 & 0.807 \\
over draws, $\rho$ & 0.472 & 0.369 & 0.805 & 0.858 & 0.861 & 0.768 \\
over draws, $\tau$ & 0.326 & 0.252 & 0.619 & 0.665 & 0.671 & 0.578 \\
\bottomrule
\end{tabular}

\end{minipage}
\vspace{-0.8em}
\end{table}

\cyl{\Cref{tab:refine} compares refined soft cost across sampling and refinement budgets, with the convergence traced in \Cref{fig:teaser}.
After a few hundred refinement steps, samples from every sampling budget reach nearly the same soft cost, which supports a low-budget draw followed by refinement, while the training physics keeps a substantial short-budget advantage over the $\lambda{=}0$ backbone (\Cref{sec:appendix-refine}), the training contribution that an endpoint comparison alone would conceal.}
\cyl{\Cref{fig:constraints} traces constraint improvement under the shared refiner against the ported placers' released-loop outcomes, with wirelength and area preserved in \Cref{tab:prior}.
In \Cref{fig:motivation}, ChipDiffusion's loop trades overlap for a wider cluster gap, whereas our refiner removes overlap without opening the cluster, which supports one set of six physics quantities across training, refinement and scoring.}
\cyl{On the references, \Cref{fig:theory}~(b)--(c) separates corrective refinement from motion that degrades the objectives: our refiner moves movable blocks by 0.0055 of the scale and lowers soft cost in 85\% of cases, whereas the released loops move them by 0.025 to 0.029 and raise the cost in every case from 1.065 to 1.54 to 2.58 through wirelength and area, in line with the small-step account of \Cref{prop:body-loop}(ii) (\Cref{sec:appendix-theory-measured}).
On a ported generator's samples our refiner matches the own-loop soft cost in a fraction of the steps and the time, and 400 of its steps carry each generator below its own loop's endpoint, with or without its guidance (\Cref{tab:prior,tab:refine}, \Cref{sec:appendix-guided}).
\Cref{tab:matched_time} compares the cost of matching each ported placer's own-loop soft cost with iteration counts and per-step time, not wall-clock measurements.
The training term thus governs raw layouts and short-budget quality, and the refiner the converged endpoint.}

\vspace{-0.95em}
\subsection{Legalized Layouts and the Official Hard Cost}
\label{sec:exp-hard}
\vspace{-0.47em}

\cyl{\Cref{tab:prior} and \Cref{tab:genref} compare legalized outcomes for the ported pipelines and for generators under a shared refiner, respectively. The advantage of \method{} over the ported placers persists in the official hard cost on feasible layouts, while the compression of generator differences under the shared refiner shows why endpoint comparisons understate generator quality. The narrowed differences indicate that refinement determines much of the final hard cost, (\Cref{sec:appendix-hard}). This dependence on refinement motivates comparing generators under a shared refiner within the stage-by-stage evaluation.}
\cyl{\Cref{fig:results}~(b)--(d) traces the relation between soft cost and legalized hard cost that supports this stage-by-stage evaluation. Agreement between the two costs supports the soft cost as a proxy for comparing settings, with caution for individual draws early in refinement. Across the evaluated \method{} settings, mean soft and hard costs correlate strongly across the generator lines (\Cref{tab:corr_lines}). At the individual-draw level, the initially weaker correlation improves as refinement produces samples that are easier to legalize. The soft cost therefore evaluates generators and refiners without a legalizer run for every experiment.}

\cyl{\Cref{tab:official} and \Cref{tab:gsrc} report the practical performance and transfer limits of the resulting legalized layouts. On FloorSet, the hard cost of \method{} approaches the reference cost at a small fraction of the computation PARSAC spends, consistent with \Cref{fig:results}~(a). Within the GSRC training range, the variant without the physics term matches the full model, as expected in distribution. Beyond that range, the six terms separate the variants and carry the model to larger block counts. At still larger sizes, no current checkpoint fits the outline (\Cref{sec:appendix-gsrc-checkpoints}). The remaining limit therefore lies in the training range rather than the physics itself, which supports transfer beyond that range in the evaluated cases.}

\vspace{-0.95em}
\section{Conclusion}
\label{sec:conclusion}
\vspace{-0.47em}
\cyl{Generative floorplanners placed the coupled objectives and constraints of floorplanning everywhere except where the model learns, and endpoint scores hid what each stage contributed. \method\ moved the coupled system into training: one energy of six differentiable functions served as a term of its loss, as the energy its refiner descends in closed form, and as the base of a soft cost that scored every stage, so sampling ran without guidance. Measured on that one scale, training and refinement played distinct roles. The training term improved every raw constraint at once and mattered most at short refinement budgets, while the refiner decided most of the final cost and carried every ported generator below its own loop in far fewer steps. Generators were therefore best compared under a shared refiner, and the soft cost, which ranked settings as the hard cost does, supported that comparison before legalization. With positions and shapes treated alike, \method\ covers the full constraint family of floorplanning and stays ahead of the ported pipelines after legalization.}

\subsection*{AI use statement}
Generative AI tools were used for writing assistance, drafting passages and
correcting grammar, and for visualizing results, where they wrote the plotting
code that renders the figures from the result files. The authors reviewed every
AI-assisted passage and figure and take responsibility for the content.

\subsection*{Reproducibility statement}
Every trained model in this paper, the ported prior placers included, uses the
frozen recipe of Section~\ref{sec:exp-setup} and Appendix~\ref{sec:appendix-train}.
The six constraint quantities, the refiner and the legalizer are specified in
Section~\ref{sec:method} and Appendices~\ref{sec:appendix-refiner} and
\ref{sec:appendix-legalizer}, and every scoring rule in
Appendix~\ref{sec:appendix-scoring}. FloorSet and the ICCAD 2026 evaluator are
public. The supplementary material is the code needed to reproduce the study,
the solver-agnostic domain library and the generator library, with the
configuration file of every trained arm, every refiner sweep and every legalizer
run, and the scripts that print every table from what a run writes
(Appendix~\ref{sec:appendix-data}).

\bibliography{refs}
\bibliographystyle{iclr2027_conference}

\newpage %
\appendix %
\appendixfloats
\makeatletter
\def\section{\@startsection{section}{1}{\z@}{-1.2ex plus -0.3ex minus -.2ex}{0.7ex plus 0.2ex}{\large\sc\raggedright}}
\def\subsection{\@startsection{subsection}{2}{\z@}{-1.0ex plus -0.3ex minus -.2ex}{0.4ex plus .1ex}{\normalsize\sc\raggedright}}
\def\paragraph{\@startsection{paragraph}{4}{\z@}{0.8ex plus 0.3ex minus .2ex}{-1em}{\normalsize\bf}}
\makeatother
\setlength{\parskip}{3.5pt plus 1pt minus 0.5pt}
\noindent\rule{\textwidth}{1pt}
\begin{center}
{\Large Appendix}
\end{center}
\noindent\rule{\textwidth}{1pt}

\section*{Table of Contents}

\definecolor{sectionblue}{RGB}{65, 105, 225}
\titlecontents{section}
  [1.6em]
  {\addvspace{0.15pc}\color{sectionblue}}
  {\contentslabel[\thecontentslabel]{1.75em}}
  {\hspace*{-1.5em}}
  {\hfill\contentspage}
  []

\titlecontents{subsection}
  [4em]
  {\addvspace{0pc}\color{sectionblue}}
  {\contentslabel[\thecontentslabel]{2.5em}}
  {\hspace*{-2.5em}}
  {\titlerule*[1pc]{.}\contentspage}
  []

\startcontents[sections]
{\footnotesize
\printcontents[sections]{l}{1}{\setcounter{tocdepth}{2}}}

\newpage

\section{Background and Related Work}
\label{sec:appendix-background}
\suppressfloats[t]

\subsection{The Physical Design Flow}
\label{sec:appendix-flow}

A chip goes from a netlist, the gates and macros of the design and the nets that connect them, to a geometric layout through a fixed sequence of stages: partitioning, floorplanning, placement, clock tree synthesis, routing and timing closure \citep{Kahng_2011}. Floorplanning is the first geometric stage. The design is partitioned into a few tens to a few hundred blocks, hard macros such as memories and IP whose shape is fixed and soft blocks of synthesized logic whose area is an estimate and whose aspect ratio is free within limits \citep{DBLP:conf/dac/Otten82, DBLP:journals/iandc/Stockmeyer83}, and floorplanning decides the position of every block and the shape of every soft one. Placement then arranges the standard cells inside the blocks into rows of sites, tens of thousands to millions of cells of one height and fixed shape \citep{DBLP:journals/pieee/MarkovHK15}, and routing realizes every net as wires on the metal layers under capacity and spacing rules \citep{Kahng_2011}. Figure~\ref{fig:background_problem} (top) sketches the three. They differ in what they move and in what binds them: floorplanning moves tens to a few hundred blocks and the shapes of the soft ones under non-overlap, fixed shapes, pre-placed blocks and the design constraints of Section~\ref{sec:intro}, placement moves $10^{5}$ to $10^{7}$ cells under row and site legality and a density target, and routing moves $10^{5}$ to $10^{7}$ nets over layers, tracks and vias under capacity and design rules, while the objectives widen along the flow from wirelength and outline area to congestion, timing, vias and overflow. They share one proxy objective, the half-perimeter wirelength (HPWL), the sum over nets of the half perimeter of the smallest box around the net's pins, because the wires do not exist until routing; congestion and timing enter later \citep{DBLP:conf/dac/ViswanathanASLW12}. The floorplan bounds all of it, since the relative position of the blocks fixes the length of the nets between them, the regions the router can use and the area of the die. Mixed-size placement handles macros and cells together with the macro shapes fixed \citep{DBLP:journals/tcad/AdyaYMVPM04, DBLP:journals/tcad/ChenJHCC08, DBLP:journals/tcad/ChenYCHL08}, and macro placement, the task of the reinforcement learning line \citep{DBLP:journals/nature/MirhoseiniGYJSW21}, is floorplanning with every shape fixed.

\begin{figure}[tbp]\centering
\includegraphics[width=\linewidth]{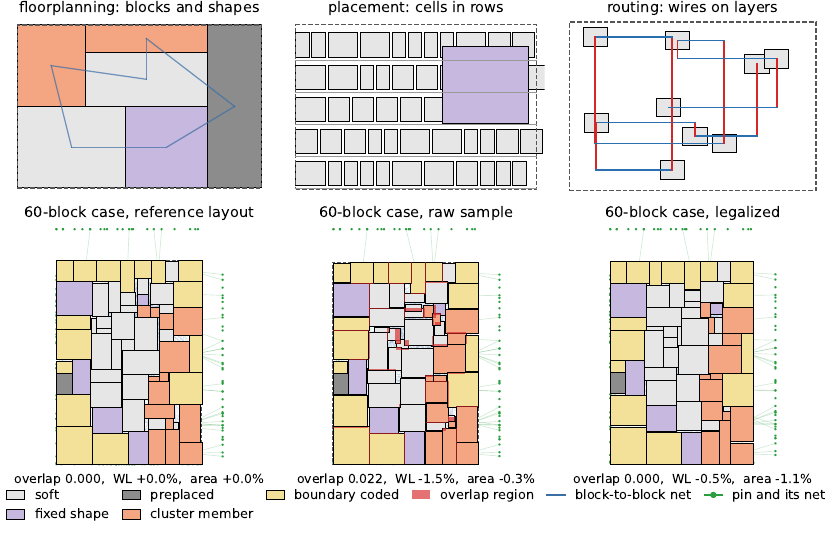}
\caption{Top: the three stages, drawn schematically: floorplanning arranges blocks and shapes the soft ones, placement fills rows with fixed-height cells around the macros, routing draws every net as wires on metal layers. Bottom: a 60-block validation case of FloorSet: the reference layout with its annotations, nets and outline pins, the raw sample of \method\ at $\nfe{=}8$ with overlap in red, and the legalized sample, each with its overlap ratio and wirelength and area gaps.}
\label{fig:background_problem}
\end{figure}

\subsection{What an Instance Contains}
\label{sec:appendix-instance}

A floorplanning instance is a netlist reduced to the block level. FloorSet gives for every case the target area of each block, the weighted nets between blocks, the pins on the die outline with their nets, the five kinds of annotation, and a reference layout \citep{DBLP:conf/iccad/MallappaM0PM24}; Figure~\ref{fig:background_problem} (bottom left) draws one case. Over the 100 validation cases the instances span 21 to 120 blocks, a median of 70, with 29 to 7,056 nets between blocks and 50 to 466 pins on the outline. The annotations are not rare: a median case has 9\% fixed-shape blocks, 4\% pre-placed blocks, 31\% of its blocks in one of three or four clusters, 8\% in a multi-instance group and 35\% of its blocks coded to a boundary, so most blocks carry a rule beyond non-overlap. Each stands for something a design team knows before the floorplan exists: a fixed shape is a hard macro delivered as a finished layout, a pre-placed block an interface or analog part whose position the package or an earlier floorplan settled, a cluster a functional unit or power domain whose blocks must stay close, a multi-instance group a block instantiated several times whose one internal layout is reused, and a boundary code a block that must reach the pins at the die edge. The reference layouts are tight, with 1 to 5\% dead space, and Table~\ref{tab:theory_reference} scores their residual violations. The hard cost of Eq.~\eqref{eq:cost} follows a public floorplanning evaluation on this data; Appendix~\ref{sec:appendix-scoring} states every rule as implemented.

\subsection{What Makes It Hard}
\label{sec:appendix-hardness}

Four things, and they compound. Packing: placing rectangles without overlap into the smallest outline is NP-hard even with fixed shapes and no nets, and the sequence pair alone encodes $(n!)^{2}$ arrangements of $n$ blocks \citep{DBLP:journals/tcad/MurataFNK96, Kahng_2011}. Shapes: a soft block adds a continuous choice to every combinatorial one, which has an exact solution on a slicing tree \citep{DBLP:journals/iandc/Stockmeyer83} and none on a general topology. Coupling: wirelength pulls connected blocks together, non-overlap holds them apart, a cluster asks for adjacency, a boundary code for the edge, a fixed outline for a total that fits \citep{DBLP:journals/tvlsi/AdyaM03}, all on the same coordinates, so a layout that satisfies the rules is in general not near one that scores well. The bottom row of Figure~\ref{fig:background_problem} shows the size of the effect: the raw sample overlaps 2.2\% of its area with a wirelength 1.5\% below the reference, and legalizing it removes every overlap by moving blocks until the wirelength is 0.5\% and the area 1.1\% below. Evaluation: feasibility is binary, so a layout with one thin overlap scores as infeasible although it is nearly done, and both an instance-level search and any comparison of intermediate layouts need a graded measure of the distance to feasible; annealers carry one in their cost, analytical placers in their density penalty, and Section~\ref{sec:method-problem} defines one for this paper. And the search repeats per instance, minutes to hours of annealing per floorplan, the motivation for parallel annealers \citep{mostafa2024parsacfasthumanqualityfloorplanning} and learned placers alike.

\subsection{How the Methods Developed}
\label{sec:appendix-related}

\paragraph{The constraints inside the search.}
In the 1980s floorplanning and cell placement were both done by simulated annealing \citep{Kirkpatrick_1983, DBLP:conf/dac/SechenS86}, floorplanning over slicing structures encoded as Polish expressions \citep{DBLP:conf/dac/Otten82, DBLP:conf/dac/WongL86}. What survived is the search in an encoding whose decode is overlap-free, so every move is judged on wirelength and area alone. What did not is the slicing restriction, which cannot express some tight packings. The 1990s produced non-slicing encodings with the same property, the sequence pair \citep{DBLP:journals/tcad/MurataFNK96}, the bounded-sliceline grid \citep{DBLP:conf/iccad/NakatakeFMK96}, the O-tree \citep{DBLP:conf/dac/GuoCY99}, the B$^{*}$-tree \citep{DBLP:conf/dac/ChangCWW00}, the transitive closure graph \citep{DBLP:conf/dac/LinC01} and the corner block list \citep{DBLP:conf/iccad/HongHCGDCG00}, and extended the sequence pair to hard, soft and pre-placed modules through a linear program over positions \citep{DBLP:conf/ispd/MurataK98}, the ancestor of the program in Appendix~\ref{sec:appendix-legalizer}. The 2000s made the outline a constraint to fit \citep{DBLP:conf/iccd/AdyaM01, DBLP:journals/tvlsi/AdyaM03} and put macros and cells into one problem \citep{DBLP:journals/tcad/AdyaYMVPM04, DBLP:journals/tcad/ChenYCHL08}, and PARSAC runs this line in parallel on the constraint family of FloorSet \citep{mostafa2024parsacfasthumanqualityfloorplanning}. Placement meanwhile left annealing, which does not scale past $10^{5}$ cells, for the analytical route \citep{DBLP:conf/dac/SiglDJ91}: minimize wirelength plus a differentiable penalty for overlap or density over continuous coordinates, then legalize. Quadratic placers \citep{Hall_1970, DBLP:journals/tcad/KleinhansSJA91, DBLP:journals/tcad/ViswanathanC05, DBLP:journals/tcad/SpindlerSJ08} gave way to nonlinear ones \citep{DBLP:journals/tcad/ChenJHCC08} and to the electrostatic density model \citep{DBLP:journals/todaes/LuCCSHTC15, DBLP:journals/tcad/ChengKKW19}, with legalization a stage of its own \citep{DBLP:conf/ispd/SpindlerSJ08}. What survived is the split into a graded relaxation and a legalizer, and DREAMPlace ran the relaxation in a deep learning framework on a GPU \citep{DBLP:journals/tcad/LinJGLDRKP21}, the bridge to learned placers. Surveys of the two eras are \citet{DBLP:journals/pieee/MarkovHK15} and \citet{Huang_2021}. Both branches meet the constraints and the objectives at every move, at the price of a search per netlist. \method\ keeps the graded energy of the analytical branch and moves it from that search into the training loss.

\paragraph{The constraints outside training.}
The 2020s brought learning and a debate about evidence. A graph network policy that places one macro per step, trained on a set of designs and fine-tuned on each new one \citep{DBLP:journals/nature/MirhoseiniGYJSW21}, was re-run by an independent group against annealing and commercial tools, which questioned its margins \citep{DBLP:conf/ispd/ChengKKWW23, DBLP:journals/cacm/Markov24}. The replies improved the representation and the search, with joint placement and routing \citep{DBLP:conf/nips/ChengY21}, visual states \citep{DBLP:conf/nips/LaiM022}, offline decision transformers \citep{DBLP:conf/icml/LaiLTWH023}, tree and evolutionary search \citep{DBLP:conf/icml/Geng0LXTYH0024, DBLP:conf/icml/DengLZG25}, cross-stage rewards \citep{DBLP:conf/iclr/GengWLXTKYH025}, expert knowledge in the policy \citep{ICLR2026_cbc1ad20} and black-box optimization with a wire mask in place of a policy \citep{DBLP:conf/nips/Shi0L023}, and they produced the shared data the debate had lacked: netlists with placement and routing labels \citep{https://doi.org/10.48550/arxiv.2208.01040}, floorplans with design constraints \citep{DBLP:conf/iccad/MallappaM0PM24}, and benchmarks that score a placer by what the finished chip achieves or by a black-box protocol \citep{https://doi.org/10.48550/arxiv.2407.15026, xue2026bboplacebenchbenchmarkingblackboxoptimization}. Floorplanning with soft blocks entered through a graph convolutional policy \citep{Xu_2022}, a corner-block-list representation learned across designs \citep{DBLP:conf/kdd/AminiZPZHL22} and direct regression of every position from the netlist graph \citep{DBLP:journals/todaes/LiuJLDZWYZS23}. Across the line the shapes stay fixed and the constraints stay outside the function the model learns, and Appendix~\ref{sec:appendix-hard-lines} measures what the regressors' conditional mean costs on our own backbone.

\paragraph{Generative placement and its corrections.}
The generative turn came from layout generation in graphics and architecture, where diffusion models draft documents and building floor plans under alignment and adjacency rules \citep{DBLP:conf/cvpr/InoueKSOY23, DBLP:conf/iccv/0004GSL023, DBLP:conf/cvpr/ShabaniHF23}, and from molecules and combinatorial problems \citep{DBLP:conf/icml/HoogeboomSVW22, DBLP:conf/nips/SunY23}, together with the toolkit those models use to satisfy rules at sampling time: gradient guidance \citep{DBLP:conf/nips/DhariwalN21, https://doi.org/10.48550/arxiv.2207.12598}, posterior sampling under a measurement \citep{DBLP:conf/iclr/ChungKMKY23} and constraint losses on the clean prediction \citep{DBLP:conf/iclr/ChenZZC24, DBLP:conf/iclr/BastekSK25}. The diffusion placers apply this toolkit to macro placement. ChipDiffusion \citep{DBLP:conf/icml/LeeNEDAW25}, DiffPlace \citep{trung2026diffplaceconditionaldiffusionframework} and MacroDiff+ \citep{yoon2026physicsguidedgeometricdiffusionmacro} predict the noise and steer every denoising step with an overlap or wirelength gradient in the forms Table~\ref{tab:guided_ports} lists, a heterogeneous graph variant follows the same design \citep{DBLP:conf/dac/ZhengGPWZZ25}, and FlowPlace \citep{xie2026flowplaceflowmatchingchip} predicts the velocity and projects each step onto a grid without overlap. All four train with a plain denoising loss, so the constraint terms appear first in the sampler and then in another form in a post-hoc loop, ten gradient steps in each of 1,000 denoising steps, up to 700 optimizer steps per denoising step, or 5,000 steps after sampling, before a legalizer closes what remains. They keep the learned proposal of the sequential line and the graded penalty of the analytical line but apply the penalty after the model, where the earlier eras had it inside the optimization, and their papers score the legalized endpoint at one sampling budget. Appendix~\ref{sec:appendix-guided} reproduces the three in-sampler corrections on our recipe, Section~\ref{sec:exp-refine} the loops, and Appendix~\ref{sec:appendix-hard} shows how much of the difference between generators the legalizer hides. \method\ moves the penalty back into training and samples without guidance.

\paragraph{Objectives and conditioning.}
Noise prediction \citep{DBLP:conf/nips/HoJA20, DBLP:conf/icml/NicholD21, DBLP:conf/iclr/SalimansH22}, flow matching and rectified flow \citep{DBLP:conf/iclr/LipmanCBNL23, DBLP:conf/iclr/LiuG023} and stochastic interpolants \citep{DBLP:conf/eccv/MaGABVX24} are linear transformations between data and noise, and a design-space view separates the prediction target, the time distribution and the weighting \citep{DBLP:conf/nips/KarrasAAL22}, the interface of Appendix~\ref{sec:appendix-framing}. Following JiT \citep{Li_2026_CVPR}, \method\ predicts the clean layout and trains on the velocity it implies, the one output an energy can score, and LACE \citep{DBLP:conf/iclr/ChenZZC24} and physics-informed diffusion \citep{DBLP:conf/iclr/BastekSK25} supervise the same clean prediction with constraint losses in other domains. The backbone is a Diffusion Transformer \citep{DBLP:conf/iccv/PeeblesX23} with the time step as a token \citep{Bao_2023_CVPR}, and the netlist enters it through an attention bias from the net incidence in the manner of Graphormer \citep{DBLP:conf/nips/YingCLZKHSL21}, a message-passing residual beside attention in the manner of GraphGPS \citep{DBLP:conf/nips/RampasekGDLWB22}, and a spectral positional encoding from the Laplacian eigenvectors \citep{DBLP:journals/neco/BelkinN03, DBLP:journals/corr/abs-2012-09699, DBLP:conf/nips/KreuzerBHLT21, DBLP:conf/iclr/DwivediL0BB22, DBLP:journals/jmlr/DwivediJL0BB23}.

\subsection{The Benchmark Suites}
\label{sec:appendix-suites}

Table~\ref{tab:background_suites} lists the suites the field measures itself on. The floorplanning suites are small and old: MCNC \citep{DBLP:conf/dac/Kozminski91} has 9 to 49 blocks per instance and GSRC extends it to 300 soft blocks, the scale of Table~\ref{tab:gsrc}. The placement suites are large: the mixed-size suite \citep{DBLP:journals/tcad/AdyaYMVPM04} and the ISPD contests of 2005 and 2006 \citep{DBLP:conf/ispd/NamAVWY05, DBLP:conf/ispd/Nam06} fixed the form of an industrial placement benchmark, millions of cells, fixed macros, a fixed outline and a density target, and the DAC 2012 contest added routability \citep{DBLP:conf/dac/ViswanathanASLW12}. Learning needs many instances rather than a few large ones, which CircuitNet supplies for placement and routing prediction \citep{https://doi.org/10.48550/arxiv.2208.01040} and FloorSet for floorplanning, 996,000 synthetic training instances with the design constraints the older suites lack \citep{DBLP:conf/iccad/MallappaM0PM24}.

\begin{table}[htbp]\centering\footnotesize
\caption{The benchmark suites of floorplanning and placement, each cited in the text. Sizes are per instance; ``constraints'' means design constraints beyond non-overlap, fixed shapes and the outline.}
\label{tab:background_suites}
\setlength{\tabcolsep}{4pt}
\begin{tabular}{llllll}
\toprule
Suite & Year & Task & Size per instance & Soft shapes & Constraints \\
\midrule
MCNC & 1991 & floorplanning & 9 to 49 blocks & yes & no \\
GSRC & 2000s & floorplanning & 10 to 300 blocks & yes & no \\
IBM mixed-size & 2004 & mixed-size placement & 10k to 200k cells & no & no \\
ISPD 2005 and 2006 & 2005 & placement & 0.2M to 2.2M cells & no & density \\
DAC 2012 & 2012 & placement & 0.1M to 1M cells & no & routability \\
CircuitNet & 2022 & placement and routing & 10k to 100k cells & no & no \\
FloorSet & 2024 & floorplanning & 21 to 120 blocks & yes & five kinds \\
\bottomrule
\end{tabular}
\end{table}

\section{Training and Model Details}
\label{sec:appendix-hparams}
\setlength{\tabcolsep}{3.5pt}

\subsection{Objective Framings}
\label{sec:appendix-framing}

Every objective compared in the paper is one per-$t$ linear transform of the pair $(x_0, x_1)$ of data and Gaussian noise,
\begin{equation}
x_t = a(t)\,x_0 + b(t)\,x_1, \qquad \mathrm{target}(t) = c(t)\,x_0 + d(t)\,x_1 .
\label{eq:framing}
\end{equation}
DDPM with a cosine or linear schedule and an $\epsilon$ target \citep{DBLP:conf/nips/HoJA20, DBLP:conf/icml/NicholD21}, the $v$ target \citep{DBLP:conf/iclr/SalimansH22} and rectified flow with target $x_1 - x_0$ \citep{DBLP:conf/iclr/LiuG023, DBLP:conf/iclr/LipmanCBNL23} differ only in $(a, b, c, d)$. Given a prediction of the target and $x_t$, the $2\times2$ system recovers $(x_0, x_1)$ whenever $ad - bc \neq 0$, and the probability-flow velocity is $\dot a\,x_0 + \dot b\,x_1$. A network may therefore emit $\epsilon$, $x_0$, $v$ or the target itself and be trained against any target by a change of variables inside the loss. The paper's models emit $x_0$ and are trained against the rectified-flow velocity with logit-normal $t$. The objective arms of Table~\ref{tab:generator} are the velocity head under the same rectified flow with uniform $t$ (the FlowPlace objective) and the $\epsilon$ head under the cosine DDPM schedule with uniform $t$ (the ChipDiffusion objective), both on the plain backbone. The ported DiffPlace and MacroDiff+ keep their linear-$\beta$ $\epsilon$ objectives with $T = 1{,}000$ and $T = 200$.

\subsection{The Frozen Recipe}
\label{sec:appendix-train}

Table~\ref{tab:train_hparams} is the recipe of every trained model in the paper, the ported prior arts, the regressors and the seed replicates included. An arm changes exactly one thing against its control: the head, the wire-dropout mixture, the term weights, or a backbone component. The two rows marked Trinity are the backbone components that the composed arms and Trinity add to the plain backbone; every soft number of the paper is the mean over the draws of a case, then over the cases.

\begin{table}[p]\centering
\caption{The frozen training recipe of every trained model; the two rows marked Trinity are the components the composed arms add.}
\label{tab:train_hparams}
\footnotesize
\setlength{\tabcolsep}{4.5pt}
\renewcommand{\arraystretch}{0.94}
\begin{tabular}{l>{\raggedright\arraybackslash}p{9.3cm}}
\toprule
\multicolumn{2}{l}{\textbf{Data}} \\
Training set & FloorSet-Lite, MIB-consistent variant (Appendix~\ref{sec:appendix-data}) \\
Training layouts & 996,000 of the 1,008,000, with 21 to 120 blocks each \\
Our 12,000 val set & 12,000: 100 per block count plus 2,000 random rows \\
Test set & the FloorSet val set of 100 cases, used once \\
\midrule
\multicolumn{2}{l}{\textbf{Latent and conditioning}} \\
Latent per block & $(u, v, \rho)$: center over $s$ and $\log(w/h)$ at the exact target area \\
Conditioning columns & 19 per block: area, kind, anchor value and mask, group ids, boundary code, pin pull \\
Time conditioning & one prepended token; sequences padded to 127 blocks plus the token \\
\midrule
\multicolumn{2}{l}{\textbf{Backbone}} \\
Layers $L$ & 12 \\
Width $d$ & 768 \\
Attention heads & 12, head dimension 64 \\
MLP & SwiGLU, ratio 4 \\
Normalization & RMSNorm before every sublayer, QK-normalization \\
Netlist attention bias & $\log(1+A)$, max-normalized per instance, one learned gate \\
Spectral PE (Trinity) & Fiedler pair of the normalized netlist Laplacian, $\log(1+A)$ weights \\
Information mover (Trinity) & $x \leftarrow x + g\,W\hat A x$, $\hat A = D^{-1}A$, gate zero-initialized, every layer \\
Parameters & 85.8M plain backbone, 92.9M with the information mover \\
\midrule
\multicolumn{2}{l}{\textbf{Objective}} \\
Framing & rectified flow, $x_t = (1-t)\,x_0 + t\,x_1$, $\hat x_0$ head trained against the velocity \\
Time sampling & logit-normal, mean 0, std 1 \\
Overrides & fixed shapes and preplaced boxes inpainted at every step \\
\kb{MIB groups with an anchor} & \kb{every soft member's aspect overridden with the fixed-shape or preplaced member's, no mean taken} \\
MIB groups without an anchor & set to the group-mean aspect of the prediction \\
\midrule
\multicolumn{2}{l}{\textbf{Auxiliary terms}} \\
Terms & the six functions of Appendix~\ref{sec:appendix-refiner-derivatives} on the decoded boxes \\
Trinity weights & all six at 0.01, constant in $t$ \\
Ladder weights & 0.01, 0.05, 0.1, and $w \cdot t$ with $t{=}1$ at pure noise \\
Fixed and preplaced channels & detached in every term \\
\midrule
\multicolumn{2}{l}{\textbf{Augmentation}} \\
Rotation & multiples of $90^\circ$, uniform \\
Flip & probability 0.5 \\
Shift & Gaussian, std $0.05\,s$ \\
Constraint dropout & 1\% per boundary code, cluster, MIB group, fixed and preplaced block \\
Wire dropout (adopted recipe) & per instance a rate from 50\% none, 40\% Beta(1,3), 10\% U(0.9,1) \\
Wire dropout effect & nets and pins dropped at that rate, reference HPWL recomputed \\
\midrule
\multicolumn{2}{l}{\textbf{Optimization}} \\
Optimizer & AdamW \\
Peak learning rate & $2\times10^{-4}$ \\
Betas & (0.9, 0.98) \\
Weight decay & $10^{-3}$ \\
Gradient clip & 1.0 \\
Schedule & cosine decay to 1\% of the peak, 1\% warmup \\
Effective batch & 256 instances: micro-batch 128, two accumulation steps \\
Steps & 200,000 \\
Precision & bf16 mixed \\
EMA & 0.9995, used for all sampling \\
\midrule
\multicolumn{2}{l}{\textbf{Sampling}} \\
Integrator & Euler on the probability-flow ODE, no projection beyond the overrides \\
\kb{Time grid} & \kb{uniform from $t=1$ to $t=10^{-3}$ in $\nfe$ steps; the sample is the network's $\hat x_0$ at the last time} \\
$\nfe$ & 1, 2, 4, 8, 16, 32 \\
Draws per case & 4; 16 for the best-of-$N$ caches \\
\midrule
\multicolumn{2}{l}{\textbf{Compute}} \\
Hardware & one NVIDIA RTX 5090 per run \\
Wall time & 7.6 to 12.3 hours per run (Appendix~\ref{sec:appendix-compute}) \\
\bottomrule
\end{tabular}
\end{table}

\paragraph{Wire dropout.}
Three mixtures were trained on the plain base: a mild one (70\% of instances untouched, 30\% at a rate from Beta(1,3)), the adopted mixture (50\% untouched, 40\% Beta(1,3), 10\% U(0.9,1)) and a heavy one (30\% untouched, 50\% Beta(1,2), 20\% U(0.8,1)). On our 12,000 val set the adopted mixture is within $\pm1\%$ of the base on every soft metric except the immutable-block deviation ($-20\%$) and the MIB variance ($-16\%$), has the lowest soft cost of the set (2.092 against 2.106) and is the closest arm to the reference distribution on all four distances with $p = 0.005$. The mild and the heavy mixture move the wirelength gap toward zero and the area gap up by 30 to 40\%. The adopted mixture is the recipe of every model from the term ladders on.

\subsection{Architecture Specification}
\label{sec:appendix-architecture}
The backbone is a plain transformer with one token per block: 12 layers, width 768, 12 heads, RMSNorm, SwiGLU and QK-normalization. The input projection embeds the noisy latent $z_{t,i}$ with 19 conditioning columns per block (area, kind, anchor value and mask, group identifiers, boundary code, pin pull). The time step enters as one prepended token in the manner of U-ViT \citep{Bao_2023_CVPR}, so the block has no adaptive normalization and the same block serves the deterministic regressors of Section~\ref{sec:exp-setup}. Three components put the netlist into the network (Figure~\ref{fig:arch}). The input projection is one linear layer from the concatenated 3 latent, 19 conditioning and 2 spectral coordinates, 24 in total, to the width of 768.
\textbf{Attention bias.} The max-normalized $\log(1+A_{ij})$ is added to every attention logit through one learned gate \citep{DBLP:conf/nips/YingCLZKHSL21}; padded keys receive a large negative bias, and the bias rides the mask argument of the fused attention kernel.
\textbf{Information mover.} Every layer adds one graph-convolution hop behind a zero-initialized gate, $x \leftarrow x + g\,W\hat A x$ with $\hat A = D^{-1}A$ the row-normalized netlist adjacency, so each block receives the mean of its neighbours \citep{DBLP:conf/iclr/KipfW17, DBLP:conf/nips/RampasekGDLWB22}. The residual supplies one-hop netlist information in parallel with attention, the zero gate makes the layer a plain transformer layer at initialization, and it adds 7.1M parameters for 92.9M in total.
\textbf{Spectral positional encoding.} Two input columns hold the second and third eigenvectors of the normalized netlist Laplacian, the quadratic-placement drawing of the netlist \citep{Hall_1970, DBLP:journals/tcad/KleinhansSJA91, DBLP:conf/cocoon/Koren03}, in the same $(c_x/s, c_y/s)$ frame the model generates. Sign and axis are fixed by unit norm and non-negative skewness, because a reflected drawing is a different layout.

\subsection{The Ported Prior Arts}
\label{sec:appendix-ports}

Each port keeps the published network and objective, trains under Table~\ref{tab:train_hparams} on the same data, consumes the same 3-D latent and the same 19 conditioning columns as our models, and samples with the same Euler integrator and the same overrides. FlowPlace and ChipDiffusion share the AttGNN network, three blocks of two GATv2 layers, an MLP, global attention and an MLP at hidden width 256 with FiLM time conditioning, 6.3M parameters; FlowPlace trains rectified flow with a velocity head and uniform $t$ and publishes no loop, ChipDiffusion trains a cosine DDPM with an $\epsilon$ head and uniform $t$ and follows sampling with a 5,000-iteration gradient legalizer on legality and HPWL potentials, in its standard, scheduled and dual-ascent Adam variants. DiffPlace keeps VectorGNNV2Global, eight blocks of two vector message-passing layers with FiLM, a global supernode and hidden width 256, 23.9M parameters with the orientation head dropped, trains a linear-$\beta$ DDPM with an $\epsilon$ head at $T{=}1{,}000$, and follows sampling with a 500-iteration gradient refinement. MacroDiff+ keeps its heterogeneous GATv2 cell and net network of five layers at width 64 with a 1-D token U-Net of widths $32\cdot[1,2,4,8]$, 5.0M parameters, trains a linear-$\beta$ DDPM at $T{=}200$ whose $\epsilon$ is composed as $\alpha\,(\partial\hpwl/\partial x)^{\top}\,\mathrm{net} + \beta\,\mathrm{cell}$ from a net head and a cell head, and follows sampling with a 500-iteration Adam loop on HPWL and overlap. The post-hoc loops (Section~\ref{sec:exp-refine}) are ported with autograd at their published optimizers, schedules and iteration counts, act on positions only, and run on a canvas equal to the raw bounding box plus a 5\% margin, which their die-boundary terms need to separate blocks. A canvas tuned per method and shape freedom are not granted: the released energies have no shape variable, and shape freedom after sampling is the refiner's (Appendix~\ref{sec:appendix-refiner}). The corrections three of the placers apply inside their samplers are ported separately, on the same canvas, for Appendix~\ref{sec:appendix-guided} (Table~\ref{tab:guided_ports}).

\subsection{The Regressors}
\label{sec:appendix-regression}

Reg-$z$ is the plain backbone driven with a constant input and $t = 0$, so its output depends on the conditioning only, trained by masked regression of the reference latent under the frozen recipe. Reg-$xywh$ changes the head to $(c_x/s, c_y/s, w/s, h/s)$, expresses the overrides in that space and adds an area penalty because $wh = a$ no longer holds by construction. Each returns one layout per case.

\subsection{Compute}
\label{sec:appendix-compute}

Table~\ref{tab:compute} accounts for every job of the study. Each job ran on one NVIDIA RTX 5090, with the CPU workers named where a stage uses them; the whole campaign, training and evaluation, ran on one four-GPU box. Training is the bulk: 68 runs under the frozen recipe (objective 3, recipe 3, architecture 2, single terms 24, all six 6, composed grid 8, prior arts 4, regressors 2, seed replicates 8, size rungs 8), 7.6 to 12.3 hours each, for 710 GPU-hours, with DiffPlace at 29.4 hours because its network needs a micro-batch of 64, MacroDiff+ at 14.7 hours, the width rungs at 9.0, 8.8 and 8.5 hours, the depth rungs at 10.5, 8.6 and 7.6 hours and the two late rungs at 5.9 and 7.4 hours; the per-run wall times, read from each run's tracker summary, are in the result files of the supplementary material. Earlier attempts of the same runs that crashed or were superseded cost about 61 GPU-hours and enter no result. Generation and scoring take about a quarter of an hour per run: 12,000 cases at six $\nfe$ with four draws, the 15 soft columns, the 16-draw \method\ caches and one rescoring pass. The guided sampling of Appendix~\ref{sec:appendix-guided} is five generations, one per placer and one for DiffPlace's sign-flipped arm, with MacroDiff+ at 9.2 of the 11 GPU-hours because of its 500 Adam iterations per step, scored on 16 CPU workers; their refiner sweeps and distribution metrics are counted in those rows. A refiner sweep takes 2 to 8 minutes, 1 to 4 of them compiling. Evaluation is dominated by the legalizer track, 5 full $\nfe \times$ steps grids, 3 single-shard lines, 15 prior-art loop runs and 3 best-of-16 ladders on 24 to 32 CPU workers: 10.6 million legalizations at 240 to 300 ms each, during which the GPU only refines and then waits, so the track costs about 1,400 CPU-core-hours and about three active GPU-hours although the GPU is held for about 57 hours. The six size-rung grids of Appendix~\ref{sec:appendix-size} add 2,016,000 legalizations each on 24 workers, about 1,340 CPU-core-hours with the GPUs held for about 56 hours. The theory measurements of Appendix~\ref{sec:appendix-theory-measured} are four scripts and two GSRC runs. The classical track of Appendix~\ref{sec:appendix-classical} is CPU time on single cores: PARSAC on the FloorSet val set with 16 workers at $10^7$ steps for about 500 core-hours, its GSRC runs about 60, the sequence-pair annealer about 12, the continuations about 14, and the pipeline's own passes about 1.5 GPU-hours. The runtime study against the block count took 0.7 hours on an idle GPU.

\begin{table}[htbp]\centering\footnotesize
\caption{Compute of the study. One RTX 5090 per job, the CPU workers of each stage as given in the text. "Held" counts the hours the GPU is reserved while the CPU workers legalize.}
\label{tab:compute}
\setlength{\tabcolsep}{5pt}
\begin{tabular}{lrlr}
\toprule
Stage & Jobs & GPU-hours & CPU-core-hours \\
\midrule
\multicolumn{4}{l}{\textit{Training}} \\
Trained runs of the study & 68 & 710 & -- \\
Earlier attempts, not used & -- & $\approx 61$ & -- \\
\midrule
\multicolumn{4}{l}{\textit{Evaluation}} \\
Generation and scoring & 68 & $\approx 22$ & -- \\
Guided sampling of three placers & 5 & $\approx 11$ & $\approx 5$ \\
Distribution metrics & 10 & $\approx 9$ & $\approx 69$ \\
Refiner sweeps & 131 & $\approx 3$ & -- \\
Hard-cost grids and ladders & 26 & $\approx 3$ active ($\approx 57$ held) & $\approx 1{,}400$ \\
Size-rung grids & 6 & $< 2$ active ($\approx 56$ held) & $\approx 1{,}340$ \\
Theory measurements & 6 & $\approx 0.3$ & -- \\
Classical solvers & -- & $\approx 1.5$ & $\approx 590$ \\
FloorSet val set runs & 2 & $\approx 0.6$ & $\approx 10$ \\
Runtime study & 1 & 0.7 & -- \\
\midrule
Evaluation total & & $\approx 53$ active ($\approx 167$ held) & $\approx 3{,}410$ \\
Training and evaluation & & $\approx 760$ active & $\approx 3{,}410$ \\
\bottomrule
\end{tabular}
\end{table}

\section{Scoring Details}
\label{sec:appendix-scoring}

\paragraph{Hard feasibility and the hard cost.}
A layout is feasible when no pair of boxes overlaps by more than $10^{-6}$ on both axes, every soft block keeps $wh$ within 1\% of its target area, every fixed-shape block keeps its $(w, h)$ and every preplaced block keeps its $(x, y, w, h)$. Let $c_p$ be the number of edge-connected components among the members of cluster $p$, where two boxes are connected when they share an edge segment of positive length, $s_q$ the number of distinct shapes in MIB group $q$ after rounding to $10^{-4}$, and $B$ the set of boundary-coded blocks with codes 1 left, 2 right, 4 top, 8 bottom and corners as sums. Then
\begin{align}
V_{\mathrm{group}} &= \sum_p (c_p - 1), \qquad V_{\mathrm{mib}} = \sum_q (s_q - 1), \nonumber \\
V_{\mathrm{bnd}} &= \sum_{i\in B} \mathbb{1}\bigl[\text{a required side of } i \text{ is off its outline edge by more than } 10^{-6}\bigr],
\end{align}
and $\Vrel = (V_{\mathrm{group}} + V_{\mathrm{mib}} + V_{\mathrm{bnd}}) / N_{\mathrm{soft}}$ with $N_{\mathrm{soft}}$ the number of soft-constrained slots. $\hpwl$ is the net-weighted sum of centroid-to-centroid Manhattan distances over block-to-block and pin-to-block nets, the area is that of the layout bounding box, and both gaps are the layout value minus the reference value over the reference value, clamped at zero. The hard-cost total over a case set is $\sum_c e^{n_c/12}\,\mathrm{Cost}_c / \sum_c e^{n_c/12}$; the paper reports the per-case mean and median and gives the totals in Appendix~\ref{sec:appendix-hard}.

\paragraph{The soft metric vector.}
Every layout, raw, refined or legalized, is scored with the same columns, every length in units of $s = \sqrt{\sum_i a_i}$:
\begin{itemize}[leftmargin=*,nosep]
\item \emph{overlap ratio}: $\sum_{i<j}$ intersection area over $\sum_i$ block area;
\item \emph{compactness}: bounding-box area over $\sum_i$ block area, at least 1;
\item \emph{normalized wirelength} $\mathrm{wl}$: $\hpwl / (s \cdot \sum \text{net weights})$;
\item \emph{hpwl gap}, \emph{area gap}: the two relative gaps to the reference, signed;
\item \emph{boundary distance}: mean over boundary-coded blocks of the distance from each required side to its outline edge;
\item \emph{group gap}: mean over same-cluster pairs of the axis-wise gap to contact;
\item \emph{MIB variance}: mean within-group variance of $(\log w, \log h)$;
\item \emph{fixed distance}: mean over fixed and preplaced blocks of the shape deviation over $\sqrt{a}$ plus, for preplaced blocks, the center distance;
\item \emph{severities} $\mathrm{sev}_{\mathrm{group}}$, $\mathrm{sev}_{\mathrm{mib}}$, $\mathrm{sev}_{\mathrm{bnd}}$: per soft slot $\tanh(d/\tau)$ with $d$ the slot's distance to satisfaction and $\tau = 0.01, 0.05, 0.01$, summed over slots;
\item $\mathrm{sev}_{\mathrm{rel}}$: $(\mathrm{sev}_{\mathrm{group}} + \mathrm{sev}_{\mathrm{mib}} + \mathrm{sev}_{\mathrm{bnd}}) / N_{\mathrm{soft}}$, the severity analogue of $\Vrel$;
\item \emph{soft cost}: the second aggregate of Eq.~\eqref{eq:cost}, computed per case before averaging.
\end{itemize}
In every table the boundary distance, the group gap and the fixed distance are given in units of $10^{-3}$, the MIB variance in units of $10^{-6}$ and MMD in units of $10^{-3}$; the other columns are unscaled. The severities grade how far a slot is off and never how many slots are off, so the soft vector is continuous in the layout, and a jitter of 0.2\% of $s$ on a reference layout moves every soft column monotonically while the hard cost jumps to its violated value. Where a model's mean soft cost overflows (a few cases with gaps in the tens, only for the regressors and for the one-step samples of the $\epsilon$-head models) the per-case median and 90th percentile are given.

\subsection{Distribution Metrics}
\label{sec:appendix-dist}

The per-layout metrics above score one layout against its own constraints and its own reference. The distribution metrics ask a different question: whether the set of layouts a model draws looks like the set of reference layouts. Each layout is mapped to a fixed-length, permutation-invariant descriptor that is a plain function of its geometry, with no learned encoder, and two standard distances are taken between the descriptor cloud of a model (all draws of one run at one $\nfe$, 48,000 rows) and the descriptor cloud of the 12,000 reference layouts.

\paragraph{Density descriptor.}
Let $W$ be a rasterization window and $G$ a grid size. In the \emph{bbox} frame $W$ is the layout's own bounding box, so the descriptor sees the arrangement of the blocks and factors out the size of the outline; in the \emph{fixed} frame $W$ is the square of side $2s$ centered on the bounding-box center, so the descriptor also sees how far the layout spreads in absolute units. $W$ is cut into $G\times G$ cells $g$ of equal size, and the coverage of cell $g$ by block $i$ is the fraction of the cell the box $B_i$ occupies,
\begin{equation}
c_i(g) = \frac{|B_i \cap g|}{|g|} \in [0, 1].
\end{equation}
Four fields are stacked: all blocks, soft blocks, fixed and preplaced blocks, and the net-weighted center density,
\begin{align}
f_{\mathrm{all}}(g) &= \sum_i c_i(g), \qquad
f_{\mathrm{soft}}(g) = \sum_{i\ \mathrm{soft}} c_i(g), \qquad
f_{\mathrm{lock}}(g) = \sum_{i\ \mathrm{fixed,\ preplaced}} c_i(g), \nonumber \\
f_{\mathrm{net}}(g) &= \frac{\sum_{i:\, \mathrm{center}(i)\in g} \omega_i}{\sum_i \omega_i},
\end{align}
where $\omega_i$ is the total weight of the nets incident on block $i$ (block-to-block weights counted at both endpoints, pin weights at the block). The coverage fields sum above $1$ in a cell where blocks overlap, so the descriptor sees overlap as well as arrangement. The descriptor $\phi_G(\text{layout})\in\mathbb{R}^{4G^2}$ is the concatenation of the four fields: 256 dimensions at $G = 8$ and 576 at $G = 12$. FLD8b, FLD8f, FLD12b and FLD12f name the Fr\'echet distance below on $\phi_8$ and $\phi_{12}$ in the two frames.

\paragraph{Position descriptor.}
The block centers $c_i$ are scaled by $s$ and shifted by the bounding-box center. The descriptor collects the biased covariance of the center cloud ($\Sigma_{xx}, \Sigma_{yy}, \Sigma_{xy}$), the per-axis skewness and excess kurtosis of the standardized coordinates, the radius of gyration $\sqrt{\mathrm{mean}_i\,\|c_i\|^2}$, a 16-bin histogram of the nearest-neighbor center distance on $[0, 0.5]$ and a 16-bin histogram of the pairwise center distance on $[0, 2]$, both normalized to sum to one: $8 + 16 + 16 = 40$ dimensions. FPD names the Fr\'echet distance on it.

\paragraph{Fr\'echet distance.}
For a model cloud $X$ and the reference cloud $R$ with means $\mu_X, \mu_R$ and sample covariances $\Sigma_X, \Sigma_R$ (each with $10^{-6} I$ added for the matrix square root),
\begin{equation}
d_{\mathrm{F}}^2(X, R) = \|\mu_X - \mu_R\|_2^2 + \operatorname{Tr}\!\bigl(\Sigma_X + \Sigma_R - 2\,(\Sigma_X \Sigma_R)^{1/2}\bigr),
\label{eq:frechet}
\end{equation}
the squared Wasserstein-2 distance between the Gaussian fits of the two clouds, the form used by FID \citep{DBLP:conf/nips/HeuselRUNH17}. The mean term and the covariance term are stored separately in the result files of the supplementary material. The distance is computed on the full clouds, uses the first two moments only, and is not standardized, so cells with a large natural variance weigh more.

\paragraph{Kernel MMD.}
For the $G = 8$ bbox descriptor every dimension is first standardized by the reference standard deviation, $\tilde\phi = \phi / (\sigma_R + 10^{-9})$, so every cell weighs the same. With the Gaussian kernel $k(x, y) = \exp(-\|x - y\|^2 / 2h^2)$ and the bandwidth $h$ set to the median pairwise Euclidean distance within a 4,000-row subsample of the standardized reference, the unbiased estimator \citep{DBLP:journals/jmlr/GrettonBRSS12} on subsamples $X = \{x_i\}_{i=1}^{n}$ and $R = \{r_j\}_{j=1}^{m}$ of 4,000 rows each is
\begin{equation}
\mathrm{MMD}^2_u(X, R) = \frac{1}{n(n-1)}\sum_{i\neq i'} k(x_i, x_{i'}) + \frac{1}{m(m-1)}\sum_{j\neq j'} k(r_j, r_{j'}) - \frac{2}{nm}\sum_{i, j} k(x_i, r_j).
\label{eq:mmd}
\end{equation}
This is MMD8b. The Gaussian kernel is characteristic, so the population MMD is zero only when the two distributions coincide and it is sensitive to every moment, which complements the two-moment Fr\'echet distance; in every table of the paper the two rank the models the same way. The estimate depends on the 4,000-row subsample, so the same run re-evaluated in another batch moves by a few percent: Trinity reads 2.5 in the batch of Table~\ref{tab:generator} and 2.3 in the seed batch of Appendix~\ref{sec:appendix-seeds}. Every table prints the batch its rows were evaluated in, and a comparison is always read within one batch.

\paragraph{Permutation test.}
Whether a model $A$ is closer to the reference than a control $B$ is decided by a one-sided label-permutation test on MMD8b. With 2,000-row subsamples of $A$, $B$ and $R$ the statistic is $T = \mathrm{MMD}^2_u(A, R) - \mathrm{MMD}^2_u(B, R)$; under the null that $A$ and $B$ are exchangeable the pooled rows are relabeled at random 200 times, giving $T_1, \dots, T_{200}$, and
\begin{equation}
p = \frac{1 + \#\{\ell : T_\ell \leq T\}}{1 + 200}.
\label{eq:perm}
\end{equation}
A small $p$ means the observed difference is farther below zero than a random relabeling produces, so $A$ is closer to the reference than $B$; $p = 1/201 \approx 0.005$ is the floor. The same model with another training seed tests at $p = 0.20$ to $1.00$ against the original (Appendix~\ref{sec:appendix-seeds}), the null band of the test.

\paragraph{Reading the distribution metrics.}
Lower is closer to the reference distribution, which is fidelity to the training layouts, not cost: the reference layouts leave 9.1\% of the boundary-coded blocks off their edge, so a model that places more coded blocks on the edge moves away from the reference in the cells along the outline, which is why the boundary term at high weight and Trinity's own FLD8f can rise while every per-layout metric improves. The bbox frame answers "is the arrangement right", the fixed frame "is the spread right", the position descriptor "is the center cloud right", and the four together are read as a set. The main text writes FLD for FLD8b and MMD for MMD8b.

\paragraph{Correlation measures.}
Per $(\nfe, \text{steps})$ cell of a line, the Spearman $\rho$ and Kendall $\tau$ of the per-draw soft cost against the per-draw hard cost over the 48,000 draws and over the 12,000 per-case means, Pearson $r$ and $R^2$ on raw and on log values, and the Spearman per soft column. Across a grid, the same measures on the cell means. Pooled, over every draw of a line and over the eight lines together.

\paragraph{A property of the data set.}
The reference layouts satisfy every cluster abutment. They leave 9.1\% of the boundary-coded blocks off their required edge and pay about 10\% in hard cost for it. The six quantities read the constraint annotations and never the reference layout, so a model trained with them can place a coded block on its edge where the reference does not, and Trinity's boundary distance is one quarter of the plain control's.

\section{Properties of the Construction}
\label{sec:appendix-theory}

\subsection{Setup}
\label{sec:appendix-theory-setup}

Notation follows Section~\ref{sec:method}. A clean latent is $x_0$, noise is
$x_1 \sim \mathcal{N}(0, I)$, the interpolant is $x_t = (1-t)x_0 + t\,x_1$, and
\begin{equation}
\bar x_0(x,t) = \mathbb{E}[x_0 \mid x_t = x],
\qquad
\Sigma_{0|t}(x) = \operatorname{Cov}(x_0 \mid x_t = x),
\label{eq:posterior}
\end{equation}
so $\Sigma_{0|0} = 0$ and $\Sigma_{0|1} = \operatorname{Cov}(x_0)$. Write
$U = \sum_k \omega_k \mathcal{E}_k$ for the energy of Section~\ref{sec:method-physics} on the
decoded boxes, split into the four constraint terms and the two objectives,
\begin{equation}
U = U_{\mathrm{C}} + U_{\mathrm{O}},
\qquad
\mathcal{F} = \{x : U_{\mathrm{C}}(x) = 0\},
\label{eq:split}
\end{equation}
and let
$\operatorname{prox}_{\gamma U}(y) = \arg\min_m \tfrac12\lVert m - y\rVert^2 + \gamma\,U(m)$ for
$\gamma > 0$. The decode of Eq.~\eqref{eq:decode} satisfies $w_ih_i = a_i$ for every
$\rho_i \in \mathbb{R}$, so it is a bijection onto $\{(w,h) \in \mathbb{R}_{>0}^2 : wh = a_i\}$
with inverse $\rho_i = \log(w_i/h_i)$, and linear interpolation of $\rho$ is the geometric path
$w_t = w_0^{1-t}w_1^{t}$. The area rule is therefore the coordinate system and not a constraint,
at every $t$ and for every noise draw. Two conventions of Eq.~\eqref{eq:cost} hold throughout:
a gap is measured against the reference value or $10^{-6}$, whichever is larger, and an instance
without soft-constraint slots ($N_{\mathrm{soft}} = 0$) has $\Vrel = \mathrm{sev}_{\mathrm{rel}} = 0$.

$\mathcal{F}$ is the zero set of the constraint energy, the feasible set of the constraints other
than the area rule as in Section~\ref{sec:method-theory}; it also asks for the three soft
constraints, so it lies inside the hard feasible set of Section~\ref{sec:intro} up to the
tolerance of the area rule, and every statement below is about $\mathcal{F}$.
Every numbered result of this appendix is checked in Lean~4 against Mathlib, with the code in the
supplementary material; conditional expectations are read at a fixed $x$ as integrals against
the conditional law, and three standard steps enter those checks as hypotheses:
differentiation under the integral in Lemma~\ref{lem:tilt}, the third-order Taylor expansion of
$U$ in Eq.~\eqref{eq:bias}, and Tweedie's formula with the channel covariance identity in
Proposition~\ref{prop:support}.

\begin{assumption}\label{as:standing}
(i) $U \in C^2(\mathbb{R}^d)$, $U \ge 0$, and $\nabla^2 U \succeq -\beta I$ for some
$\beta \ge 0$ (weak convexity). (ii) $U_{\mathrm{C}}(x_0) = 0$ for $p_{\mathrm{data}}$-almost
every $x_0$: every reference layout lies in $\mathcal{F}$. (iii) The head is an arbitrary measurable
function of $(x,t)$ and $x_0$ has finite second moments, so the training objective is minimized
pointwise, for $p_t$-almost every $x$ at every $t$.
\end{assumption}

\begin{lemma}\label{lem:prox}
Under Assumption~\ref{as:standing}(i), for every $\gamma > 0$ with $\gamma\beta < 1$ the
objective $m \mapsto \tfrac12\lVert m - y\rVert^2 + \gamma U(m)$ has Hessian
$I + \gamma\nabla^2 U \succeq (1 - \gamma\beta) I \succ 0$, so it is strongly convex and
$\operatorname{prox}_{\gamma U}(y)$ is a single point for every $y$.
\end{lemma}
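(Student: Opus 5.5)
The plan is to compute the Hessian of $\phi(m) = \tfrac12\lVert m - y\rVert^2 + \gamma U(m)$ directly and then call on the standard theory of strongly convex functions. Since $U \in C^2(\mathbb{R}^d)$ by Assumption~\ref{as:standing}(i), $\phi$ is $C^2$, and $\nabla^2\phi(m) = I + \gamma\nabla^2 U(m)$. With $\gamma > 0$, weak convexity $\nabla^2 U \succeq -\beta I$ gives $\gamma\nabla^2 U(m) \succeq -\gamma\beta I$, so $\nabla^2\phi(m) \succeq (1-\gamma\beta) I$ for every $m$. Under $\gamma\beta < 1$ this bound is positive definite, uniformly in $m$.

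The second step turns this uniform bound into $\mu$-strong convexity with $\mu = 1-\gamma\beta$. I will use the integral form of Taylor's theorem along the segment $m_s = m + s(m'-m)$:
\begin{equation*}
\phi(m') = \phi(m) + \nabla\phi(m)^{\top}(m'-m) + \int_0^1 (1-s)\,(m'-m)^{\top}\nabla^2\phi(m_s)(m'-m)\,ds .
\end{equation*}
The integrand is at least $(1-s)\mu\lVert m'-m\rVert^2$, so $\phi(m') \ge \phi(m) + \nabla\phi(m)^{\top}(m'-m) + \tfrac{\mu}{2}\lVert m'-m\rVert^2$.

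Next come existence and uniqueness. For existence, $U \ge 0$ gives $\phi(m) \ge \tfrac12\lVert m-y\rVert^2$, so $\phi$ is coercive. Its sublevel set $\{\phi \le \phi(y)\}$ is closed and contained in a ball around $y$, hence compact, and the continuous $\phi$ attains its minimum there. The strong convexity inequality also gives coercivity directly, so nonnegativity is a convenience rather than a necessity. For uniqueness, suppose $m^\ast$ and $m^{\ast\ast}$ are both minimizers. Strong convexity implies $\phi\bigl(\tfrac12(m^\ast+m^{\ast\ast})\bigr) \le \tfrac12\phi(m^\ast) + \tfrac12\phi(m^{\ast\ast}) - \tfrac{\mu}{8}\lVert m^\ast - m^{\ast\ast}\rVert^2$. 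Minimality then forces $m^\ast = m^{\ast\ast}$, so $\operatorname{prox}_{\gamma U}(y)$ is a single point.

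None of these steps is a real obstacle. The only point that needs care is going from the pointwise Hessian bound to the global strong convexity inequality, which requires the bound to hold along every segment. It does, because the assumption is global on $\mathbb{R}^d$. For the Lean check, this step is where I would reach for Mathlib's convexity-from-second-derivative lemmas, or prove it via monotonicity of $\nabla\phi - \mu\,\mathrm{id}$.
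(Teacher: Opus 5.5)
Your proposal is correct and follows the paper's argument, which is given inline in the statement itself: the Hessian bound $I + \gamma\nabla^2 U \succeq (1-\gamma\beta)I$ yields $(1-\gamma\beta)$-strong convexity and hence a unique minimizer. You also write out the Taylor step and the existence of a minimizer via coercivity, both of which the paper leaves implicit in ``strongly convex, so $\operatorname{prox}_{\gamma U}(y)$ is a single point.''
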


By Assumption~\ref{as:standing}(ii) the reweighting $p_{\mathrm{data}}\,e^{-U_{\mathrm{C}}}$ is
proportional to $p_{\mathrm{data}}$ itself, since the density ratio is constant on a set of full
measure; what the constraint terms do to the trained head is the subject of
Theorem~\ref{thm:prox}, not a statement about the fitted distribution. The objective part
differs: $p_{\mathrm{data}}\,e^{-U_{\mathrm{O}}}$ prefers shorter wires and smaller outlines, and
the proximal step applies it with the prox weight $\gamma_t$. Assumption (ii) holds
approximately: the references' residual violations are small on average, a soft cost of 1.065
against 1 for a violation-free layout (Table~\ref{tab:theory_reference}).

\subsection{What the Latent Buys}
\label{sec:appendix-theory-latent}

\begin{proposition}\label{prop:averaging}
Let a block have target area $a > 0$ and let $(W,H)$ be a random box with $WH = a$ almost
surely, $W > 0$ and $\mathbb{E}\lvert\log W\rvert < \infty$. Then
\begin{equation}
\exp\bigl(\mathbb{E}[\log W]\bigr)\,\exp\bigl(\mathbb{E}[\log H]\bigr) = a,
\qquad
\mathbb{E}[W]\,\mathbb{E}[H] \ \ge\ a,
\label{eq:averaging}
\end{equation}
with equality on the right if and only if $W$ is almost surely constant.
\end{proposition}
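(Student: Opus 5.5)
The plan is to pass to logarithms for the left identity and to apply a convexity inequality for the right. Since $WH = a$ almost surely with $W > 0$, we have $H = a/W > 0$ and $\log H = \log a - \log W$ almost surely. The assumption $\mathbb{E}\lvert\log W\rvert < \infty$ then gives $\mathbb{E}\lvert\log H\rvert \le \lvert\log a\rvert + \mathbb{E}\lvert\log W\rvert < \infty$. Hence both log-expectations exist and satisfy $\mathbb{E}[\log W] + \mathbb{E}[\log H] = \log a$. Exponentiating yields the left identity of Eq.~\eqref{eq:averaging}. This is the statement that the conditional mean in the latent $\rho = \log(W/H)$, decoded through Eq.~\eqref{eq:decode}, lands exactly at area $a$.

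For the right inequality, write $\mathbb{E}[W]\,\mathbb{E}[H] = a\,\mathbb{E}[W]\,\mathbb{E}[1/W]$. It then suffices to show $\mathbb{E}[W]\,\mathbb{E}[1/W] \ge 1$. Either expectation may be $+\infty$, in which case the inequality holds trivially and strictly. The product is well defined because both factors are positive, and neither can be infinite while the other is zero.

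When both expectations are finite, apply Cauchy--Schwarz to $\sqrt{W}$ and $1/\sqrt{W}$:
\begin{equation*}
1 = \bigl(\mathbb{E}[\sqrt{W}\cdot W^{-1/2}]\bigr)^2 \le \mathbb{E}[W]\,\mathbb{E}[1/W].
\end{equation*}
Equality in Cauchy--Schwarz holds if and only if $\sqrt{W}$ and $W^{-1/2}$ are almost surely proportional, that is, $W = c$ almost surely for some constant $c > 0$. Conversely, constant $W$ gives $\mathbb{E}[W]\,\mathbb{E}[H] = W \cdot a/W = a$. As an alternative route, Jensen applied to the strictly convex map $u \mapsto 1/u$ gives $\mathbb{E}[1/W] \ge 1/\mathbb{E}[W]$, with equality if and only if $W$ is degenerate. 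I would keep Cauchy--Schwarz as the primary argument because its equality case is cleaner to state.

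The only delicate step is the bookkeeping for infinite means in the equality clause. If $\mathbb{E}[W] = \infty$, then $W$ is not constant and the product is $+\infty > a$, so the "if and only if" survives. I would state this case explicitly so that the equality characterization covers every $W$ allowed by the hypotheses.
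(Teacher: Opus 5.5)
Your proposal is correct and follows the paper's proof. The paper also takes logarithms for the left identity, writes $\mathbb{E}[W]\,\mathbb{E}[H] = a\,\mathbb{E}[W]\,\mathbb{E}[1/W]$, and applies Cauchy--Schwarz to $\sqrt{W}$ and $W^{-1/2}$ for the right; your extra checks (integrability of $\log H$, and the infinite-mean case, where the product is $+\infty$ and $W$ cannot be constant) fill in details the paper leaves implicit without changing the argument.
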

\begin{proof}
$\log W + \log H = \log a$ almost surely gives the left identity. For the right,
$\mathbb{E}[W]\mathbb{E}[H] = a\,\mathbb{E}[W]\,\mathbb{E}[1/W]$, and Cauchy-Schwarz gives
$1 = \bigl(\mathbb{E}[\sqrt{W}\cdot W^{-1/2}]\bigr)^2 \le \mathbb{E}[W]\,\mathbb{E}[1/W]$, with
equality if and only if $W$ is almost surely constant.
\end{proof}

On a fixed instance with fixed target areas, a head trained under a squared loss with finite
second moments emits, at the population optimum, a conditional mean, so
Proposition~\ref{prop:averaging} decides what area that output has: in the latent the mean
decodes at area exactly $a$, and in width-height coordinates it decodes at an area above $a$
unless the shape is deterministic. The two regressors of Section~\ref{sec:exp-prior} are the
two sides of this.

\begin{proposition}\label{prop:nonconvex}
(i) There are instances on which $\mathcal{F}$ is not convex, and on them
$\bar x_0(x,t) \notin \mathcal{F}$ for some $(x,t)$. (ii) Under Assumption~\ref{as:standing}(ii),
for every $(x,t)$,
\begin{equation}
\operatorname{dist}\bigl(\bar x_0(x,t), \mathcal{F}\bigr)^{2} \;\le\; \operatorname{tr}\Sigma_{0|t}(x).
\label{eq:mean-bound}
\end{equation}
\end{proposition}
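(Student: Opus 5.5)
For part (ii) I would use a Jensen-type argument on the distance to the feasible set, applied under the conditional law of $x_0$ given $x_t=x$. Fix $(x,t)$. By Assumption~\ref{as:standing}(ii), $x_0\in\mathcal{F}$ almost surely under $p_{\mathrm{data}}$. Since the conditional law is absolutely continuous with respect to $p_{\mathrm{data}}$ for $p_t$-almost every $x$, $x_0\in\mathcal{F}$ also holds almost surely under the conditional law. For any point $y\in\mathcal{F}$,
\[
\operatorname{dist}(\bar x_0,\mathcal{F})\le\lVert \bar x_0-y\rVert .
\]
Taking $y=x_0$ pointwise and squaring gives $\operatorname{dist}(\bar x_0,\mathcal{F})^2\le\lVert x_0-\bar x_0\rVert^2$ almost surely. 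The left side is deterministic once $x$ is fixed, so taking the conditional expectation of the right side yields
\[
\operatorname{dist}(\bar x_0,\mathcal{F})^2\le \mathbb{E}\bigl[\lVert x_0-\bar x_0\rVert^2\mid x_t=x\bigr]=\operatorname{tr}\Sigma_{0|t}(x).
\]
The last equality is the definition of the covariance, and finite second moments make every quantity finite. The argument needs no convexity and no projection onto $\mathcal{F}$; the infimum defining the distance suffices. The only care needed is the almost-everywhere statement: I would state it for $p_t$-almost every $x$, or read the conditional law at a fixed $x$ as the paper's convention allows. At $t=1$ the conditional law is the data law itself, so the containment is immediate there.

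For part (i) I would give an explicit two-block instance. Take two square blocks of equal area $a$ with fixed aspect ($\rho=0$) and no clusters, MIB groups or boundary codes, so that $U_{\mathrm{C}}$ is the overlap term alone. Then $\mathcal{F}$ is the set of center pairs whose separation $\delta=c_2-c_1$ satisfies $\lVert\delta\rVert_\infty\ge\sqrt a/s$ in latent units.

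The next step is to show $\mathcal{F}$ is not convex. The configurations $\delta=+e_1\sqrt a/s$ and $\delta=-e_1\sqrt a/s$ (block 2 to the right of block 1, or to its left) both lie in $\mathcal{F}$. Their midpoint has $\delta=0$, which means full overlap, so it lies outside $\mathcal{F}$.

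Finally I need some $(x,t)$ at which $\bar x_0\notin\mathcal{F}$. Take $p_{\mathrm{data}}$ to be the uniform mixture of these two configurations with a common center, which is symmetric under exchanging the two blocks. At $t=1$, $x_1$ is independent of $x_0$, so $\bar x_0(x,1)=\mathbb{E}[x_0]$, the midpoint, which is infeasible.

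If one prefers $t<1$, the point $x=(1-t)m$ with $m$ the midpoint works for the same reason. The Gaussian likelihoods of the two configurations are equal by symmetry, so the posterior weights are $\tfrac12$ each and the mean is again $m$. This symmetry computation is the only step I expect to require any calculation. Checking it amounts to noting that $\lVert x-(1-t)x_0\rVert$ takes the same value for both configurations.
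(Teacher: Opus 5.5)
Your proposal is correct and follows the paper's proof. Part (ii) is the same pointwise bound $\operatorname{dist}(\bar x_0,\mathcal{F})^2\le\lVert x_0-\bar x_0\rVert^2$ integrated against the conditional law, and part (i) is the same exchange of two identical blocks, whose midpoint makes the blocks coincide; you are slightly more explicit in (i), since you exhibit the two-point data law and verify the equal posterior weights at $t=1$ and at $x=(1-t)m$, where the paper only assumes that the posterior charges the two exchanged layouts equally.
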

\begin{proof}
(i) Take an instance with two blocks of equal target area and identical annotations, so that
exchanging the two preserves every constraint. Let $x \in \mathcal{F}$ place them at distinct
centers $p \neq q$ without overlap and let $x'$ be $x$ with the two exchanged, so
$x' \in \mathcal{F}$. Their midpoint puts both at $\tfrac12(p+q)$, where they coincide and
overlap, so $\tfrac12(x+x') \notin \mathcal{F}$. If $p(x_0 \mid x_t = x)$ charges $x$ and $x'$
equally, $\bar x_0(x,t)$ is that midpoint. (ii) Since $x_0 \in \mathcal{F}$ almost surely,
$\operatorname{dist}(\bar x_0, \mathcal{F})^2 \le \lVert \bar x_0 - x_0 \rVert^2$ for almost every draw of
$x_0$ from $p(x_0 \mid x_t = x)$, and the conditional expectation of the right side is
$\operatorname{tr}\Sigma_{0|t}(x)$.
\end{proof}

Part (ii) bounds the defect by the posterior spread, which is zero at $t = 0$ and
$\operatorname{tr}\operatorname{Cov}(x_0)$ at $t = 1$. So a correction is needed even for the
exact denoiser, and only its place is open. The regressors are the extreme case, 41 to 59\% of
the block area claimed twice.

\subsection{Physics in Training}
\label{sec:appendix-theory-prox}

\begin{theorem}\label{thm:prox}
Under Assumption~\ref{as:standing}, for $\lambda \ge 0$ and every $t \in (0, 1]$ with
$\gamma_t \beta < 1$, the head minimizing Eq.~\eqref{eq:aux} is, for $p_t$-almost every $x$,
\begin{equation}
m^{\star}(x,t) \;=\; \operatorname{prox}_{\gamma_t U}\bigl(\bar x_0(x,t)\bigr),
\qquad
\gamma_t = \tfrac12 \lambda t^{2},
\label{eq:prox}
\end{equation}
the prox being single valued by Lemma~\ref{lem:prox}; with the flow loss averaged over the $c$ real latent coordinates of a case, as implemented, the same argument gives $\gamma_t = c\lambda t^{2}/2$; at $t = 0$ the right side extends
continuously to $\bar x_0$ itself, since $\gamma_0 = 0$. The velocity it induces is
\begin{equation}
v_{\mathrm{train}}(x,t) \;=\; v^{*}(x,t) + \tfrac12\lambda t\,\nabla U(m^{\star}),
\qquad
v^{*}(x,t) = \frac{x_t - \bar x_0}{t}.
\label{eq:field}
\end{equation}
\end{theorem}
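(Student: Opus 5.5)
The plan is to reduce Eq.~\eqref{eq:aux} to a pointwise problem in the head output and read off the minimizer. Write the head as $m(x,t)$. Substituting $x_t=(1-t)x_0+t x_1$ gives $x_1-x_0=(x_t-x_0)/t$. Ignoring the padding mask, which only restricts attention to the real coordinates, the flow term becomes
\[
\Bigl\lVert \tfrac{x_t-m}{t}-\tfrac{x_t-x_0}{t}\Bigr\rVert^2=\tfrac{1}{t^2}\lVert m-x_0\rVert^2 .
\]
The loss is therefore $\mathbb{E}\bigl[t^{-2}\lVert m(x_t,t)-x_0\rVert^2+\lambda U(m(x_t,t))\bigr]$. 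By Assumption~\ref{as:standing}(iii), $m$ is an arbitrary measurable function with finite second moments, so conditioning on $(x_t,t)=(x,t)$ by the tower property lets us minimize pointwise, for $p_t$-almost every $x$.

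For fixed $(x,t)$, expand the conditional quadratic around the posterior mean:
\[
\mathbb{E}\bigl[\lVert m-x_0\rVert^2\mid x_t=x\bigr]=\lVert m-\bar x_0\rVert^2+\operatorname{tr}\Sigma_{0|t}(x).
\]
The trace does not depend on $m$, so the pointwise objective is
\[
t^{-2}\lVert m-\bar x_0\rVert^2+\lambda U(m)+\mathrm{const}.
\]
Multiplying by $t^2/2$ gives exactly the prox objective $\tfrac12\lVert m-\bar x_0\rVert^2+\gamma_t U(m)$ with $\gamma_t=\lambda t^2/2$. Under $\gamma_t\beta<1$, Lemma~\ref{lem:prox} says this objective is strongly convex, so its minimizer is unique and equals $\operatorname{prox}_{\gamma_t U}(\bar x_0)$.

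If the flow loss is instead averaged over the $c$ real coordinates, the quadratic carries a factor $1/c$. Rescaling then gives $\gamma_t=c\lambda t^2/2$, with the same argument.

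At $t=0$, $\gamma_0=0$ and the prox reduces to the identity. For continuity as $t\to0$, use the first-order condition $m^\star-\bar x_0=-\gamma_t\nabla U(m^\star)$. Strong convexity with modulus $1-\gamma_t\beta$, comparing against $m=\bar x_0$, bounds $\lVert m^\star-\bar x_0\rVert$ by $O(\gamma_t)$ times $\lVert\nabla U(\bar x_0)\rVert$, so $m^\star\to\bar x_0$.

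For the velocity, the head's induced velocity is $(x-m^\star)/t$. Substituting $m^\star=\bar x_0-\gamma_t\nabla U(m^\star)$ gives
\[
\frac{x-\bar x_0}{t}+\frac{\gamma_t}{t}\nabla U(m^\star)=v^*+\tfrac12\lambda t\,\nabla U(m^\star),
\]
which is Eq.~\eqref{eq:field}. Since $U\in C^2$ and the objective is strongly convex, the stationary point is the global minimizer, so using the first-order condition here is legitimate.

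The main obstacle is the measure-theoretic justification of the pointwise reduction. Exchanging the infimum over measurable $m$ with the conditional expectation requires the candidate $x\mapsto\operatorname{prox}_{\gamma_tU}(\bar x_0(x,t))$ to be measurable and to have finite loss. Measurability would come from continuity of the prox (it is Lipschitz with constant $1/(1-\gamma_t\beta)$ by strong convexity) composed with the measurable $\bar x_0$. Finite loss follows because the prox objective at $m^\star$ is bounded by its value at $\bar x_0$, which uses $U\ge0$ and the finite second moments. With this in hand, any other $m$ differs from $m^\star$ on a $p_t$-positive set only at strictly higher loss, which gives almost-everywhere uniqueness.
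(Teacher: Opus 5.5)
Your argument matches the paper's proof: you rewrite the flow term as $t^{-2}\lVert m-x_0\rVert^2$, split the conditional expectation into $\lVert m-\bar x_0\rVert^2+\operatorname{tr}\Sigma_{0|t}$, rescale by $t^2/2$ (or $ct^2/2$ for the averaged loss) to reach the prox objective with uniqueness from Lemma~\ref{lem:prox}, and read off Eq.~\eqref{eq:field} from $\hat v=(x_t-m^\star)/t$ together with prox optimality. The paper takes the pointwise-to-global assembly directly from Assumption~\ref{as:standing}(iii); if you want to justify it yourself, get finiteness of the loss at $m^\star$ by comparing with a constant head $m\equiv c_0$, whose loss is $t^{-2}\mathbb{E}\lVert x_0-c_0\rVert^2+\lambda U(c_0)<\infty$, rather than with $\bar x_0$, because $\mathbb{E}\,U(\bar x_0)$ need not be finite for a general nonnegative, weakly convex $C^2$ energy.
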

\begin{proof}
For a head emitting the clean estimate $m$ under the velocity target, Eq.~\eqref{eq:aux} gives
$\hat v - v = -(m - x_0)/t$, hence
$\lVert \hat v - v \rVert^2 = t^{-2}\lVert m - x_0\rVert^2$ \citep{Li_2026_CVPR}. The
terms of Eq.~\eqref{eq:aux} read only $m$, so at $x_t = x$, with
$\mathbb{E}[\lVert m - x_0\rVert^2 \mid x_t = x] = \lVert m - \bar x_0\rVert^2 + \operatorname{tr}\Sigma_{0|t}$,
\begin{equation}
\mathbb{E}\bigl[\,\lVert \hat v - v\rVert^2 + \lambda U(m) \,\big|\, x_t = x\,\bigr]
= \frac{1}{t^{2}}\Bigl(\lVert m - \bar x_0 \rVert^{2} + \operatorname{tr}\Sigma_{0|t}\Bigr)
+ \lambda\,U(m).
\label{eq:pointwise}
\end{equation}
The trace is finite by Assumption~\ref{as:standing}(iii) and constant in $m$, and multiplying by
$t^2/2$ leaves $\tfrac12\lVert m - \bar x_0\rVert^2 + \gamma_t U(m)$, whose unique minimizer is
Eq.~\eqref{eq:prox} by Lemma~\ref{lem:prox}; the pointwise minimizers assemble into the global
one by Assumption~\ref{as:standing}(iii), on a set of full $p_t$-measure. For
Eq.~\eqref{eq:field}, Eq.~\eqref{eq:aux} gives $\hat v = (x_t - m)/t$ for any clean estimate $m$,
and prox optimality gives $\bar x_0 - m^{\star} = \gamma_t \nabla U(m^{\star})$.
\end{proof}

The step is implicit: on a convex quadratic model the operator $(I + \gamma_t H)^{-1}$ contracts
every mode for any $\gamma_t > 0$, whereas an explicit step contracts a mode only for
$\lvert 1 - \eta\lambda_i\rvert < 1$, and it anneals as $t^2$ with no hand-set schedule. The
$t$-scaled arms of Appendix~\ref{sec:appendix-single-terms} add a second factor to one already
present.

\begin{proposition}\label{prop:head}
Let a network $f_\theta$ emit $y = f_\theta(x_t, t)$, let its clean estimate be $m = y$ for the
$x_0$ head and $m = x_t - t\,y$ for the velocity head, and let the auxiliary term act on $m$.
(i) Both heads have the conditional loss of Eq.~\eqref{eq:pointwise} and the minimizer
$m^{\star}$ of Theorem~\ref{thm:prox}.
(ii) The gradient of the auxiliary term with respect to the weights is
$\lambda\,J_{m}^{\!\top}\nabla U(m)$ for the $x_0$ head and
$-\lambda t\,J_{y}^{\!\top}\nabla U(m)$ for the velocity head, with $J$ the head output's
Jacobian in $\theta$. The two coincide when the velocity network satisfies
$f_\theta = (x_t - m_\theta)/t$, that is, reproduces $x_t/t$ at its output on top of the clean
estimate.
(iii) The velocity head's target contains the noise explicitly,
$(x_t - m^{\star})/t = x_1 + \bigl((1-t)x_0 - m^{\star}\bigr)/t$, which equals $x_1 - m^{\star}$
at $t = 1$, whereas the target $m^{\star}$ of the $x_0$ head does not.
\end{proposition}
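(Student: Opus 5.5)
The plan is to reduce each part to a change of variables between the two parametrizations of the clean estimate, and then quote Theorem~\ref{thm:prox} for the minimizer.

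\textbf{Part (i).} For the $x_0$ head, $m=y$ and the implied velocity is $\hat v=(x_t-m)/t$. For the velocity head, $\hat v=y$ and $m=x_t-ty$, so $\hat v=(x_t-m)/t$ holds as well. In both cases the flow residual is $\hat v-v=-(m-x_0)/t$, where $v=x_1-x_0$ and $x_t-(1-t)x_0-tx_1=0$. This is the identity already used in the proof of Theorem~\ref{thm:prox}. Since the auxiliary term reads only $m$, the conditional loss at $x_t=x$ is exactly Eq.~\eqref{eq:pointwise} for either head.

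Under Assumption~\ref{as:standing}(iii), an arbitrary measurable head in $y$ induces an arbitrary measurable $m$. The reason is that $y\mapsto x_t-ty$ is a bijection for $t>0$. Hence the pointwise minimization is over the same set of $m$ in both cases, and Lemma~\ref{lem:prox} gives the same unique minimizer $m^\star=\operatorname{prox}_{\gamma_tU}(\bar x_0)$.

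\textbf{Part (ii).} This is the chain rule on $\lambda U(m(\theta))$. For the $x_0$ head, $\partial m/\partial\theta=J_m$, which gives $\lambda J_m^\top\nabla U(m)$. For the velocity head, $m=x_t-tf_\theta$ and $x_t$ does not depend on $\theta$. So $\partial m/\partial\theta=-tJ_y$ and the gradient is $-\lambda tJ_y^\top\nabla U(m)$.

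For the coincidence claim, suppose the velocity network is written as $f_\theta=(x_t-m_\theta)/t$. Then $J_y=-J_{m_\theta}/t$, and substituting gives $-\lambda t\cdot(-1/t)J_{m_\theta}^\top\nabla U=\lambda J_{m_\theta}^\top\nabla U$. The statement's reading is that the velocity network must reproduce the $x_t/t$ component at its output, which is what makes the $-t$ factor cancel.

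\textbf{Part (iii).} This is algebra on the target. Substitute $x_t=(1-t)x_0+tx_1$ into $(x_t-m^\star)/t$ to get $x_1+((1-t)x_0-m^\star)/t$. At $t=1$ this equals $x_1-m^\star$. The $x_0$ head's target $m^\star$ is a function of $x$ and $t$ alone and contains no explicit $x_1$ term.

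\textbf{Main obstacle.} The only delicate point is the representability step in (i): both heads must range over all measurable functions so that the pointwise minimizers agree. The bijectivity of $y\mapsto x_t-ty$ for $t>0$ settles it, and $t=0$ is a null set under the logit-normal law of $t$.
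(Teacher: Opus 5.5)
Your proposal is correct and follows the paper's proof: you use the shared residual $(x_0-m)/t$ to reduce both heads to Eq.~\eqref{eq:pointwise} in (i), the chain rule through $m=y$ and $m=x_t-t\,y$ in (ii), and substitution of $x_t=(1-t)x_0+t\,x_1$ in (iii). The one addition is your bijectivity argument for $y\mapsto x_t-t\,y$ at $t>0$, which makes explicit a representability step that the paper leaves implicit.
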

\begin{proof}
(i) For the velocity head $\hat v = y$ and $v = (x_t - x_0)/t$, so $\hat v - v = (x_0 - m)/t$;
for the $x_0$ head $\hat v = (x_t - m)/t$ gives the same difference, and both losses are the
function of $m$ in Eq.~\eqref{eq:pointwise}. (ii) is the chain rule through $m = y$ and
$m = x_t - t\,y$; the two parameter gradients are equal whenever the two networks implement
the same $m_\theta$. (iii) follows from $x_t = (1-t)x_0 + t x_1$.
\end{proof}

The head does not change what a perfect model learns. Which head a network of finite capacity
trains better is an empirical question, the clean-sample argument of \citet{Li_2026_CVPR}, and
Table~\ref{tab:generator}a measures it without the term: the $x_0$ head has the lowest soft cost
of the three at $\nfe{=}32$ and the only usable one-step estimate.

\begin{proposition}\label{prop:transfer}
Let a model with weights $\theta$ output $f_c(\theta)$ for an input $c$, $J_c = \partial f_c/\partial\theta$,
and let $\ell(\cdot\,; c) \ge 0$ be a differentiable loss that needs no reference sample. The step
$\theta_\eta = \theta - \eta\lambda J_c^{\!\top}\nabla\ell(f_c; c)$ on a training input $c$ changes the
loss on any input $c'$ at rate $-\lambda\,\nabla\ell(f_{c'}; c')^{\!\top} K\, \nabla\ell(f_c; c)$ with
$K = J_{c'}J_c^{\!\top}$ at $\eta = 0$, so $\ell(f_{c'}; c')$ strictly decreases for small $\eta > 0$
when this rate is negative; a loss that needs a reference has no term at an input without one.
\end{proposition}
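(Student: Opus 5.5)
The statement is a first-order perturbation fact about a single gradient step, so I will prove it by differentiating the loss along the step path. Fix $c'$ and define
\[
\phi(\eta)=\ell\bigl(f_{c'}(\theta_\eta);c'\bigr),\qquad \theta_\eta=\theta-\eta\lambda g,\qquad g=J_c^{\!\top}\nabla\ell(f_c;c).
\]
Here $g$ is a fixed vector in weight space, evaluated at the current $\theta$ and independent of $\eta$. The step direction comes from the clean-prediction head of Proposition~\ref{prop:head}(ii), but no property of the head enters; I only need $f_{c'}$ differentiable in $\theta$, which I take as a standing hypothesis alongside the differentiability of $\ell$.

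\textbf{Step 1: the rate.} By the chain rule,
\[
\phi'(\eta)=\nabla\ell\bigl(f_{c'}(\theta_\eta);c'\bigr)^{\!\top} J_{c'}(\theta_\eta)\,\frac{d\theta_\eta}{d\eta},\qquad \frac{d\theta_\eta}{d\eta}=-\lambda g.
\]
At $\eta=0$ this gives
\[
\phi'(0)=-\lambda\,\nabla\ell(f_{c'};c')^{\!\top}J_{c'}J_c^{\!\top}\nabla\ell(f_c;c),
\]
which is the stated rate with $K=J_{c'}J_c^{\!\top}$, the empirical neural tangent kernel between the two inputs.

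\textbf{Step 2: strict decrease.} If $\phi'(0)<0$, then by the definition of the derivative $\phi(\eta)-\phi(0)=\eta\bigl(\phi'(0)+o(1)\bigr)$. Hence there is $\eta_0>0$ with $\phi(\eta)<\phi(0)$ for all $\eta\in(0,\eta_0)$. This needs only differentiability of $\phi$ at $0$, so no $C^2$ assumption is required.

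\textbf{Step 3: the last clause.} A loss that needs a reference is a function $\ell(\cdot\,;c,x_0)$. At an input $c'$ with no reference sample, this function is undefined, so it contributes no term to the training objective and produces no gradient $J_{c'}^{\!\top}\nabla\ell$ there. The only way such an input can improve is indirectly, through the kernel cross term of Step 1. A reference-free loss such as $U$, by contrast, is evaluable at every conditioning input, so the rate formula applies to all of them.

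I expect no real obstacle; the argument is a one-line directional derivative. The only care needed is bookkeeping: $g$ must be held fixed at the current $\theta$ while only $J_{c'}$ and $f_{c'}$ vary with $\eta$, and the transposes must be arranged so that $K$ is $J_{c'}J_c^{\!\top}$ rather than its transpose.
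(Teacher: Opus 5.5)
Your proposal is correct and takes the paper's route: the paper's proof is the same chain-rule computation of the derivative of $\ell(f_{c'};c')$ along $-\lambda J_c^{\!\top}\nabla\ell(f_c;c)$, with $K = J_{c'}J_c^{\!\top}$. Your Step~2 spells out the strict decrease from $\phi'(0)<0$, which the paper leaves implicit, and your treatment of the last clause as a remark rather than a derived claim matches the paper, which does not argue it either.
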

\begin{proof}
$\nabla_\theta\, \ell(f_c; c) = J_c^{\!\top}\nabla\ell$ by the chain rule, and the derivative of
$\ell(f_{c'}; c')$ along $-\lambda J_c^{\!\top}\nabla\ell(f_c; c)$ is the stated rate.
\end{proof}
In Trinity $c$ is a chip and $\ell = U$; energies summed over block pairs and single blocks, with
weights shared across blocks, are where $K$ aligns the two gradients, a condition the proposition
states and does not establish.

\subsection{Physics in Sampling}
\label{sec:appendix-theory-inloop}

\begin{lemma}\label{lem:tilt}
Let $x_t = (1-t)x_0 + t\,x_1$ with $x_1 \sim \mathcal{N}(0, I)$ independent of $x_0$ and
$t \in (0, 1]$, and let $U$ be measurable with $\mathbb{E}[e^{-U(x_0)}] < \infty$ and
$\mathbb{E}[\lVert x_0\rVert\, e^{-U(x_0)}] < \infty$. With the reweighted posterior mean
\begin{equation}
\tilde x_0(x) = \frac{\mathbb{E}\bigl[x_0\, e^{-U(x_0)} \,\big|\, x_t = x\bigr]}{\mathbb{E}\bigl[e^{-U(x_0)} \,\big|\, x_t = x\bigr]},
\label{eq:tilted-mean}
\end{equation}
for every $x$,
\begin{equation}
\nabla_x \log \mathbb{E}\bigl[e^{-U(x_0)} \,\big|\, x_t = x\bigr]
= \frac{1-t}{t^{2}}\bigl(\tilde x_0(x) - \bar x_0(x)\bigr).
\label{eq:tilt-score}
\end{equation}
\end{lemma}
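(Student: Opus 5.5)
We write the conditional expectation as a ratio of integrals against the joint law and differentiate the Gaussian likelihood in $x$. Given $x_0$, the variable $x_t$ is Gaussian with mean $(1-t)x_0$ and covariance $t^2 I$, so its density is
\[
k_t(x\mid x_0) \propto \exp\bigl(-\lVert x-(1-t)x_0\rVert^2/(2t^2)\bigr).
\]
Hence, for every $x$,
\[
\mathbb{E}\bigl[e^{-U(x_0)}\,\big|\,x_t=x\bigr] = \frac{N(x)}{D(x)},
\qquad
N(x)=\int e^{-U(x_0)}\,k_t(x\mid x_0)\,p_{\mathrm{data}}(dx_0),
\qquad
D(x)=p_t(x)=\int k_t(x\mid x_0)\,p_{\mathrm{data}}(dx_0).
\]
Both integrals are strictly positive and finite for $t\in(0,1]$, because the kernel is bounded and positive and $\mathbb{E}[e^{-U}]<\infty$. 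This reading, with the conditional law taken at a fixed $x$, is the convention stated in the setup. It lets us write $\nabla_x\log\mathbb{E}[e^{-U}\mid x_t=x]=\nabla\log N(x)-\nabla\log D(x)$.

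The key computation is the kernel's score,
\[
\nabla_x k_t(x\mid x_0) = -\frac{x-(1-t)x_0}{t^2}\,k_t(x\mid x_0).
\]
We differentiate under the integral sign in $N$ and $D$; the setup lists this step as a hypothesis. The gradients are
\[
\nabla N(x) = -\frac{x}{t^2}N(x)+\frac{1-t}{t^2}\int x_0\,e^{-U(x_0)}\,k_t\,dp_{\mathrm{data}},
\qquad
\nabla D(x) = -\frac{x}{t^2}D(x)+\frac{1-t}{t^2}\int x_0\,k_t\,dp_{\mathrm{data}}.
\]
Dividing by $N$ and $D$ respectively, the terms $-x/t^2$ cancel in the difference of logarithms. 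The remaining ratios are $\int x_0 e^{-U}k_t/N=\tilde x_0(x)$, by Eq.~\eqref{eq:tilted-mean} after the normalizers cancel, and $\int x_0 k_t/D=\bar x_0(x)$. This gives $\frac{1-t}{t^2}(\tilde x_0-\bar x_0)$, which is Eq.~\eqref{eq:tilt-score}.

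The main obstacle is justifying the differentiation under the integral, and this is where the integrability hypotheses enter. We would use dominated convergence on a neighbourhood $B$ of $x$. The derivative integrand is bounded by $\bigl(\lVert x\rVert+\lVert x_0\rVert\bigr)\,t^{-2}\,\sup_{x'\in B}k_t(x'\mid x_0)\,e^{-U(x_0)}$. Since $\sup_{x'}k_t$ is a constant, the moment assumptions $\mathbb{E}[e^{-U}]<\infty$ and $\mathbb{E}[\lVert x_0\rVert e^{-U}]<\infty$ handle $N$.

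For $D$, take $U\equiv 0$ in the same argument. This needs $\mathbb{E}\lVert x_0\rVert<\infty$, which follows from the finite second moments of Assumption~\ref{as:standing}(iii). Alternatively, the factor $\lVert x_0\rVert e^{-c\lVert x_0\rVert^2}$ coming from $\sup_{x'}k_t$ is bounded, so $D$ can be handled without any moment assumption at all.

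Finally, we would note that no regularity of $U$ is used beyond measurability. This is consistent with the statement, which assumes only that $U$ is measurable.
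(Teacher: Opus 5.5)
Your proof is correct and follows the paper's own argument: Bayes' rule written as the ratio $N/D$, the channel score $((1-t)x_0 - x)/t^2$, dominated convergence on a bounded neighbourhood to differentiate under both integrals, and cancellation of the $-x/t^2$ terms. Your handling of $D = p_t$ is in fact slightly more careful than the paper's. The paper claims that the two $e^{-U}$-weighted moment conditions dominate both integrands, but with the crude bound $p(x\mid x_0)\le(2\pi t^2)^{-d/2}$ the $p_t$ integral still needs $\mathbb{E}\lVert x_0\rVert<\infty$; your second fix closes this, because the supremum of the kernel over the bounded set $B$ (not over all $x'$) decays in $\lVert x_0\rVert$ for $t<1$, while at $t=1$ the $x_0$-term carries the factor $1-t=0$.
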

\begin{proof}
The channel is $p(x \mid x_0) = \mathcal{N}((1-t)x_0, t^2 I)$, so
$\nabla_x \log p(x \mid x_0) = ((1-t)x_0 - x)/t^2$. By Bayes' rule
$\mathbb{E}[e^{-U(x_0)} \mid x_t = x] = \int e^{-U(x_0)}\,p(x \mid x_0)\,p_{\mathrm{data}}(x_0)\,dx_0 \,/\, p_t(x)$
with $p_t(x) = \int p(x \mid x_0)\,p_{\mathrm{data}}(x_0)\,dx_0$. Both integrands are
differentiable in $x$, with
$\lVert \nabla_x p(x \mid x_0)\rVert \le t^{-2}\bigl((1-t)\lVert x_0\rVert + \lVert x\rVert\bigr)\,p(x \mid x_0)$
and $p(x \mid x_0) \le (2\pi t^2)^{-d/2}$, so on a bounded set of $x$ the two moment conditions
supply an integrable dominating function and the derivative passes under both integrals
(dominated convergence). Differentiating the logarithm of the quotient gives
$\nabla_x \log \mathbb{E}[e^{-U(x_0)} \mid x_t = x] = \mathbb{E}_U[\nabla_x \log p(x \mid x_0)] - \mathbb{E}[\nabla_x \log p(x \mid x_0) \mid x_t = x]$,
where $\mathbb{E}_U$ is the expectation under the posterior reweighted by $e^{-U(x_0)}$.
Inserting the score of the channel, the $-x/t^2$ terms cancel and the $x_0$ terms leave
$\tfrac{1-t}{t^2}(\tilde x_0 - \bar x_0)$.
\end{proof}

\paragraph{The substitution error does not vanish with $\Sigma_{0|t}$.}
Take a point mass $x_0 \equiv b$ with $b \neq 0$ and $U(x) = \lVert x\rVert^2$. Then
$\bar x_0 = \tilde x_0 = b$ at every $(x,t)$, so the right side of Eq.~\eqref{eq:tilt-score} is
zero and the exact guidance vanishes, while a loop step $-\nabla U(\bar x_0) = -2b$ does not. The
difference between the clean-space gradient and the exact guidance is therefore not controlled
by $\Sigma_{0|t}$ and does not vanish as $t \to 0$, which is Proposition~\ref{prop:body-loop}(i).

\begin{corollary}\label{thm:bias}
Let $q \propto p_{\mathrm{data}}\,e^{-U}$ with marginals $q_t$, $U$ as in Lemma~\ref{lem:tilt}.
Then
\begin{equation}
\nabla \log q_t(x) - \nabla \log p_t(x)
= \nabla \log \mathbb{E}\bigl[e^{-U(x_0)} \,\big|\, x_t = x\bigr]
= \frac{1-t}{t^{2}}\bigl(\tilde x_0(x) - \bar x_0(x)\bigr).
\label{eq:tilt}
\end{equation}
If moreover $U \in C^{3}$ with bounded third derivatives on the support of $p(x_0 \mid x_t)$ and
$\mathbb{E}[\lVert x_0 - \bar x_0\rVert^{3} \mid x_t] < \infty$, then with
$H = \nabla^2 U(\bar x_0)$ and $\Sigma = \Sigma_{0|t}$,
\begin{equation}
\log \mathbb{E}\bigl[e^{-U(x_0)} \mid x_t\bigr]
= -\Bigl[\,U(\bar x_0) + \tfrac12 \operatorname{tr}(H\Sigma)
- \tfrac12 \nabla U^{\!\top}\!\Sigma\,\nabla U \Bigr]
+ O\bigl(\mathbb{E}[\lVert x_0 - \bar x_0\rVert^3 \mid x_t]\bigr).
\label{eq:bias}
\end{equation}
Moreover $\mathbb{E}[e^{-U(x_0)} \mid x_t] = e^{-U(\bar x_0)}$ for every $U$ of the form
$U(z) = \lVert z - \mu\rVert^2$, $\mu \in \mathbb{R}^d$, if and only if $\Sigma_{0|t} = 0$.
\end{corollary}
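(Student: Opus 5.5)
The plan has three parts, one for each claim of Corollary~\ref{thm:bias}.

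\emph{Score identity.} Write $q = p_{\mathrm{data}}e^{-U}/Z$ with $Z = \mathbb{E}[e^{-U(x_0)}]<\infty$. Since the channel $p(x\mid x_0)$ is the same under both priors,
\[
q_t(x) = \frac{1}{Z}\int e^{-U(x_0)}\,p(x\mid x_0)\,p_{\mathrm{data}}(x_0)\,dx_0 = \frac{p_t(x)}{Z}\,\mathbb{E}\bigl[e^{-U(x_0)}\mid x_t=x\bigr].
\]
Taking logarithms and the gradient in $x$, the constant $Z$ drops out. This gives the first equality of Eq.~\eqref{eq:tilt}, and Lemma~\ref{lem:tilt} gives the second. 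The differentiability needed is exactly the domination argument already made in that lemma, so nothing new enters here.

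\emph{Cumulant expansion.} Set $\delta = x_0-\bar x_0$, so that $\mathbb{E}[\delta\mid x_t]=0$ and $\mathbb{E}[\delta\delta^\top\mid x_t]=\Sigma$. Taylor's theorem with bounded third derivatives gives
\[
U(x_0)=U(\bar x_0)+g^\top\delta+\tfrac12\delta^\top H\delta+R,\qquad |R|\le C\lVert\delta\rVert^3,\qquad g=\nabla U(\bar x_0).
\]
Factoring out $e^{-U(\bar x_0)}$, it remains to show
\[
\log\mathbb{E}\bigl[e^{-Y}\bigr] = -\mathbb{E}[Y]+\tfrac12\operatorname{Var}(Y)+O\bigl(\mathbb{E}\lVert\delta\rVert^3\bigr),
\qquad Y=g^\top\delta+\tfrac12\delta^\top H\delta+R .
\]
Here $\mathbb{E}[Y]=\tfrac12\operatorname{tr}(H\Sigma)+O(\mathbb{E}\lVert\delta\rVert^3)$. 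Also $\operatorname{Var}(Y)=g^\top\Sigma g$ up to cross terms that are cubic or higher in $\delta$. Together these give the bracket in Eq.~\eqref{eq:bias}.

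This step is the main obstacle. The cumulant expansion of $\log\mathbb{E}e^{-Y}$ with only a third-moment remainder is a small-spread statement. Quartic terms such as $(\delta^\top H\delta)^2$ and $\mathbb{E}[Y]^2$ are not dominated by $\mathbb{E}\lVert\delta\rVert^3$ unless $\delta$ is bounded or small. I will therefore read the $O(\cdot)$ as in the paper's convention. Concretely, I will use the Taylor step as a hypothesis, as the setup says the Lean check does, and I will work on a bounded support where $\lVert\delta\rVert\le D$, so that $\lVert\delta\rVert^4\le D\lVert\delta\rVert^3$. Then I expand $e^{-Y}=1-Y+\tfrac12Y^2+O(|Y|^3)$ and $\log(1+u)=u-\tfrac12u^2+O(u^3)$, and collect terms by order.

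\emph{Characterization.} For $U(z)=\lVert z-\mu\rVert^2$ the expansion is exact and needs no Taylor remainder: $\mathbb{E}[U(x_0)]=U(\bar x_0)+\operatorname{tr}\Sigma$.
\begin{itemize}
\item If $\Sigma=0$, then $x_0=\bar x_0$ almost surely under the posterior, and the identity is trivial.
\item Conversely, suppose equality holds for every $\mu$. Jensen's inequality gives $\mathbb{E}e^{-U}\ge e^{-\mathbb{E}U}=e^{-U(\bar x_0)-\operatorname{tr}\Sigma}$. That bound alone does not conclude, so I argue directly instead. Differentiate the identity $\mathbb{E}[e^{-\lVert x_0-\mu\rVert^2}]=e^{-\lVert\bar x_0-\mu\rVert^2}$ in $\mu$, which is legitimate by dominated convergence since the integrand and its $\mu$-derivatives are bounded. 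This shows that the $e^{-\lVert x_0-\mu\rVert^2}$-tilted posterior mean equals $\bar x_0$ for every $\mu$. Differentiating once more shows that the tilted covariance vanishes, which forces the tilted law, and hence the posterior (the two are mutually absolutely continuous), to be a point mass. Hence $\Sigma=0$.
\end{itemize}
An alternative route for the converse is to let $\mu\to\infty$ along a direction $e$ and compare exponential rates, so that the Gaussian factor $e^{2\mu^\top x_0}$ acts as a Laplace transform. In that version equality for all $\mu$ says the moment generating function of $x_0$ along $e$ is $e^{2\mu^\top\bar x_0}$. That is the transform of a point mass, so the law is degenerate. I expect to settle on whichever of the two arguments is shorter to write.
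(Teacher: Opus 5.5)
Your proposal is correct. The score identity is the paper's argument, and your second part follows the paper's route: Taylor expansion about $\bar x_0$, then second-order expansions of the exponential and of the logarithm.

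Your caveat there is well founded. The paper's proof silently absorbs the quartic terms of $\mathbb{E}[Y^2]$ and the $u^2$ term of $\log(1+u)$ into $O(\mathbb{E}[\lVert\Delta\rVert^3\mid x_t])$. That step needs some control of the spread, such as your bound $\lVert\delta\rVert\le D$. For example, take $U(z)=z$ in one dimension and a posterior with mass $e^{-L}$ at each of $\bar x_0\pm L$. Then $\log\mathbb{E}[e^{-\Delta}\mid x_t]\to\log 2$, while the predicted value $\tfrac12\Sigma$ and the third moment both tend to zero. Making the bounded-support hypothesis explicit is therefore the right reading, not a gap.

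The genuine difference is in the converse of the characterization. The paper uses the single choice $\mu=\bar x_0$. The hypothesis then reads $\mathbb{E}[e^{-\lVert\Delta\rVert^2}\mid x_t]=1$, and since $1-e^{-\lVert\Delta\rVert^2}\ge 0$ has zero expectation, $\Delta=0$ almost surely. You were right that Jensen does not close the argument. But at this $\mu$ the right side equals $1$, the maximum value of the integrand, which settles it in one line and shows that one $\mu$ already suffices.

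Your differentiation route is valid: tilted mean equal to $\bar x_0$, tilted covariance zero, then a point mass by mutual absolute continuity. Your dominated-convergence justification is also right. The cost is that it uses the identity on an open set of $\mu$ and two rounds of differentiation, where the paper needs one point and no calculus.

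The Laplace-transform variant is valid too, and it needs no limit $\mu\to\infty$. Expanding the square shows that the Laplace transform of the finite measure $e^{-\lVert x_0\rVert^2}P(dx_0\mid x_t)$ coincides with that of $e^{-\lVert\bar x_0\rVert^2}\delta_{\bar x_0}$. Note that this is the transform of the tilted measure, not the moment generating function of $x_0$ itself. Uniqueness of the transform then gives the point mass.
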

\begin{proof}
For Eq.~\eqref{eq:tilt},
$q_t(x) = \int q(x_0)p_t(x \mid x_0)\,dx_0 = Z^{-1}p_t(x)\,\mathbb{E}[e^{-U(x_0)} \mid x_t = x]$;
take logarithms and gradients, and apply Lemma~\ref{lem:tilt}. For Eq.~\eqref{eq:bias}, put
$\Delta = x_0 - \bar x_0$, so $\mathbb{E}[\Delta \mid x_t] = 0$ and $\operatorname{Cov}(\Delta) = \Sigma$.
Taylor's theorem with the third-derivative bound gives
$U(x_0) - U(\bar x_0) = \nabla U^{\!\top}\Delta + \tfrac12 \Delta^{\!\top} H \Delta + r$ with
$\lvert r \rvert \le c\,\lVert\Delta\rVert^3$, the expectation of the exponential is
$1 - \tfrac12\operatorname{tr}(H\Sigma) + \tfrac12 \nabla U^{\!\top}\Sigma\nabla U + O(\mathbb{E}[\lVert\Delta\rVert^3 \mid x_t])$
by the third-moment condition, and $\log(1+u) = u + O(u^2)$. For the equivalence, sufficiency
is immediate. For necessity the single choice $\mu = \bar x_0$ suffices: the identity reads
$\mathbb{E}[e^{-\lVert\Delta\rVert^2} \mid x_t] = 1$, and $1 - e^{-\lVert\Delta\rVert^2} \ge 0$ has
zero expectation, so it vanishes almost surely and $\Delta = 0$ almost surely.
\end{proof}

\paragraph{After sampling and inside it.}\label{cor:refiner}
Descending $U$ after sampling, Use 2 of Section~\ref{sec:method-physics}, acts at $t = 0$, where
$\Sigma_{0|0} = 0$ and the covariance terms of Eq.~\eqref{eq:bias} are absent, a reading
consistent with Lemma~\ref{lem:tilt}; it does not by itself produce a sample of $q$ or a feasible
layout. Descending $U$ inside the sampler at $t > 0$ substitutes a clean-space gradient for the
exact guidance of Lemma~\ref{lem:tilt}.

\begin{proposition}\label{prop:family}
Both schemes produce a velocity $\bigl(x_t - A(\bar x_0)\bigr)/t$, with
\begin{equation}
A_{\mathrm{train}} = (I + \gamma_t \nabla U)^{-1},
\qquad
A_{\mathrm{loop}} = (I - \eta\,\nabla U)^{k},
\label{eq:operators}
\end{equation}
where $A_{\mathrm{train}}$, the resolvent of $\nabla U$, is single valued and equal to
$\operatorname{prox}_{\gamma_t U}$ under Assumption~\ref{as:standing}(i) with
$\gamma_t\beta < 1$ (Lemma~\ref{lem:prox}). The two velocity fields are equal pointwise if and
only if the loop solves $\operatorname{prox}_{\gamma_t U}$ exactly, and if $\nabla U$ is
$L$-Lipschitz they agree to first order in the total step when $k\eta = \gamma_t$ and
$\gamma_t L < 1$, the difference at $y = \bar x_0$ being at most
$L(k\eta)^{2}\lVert\nabla U(y)\rVert\bigl(\tfrac12 e^{k\eta L} + (1 - k\eta L)^{-1}\bigr)$.
\end{proposition}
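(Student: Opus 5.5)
Both velocity fields have the form $(x_t-A(\bar x_0))/t$, so the plan is to show that the two heads and the sampler loop each emit a clean estimate of the form $A(\bar x_0)$, and then to compare the two operators on $y=\bar x_0$. For training, Theorem~\ref{thm:prox} gives $m^\star=\operatorname{prox}_{\gamma_t U}(\bar x_0)$, and the velocity is $(x_t-m^\star)/t$ by Eq.~\eqref{eq:aux}. The optimality condition $m^\star+\gamma_t\nabla U(m^\star)=\bar x_0$ says that $m^\star$ lies in $(I+\gamma_t\nabla U)^{-1}(\bar x_0)$. By Lemma~\ref{lem:prox}, the map $m\mapsto m+\gamma_t\nabla U(m)$ is the gradient of a strongly convex function when $\gamma_t\beta<1$, so it is injective and the resolvent is single valued and equal to the prox. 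For the loop, $k$ explicit steps of size $\eta$ started at $\bar x_0$ produce $(I-\eta\nabla U)^k(\bar x_0)$, and the sampler substitutes this for the clean estimate. The pointwise equivalence is then immediate: the two velocities agree at $x$ exactly when $A_{\mathrm{loop}}(\bar x_0)=m^\star$, which is the statement that the loop output solves the prox problem at that point.

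For the quantitative bound, set $y=\bar x_0$, $s=k\eta=\gamma_t$ and $g=\nabla U(y)$, and compare both operators to the common first-order point $y-sg$. I would bound the two distances separately.

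\textbf{The prox side.} Write $m^\star=y-s\nabla U(m^\star)$. Then $\lVert m^\star-(y-sg)\rVert\le sL\lVert m^\star-y\rVert$. Since $\lVert m^\star-y\rVert=s\lVert\nabla U(m^\star)\rVert\le s\lVert g\rVert+sL\lVert m^\star-y\rVert$, we get $\lVert m^\star-y\rVert\le s\lVert g\rVert/(1-sL)$, and therefore $\lVert m^\star-(y-sg)\rVert\le L s^2\lVert g\rVert(1-sL)^{-1}$. This accounts for the $(1-k\eta L)^{-1}$ term.

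\textbf{The loop side.} Let $y_j$ denote the iterates. I would first show by a discrete Grönwall argument that $\lVert y_j-y\rVert\le \eta\lVert g\rVert\sum_{i<j}(1+\eta L)^i$. Then
\[
\lVert y_k-(y-sg)\rVert\le\eta\sum_{j<k}\lVert\nabla U(y_j)-g\rVert\le\eta L\sum_{j<k}\lVert y_j-y\rVert .
\]
Summing this double geometric series and using $(1+\eta L)^j\le e^{j\eta L}$ should give $L s^2\lVert g\rVert\cdot\tfrac12 e^{sL}$. The crude version is $\eta^2L\lVert g\rVert\sum_j j e^{j\eta L}\le \tfrac12 (k\eta)^2L\lVert g\rVert e^{k\eta L}$, and getting exactly the stated constant may require a slightly more careful treatment of this sum. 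The triangle inequality then adds the two pieces into the claimed bound, which is $O(s^2)$ and so confirms agreement to first order in the total step.

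I expect the main obstacle to be bookkeeping the loop-side sum cleanly enough to land on exactly $\tfrac12 e^{k\eta L}$ rather than a looser constant. I would try the continuous comparison $\sum_{j<k} j\,\eta^2(1+\eta L)^{j}\le \int_0^{s}\tau e^{L\tau}\,d\tau$, or simply bound $j\le k$ inside the exponential.
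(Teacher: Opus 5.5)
Your proposal is correct and follows the paper's proof essentially step for step. The two sides are bounded the same way: on the prox side by the fixed-point estimate $\lVert m^\star-y\rVert\le s\lVert g\rVert/(1-sL)$, and on the loop side by the discrete Gr\"onwall bound that the paper cites as Theorem~\ref{thm:tradeoff}(i); the triangle inequality then joins them. Your worry about the constant is unnecessary, because your ``crude'' estimate already gives it exactly, $\sum_{j<k} j\,e^{j\eta L}\le e^{k\eta L}\,k(k-1)/2\le\tfrac12 k^2 e^{k\eta L}$, which is precisely the paper's step (and deriving it directly at $y=\bar x_0$, as you do, avoids relying on Theorem~\ref{thm:tradeoff}, which is stated for a point $x^{*}$ in the data support).
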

\begin{proof}
Eq.~\eqref{eq:aux} gives $\hat v = (x_t - m)/t$ for any clean estimate $m$.
Theorem~\ref{thm:prox} identifies $A_{\mathrm{train}}$ as the resolvent, single valued by
Lemma~\ref{lem:prox}, and $A_{\mathrm{loop}}$ is $k$ explicit steps by definition. Equality of
the fields is equality of $A(\bar x_0)$. For the first order, $a = A_{\mathrm{train}}(y)$ satisfies
$a = y - \gamma_t\nabla U(a)$, so the Lipschitz bound gives
$\lVert a - y\rVert \le \gamma_t\lVert\nabla U(y)\rVert + \gamma_t L\lVert a - y\rVert$, hence
$\lVert a - y\rVert \le \gamma_t\lVert\nabla U(y)\rVert/(1 - \gamma_t L)$ and
$\lVert a - (y - \gamma_t\nabla U(y))\rVert \le \gamma_t L\lVert a - y\rVert \le L\gamma_t^{2}\lVert\nabla U(y)\rVert/(1 - \gamma_t L)$.
The loop ends within $\tfrac12 L(k\eta)^{2}\lVert\nabla U(y)\rVert e^{k\eta L}$ of
$y - k\eta\,\nabla U(y)$ by Theorem~\ref{thm:tradeoff}(i), and the triangle inequality adds the
two.
\end{proof}

\begin{lemma}\label{lem:foc}
If $U_{\mathrm{C}}$ is differentiable, $U_{\mathrm{C}} \ge 0$ and $U_{\mathrm{C}}(x^{*}) = 0$,
then $\nabla U_{\mathrm{C}}(x^{*}) = 0$.
\end{lemma}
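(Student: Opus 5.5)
The plan is the first-order optimality argument at an interior minimum. Since $U_{\mathrm{C}} \ge 0$ on all of $\mathbb{R}^d$ and $U_{\mathrm{C}}(x^{*}) = 0$, the point $x^{*}$ is a global minimizer of $U_{\mathrm{C}}$. Because the domain is open, it is in particular an unconstrained local minimizer, so Fermat's rule should give $\nabla U_{\mathrm{C}}(x^{*}) = 0$. To keep the argument self-contained, I would avoid quoting Fermat's rule and instead derive it from directional derivatives, which is also the form a Lean/Mathlib check would use.

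The steps, in order, are as follows.

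\textbf{Step 1.} Fix an arbitrary direction $v \in \mathbb{R}^d$ and consider the one-variable function $g(s) = U_{\mathrm{C}}(x^{*} + s v)$. It satisfies $g \ge 0$ and $g(0) = 0$.

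\textbf{Step 2.} Differentiability of $U_{\mathrm{C}}$ at $x^{*}$ gives $g'(0) = \nabla U_{\mathrm{C}}(x^{*})^{\top} v$.

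\textbf{Step 3.} Bound the difference quotients $g(s)/s$ from both sides. For $s > 0$ we have $g(s)/s \ge 0$, so the right limit gives $g'(0) \ge 0$. For $s < 0$ we have $g(s)/s \le 0$, so the left limit gives $g'(0) \le 0$. Hence $g'(0) = 0$.

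\textbf{Step 4.} Conclude that $\nabla U_{\mathrm{C}}(x^{*})^{\top} v = 0$ for every $v$. Choosing $v = \nabla U_{\mathrm{C}}(x^{*})$ gives $\lVert \nabla U_{\mathrm{C}}(x^{*})\rVert^{2} = 0$.

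No step is a genuine obstacle. The one point to be careful about is that only differentiability at $x^{*}$ is used, not the $C^{2}$ regularity of Assumption~\ref{as:standing}(i). It also matters that $x^{*}$ is an interior point of the domain of $U_{\mathrm{C}}$, which is automatic since $U_{\mathrm{C}}$ is defined on all of $\mathbb{R}^d$. In the intended application, $x^{*}$ is a reference layout in $\mathcal{F}$ by Assumption~\ref{as:standing}(ii), and the lemma then supplies the fact $\nabla U_{\mathrm{C}}(x^{*}) = 0$ used in Proposition~\ref{prop:body-loop}(ii).
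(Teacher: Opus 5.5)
Your proof is correct and takes the same approach as the paper, which simply notes that $x^{*}$ is a global minimizer of the differentiable $U_{\mathrm{C}}$ and concludes that the gradient vanishes there. Your Steps 1--4 spell out that first-order condition explicitly, using the one-sided difference quotients of $g(s)=U_{\mathrm{C}}(x^{*}+sv)$.
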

\begin{proof}
$x^{*}$ is a global minimizer of the differentiable function $U_{\mathrm{C}}$, so its gradient
vanishes there.
\end{proof}

\begin{theorem}\label{thm:tradeoff}
Let $x^{*} \in \operatorname{supp}(p_{\mathrm{data}})$, let $\nabla U$ be $L$-Lipschitz on a ball
around $x^{*}$ that contains the iterates below, and for (ii) assume a local quadratic model
$U \approx \tfrac12 (x-c)^{\!\top} H (x-c)$ near a feasible point with $H \succeq 0$ of
eigenvalues $\lambda_i$, together with the assumption that the null space of $H$ is the tangent
space of $\mathcal{F}$ there, which holds for $U_{\mathrm{C}}$ alone and need not once
$U_{\mathrm{O}}$ is added. Then
\begin{enumerate}[leftmargin=2em, itemsep=1pt, topsep=2pt]
\item $\nabla U(x^{*}) = \nabla U_{\mathrm{O}}(x^{*})$, and $k$ gradient steps of size $\eta$
started at $x^{*}$ end at distance $k\eta\,\lVert \nabla U_{\mathrm{O}}(x^{*})\rVert + R$ from it
with $\lvert R \rvert \le \tfrac12 L\,\lVert \nabla U_{\mathrm{O}}(x^{*})\rVert\,(k\eta)^{2} e^{k\eta L}$,
a displacement that is nonzero to first order whenever the objectives do not vanish at $x^{*}$;
\item for $0 < \eta < 2/\max_i \lambda_i$, mode by mode the operators of Eq.~\eqref{eq:operators}
act as
\begin{equation}
\text{training:}\quad \frac{1}{1 + \gamma_t \lambda_i} \longrightarrow 1 \ \text{ as } t \to 0,
\qquad
\text{in-sampler:}\quad (1 - \eta\lambda_i)^{k}, \ \text{ independent of } t,
\label{eq:modes}
\end{equation}
the explicit factor contracting a mode only when $\lvert 1 - \eta\lambda_i \rvert < 1$;
\item $\lVert m^{\star} - \bar x_0 \rVert^{2} \le 2\gamma_t\,U(\bar x_0)$, which vanishes as
$t \to 0$ wherever $U(\bar x_0(x,t))$ stays bounded, while
$\lVert A_{\mathrm{loop}}(\bar x_0) - \bar x_0 \rVert$ does not.
\end{enumerate}
\end{theorem}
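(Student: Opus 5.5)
The three parts are independent, and each reduces to an earlier result or to an elementary estimate.

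\emph{Part (i).} Split $\nabla U(x^{*}) = \nabla U_{\mathrm{C}}(x^{*}) + \nabla U_{\mathrm{O}}(x^{*})$. By Assumption~\ref{as:standing}(ii), $U_{\mathrm{C}}(x^{*}) = 0$ on the support, up to a null set that continuity of $U_{\mathrm{C}}$ closes. Since $U_{\mathrm{C}} \ge 0$, Lemma~\ref{lem:foc} kills the first term. For the displacement, write the iterates as $x_{j+1} = x_j - \eta\nabla U(x_j)$ with $x_0 = x^{*}$, and compare them with the straight line $x^{*} - j\eta\, g$, where $g = \nabla U(x^{*})$. Put $e_j = x_j - (x^{*} - j\eta g)$. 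Then
$e_{j+1} = e_j - \eta\bigl(\nabla U(x_j) - g\bigr)$, and the Lipschitz bound gives $\lVert e_{j+1}\rVert \le \lVert e_j\rVert + \eta L\bigl(\lVert e_j\rVert + j\eta\lVert g\rVert\bigr)$. A discrete Gr\"onwall argument, using $(1+\eta L)^{k} \le e^{k\eta L}$ and $\sum_j j \le k^2/2$, yields $\lVert e_k\rVert \le \tfrac12 L\lVert g\rVert (k\eta)^2 e^{k\eta L}$. The reverse triangle inequality then turns $\lVert x_k - x^{*}\rVert$ into $k\eta\lVert g\rVert + R$ with $\lvert R\rvert \le \lVert e_k\rVert$. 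The ball hypothesis keeps every iterate inside the region where $L$ is valid. This is the step I expect to need the most care, namely the bookkeeping of the Gr\"onwall constant so that it lands exactly on $\tfrac12 e^{k\eta L}$. If the crude recursion gives a worse constant, I would instead bound $\lVert x_j - x^{*}\rVert \le j\eta\lVert g\rVert e^{j\eta L}$ first and sum.

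\emph{Part (ii).} Under the quadratic model, $\nabla U(x) = H(x-c)$. In the eigenbasis of $H$, the resolvent $(I+\gamma_t H)^{-1}$ acts on the deviation from $c$ by $1/(1+\gamma_t\lambda_i)$, and this factor tends to $1$ because $\gamma_t = \tfrac12\lambda t^2 \to 0$. The explicit map $I - \eta H$ applied $k$ times acts by $(1-\eta\lambda_i)^k$, in which $t$ does not appear. The condition $\eta < 2/\max_i\lambda_i$ is exactly $\lvert 1-\eta\lambda_i\rvert < 1$ on the modes with $\lambda_i > 0$. On the null modes, identified by hypothesis with the tangent space of $\mathcal{F}$, both factors equal $1$. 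This part is a direct computation.

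\emph{Part (iii).} I use optimality of $m^{\star} = \operatorname{prox}_{\gamma_t U}(\bar x_0)$ from Theorem~\ref{thm:prox} by comparing its objective value with that of the candidate $m = \bar x_0$. This gives $\tfrac12\lVert m^{\star}-\bar x_0\rVert^2 + \gamma_t U(m^{\star}) \le \gamma_t U(\bar x_0)$, and dropping $U(m^{\star}) \ge 0$ gives the bound. Since $\gamma_t \to 0$, the bound vanishes wherever $U(\bar x_0)$ stays bounded. For the loop, part (i), applied at $y = \bar x_0$ as $t \to 0$ (where $\bar x_0 \to x^{*}$), gives a displacement of order $k\eta\lVert\nabla U_{\mathrm{O}}\rVert$ minus an $O((k\eta)^2)$ term. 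This is bounded away from zero for fixed $k,\eta$ whenever $k\eta L$ is small enough that the first-order term dominates and $\nabla U_{\mathrm{O}} \ne 0$. I would state that proviso explicitly.
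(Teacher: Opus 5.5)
Your proposal is correct and follows the paper's proof. Part (i) uses Lemma~\ref{lem:foc} plus a discrete Gr\"onwall bound, and your recursion on $e_j$ does land exactly on $\tfrac12 L\lVert g\rVert(k\eta)^2e^{k\eta L}$ because $\sum_{j<k}j\le k^2/2$; your fallback is the paper's own route. Part (ii) is the eigenbasis computation, and part (iii) compares the prox objective at $m^{\star}$ with the candidate $\bar x_0$.

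You add two things the paper's proof leaves unargued, and both are sound refinements rather than departures:
\begin{itemize}
\item You use continuity of $U_{\mathrm{C}}$ to pass from almost every $x_0$ to every point of the support.
\item You state an explicit proviso for the claim that the loop's displacement does not vanish: $\tfrac12 k\eta L e^{k\eta L}<1$, with $\nabla U(\bar x_0)$ staying away from zero.
\end{itemize}
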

\begin{proof}
For (i), $U_{\mathrm{C}}(x^{*}) = 0$ by Assumption~\ref{as:standing}(ii), so
Lemma~\ref{lem:foc} gives $\nabla U(x^{*}) = \nabla U_{\mathrm{O}}(x^{*}) =: g$. With
$x_{j+1} = x_j - \eta\,\nabla U(x_j)$ and $x_0 = x^{*}$,
$\lVert x_j - x^{*}\rVert \le \eta \sum_{i<j} \bigl(\lVert g\rVert + L\lVert x_i - x^{*}\rVert\bigr)$,
so by induction $\lVert x_j - x^{*}\rVert \le j\eta\lVert g\rVert (1 + \eta L)^{j}$, and
$x_k - x^{*} = -k\eta\, g - \eta \sum_{j<k} \bigl(\nabla U(x_j) - g\bigr)$ with the sum bounded in
norm by $\eta L \sum_{j<k} j\eta \lVert g\rVert (1+\eta L)^{j} \le \tfrac12 L \lVert g\rVert (k\eta)^{2} e^{k\eta L}$.
For (ii), both operators are linear in the quadratic model
and diagonal in the eigenbasis of $H$; $\gamma_t \to 0$ as $t \to 0$ while $\eta$ and $k$ are
fixed. For (iii), prox optimality compares $m^{\star}$ with the candidate $\bar x_0$,
$\tfrac12\lVert m^{\star} - \bar x_0\rVert^2 + \gamma_t U(m^{\star}) \le \gamma_t U(\bar x_0)$,
and $U \ge 0$.
\end{proof}

In the model the stiff modes are the constrained ones, so a fixed loop removes the spread along
the constrained directions at every step including the last and never releases the over-travel of
(i); Section~\ref{sec:exp-refine} measures both, the ported loops leaving an area gap of 10 to 30\%
and every generator ending near one value under the best of them. The quadratic model holds on each
smooth piece of $U$: no term is smoothed, and $U$ has kinks at zero penetration, at contact, at the
bounding-box frontier, at the extreme pins and at the group median. As a hypothesis, the overlap
gradient is the penetration force of one pair while the resolution the data supports depends on the
netlist and the layout, so descent on the non-convex $U$ may reach a nearby local minimum rather
than the plausible one, whereas Theorem~\ref{thm:prox} minimizes the same term jointly with the flow
loss; Appendix~\ref{sec:appendix-theory-measured} is consistent with this reading but does not
establish it.

\begin{proposition}\label{prop:support}
Let the learned head be $\bar x_0 + \varepsilon_\theta$ with
$\mathbb{E}_{x \sim p_t}\lVert \varepsilon_\theta \rVert^2 \le \delta^2$, and let $p_t^{g}$, the
law of the states a guided sampler visits, be absolutely continuous with respect to $p_t$ with an
essentially bounded density ratio. Then
\begin{equation}
\mathbb{E}_{x \sim p_t^{g}}\lVert \varepsilon_\theta \rVert^{2} \le \chi_t\,\delta^{2},
\qquad
\chi_t = \operatorname*{ess\,sup}_x \frac{p_t^{g}(x)}{p_t(x)} < \infty,
\qquad\text{and}\qquad
\frac{\partial \bar x_0}{\partial x_t} = \frac{(1-t)\,\Sigma_{0|t}}{t^{2}} \xrightarrow[t \to 1]{} 0 .
\label{eq:shift}
\end{equation}
\end{proposition}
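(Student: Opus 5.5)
The statement has two independent parts, and I would prove them separately.

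\emph{The density-ratio bound.} This is a change of measure. Because $p_t^{g} \ll p_t$ with Radon--Nikodym derivative $r = p_t^{g}/p_t$, we can write $\mathbb{E}_{p_t^{g}}\lVert\varepsilon_\theta\rVert^2 = \mathbb{E}_{p_t}\bigl[r\,\lVert\varepsilon_\theta\rVert^2\bigr]$. Since $\lVert\varepsilon_\theta\rVert^2 \ge 0$ and $r \le \chi_t$ holds $p_t$-almost everywhere, this is at most $\chi_t\,\mathbb{E}_{p_t}\lVert\varepsilon_\theta\rVert^2 \le \chi_t\delta^2$. The only care needed is to note that the essential supremum is taken with respect to $p_t$, so the bound holds off a null set, which does not affect the integral.

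\emph{The Jacobian identity.} I would use Tweedie's formula for the Gaussian channel $p(x\mid x_0) = \mathcal{N}((1-t)x_0, t^2 I)$, which appears in the proof of Lemma~\ref{lem:tilt}. The plan is to differentiate $\bar x_0(x) = \int x_0\, p(x\mid x_0)\, p_{\mathrm{data}}(x_0)\,dx_0 \big/ p_t(x)$ under the integral. The justification is the same dominated-convergence bound used in Lemma~\ref{lem:tilt}, with one extra factor of $\lVert x_0\rVert$, so finite second moments from Assumption~\ref{as:standing}(iii) suffice.

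The quotient rule, together with $\nabla_x\log p(x\mid x_0) = ((1-t)x_0 - x)/t^2$, gives
\[
\partial\bar x_0/\partial x = \mathbb{E}\bigl[x_0\,\nabla_x\log p(x\mid x_0)^{\top}\mid x_t = x\bigr] - \bar x_0\,\mathbb{E}\bigl[\nabla_x\log p(x\mid x_0)\mid x_t = x\bigr]^{\top}.
\]
This is the conditional covariance of $x_0$ with the channel score. The $-x/t^2$ part of the score is constant given $x$, so it cancels and leaves $\frac{1-t}{t^2}\operatorname{Cov}(x_0\mid x_t = x) = \frac{(1-t)\Sigma_{0|t}}{t^2}$. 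This is the channel covariance identity the paper names as a hypothesis.

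\emph{The limit.} As $t\to1$ we have $t^{-2}\to1$ and $1-t\to0$. Meanwhile $\Sigma_{0|t}$ stays bounded: its trace is at most $\mathbb{E}[\lVert x_0\rVert^2\mid x_t]$, and at $t=1$ the posterior is the prior, so $\Sigma_{0|1} = \operatorname{Cov}(x_0)$ is finite. The product therefore tends to $0$.

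The main obstacle is this boundedness as $t\to1$. It needs either uniformity in $x$ or to be read pointwise. I would state the limit pointwise in $x$, relying on continuity of the posterior second moment in $t$ near $1$ (again by dominated convergence), and note that the claim holds in $L^1(p_t)$ because $\mathbb{E}_{p_t}\operatorname{tr}\Sigma_{0|t} \le \operatorname{tr}\operatorname{Cov}(x_0)$.
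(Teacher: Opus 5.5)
Your proof is correct. The density-ratio bound is the paper's importance-weight inequality, but you reach the Jacobian identity by a different route.

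\textbf{The paper's route.} It takes first-order Tweedie, $\bar x_0 = (x_t + t^2\nabla\log p_t)/(1-t)$, and differentiates it to get $(I + t^2\nabla^2\log p_t)/(1-t)$. It then eliminates the score Hessian with the second-order identity $(1-t)^2\Sigma_{0|t} = t^2(I + t^2\nabla^2\log p_t)$. Both Tweedie statements enter its Lean check as hypotheses.

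\textbf{Your route.} You differentiate the posterior mean once under the integral and read off the conditional covariance of $x_0$ with the channel score. You never pass through $\nabla\log p_t$ or its Hessian. So your computation is not quite ``the channel covariance identity the paper names''. It is the combination of the two Tweedie identities, and you derive it rather than assume it. The only cost is the dominating function of Lemma~\ref{lem:tilt} times $\lVert x_0\rVert$; you need no separate twice-differentiability of $\log p_t$. Your argument also never divides by $1-t$, so it holds at $t=1$ itself, where the likelihood no longer depends on $x_0$ and the Jacobian is zero. The paper's intermediate expressions are singular there, and its conclusion is a limit from $t<1$.

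\textbf{What each buys.} The paper's version is shorter once Tweedie is granted, and it keeps the link to score curvature visible. Your handling of the limit is more careful than the paper's bare ``$\Sigma_{0|t}$ bounded''. The pointwise claim works because $p(x\mid x_0)/p(x\mid 0) \le e^{\lVert x\rVert^2/(2t^2)}$ and this ratio tends to $1$, so finite second moments give dominated convergence. The $L^1(p_t)$ claim follows from the law of total variance, with the explicit rate $\frac{1-t}{t^2}\operatorname{tr}\operatorname{Cov}(x_0)$.
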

\begin{proof}
The bound is the importance-weight inequality. Tweedie's formula for the channel
$x_t \mid x_0 \sim \mathcal{N}((1-t)x_0, t^2 I)$ gives
$\bar x_0 = (x_t + t^2 \nabla \log p_t)/(1-t)$ and hence
$\partial \bar x_0/\partial x_t = (I + t^2 \nabla^2 \log p_t)/(1-t)$, while the same channel gives
$(1-t)^2 \Sigma_{0|t} = t^2 (I + t^2 \nabla^2 \log p_t)$. Eliminating the Hessian gives the stated
form, and $(1-t) \to 0$ with $\Sigma_{0|t}$ bounded.
\end{proof}

Training with the loop in place would remove the support mismatch, at the cost of running the
loop at every training step.

\subsection{What Is Predicted and What Is Measured}
\label{sec:appendix-theory-check}

Two limits remain. Training with the terms does not sample the reweighted law of
Corollary~\ref{thm:bias} either; the claim is asymmetric, the loop's clean-space gradient missing the
exact guidance of Lemma~\ref{lem:tilt} by an amount $\Sigma_{0|t}$ does not control, while
Theorem~\ref{thm:prox} characterizes the trained head exactly at no sampling cost. The results
concern where a correction is applied, not whether the six quantities are right, which
Section~\ref{sec:exp-generator} measures.

\section{The Closed-Form Refiner}
\label{sec:appendix-refiner}

\subsection{Derivatives}
\label{sec:appendix-refiner-derivatives}

The refiner works in normalized units ($s = 1$) on the latent $z = (u, v, \rho)$ with $c_x = u s$, $c_y = v s$, $w = \sqrt{a}\,e^{\rho/2}$, $h = \sqrt{a}\,e^{-\rho/2}$. Every term is written on the box edges $L = c_x - w/2$, $R = c_x + w/2$, $B = c_y - h/2$, $T = c_y + h/2$, its derivative is taken with respect to $(L, B, R, T)$ and folded,
\begin{equation}
\frac{\partial E}{\partial c_x} = \frac{\partial E}{\partial L} + \frac{\partial E}{\partial R},\quad
\frac{\partial E}{\partial w} = \tfrac12\Bigl(\frac{\partial E}{\partial R} - \frac{\partial E}{\partial L}\Bigr),\quad
\frac{\partial E}{\partial \rho} = \frac{w}{2}\frac{\partial E}{\partial w} - \frac{h}{2}\frac{\partial E}{\partial h},
\end{equation}
and likewise in $y$. Position-frozen blocks (preplaced) get $\partial E/\partial c_x = \partial E/\partial c_y = 0$, shape-frozen blocks (fixed, preplaced) get $\partial E/\partial w = \partial E/\partial h = 0$, padding gets zero, and $\partial E/\partial\rho = 0$ where $|\rho|$ sits on the clamp at 4. The six derivatives:
\begin{itemize}[leftmargin=*,nosep]
\item \textbf{overlap}, $E = \sum_{i<j} o_x o_y / s^2$ with $o_x = \max(0, \min(R_i, R_j) - \max(L_i, L_j))$: a pair with $o_x o_y > 0$ gives $\partial E/\partial R_i = o_y [R_i < R_j]/s^2$ and $\partial E/\partial L_i = -o_y[L_i > L_j]/s^2$, and the same in $y$ with $o_x$. The force on a block is the overlap length of the crossing edge, and a pair at exact contact feels none.
\item \textbf{cluster}, $E = \sum_p \sum_{(i,j)\in\mathrm{MST}_p}\mathrm{gap}_{ij}/s$ with $\mathrm{gap} = \max(g_x, g_y)$, $g_x = \max(0, \max(L_i, L_j) - \min(R_i, R_j))$: the tree is recomputed every step by batched Prim's algorithm, and every tree edge with a positive gap pulls its endpoints together along the active axis with unit force, the right block through $\partial E/\partial L = +1/s$, the left through $\partial E/\partial R = -1/s$, both widening toward each other through $\partial E/\partial w = -1/(2s)$.
\item \textbf{MIB}, $E = \sum_q \sum_{i\in q} |\rho_i - \tilde\rho_q|$ with the median detached: $\partial E/\partial \rho_i = \mathrm{sign}(\rho_i - \tilde\rho_q)$, zero for the median member. A pure shape force.
\item \textbf{boundary}, $E = \sum_{i\in B}\sum_{e\in\mathrm{code}(i)} (\text{side} - \text{edge})/s$: a left-coded block gets $\partial E/\partial L = +1/s$ and the block that owns $x_{\min}$ gets $\partial E/\partial L \mathrel{-}= 1/s$ per left-coded block, symmetric for the other sides. The coded block moves to the outline and the outline's owner moves toward the coded block.
\item \textbf{wirelength}, $E = H/(s W)$ with $H = \tfrac12\sum_{ij} A_{ij}|c_i - c_j|_1 + \sum_p w_p |c_{b_p} - q_p|_1$: $\partial E/\partial c_{x,i} = [\sum_j \bar A_{ij}\,\mathrm{sign}(c_{x,i} - c_{x,j}) + \sum_{p\to i} w_p\,\mathrm{sign}(c_{x,i} - q^x_p)]/(sW)$ with $\bar A = \tfrac12(A + A^{\top})$, no shape force.
\item \textbf{area}, $E = (x_{\max} - x_{\min})(y_{\max} - y_{\min})/s^2$: only the four outline owners move, $\partial E/\partial L_{\arg\min L} = -H_{\mathrm{bb}}/s^2$, $\partial E/\partial R_{\arg\max R} = +H_{\mathrm{bb}}/s^2$, and likewise in $y$ with $W_{\mathrm{bb}}$.
\end{itemize}
In latent units the cluster, boundary and MIB forces are exactly 1 per active coordinate, wirelength is $A_{ij}/W$, area is $H_{\mathrm{bb}}/s$ and overlap is $o_y/s$, of order $0.01$. The closed form equals autograd of the batched terms to $10^{-6}$ in float64, per term and summed, on the test suite of the supplementary code.

\subsection{The Step}
\label{sec:appendix-refiner-step}

Adam with $\beta_1 = 0$: $v \leftarrow \beta_2 v + (1 - \beta_2) g^2$, $z \leftarrow z - \mathrm{lr}_t\, g / (\sqrt{v/(1 - \beta_2^t)} + \epsilon)$. Without momentum a block that has just been pushed out of an overlap stops when the force stops instead of coasting into the next block, and the per-coordinate normalization lets each term move its block at the full step whatever its magnitude, so the weights set the priority where terms compete on one coordinate. The learning-rate vector is precomputed from the schedule, the overrides are re-applied after every step, and $K$ steps run inside one compiled function. \kb{At 400 steps the refiner takes about one millisecond per layout: 2.6~$\mu$s per step per layout with 1,000 layouts in a batch on one RTX 5090, wall-clock with the device synchronized, the median of five timed repeats after a warm-up (Table~\ref{tab:matched_time}), and 20~ms per layout at 32 per batch.}

\subsection{Choosing the Setting}
\label{sec:appendix-refiner-setting}

The setting was chosen on Trinity and on Trinity at $\lambda{=}0$, on the same 12k cases, by the mean soft cost over the ten refined cells of $\nfe\in\{1, 32\}\times$ steps $\in\{10, 25, 50, 100, 200\}$ (Table~\ref{tab:refiner_schedule}). A decaying schedule with peak $10^{-3}$ is the operating point on both models, linear and cosine are the same within 0.005 at every cell, a constant learning rate over-refines (overlap turns up after 50 steps at $10^{-3}$), peak $3\times10^{-3}$ pays a transient at 10 steps, and $10^{-4}$ is too slow at every budget. Optimizer cross-checks at the pick: Adam $(0, 0.9)$ 1.294 / 1.340, a 10-step warmup 1.291 / 1.293, plain gradient steps 1.564 / 1.578 at $10^{-3}$ and 1.964 / 1.967 at $3\times10^{-3}$. Plain steps do not refine, because the overlap force per coordinate is a hundredth of the unit forces of the constraint terms, and Adam's normalization is what lets the six terms act at one step size.

\begin{table}[htbp]\centering\scriptsize
\caption{Refiner schedule and peak learning rate, all weights 1, betas $(0, 0.99)$, on Trinity and on Trinity at $\lambda{=}0$: the rule (the mean soft cost over the ten refined cells of $\nfe\in\{1, 32\}\times$ steps $\in\{10, 25, 50, 100, 200\}$), and for the six schedules run further the soft cost / overlap ratio at $\nfe{=}32$ after 100 and 200 steps (raw: 1.5137 / 0.0225 and 2.0151 / 0.0222).}
\label{tab:refiner_schedule}
\label{tab:refiner_schedule_cells}
\setlength{\tabcolsep}{5pt}
\begin{tabular}{lrrrrrr}
\toprule
Schedule, peak rate & Rule, Trinity & Rule, $\lambda{=}0$ & Trinity, 100 steps & Trinity, 200 steps & $\lambda{=}0$, 100 steps & $\lambda{=}0$, 200 steps \\
\midrule
linear $10^{-3}$ & \textbf{1.2698} & \textbf{1.2688} & 1.1625 / 0.0105 & 1.1617 / 0.0118 & 1.1549 / 0.0096 & 1.1506 / 0.0107 \\
cosine $10^{-3}$ & 1.2747 & 1.2713 & 1.1578 / 0.0105 & 1.1581 / 0.0117 & 1.1501 / 0.0095 & 1.1471 / 0.0107 \\
linear $3\times10^{-4}$ & 1.3182 & 1.3750 & 1.1786 / 0.0110 & 1.1594 / 0.0108 & 1.1755 / 0.0097 & 1.1486 / 0.0094 \\
cosine $3\times10^{-4}$ & 1.3188 & 1.3746 & -- & -- & -- & -- \\
constant $3\times10^{-4}$ & 1.3370 & 1.3597 & 1.2347 / 0.0120 & 1.2356 / 0.0131 & 1.2230 / 0.0104 & 1.2193 / 0.0115 \\
linear $3\times10^{-3}$ & 1.3529 & 1.3349 & 1.1910 / 0.0112 & 1.1876 / 0.0131 & 1.1859 / 0.0106 & 1.1801 / 0.0125 \\
cosine $3\times10^{-3}$ & 1.3642 & 1.3450 & -- & -- & -- & -- \\
constant $10^{-4}$ & 1.3765 & 1.4699 & -- & -- & -- & -- \\
linear $10^{-4}$ & 1.4247 & 1.5819 & -- & -- & -- & -- \\
cosine $10^{-4}$ & 1.4246 & 1.5818 & -- & -- & -- & -- \\
constant $10^{-3}$ & 1.4953 & 1.4869 & 1.4338 / 0.0161 & 1.4583 / 0.0186 & 1.4166 / 0.0150 & 1.4406 / 0.0175 \\
constant $3\times10^{-3}$ & 2.1682 & 2.1433 & -- & -- & -- & -- \\
\bottomrule
\end{tabular}
\end{table}

\begin{table}[htbp]\centering\footnotesize
\caption{Weights and variants at the pick, by the rule of Table~\ref{tab:refiner_schedule}.}
\label{tab:refiner_weights}
\setlength{\tabcolsep}{12pt}
\begin{tabular}{lrrlrr}
\toprule
Arm & Trinity & $\lambda{=}0$ & Arm & Trinity & $\lambda{=}0$ \\
\midrule
overlap $\times10$ & \textbf{1.2215} & \textbf{1.2332} & all weights 1 & 1.2698 & 1.2688 \\
overlap $\times30$ & 1.2270 & 1.2471 & cluster $\times3$ & 1.2936 & 1.2938 \\
overlap $\times3$ & 1.2434 & 1.2476 & squared cluster gap & 1.2938 & 1.3000 \\
wirelength $\times0.3$ & 1.2604 & 1.2599 & boundary $\times0.3$ & 1.3003 & 1.2972 \\
cluster $\times0.3$ & 1.2616 & 1.2590 & area $\times3$ & 1.3033 & 1.2927 \\
area $\times0.3$ & 1.2624 & 1.2654 & wirelength $\times3$ & 1.3368 & 1.3346 \\
boundary $\times3$ & 1.2664 & 1.2661 & overlap $\times0.3$ & 1.3629 & 1.3592 \\
MIB $\times0.3$ or $\times3$ & 1.2698 & 1.2689 & contact margin $0.005\,s$ & 1.4212 & 1.4045 \\
\bottomrule
\end{tabular}
\end{table}

\begin{table}[htbp]\centering\footnotesize
\caption{The overlap-weight ladder at $\nfe{=}32$ on Trinity after 100 and 200 steps.}
\label{tab:refiner_overlap_ladder}
\setlength{\tabcolsep}{15pt}
\begin{tabular}{lrrrrr}
\toprule
Overlap weight & Steps & Soft cost & Overlap & HPWL gap & Area gap \\
\midrule
$\times1$ & 100 & 1.1625 & 0.0105 & $-$0.0058 & 0.0060 \\
$\times3$ & 100 & 1.1344 & 0.0079 & $-$0.0002 & 0.0087 \\
$\times10$ & 100 & 1.1114 & 0.0048 & 0.0040 & 0.0137 \\
$\times30$ & 100 & 1.1174 & 0.0032 & 0.0076 & 0.0207 \\
$\times1$ & 200 & 1.1617 & 0.0118 & $-$0.0094 & $-$0.0006 \\
$\times3$ & 200 & 1.1153 & 0.0077 & $-$0.0017 & 0.0048 \\
$\times10$ & 200 & \textbf{1.0833} & \textbf{0.0035} & 0.0038 & 0.0128 \\
$\times30$ & 200 & 1.0897 & 0.0017 & 0.0088 & 0.0228 \\
\bottomrule
\end{tabular}
\end{table}

\begin{figure}[htbp]\centering
\includegraphics[width=\linewidth]{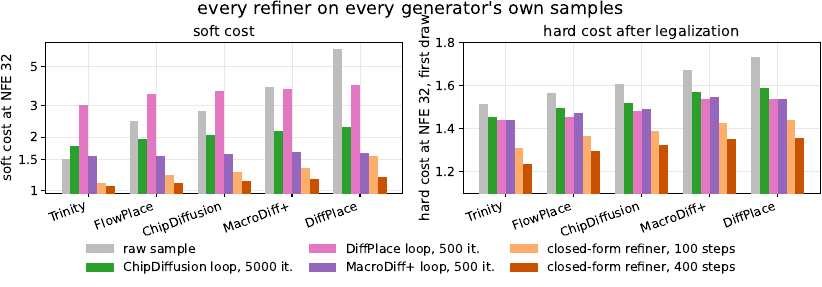}
\caption{Generator $\times$ refiner at $\nfe{=}32$ with every ported loop on every generator's own samples, the full version of Table~\ref{tab:genref}. Left: soft cost, where the DiffPlace loop makes every generator worse than its raw sample. Right: the hard cost of the first legalized draw (Table~\ref{tab:app_gen_ref_hard}), where the same loop is the best of the three.}
\label{fig:app_g4}
\end{figure}

After the soft slots close, the one conflict left is overlap against wirelength and area, and the overlap weight is the one change that moves the endpoint (Tables~\ref{tab:refiner_weights} and \ref{tab:refiner_overlap_ladder}): $\times10$ takes overlap from 0.0118 to 0.0035 and the soft cost from 1.162 to 1.083 at 200 steps, $\times30$ reaches 0.0017 at a larger wirelength and area price. The MIB weight has no effect because the MIB variance is $10^{-11}$ by 25 steps at any weight, a contact margin keeps every touching pair under force and inflates the layout, and a squared gap weakens the cluster pull near contact. The ranking is the same on both models, and combinations with overlap $\times10$ move nothing beyond the single change. The setting of every refined number in the paper is therefore: Adam $(0, 0.99)$, linear decay from $10^{-3}$, overlap 10, the other five terms 1.

\subsection{The Prior Arts' Loops}
\label{sec:appendix-refiner-ports}

Figure~\ref{fig:app_g4} draws every loop on every generator's own samples, and Table~\ref{tab:ports_soft} is the full soft vector of every loop on the $\nfe{=}32$ samples of Trinity (draw 0 of every case, the \method\ row of Table~\ref{tab:genref}); on the samples of Trinity at $\lambda{=}0$ every column moves the same way. Every port removes overlap harder than the closed-form refiner at 100 steps and pays in every other column: the die-boundary term lets the blocks fill the canvas, the area gap lands at $+0.09$ to $+0.16$, wirelength at $+2$ to $+6\%$, and the soft rules the loops do not model get worse than raw; the MIB variance and the immutable-block deviation are unchanged because the loops cannot move shapes. Without the 5\% canvas margin the ChipDiffusion loop stops removing overlap (0.0225 to 0.0148 in 5,000 iterations).

\begin{table}[htbp]\centering\footnotesize
\caption{The prior-art loops on the $\nfe{=}32$ samples of Trinity: the ChipDiffusion loop in its standard, scheduled and Adam-with-dual-ascent variants (5,000 iterations), the DiffPlace and MacroDiff+ loops (500), and the closed-form refiner at all weights 1. Units as in Section~\ref{sec:exp-setup}.}
\label{tab:ports_soft}
\setlength{\tabcolsep}{4.5pt}
\begin{tabular}{lrrrrrrr}
\toprule
Refiner & Overlap & Boundary & Group & HPWL gap & Area gap & $\mathrm{sev}_{\mathrm{rel}}$ & Soft cost \\
\midrule
raw & 0.0225 & 0.8 & 2.4 & $-$0.0008 & 0.0022 & 0.0851 & 1.5131 \\
closed-form, 100 steps & 0.0105 & 0.2 & 0.9 & $-$0.0058 & 0.0060 & 0.0152 & 1.1625 \\
ChipDiffusion, standard & 0.0016 & 2.8 & 6.6 & 0.0470 & 0.1052 & 0.2289 & 1.7472 \\
ChipDiffusion, scheduled & 0.0016 & 3.0 & 6.5 & 0.0449 & 0.1046 & 0.2420 & 1.7908 \\
ChipDiffusion, dual ascent & 0.0096 & 12.0 & 4.2 & 0.0171 & 0.0940 & 0.4010 & 2.6232 \\
DiffPlace & 0.0009 & 18.3 & 4.6 & 0.0379 & 0.1588 & 0.4945 & 3.0349 \\
MacroDiff+ & 0.0014 & 1.7 & 6.4 & 0.0491 & 0.1049 & 0.1765 & 1.5649 \\
\bottomrule
\end{tabular}
\end{table}

\begin{figure}[htbp]\centering
\includegraphics[width=\linewidth]{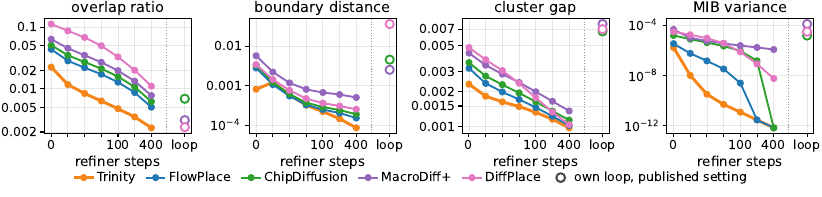}
\caption{The four constraint quantities at $\nfe{=}32$ along the closed-form refiner for the five generators, and each ported placer's own post-hoc loop at its published setting as a hollow marker (ChipDiffusion 5,000 iterations, MacroDiff+ and DiffPlace 500). The loops match the refiner on overlap and leave the other three quantities one to four orders of magnitude above it.}
\label{fig:app_constraints_loops}
\end{figure}

Figure~\ref{fig:app_g6} places every loop against the refiner budget in both costs. On the prior arts' own samples the loops end at soft cost / overlap: FlowPlace 1.961 / 0.0055 (ChipDiffusion loop), 3.497 / 0.0009 (DiffPlace loop), 1.577 / 0.0027 (MacroDiff+ loop); ChipDiffusion 2.048 / 0.0069 (own), 3.614 / 0.0008, 1.602 / 0.0032; DiffPlace 2.286 / 0.0124, 3.914 / 0.0024 (own), 1.633 / 0.0044; MacroDiff+ 2.181 / 0.0075, 3.755 / 0.0010, 1.661 / 0.0031 (own). On these samples the area gap after the loop is $+0.11$ to $+0.13$ under the ChipDiffusion and MacroDiff+ loops and $+0.22$ to $+0.30$ under the DiffPlace loop.

\section{The Legalizer}
\label{sec:appendix-legalizer}

Legalization reads a sequence-pair relation for every pair of blocks from the refined layout (the axis of least penetration, ordered by center) and solves one linear program that packs the blocks under those relations with preplaced blocks pinned, cluster members abutting along a spanning tree and boundary-coded edges on the outline, minimizing the outline plus the displacement from the refined positions \citep{DBLP:journals/tcad/MurataFNK96, DBLP:conf/ispd/MurataK98}. When the program is infeasible the soft terms become penalties and the relations of the conflicting preplaced pairs are flipped. A second program then lets soft blocks change aspect at fixed area under the same relations, kept when the checker scores it better, and the smallest uniform expansion that separates every pair is reported as the price of residual overlap. Table~\ref{tab:legalizer_steps} lists the eight steps.

\begin{table}[htbp]\centering\footnotesize
\caption{The legalizer, one stage of eight steps. Every step is a one-line rule or a standard constraint-graph method; the program of step E and the knobs are below.}
\label{tab:legalizer_steps}
\begin{tabular}{cp{1.5cm}p{8.4cm}p{2.5cm}}
\toprule
& Step & Rule & Kind \\
\midrule
A & anchors & pre-placed blocks to their target box, fixed blocks to their shape, soft blocks to their exact area, an all-soft MIB group to one shape & rule \\
B & expansion & every free block moved outward from the layout centre by the smallest $\lambda \geq 1$ (cap 10) at which no pair overlaps; $\lambda$ recorded per draw & rule \\
C & relations & per pair, the axis of larger separation (least penetration if overlapping) and the order along it, read from the unexpanded layout & sequence pair \citep{DBLP:journals/tcad/MurataFNK96}, TCG \citep{DBLP:conf/dac/LinC01} \\
C$'$ & rewrites & a block ordered between an abutting pair, or outside a coded block on its coded side, re-ordered on the other axis; pinned pairs never rewritten & rule \\
D & abutment tree & per cluster a spanning tree over member pairs ranked by gap, missing perpendicular overlap, blocks in between and pre-placed collisions & rule \\
E & packing LP & one linear program over positions at fixed shapes & LP packing with pre-placed modules \citep{DBLP:conf/ispd/MurataK98} \\
F & ladder & on infeasibility: abutment and boundary as penalties, then the violated pinned relations flipped (six rounds), then every free block above the pre-placed ones & rule \\
G & contact snap & tree pairs within $10^{-6}$ of contact set to exact contact, overlap-checked & rule \\
H & aspect pass & one more program with every soft non-MIB shape free within 1.3 per side at fixed area, kept when the checker scores it feasible and cheaper & sequential LP \\
\bottomrule
\end{tabular}
\end{table}

\paragraph{The program (step E).}
Variables: $X_i, Y_i$ for every block (fixed to the target for preplaced blocks), the outline $L, R, B, T$, and non-negative auxiliaries for absolute values and slacks. Constraints: outline $L \leq X_i$, $X_i + w_i \leq R$, $B \leq Y_i$, $Y_i + h_i \leq T$; for every relation $(\text{axis}, \text{lo}, \text{hi})$, $P_{\mathrm{lo}} + \text{size}_{\mathrm{lo}} \leq P_{\mathrm{hi}}$ on that axis; for every tree edge, hard $P_{\mathrm{lo}} + \text{size}_{\mathrm{lo}} = P_{\mathrm{hi}}$ with perpendicular overlap at least $\varepsilon$ on both sides, or soft with the absolute deviation penalized; for every coded block $X_i = L$, $X_i + w_i = R$, $Y_i = B$ or $Y_i + h_i = T$ per code bit, or the deviation penalized, with a preplaced block's code dropped when another preplaced block makes it impossible. Objective, minimized: $(R - L)\,H_0/A_0 + (T - B)\,W_0/A_0$ (the outline, linearized at the input bounding box $W_0\times H_0 = A_0$) $+\ \mu/(n s)\sum_i (|X_i - \hat x_i| + |Y_i - \hat y_i|)$ (displacement from the restored positions) $+\ (\text{penalty}/s)\times$ soft slacks $+\ (1{,}000/s)\times$ probe slacks. Size: $n^2/2$ relation rows plus $O(n)$ others, at most about 7,200 rows at $n = 120$; one solve takes a few to about 50 milliseconds, and a draw takes 240 to 300 milliseconds end to end including every ladder rung it needs, of which the solver is 87 to 99 milliseconds.

\paragraph{The ladder (step F).}
\kb{The program of step E runs first with the abutment and boundary constraints hard, then with them as penalties, and when that program is infeasible the flip loop runs at most six rounds of two programs each. The first is a probe: the relation of every free and pre-placed pair carries a probe slack, and the pairs whose slack the solution uses are the pinned relations the layout cannot hold. The second is the packing program with those relations moved to the other axis, ordered as the probe placed the two blocks, and every pinned relation hard again. A packing program that places without overlap ends the loop with its layout. When the six rounds are spent, or a probe is infeasible or uses no slack, the flipped relations are discarded and only the pre-placed blocks of the pairs the probes flagged are kept: the next rung reads the relations from the layout again, without the rewrites of step C$'$, and orders every free block above each of those blocks, and the last rung orders every free block above every pre-placed block, which is feasible by construction.}

\paragraph{The snap (step G).}
\kb{The snap is a gate on single-block moves. For every tree pair whose gap is within $10^{-6}$ of contact, the checker's tolerance, the later block along the pair's axis is moved onto the edge of the earlier one, and the move is kept only when the moved block overlaps no other block, over at most three sweeps of the pairs or until a sweep keeps no move. The layout enters feasible, each kept move displaces one block by at most that tolerance and is admitted only when the block overlaps none of the others, and no shape or other pair changes, so the layout is feasible after every kept move and at exit.}

\paragraph{Knobs.}
Relations read from the restored layout (reading them from the expanded layout costs $+0.23$ in outline area because the larger $\lambda$ flips the axis on many pairs), $\mu = 0.5$ and penalty $10$ (0.1 / 0.5 / 2 and 10 / 100 are identical at three decimals), $\varepsilon = 10^{-3}$, $\lambda$ cap 10, six flip rounds (three to six moved 6 of 960 draws off the last rung on the tuning subset), aspect pass at trust 1.3 (off: best-of-4 1.103 on the subset; 1.3: 1.066, kept on 57\% of draws; 1.6: 1.071).

\paragraph{Residual analysis.}
On the 240-case tuning subset at 400 refiner steps, before the aspect pass, the hard program is feasible on 24\% of draws at cost 1.02, the penalized program on 57\% at 1.2, pinned flips on 15\%, and the last rungs on 4\% at about 3. The residual violations of the penalized rung on 80 draws are 95 boundary counts (14 on preplaced blocks, 20 with a preplaced block already beyond the coded edge, 38 corner codes, 69 within 2\% of $s$ of the edge) and 142 cluster counts (10 with a preplaced member, 133 with the nearest member within 2\% of $s$), near-misses a position-only program under fixed relations cannot close, which is what the aspect pass addresses: with it, the counted violations per draw fall from $1.4 + 1.5$ to $1.0 + 1.2$, the area gap from $+0.117$ to $+0.081$, and the best-of-4 cost from 1.103 to 1.066.

\paragraph{Rejected designs.}
Relations read from the expanded layout ($+0.23$ to $+0.34$ in outline area); expansion that moves preplaced blocks too (relations conflict with the true pinned positions, 22\% of draws on the last rung); a re-read of all relations from the penalized solve instead of flipping the violated pairs (25\% on the last rung); rewrites of pinned-pair relations (infeasible on 5\% of draws); a tighter position box of $\pm 3s$ (the last rung infeasible on 2.5\%); and the competition-era portfolio whose overlap removal is a 200-round pairwise push that converges on 25\% of the refiner's draws and falls to a shelf pack on 37\%, at a single-draw mean of 2.21 and 90th percentile 7.7 on the full 12k against 1.23 and 1.55 for the stage above.

\makeatletter
\setlength{\@fptop}{0pt}\setlength{\@fpsep}{\floatsep}\setlength{\@fpbot}{0pt plus 1fil}
\makeatother
\section{Measuring the Propositions}
\label{sec:appendix-theory-measured}
\suppressfloats[t]

Every figure and table here is an inference on the trained checkpoints over the dev split: the raw $\hat x_0$ of eight checkpoints on the same $x_t$ (Proposition~\ref{prop:body-prox}), the reference layouts through every correction loop (Proposition~\ref{prop:body-loop}; no loop moves a pre-placed block), layout snapshots of three cases, GSRC with three checkpoints, and the loops' time per step (2,000 dev cases padded to 120 blocks, 100 steps, the median of five repeats on one idle RTX 5090). In the prediction $\gamma_t\|\nabla U\|$, $\gamma_t = c\lambda t^{2}/2$ with $c$ the mean number of real latent coordinates of a case, three per block: the flow loss is averaged over these coordinates and the energy is not, so the pointwise minimizer of Theorem~\ref{thm:prox} carries the factor $c$. The prediction is the population optimum. The measured excess of \method\ over the seed floor agrees with it in sign and rises with $t$, but its ratio to the prediction runs from about $10^{2}$ at $t{=}0.001$ to about $10^{-3}$ at $t{=}1$ (Table~\ref{tab:theory_prox}): a network of finite capacity does not reach the pointwise minimizer where $\gamma_t$ is large, so the check confirms the direction and the growth with $t$, not the magnitude. On GSRC n100 (Table~\ref{tab:theory_gsrc}) the share of draws that fit the outline differs across checkpoints of one recipe, 0.36, 0.80 and 0.98, while the best fitting wirelength stays within 1\%; the second seed of \method\ fits 98\% of its draws, so the gap is a checkpoint-to-checkpoint variance of the raw outline on netlists outside the training distribution rather than an effect of the physics term.

\begin{figure}[htbp]
\subsection{The Proximal Correction Against $t$}
\label{sec:appendix-prox}
\centering
\includegraphics[width=\linewidth]{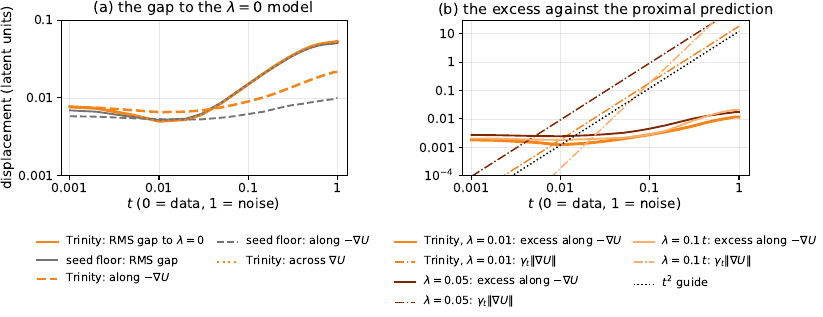}
\caption{The raw $\hat x_0$ of \method\ against its backbone at $\lambda{=}0$ on the same $x_t$, 2,000 cases. \textbf{(a)} The RMS gap, its component along $-\nabla U$ and across it, and the seed floor (two $\lambda{=}0$ seeds against the third). \textbf{(b)} The excess of the along-component over the floor for three physics weights, each with its proximal prediction $\gamma_t\|\nabla U\|$, $\gamma_t=\lambda t^2/2c$, and a $t^2$ guide.}
\label{fig:app_prox}
\end{figure}

\begin{table}[htbp]\centering\footnotesize
\caption{\method\ against its $\lambda{=}0$ backbone on the same $x_t$, 2,000 cases, latent units, one column per $t$: the RMS gap and its part along $-\nabla U$ for the seed floor (two $\lambda{=}0$ seeds against the third) and for \method, the excess over the floor and its part across $\nabla U$, the prediction with $\gamma_t=c\lambda t^2/2$, their ratio, and the excess at two other weights.}
\label{tab:theory_prox}
\setlength{\tabcolsep}{4pt}
\begin{tabular}{lrrrrrrr}
\toprule
$t$ & 0.001 & 0.005 & 0.01 & 0.03 & 0.1 & 0.3 & 1 \\
\midrule
RMS gap, seed floor & 0.0069 & 0.0057 & 0.0052 & 0.0062 & 0.0150 & 0.0337 & 0.0510 \\
RMS gap, \method & 0.0077 & 0.0060 & 0.0050 & 0.0061 & 0.0150 & 0.0344 & 0.0533 \\
along $-\nabla U$, seed floor & 0.0058 & 0.0055 & 0.0053 & 0.0053 & 0.0062 & 0.0080 & 0.0100 \\
along $-\nabla U$, \method & 0.0076 & 0.0070 & 0.0066 & 0.0069 & 0.0090 & 0.0136 & 0.0216 \\
excess of \method\ over the floor & 0.0018 & 0.0015 & 0.0012 & 0.0016 & 0.0028 & 0.0056 & 0.0116 \\
across $\nabla U$, \method & 0.0077 & 0.0060 & 0.0050 & 0.0061 & 0.0150 & 0.0343 & 0.0533 \\
prediction $\gamma_t\|\nabla U\|$ & 1.9e-05 & 0.00047 & 0.0019 & 0.017 & 0.19 & 1.7 & 19 \\
ratio, excess over prediction & 97 & 3.2 & 0.65 & 0.096 & 0.015 & 0.0033 & 0.00063 \\
excess at $\lambda{=}0.05$ & 0.0027 & 0.0025 & 0.0024 & 0.0028 & 0.0046 & 0.0101 & 0.0176 \\
excess at $\lambda{=}0.1\,t$ & 0.0021 & 0.0019 & 0.0018 & 0.0020 & 0.0028 & 0.0085 & 0.0210 \\
\bottomrule
\end{tabular}
\end{table}

\begin{figure}[htbp]
\subsection{Reference Layouts Through Every Loop}
\label{sec:appendix-reference-loops}
\centering
\includegraphics[width=\linewidth]{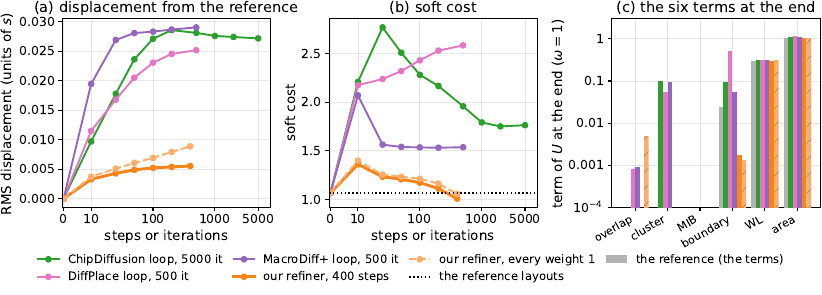}
\caption{The 12,000 reference layouts through every correction. \textbf{(a)} RMS displacement of the movable blocks against steps. \textbf{(b)} Soft cost against steps, the reference layouts dotted. \textbf{(c)} The six terms of $U$ at the end, every weight 1.}
\label{fig:app_reference_loops}
\end{figure}

\begin{table}[htbp]\centering
\caption{The 12,000 reference layouts through each correction, means at the end: RMS and largest displacement of movable blocks ($s$ units), soft cost and share of cases whose soft cost rose, overlap, boundary distance and cluster gap ($10^{-3}$), wirelength and area gaps in percent of the reference.}
\label{tab:theory_reference}
\scriptsize\setlength{\tabcolsep}{5.5pt}
\begin{tabular}{lrrrrrrrrrr}
\toprule
Correction & Steps & RMS & Max & Soft cost & Rose & Overlap & Boundary & Cluster & WL & Area \\
\midrule
reference layouts & -- & -- & -- & 1.065 & -- & 0.0000 & 0.75 & 0.00 & $-0.0$ & $+0.0$ \\
\midrule
ChipDiffusion loop & 5,000 & 0.0272 & 0.0497 & 1.762 & 100\% & 0.0001 & 2.92 & 5.15 & $+4.0$ & $+10.2$ \\
DiffPlace loop & 500 & 0.0252 & 0.0576 & 2.584 & 100\% & 0.0008 & 14.41 & 2.25 & $+2.8$ & $+12.5$ \\
MacroDiff+ loop & 500 & 0.0290 & 0.0560 & 1.536 & 100\% & 0.0009 & 1.73 & 4.57 & $+4.4$ & $+10.2$ \\
\rowcolor{ourscolor} \method's refiner & 400 & \textbf{0.0055} & 0.0246 & \textbf{1.007} & 15\% & 0.0000 & 0.05 & 0.00 & $-0.2$ & $+0.0$ \\
\method's refiner, $\omega{=}1$ & 400 & 0.0089 & 0.0315 & 1.055 & 36\% & 0.0048 & 0.04 & 0.00 & $-1.1$ & $-1.1$ \\
\bottomrule
\end{tabular}
\end{table}

\begin{figure}[htbp]
\subsection{Snapshots of the Corrections}
\label{sec:appendix-snapshots}
\centering
\includegraphics[width=\linewidth]{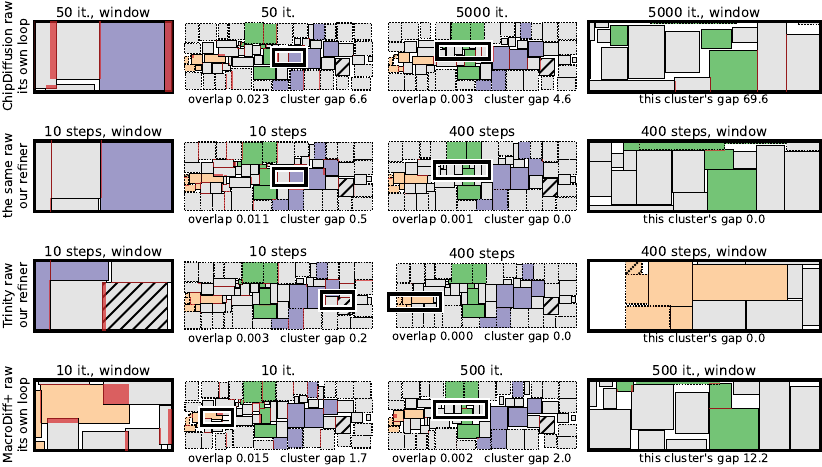}
\caption{The 67-block case of Figure~\ref{fig:motivation}, every correction from a raw draw (the row labels) at an early and a late budget, the box on each layout magnified in the window beside it: left, where the early state keeps the most overlap; right, the cluster the correction moves most, its gap under the window. Marks as in Figure~\ref{fig:motivation}.}
\label{fig:app_snap_n67}
\end{figure}

\begin{figure}[htbp]\centering
\includegraphics[width=\linewidth]{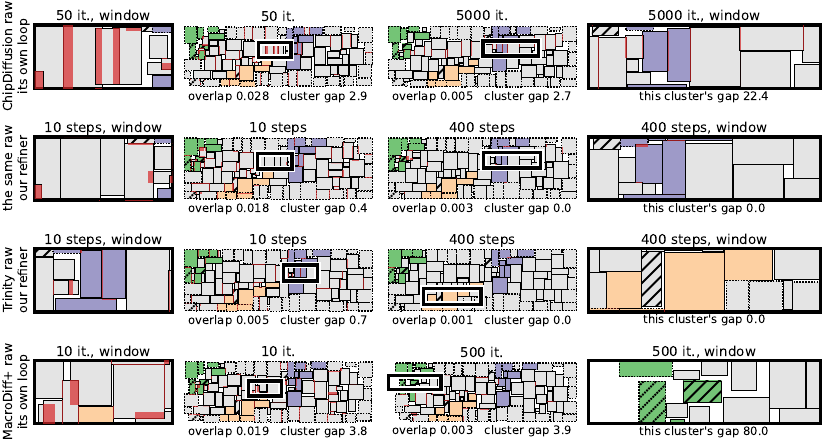}
\caption{A 106-block case, the rows of Figure~\ref{fig:app_snap_n67}.}
\label{fig:app_snap_n106}
\end{figure}

\begin{figure}[htbp]\centering
\includegraphics[width=\linewidth]{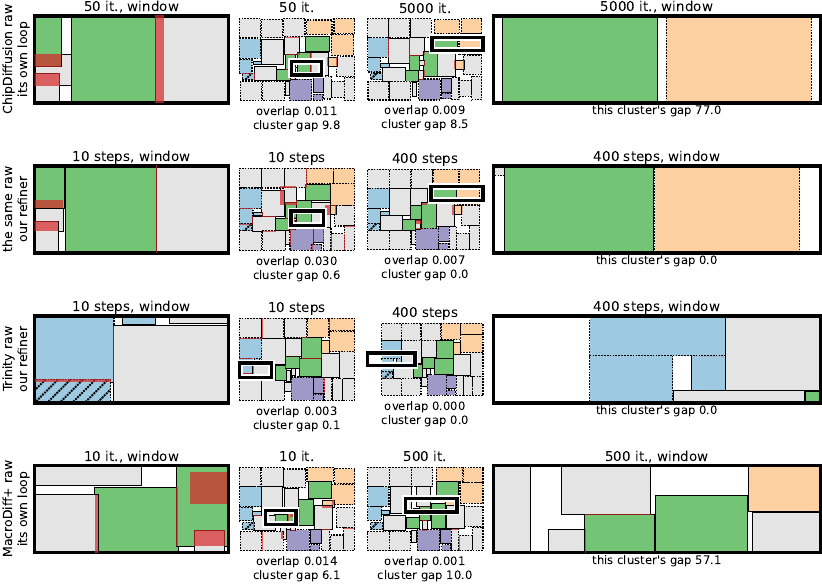}
\caption{A 31-block case, the rows of Figure~\ref{fig:app_snap_n67}.}
\label{fig:app_snap_n31}
\end{figure}

\begin{table}[htbp]
\subsection{Beyond FloorSet: Three Checkpoints on GSRC}
\label{sec:appendix-gsrc-checkpoints}
\centering
\caption{GSRC soft blocks in a 10\% square outline, three checkpoints of one recipe, one pipeline (4 seeds, 16 or 64 draws per seed, NFE 8, 400 steps, legalized): best fitting wirelength in mm over the seeds, in brackets the share of draws $\times$ seeds that fit. PARSAC's best of 16 workers: 52.0 / 155.7 / 183.4 / 303.5 / 576.0 / 705.2. $^{\dagger}$Oversized: no draw fits, the best legal layout's box against the outline at 16 draws \method{} on n300 1.29 W $\times$ 0.82 H; \method\ at $\lambda{=}0${} on n200 1.00 W $\times$ 1.13 H; \method\ at $\lambda{=}0${} on n300 1.18 W $\times$ 0.92 H; \method, seed 1{} on n300 1.02 W $\times$ 1.03 H.}
\label{tab:theory_gsrc}
\scriptsize\setlength{\tabcolsep}{7pt}
\begin{tabular}{lrrrrrrr}
\toprule
Checkpoint & Draws & n10 & n30 & n50 & n100 & n200 & n300 \\
\midrule
\method & 16 & 50.6 (0.92) & 150.6 (0.91) & 178.1 (0.98) & 294.7 (0.36) & 554.9 (1.00) & 681.3$^{\dagger}$ (0.00) \\
 & 64 & 50.3 (0.93) & 150.0 (0.89) & 177.9 (0.99) & 294.0 (0.38) & 553.1 (1.00) & 681.8$^{\dagger}$ (0.00) \\
\method\ at $\lambda{=}0$ & 16 & 51.4 (0.70) & 155.6 (0.81) & 179.2 (0.81) & 293.9 (0.80) & 560.1$^{\dagger}$ (0.00) & 679.8$^{\dagger}$ (0.00) \\
 & 64 & 51.3 (0.61) & 154.2 (0.80) & 179.1 (0.73) & 294.4 (0.65) & 549.2 (0.01) & 679.4$^{\dagger}$ (0.00) \\
\method, seed 1 & 16 & 49.4 (0.75) & 150.1 (0.81) & 178.2 (0.95) & 297.0 (0.98) & 548.1 (1.00) & 694.3$^{\dagger}$ (0.00) \\
 & 64 & 49.2 (0.85) & 150.0 (0.84) & 177.6 (0.97) & 294.7 (0.98) & 548.2 (1.00) & 692.9$^{\dagger}$ (0.00) \\
\bottomrule
\end{tabular}
\end{table}

\begin{table}[htbp]
\subsection{Time per Step of the Loops}
\label{sec:appendix-loop-time}
\centering
\caption{Each loop against our refiner: iterations to one soft cost on the same samples (loop / ours, Figure~\ref{fig:teaser} right), microseconds per step and layout and peak memory at 32 / 256 / 1,000 layouts per batch, milliseconds per layout and our time advantage at 32 / 1,000.}
\label{tab:theory_time}
\label{tab:theory_matched}
\scriptsize\setlength{\tabcolsep}{9pt}
\begin{tabular}{lrrrrr}
\toprule
Loop & Iterations & $\mu$s / step & MB & ms / layout & Less time \\
\midrule
\rowcolor{ourscolor} \method's refiner, 400 steps & -- & 50.8 / 6.1 / 2.6 & 14 / 112 / 436 & 20.32 / 1.03 & -- \\
DiffPlace loop & 500 / 9 & 60.7 / 7.5 / 3.1 & 19 / 153 / 597 & 30.34 / 1.54 & 67$\times$ / 67$\times$ \\
MacroDiff+ loop & 500 / 31 & 292.5 / 36.9 / 20.7 & 76 / 627 / 2,450 & 146.25 / 10.35 & 92$\times$ / 129$\times$ \\
ChipDiffusion loop & 5,000 / 8 & 69.6 / 8.8 / 5.3 & 33 / 272 / 1,062 & 348.01 / 26.43 & 904$\times$ / 1,359$\times$ \\
\bottomrule
\end{tabular}
\end{table}

\FloatBarrier

\section{Generator Results in Full}
\label{sec:appendix-generator}

Three reference models recur through the appendix. The \emph{base} is the plain backbone with the $x_0$ head, $\lambda{=}0$ and no wire dropout; sets 1 to 3 (objective, graph mechanisms, wire dropout) are on it. The \emph{plain control} is the plain backbone on the adopted recipe (the wire-dropout mixture, $\lambda{=}0$); the term ladders, the composed grid, the prior arts, the regressors and the size rungs are compared against it. \emph{Trinity at $\lambda{=}0$} is Trinity's backbone (spectral encoding and information mover) on the adopted recipe without the auxiliary loss, the model the refiner and seed studies pair with Trinity. MMD8b is a subsample estimate (Appendix~\ref{sec:appendix-dist}), so a run re-evaluated in another batch moves by a few percent, and every table prints the batch its rows were evaluated in. Units follow the convention of Section~\ref{sec:exp-setup}: boundary distance, group gap and fixed deviation in units of $10^{-3}$, MIB variance in units of $10^{-6}$, MMD8b in units of $10^{-3}$; "fixed" is the immutable-block deviation, "HPWL" and "area" the two signed gaps to the reference.

\subsection{Objective}
\label{sec:appendix-objective}

\begin{table}[htbp]\centering\scriptsize
\caption{The three heads on the plain base at $\nfe{=}32$ and 1: overlap ratio, cluster gap, the two gaps, the soft cost and the distribution metrics ($p$ against the base at the same $\nfe$). Boundary, MIB and fixed at $\nfe{=}32$: 3.2 / 10 / 7.3 ($x_0$), 4.0 / 15 / 4.6 ($v$), 3.0 / 6.3 / 2.6 ($\epsilon$).}
\label{tab:app_objective}
\setlength{\tabcolsep}{4.5pt}
\begin{tabular}{lrrrrrrrrrrr}
\toprule
Head & $\nfe$ & Overlap & Group & HPWL gap & Area gap & Soft cost & FLD8b & FLD8f & FPD & MMD8b & $p$ \\
\midrule
$x_0$ head (the base) & 32 & 0.0262 & 2.7 & $-$0.0024 & 0.0105 & 2.106 & 0.108 & 0.0045 & 0.0013 & 17.7 & -- \\
$v$ head & 32 & 0.0373 & 2.9 & $-$0.0047 & 0.0123 & 2.597 & 0.275 & 0.0097 & 0.0028 & 33.7 & 1.00 \\
$\epsilon$ head & 32 & 0.0365 & 3.2 & $-$0.0008 & 0.0149 & 2.387 & 0.416 & 0.0184 & 0.0025 & 33.0 & 1.00 \\
\midrule
$x_0$ head & 1 & 0.0383 & 2.2 & $-$0.015 & 0.0093 & 2.491 & 0.417 & 0.0184 & 0.0041 & 34.1 & -- \\
$v$ head & 1 & 0.2505 & 13 & 0.18 & 0.61 & 77.2 & 32.0 & 2.37 & 0.184 & 595 & 1.00 \\
$\epsilon$ head & 1 & 0.042 & 200 & 5.9 & 20.3 & 152 & 102.7 & 17.8 & 5.50 & 1,238 & 1.00 \\
\bottomrule
\end{tabular}
\end{table}

The $x_0$ head has the lowest overlap ($-30\%$ against both), the lowest area gap, the lowest distribution distances and the lowest soft cost; the two standard heads have a lower immutable-block deviation (0.0046 / 0.0026 against 0.0073) and the $\epsilon$ head a lower MIB variance. At $\nfe{=}1$ the $x_0$ head still yields a layout (soft cost 2.49 against 2.11, overlap 0.038 against 0.026), and neither standard head produces one.

\subsection{Architecture on the Plain Base}
\label{sec:appendix-arch-base}

One graph mechanism each, $\lambda{=}0$, no wire dropout: \emph{attention heads} lets 4 of the 12 heads attend by a learned affine of the adjacency alone, and the \emph{information mover} adds the gated one-hop residual of Section~\ref{sec:method-arch} in every layer (92.9M parameters against 85.8M), here with the symmetric normalization $\hat A = D^{-1/2}AD^{-1/2}$; the composed arms of Appendix~\ref{sec:appendix-composed} and Trinity use the row normalization $\hat A = D^{-1}A$.

\begin{table}[htbp]\centering\scriptsize
\caption{Graph mechanisms on the plain base at $\nfe{=}32$: overlap ratio, cluster gap, the two gaps, the soft cost and the distribution metrics, $p$ against the base at $\nfe{=}32$ and at 1. Boundary / MIB / fixed: 3.22 / 10.5 / 7.3 (the base), 3.18 / 8.4 / 5.5 (attention heads), 3.11 / 6.7 / 8.4 (information mover); normalized wirelength 0.3014 / 0.3013 / 0.3020 and compactness 1.038 / 1.038 / 1.042.}
\label{tab:app_arch_base}
\setlength{\tabcolsep}{3pt}
\begin{tabular}{lrrrrrrrrrrr}
\toprule
Arm & Overlap & Group & HPWL gap & Area gap & Soft cost & FLD8b & FLD8f & FPD & MMD8b & $p$ (32) & $p$ (1) \\
\midrule
the base & 0.0262 & 2.67 & $-$0.0024 & 0.0105 & 2.106 & 0.108 & 0.0045 & 0.0013 & 17.7 & -- & -- \\
attention heads & 0.0279 & 2.71 & $-$0.0027 & 0.0107 & 2.149 & 0.137 & 0.0040 & 0.0015 & 19.8 & 0.84 & 0.95 \\
information mover & 0.0239 & 2.67 & $-$0.0004 & 0.0148 & 2.050 & 0.113 & 0.0038 & 0.0017 & 20.1 & 1.00 & 0.99 \\
\bottomrule
\end{tabular}
\end{table}

Against the base at $\nfe{=}32$ ($\nfe{=}1$): the attention heads move overlap $+6.6\%$ ($+5.2\%$), MIB $-20\%$, fixed $-25\%$, soft cost $+2.0\%$; the information mover moves overlap $-8.5\%$ ($-5.6\%$), boundary $-3.6\%$, MIB $-36\%$, fixed $+15\%$, the HPWL gap toward 0, the area gap $+41\%$, soft cost $-2.7\%$. Overlap across $\nfe = 1 \ldots 32$: the base 0.0383 / 0.0368 / 0.0339 / 0.0313 / 0.0286 / 0.0262, the mover 0.0362 to 0.0239, the heads 0.0403 to 0.0279. Neither arm is closer to the reference than the base on the density descriptor. The mover places blocks with less overlap and lower MIB variance at the price of wirelength and compactness, which is the trade the terms then close.

\subsection{Recipe: Wire Dropout}
\label{sec:appendix-recipe}

Three mixtures were trained on the plain base: \emph{mild} (70\% of instances untouched, 30\% at a rate from Beta(1,3)), the adopted \emph{mixture} (50\% untouched, 40\% Beta(1,3), 10\% U(0.9,1)) and \emph{heavy} (30\% untouched, 50\% Beta(1,2), 20\% U(0.8,1)).

\begin{table}[htbp]\centering\scriptsize
\caption{Wire-dropout mixtures on the plain base: the delta against the base at $\nfe{=}32$ on overlap, cluster gap, HPWL gap and area gap (at $\nfe{=}1$ the overlap deltas are $+0.4$, $+0.9$ and $+2.1\%$; boundary / MIB / fixed move $+1.7$ / $-19$ / $-5\%$, $+0.2$ / $-16$ / $-20\%$ and $-1.0$ / $-22$ / $-16\%$), the absolute soft cost and the distribution metrics.}
\label{tab:app_recipe}
\setlength{\tabcolsep}{3pt}
\begin{tabular}{lrrrrrrrrrr}
\toprule
Arm & Overlap & Group & HPWL gap & Area gap & Soft cost & FLD8b & FLD8f & FPD & MMD8b & $p$ \\
\midrule
the base (absolute) & 0.0262 & 2.67 & $-$0.0024 & 0.0105 & 2.106 & 0.108 & 0.0045 & 0.0013 & 17.7 & -- \\
mild & $-$0.8\% & $+$5.0\% & $+$66\% & $+$43\% & 2.132 & 0.126 & 0.0045 & 0.0016 & 23.4 & 1.00 \\
mixture (adopted, the plain control) & $-$0.5\% & $-$0.7\% & $+$10\% & $-$1.5\% & 2.092 & 0.103 & 0.0033 & 0.0011 & 15.6 & 0.005 \\
heavy & 0.0\% & $+$2.7\% & $+$71\% & $+$32\% & 2.107 & 0.115 & 0.0050 & 0.0014 & 19.9 & 1.00 \\
\bottomrule
\end{tabular}
\end{table}

Absolute HPWL gaps: base $-0.0024$, mixture $-0.0021$, mild $-0.0008$, heavy $-0.0007$; area gaps 0.0105 / 0.0104 / 0.0150 / 0.0139. The adopted recipe's $\nfe{=}32$ values are the plain control's, the reference of every term ladder: overlap 0.02602, boundary 3.227, group 2.651, MIB 8.75, fixed 0.00582, HPWL gap $-0.00214$, area gap 0.01037, normalized wirelength 0.3014, compactness 1.037, soft cost 2.0915 (best-of-4 1.9074; $\nfe{=}1$ 2.4900 with overlap 0.03864), FLD8b 0.1031, FLD8f 0.0033, FPD 0.0011, MMD8b 15.6.

\subsection{One Term at a Time}
\label{sec:appendix-single-terms}

\begin{table}[htbp]\centering\footnotesize
\caption{One term at a time on the plain backbone, absolute at $\nfe{=}32$. Bold: the term's own target. Normalized wirelength and compactness stay within $\pm0.6\%$ of the plain control (0.3014 and 1.037) for every constraint arm; the wirelength arms move it to 0.3015 / 0.3008 / 0.3000 / 0.3013 and the area arms move compactness to 1.033 / 1.026 / 1.017 / 1.031 for $0.01 / 0.05 / 0.1 / 0.05\,t$.}
\label{tab:app_single}
\setlength{\tabcolsep}{4pt}
\begin{tabular}{lrrrrrrrr}
\toprule
Arm & Overlap & Boundary & Group & MIB & Fixed & HPWL gap & Area gap & Soft cost \\
\midrule
the plain control & 0.02602 & 3.227 & 2.651 & 8.75 & 5.82 & $-$0.00214 & 0.01037 & 2.0915 \\
overlap 0.01 & \textbf{0.02423} & 3.179 & 2.631 & 10.6 & 5.56 & $-$0.00042 & 0.01180 & 2.0479 \\
overlap 0.05 & \textbf{0.01948} & 2.929 & 2.529 & 12.6 & 5.15 & $-$0.00107 & 0.01235 & 1.9027 \\
overlap 0.1 & \textbf{0.01796} & 3.034 & 2.570 & 11.5 & 5.37 & $-$0.00063 & 0.01348 & 1.8950 \\
overlap $0.05\,t$ & \textbf{0.02108} & 3.124 & 2.636 & 12.1 & 6.59 & $-$0.00088 & 0.01461 & 1.9864 \\
cluster 0.01 & 0.02764 & 3.318 & \textbf{2.294} & 9.32 & 8.69 & $-$0.00280 & 0.00761 & 2.1048 \\
cluster 0.05 & 0.02507 & 3.262 & \textbf{2.158} & 9.06 & 6.99 & $-$0.00125 & 0.01057 & 2.0345 \\
cluster 0.1 & 0.02552 & 3.453 & \textbf{2.035} & 9.60 & 7.34 & $-$0.00075 & 0.01031 & 2.0643 \\
cluster $0.05\,t$ & 0.02556 & 3.261 & \textbf{2.472} & 8.46 & 6.32 & $-$0.00119 & 0.01259 & 2.0754 \\
MIB 0.01 & 0.02704 & 3.178 & 2.673 & \textbf{3.51} & 7.05 & $-$0.00148 & 0.01051 & 2.0960 \\
MIB 0.05 & 0.02599 & 3.183 & 2.767 & \textbf{0.687} & 6.87 & $-$0.00066 & 0.01375 & 2.0812 \\
MIB 0.1 & 0.02795 & 3.289 & 2.756 & \textbf{1.80} & 7.12 & $-$0.00190 & 0.01109 & 2.1405 \\
MIB $0.05\,t$ & 0.02714 & 3.183 & 2.710 & \textbf{2.86} & 6.31 & $-$0.00139 & 0.01242 & 2.1140 \\
boundary 0.01 & 0.02468 & \textbf{0.988} & 2.768 & 9.05 & 5.46 & $-$0.00106 & 0.00377 & 1.6273 \\
boundary 0.05 & 0.03094 & \textbf{0.739} & 3.213 & 12.5 & 11.73 & $+$0.00122 & 0.00738 & 1.7282 \\
boundary 0.1 & 0.03845 & \textbf{0.884} & 3.470 & 14.1 & 15.51 & $+$0.00087 & 0.00431 & 1.9016 \\
boundary $0.05\,t$ & 0.02952 & \textbf{1.080} & 2.759 & 10.4 & 6.14 & $-$0.00253 & 0.00009 & 1.7237 \\
\midrule
wirelength 0.01 & 0.02687 & 3.217 & 2.722 & 8.52 & 5.30 & $-$0.00204 & 0.01195 & 2.1213 \\
wirelength 0.05 & 0.02726 & 3.214 & 2.758 & 8.76 & 6.83 & $-$0.00412 & 0.01138 & 2.1297 \\
wirelength 0.1 & 0.02816 & 3.217 & 2.732 & 11.3 & 6.71 & $-$0.00688 & 0.00985 & 2.1499 \\
wirelength $0.05\,t$ & 0.02647 & 3.151 & 2.724 & 8.68 & 6.15 & $-$0.00256 & 0.01195 & 2.1072 \\
area 0.01 & 0.02652 & 2.432 & 2.658 & 8.73 & 5.62 & $-$0.00202 & 0.00626 & 1.9380 \\
area 0.05 & 0.02948 & 1.546 & 2.697 & 8.55 & 6.07 & $-$0.00188 & $-$0.00127 & 1.8141 \\
area 0.1 & 0.03578 & 1.376 & 2.660 & 10.1 & 6.88 & $-$0.00442 & $-$0.00925 & 1.8833 \\
area $0.05\,t$ & 0.02803 & 1.966 & 2.667 & 9.03 & 6.77 & $-$0.00211 & 0.00359 & 1.8772 \\
\bottomrule
\end{tabular}
\end{table}

The target's move against the plain control across $0.01 / 0.05 / 0.1 / 0.05\,t$: overlap $-6.9 / -25.1 / -31.0 / -19.0\%$, cluster $-13.5 / -18.6 / -23.2 / -6.7\%$, MIB $-59.9 / -92.1 / -79.5 / -67.3\%$, boundary $-69.4 / -77.1 / -72.6 / -66.5\%$. At $\nfe{=}1$ the same order with larger moves (overlap $-20 / -42 / -48 / -39\%$, cluster $-22 / -28 / -32 / -17\%$, MIB $-70 / -93 / -82 / -75\%$, boundary $-70 / -77 / -73 / -67\%$). Each term moves its own target and only its own target by more than a few percent. Side effects at 0.01: the cluster term raises overlap $+6\%$ and fixed $+49\%$ while cutting the area gap $-27\%$, the overlap term raises MIB $+21\%$ and the area gap $+14\%$ and moves the HPWL gap from $-0.0021$ to $-0.0004$, the boundary term cuts the area gap $-64\%$, the MIB term raises fixed $+21\%$. With the weight, the overlap target keeps improving at a rising area gap ($+14 / +19 / +30\%$), the cluster term's fixed cost shrinks from $+49\%$ to $+26\%$, MIB is strongest at 0.05, and boundary saturates near 0.0009 while its side effects grow (overlap $+48\%$, group $+31\%$, fixed $+167\%$ at 0.1). The $t$-weighting changes the boundary term's character: target $-66.5\%$ with the side effects collapsed (fixed $+5.6\%$, group $+4.1\%$) and the area gap at about zero. The wirelength term moves the HPWL gap from 0.2\% below the reference to 0.7\% below at weight 0.1 ($\nfe{=}1$: $-0.015$ to $-0.032$), paid in overlap ($+3 / +5 / +8\%$) and fixed ($+15$ to $+18\%$ at $\geq 0.05$). The area term is the one single term whose main side effect is a gain: compacting pushes blocks onto the outline, so the boundary distance falls 25 to 57\%.

\begin{table}[htbp]\centering\footnotesize
\caption{Distribution metrics of the single-term arms, the five columns at $\nfe{=}32$ and then at $\nfe{=}1$; $p$ is the permutation test against the plain control at the same $\nfe$.}
\label{tab:app_single_dist}
\setlength{\tabcolsep}{3pt}
\begin{tabular}{lrrrrr@{\hspace{10pt}}rrrrr}
\toprule
Arm & FLD8b & FLD8f & FPD & MMD8b & $p$ & FLD8b & FLD8f & FPD & MMD8b & $p$ \\
\midrule
the plain control & 0.1031 & 0.0033 & 0.0011 & 15.6 & -- & 0.4325 & 0.0179 & 0.0036 & 32.6 & -- \\
overlap 0.01 & 0.0993 & 0.0078 & 0.0010 & 18.9 & 1.000 & 0.2238 & 0.0121 & 0.0020 & 28.4 & 0.015 \\
overlap 0.05 & 0.0735 & 0.0026 & 0.0007 & 14.1 & 0.010 & 0.1229 & 0.0044 & 0.0008 & 19.3 & 0.005 \\
overlap 0.1 & 0.0707 & 0.0032 & 0.0006 & 16.2 & 0.975 & 0.1221 & 0.0058 & 0.0007 & 21.9 & 0.005 \\
overlap $0.05\,t$ & 0.0957 & 0.0034 & 0.0010 & 19.3 & 1.000 & 0.1548 & 0.0059 & 0.0015 & 26.6 & 0.005 \\
cluster 0.01 & 0.1022 & 0.0038 & 0.0012 & 14.8 & 0.751 & 0.4322 & 0.0196 & 0.0039 & 32.0 & 0.935 \\
cluster 0.05 & 0.0986 & 0.0044 & 0.0010 & 17.7 & 1.000 & 0.4413 & 0.0216 & 0.0037 & 34.5 & 1.000 \\
cluster 0.1 & 0.0948 & 0.0032 & 0.0009 & 19.7 & 1.000 & 0.4621 & 0.0221 & 0.0037 & 35.9 & 1.000 \\
cluster $0.05\,t$ & 0.1115 & 0.0046 & 0.0013 & 20.9 & 1.000 & 0.5117 & 0.0240 & 0.0045 & 35.6 & 1.000 \\
MIB 0.01 & 0.1102 & 0.0051 & 0.0011 & 17.1 & 1.000 & 0.4399 & 0.0193 & 0.0035 & 35.2 & 0.940 \\
MIB 0.05 & 0.1102 & 0.0041 & 0.0013 & 19.0 & 1.000 & 0.4315 & 0.0182 & 0.0038 & 31.9 & 0.990 \\
MIB 0.1 & 0.1104 & 0.0034 & 0.0014 & 17.8 & 1.000 & 0.4397 & 0.0172 & 0.0042 & 32.6 & 0.891 \\
MIB $0.05\,t$ & 0.1167 & 0.0040 & 0.0015 & 19.1 & 1.000 & 0.4472 & 0.0182 & 0.0040 & 35.0 & 0.980 \\
boundary 0.01 & 0.0785 & 0.0043 & 0.0014 & 5.15 & 0.005 & 0.4227 & 0.0188 & 0.0040 & 22.3 & 0.005 \\
boundary 0.05 & 0.1044 & 0.0112 & 0.0022 & 11.5 & 0.005 & 0.4435 & 0.0257 & 0.0052 & 30.3 & 0.831 \\
boundary 0.1 & 0.1342 & 0.0107 & 0.0024 & 15.7 & 0.617 & 0.4443 & 0.0244 & 0.0050 & 39.8 & 1.000 \\
boundary $0.05\,t$ & 0.0797 & 0.0058 & 0.0016 & 5.27 & 0.005 & 0.4132 & 0.0207 & 0.0040 & 23.5 & 0.005 \\
area 0.01 & 0.0882 & 0.0040 & 0.0013 & 10.2 & 0.005 & 0.4097 & 0.0212 & 0.0038 & 25.6 & 0.005 \\
area 0.05 & 0.0847 & 0.0062 & 0.0014 & 7.4 & 0.005 & 0.7222 & 0.0510 & 0.0061 & 41.7 & 1.000 \\
area $0.05\,t$ & 0.0872 & 0.0054 & 0.0015 & 6.9 & 0.005 & 0.6399 & 0.0417 & 0.0063 & 39.7 & 0.995 \\
area 0.1 & 0.1210 & 0.0100 & 0.0018 & 16.1 & 0.821 & 1.3757 & 0.1042 & 0.0111 & 69.1 & 1.000 \\
\bottomrule
\end{tabular}
\end{table}

The arms significantly closer to the reference than the plain control at $\nfe{=}32$ are overlap 0.05, boundary 0.01, 0.05 and $0.05\,t$, and the area arms up to 0.05 ($0.05\,t$ included). Overlap keeps its $\nfe{=}1$ gain at every weight (FLD8b 0.12 to 0.22 against 0.43), the cluster and MIB arms are within noise or farther at every weight, and boundary's distribution columns worsen with the weight. Across the ladder the distribution metrics put the closest weight one step below where the raw target metric is best (overlap 0.05, boundary 0.01). Every wirelength arm is farther from the reference than the plain control at both $\nfe$ ($p = 1.000$, FLD8b 0.117 to 0.128 at $\nfe{=}32$).

\subsection{All Six Terms}
\label{sec:appendix-all-six}

\begin{table}[htbp]\centering\footnotesize
\caption{All six terms at one common weight on the plain backbone: absolute at $\nfe{=}32$ and the delta against the plain control. Normalized wirelength stays at 0.3011 to 0.3019 and compactness falls from 1.037 to 1.023 to 1.030.}
\label{tab:app_all_six}
\setlength{\tabcolsep}{3.5pt}
\begin{tabular}{lrrrrrrrr}
\toprule
Arm & Overlap & Boundary & Group & MIB & Fixed & HPWL gap & Area gap & Soft cost \\
\midrule
the plain control & 0.02602 & 3.227 & 2.651 & 8.75 & 5.82 & $-$0.00214 & 0.01037 & 2.0915 \\
all six 0.01 & 0.02341 & 0.948 & 2.424 & 2.86 & 5.52 & $-$0.00151 & 0.00199 & 1.5565 \\
all six 0.05 & 0.02627 & 0.759 & 2.348 & 0.667 & 6.72 & $-$0.00320 & $-$0.00014 & 1.5615 \\
all six 0.1 & 0.02959 & 0.769 & 2.554 & 0.515 & 16.13 & $-$0.00063 & 0.00272 & 1.6312 \\
all six $0.05\,t$ & 0.02588 & 1.082 & 2.471 & 2.53 & 6.30 & $-$0.00253 & 0.00070 & 1.6224 \\
all six $0.1\,t$ & 0.02728 & 0.880 & 2.571 & 1.18 & 6.67 & $-$0.00180 & 0.00032 & 1.6175 \\
all six $0.2\,t$ & 0.03227 & 0.650 & 2.842 & 1.44 & 14.26 & $-$0.00043 & $-$0.00410 & 1.6698 \\
\midrule
delta 0.01 & $-$10.0\% & $-$70.6\% & $-$8.6\% & $-$67.3\% & $-$5.2\% & toward 0 & $-$80.8\% & $-$25.6\% \\
delta 0.05 & $+$0.9\% & $-$76.5\% & $-$11.4\% & $-$92.4\% & $+$15.5\% & $-$49\% & crosses 0 & $-$25.3\% \\
delta 0.1 & $+$13.7\% & $-$76.2\% & $-$3.7\% & $-$94.1\% & $+$177\% & $+$71\% & $-$73.8\% & $-$22.0\% \\
delta $0.05\,t$ & $-$0.5\% & $-$66.5\% & $-$6.8\% & $-$71.1\% & $+$8.2\% & $-$18\% & $-$93.2\% & $-$22.4\% \\
delta $0.1\,t$ & $+$4.9\% & $-$72.7\% & $-$3.0\% & $-$86.5\% & $+$14.6\% & $+$16\% & $-$96.9\% & $-$22.7\% \\
delta $0.2\,t$ & $+$24.0\% & $-$79.9\% & $+$7.2\% & $-$83.6\% & $+$145\% & $+$80\% & crosses 0 & $-$20.2\% \\
\bottomrule
\end{tabular}
\end{table}

\begin{table}[htbp]\centering\footnotesize
\caption{Distribution metrics of the all-six arms, the five columns at $\nfe{=}32$ and then at $\nfe{=}1$.}
\label{tab:app_all_six_dist}
\setlength{\tabcolsep}{3pt}
\begin{tabular}{lrrrrr@{\hspace{10pt}}rrrrr}
\toprule
Arm & FLD8b & FLD8f & FPD & MMD8b & $p$ & FLD8b & FLD8f & FPD & MMD8b & $p$ \\
\midrule
the plain control & 0.1031 & 0.0033 & 0.0011 & 15.6 & -- & 0.4325 & 0.0179 & 0.0036 & 32.6 & -- \\
all six 0.01 & 0.0578 & 0.0037 & 0.0010 & 3.05 & 0.005 & 0.1947 & 0.0103 & 0.0022 & 9.98 & 0.005 \\
all six 0.05 & 0.0564 & 0.0029 & 0.0012 & 3.08 & 0.005 & 0.1701 & 0.0097 & 0.0024 & 10.4 & 0.005 \\
all six 0.1 & 0.0581 & 0.0052 & 0.0013 & 5.23 & 0.005 & 0.1530 & 0.0117 & 0.0023 & 11.4 & 0.005 \\
all six $0.05\,t$ & 0.0693 & 0.0042 & 0.0012 & 4.03 & 0.005 & 0.2144 & 0.0112 & 0.0025 & 12.6 & 0.005 \\
all six $0.1\,t$ & 0.0564 & 0.0044 & 0.0011 & 4.05 & 0.005 & 0.1764 & 0.0097 & 0.0022 & 10.7 & 0.005 \\
all six $0.2\,t$ & 0.0621 & 0.0068 & 0.0012 & 5.49 & 0.005 & 0.1744 & 0.0134 & 0.0019 & 14.8 & 0.005 \\
\bottomrule
\end{tabular}
\end{table}

$\nfe{=}1$ (control 0.03864 / 3.205 / 2.158 / 11.4 / 0.00619 / $-0.0153$ / 0.00932, soft cost 2.490): all six 0.01 gives 0.03194 / 0.932 / 1.809 / 2.65 / 0.00592 / $-0.0128$ / $-0.00024$ and 1.720; 0.05 gives 0.03400 / 0.741 / 1.734 / 0.662 / 0.00737 / $-0.0145$ / $-0.00346$ and 1.700; 0.1 gives 0.03630 / 0.774 / 1.872 / 0.483 / 0.01832 / $-0.0127$ / $-0.00132$ and 1.755; $0.05\,t$ 1.801, $0.1\,t$ 1.760, $0.2\,t$ 1.824. At the common weight 0.01 every constraint metric improves at once with nothing worse except the HPWL gap moving toward 0, and the side effects of the single terms cancel. At 0.05 the constraint targets are stronger but overlap is back at the plain control and fixed is $+16\%$; at 0.1 fixed is $+177\%$ and overlap $+14\%$; the $t$-weightings sit between 0.01 and 0.05 ($0.1\,t$) or repeat the 0.1 failure ($0.2\,t$). The all-six arms are the closest to the reference distribution of the plain-backbone family (FLD8b 0.056 to 0.069 against 0.103, MMD8b 3 to 5 against 15.6), every arm significantly closer at both $\nfe$, with the spread field FLD8f at the plain control's level for 0.01 and 0.05.

\subsection{The Composed Grid}
\label{sec:appendix-composed}

Three backbones on the adopted recipe, the plain backbone, the plain backbone with the spectral encoding, and the plain backbone with the spectral encoding and the information mover, each without the auxiliary loss, with all six at 0.01 and with all six at 0.05, plus the last backbone at $0.1\,t$ and $0.2\,t$. The plain backbone at $\lambda{=}0$ is the plain control, the last backbone at $\lambda{=}0$ is Trinity at $\lambda{=}0$, and Trinity is the last backbone with all six at 0.01.

\begin{table}[htbp]\centering\scriptsize
\caption{The composed grid at $\nfe{=}32$: the constraint columns, the two gaps, the soft cost, the best-of-4 soft cost and the $\nfe{=}1$ soft cost. Normalized wirelength is 0.3011 to 0.3028 on every row; compactness is 1.037 / 1.029 / 1.027 / 1.046 / 1.030 / 1.028 / 1.045 / 1.029 / 1.023 / 1.032 / 1.024 down the rows.}
\label{tab:app_composed}
\setlength{\tabcolsep}{2.5pt}
\begin{tabular}{lrrrrrrrrrr}
\toprule
Arm & Overlap & Boundary & Group & MIB & Fixed & HPWL gap & Area gap & Soft cost & Best of 4 & $\nfe{=}1$ \\
\midrule
plain, $\lambda{=}0$ (the plain control) & 0.02602 & 3.227 & 2.651 & 8.75 & 5.82 & $-$0.00214 & 0.01037 & 2.0915 & 1.9074 & 2.4900 \\
plain + all six 0.01 & 0.02341 & 0.948 & 2.424 & 2.86 & 5.52 & $-$0.00151 & 0.00199 & 1.5565 & 1.4479 & 1.7198 \\
plain + all six 0.05 & 0.02627 & 0.759 & 2.348 & 0.667 & 6.72 & $-$0.00320 & $-$0.00014 & 1.5615 & 1.4586 & 1.7001 \\
spectral PE, $\lambda{=}0$ & 0.02433 & 3.310 & 2.942 & 10.5 & 7.21 & $+$0.00116 & 0.01906 & 2.1305 & 1.9404 & 2.5024 \\
spectral PE + all six 0.01 & 0.02343 & 1.029 & 2.554 & 3.52 & 7.18 & $-$0.00062 & 0.00307 & 1.5912 & 1.4763 & 1.7489 \\
spectral PE + all six 0.05 & 0.02556 & 0.652 & 2.405 & 0.712 & 11.19 & $-$0.00177 & 0.00128 & 1.5373 & 1.4378 & 1.6649 \\
PE + mover, $\lambda{=}0$ (Trinity at $\lambda{=}0$) & 0.02220 & 2.997 & 2.827 & 6.66 & 7.47 & $+$0.00219 & 0.01810 & 2.0151 & 1.8304 & 2.4271 \\
PE + mover + all six 0.01 (Trinity) & 0.02253 & 0.817 & 2.325 & 1.72 & 8.34 & $-$0.00069 & 0.00224 & 1.5137 & 1.4092 & 1.6785 \\
PE + mover + all six 0.05 & 0.02570 & 0.632 & 2.333 & 0.346 & 7.95 & $-$0.00256 & $-$0.00368 & 1.5192 & 1.4320 & 1.6341 \\
PE + mover + all six $0.1\,t$ & 0.02456 & 0.842 & 2.585 & 1.39 & 10.27 & $+$0.00002 & 0.00487 & 1.5784 & 1.4797 & 1.7064 \\
PE + mover + all six $0.2\,t$ & 0.03156 & 0.847 & 2.667 & 0.599 & 11.02 & $-$0.00175 & $-$0.00231 & 1.6867 & 1.5807 & 1.8199 \\
\bottomrule
\end{tabular}
\end{table}

\begin{table}[htbp]\centering\scriptsize
\caption{Distribution metrics of the composed grid, the five columns at $\nfe{=}32$ and then at $\nfe{=}1$.}
\label{tab:app_composed_dist}
\setlength{\tabcolsep}{3pt}
\begin{tabular}{lrrrrr@{\hspace{10pt}}rrrrr}
\toprule
Arm & FLD8b & FLD8f & FPD & MMD8b & $p$ & FLD8b & FLD8f & FPD & MMD8b & $p$ \\
\midrule
plain, $\lambda{=}0$ (the plain control) & 0.1031 & 0.0033 & 0.0011 & 15.6 & -- & 0.4325 & 0.0179 & 0.0036 & 32.6 & -- \\
plain + all six 0.01 & 0.0578 & 0.0037 & 0.0010 & 3.05 & 0.005 & 0.1947 & 0.0103 & 0.0022 & 9.98 & 0.005 \\
plain + all six 0.05 & 0.0564 & 0.0029 & 0.0012 & 3.08 & 0.005 & 0.1701 & 0.0097 & 0.0024 & 10.4 & 0.005 \\
spectral PE, $\lambda{=}0$ & 0.1358 & 0.0059 & 0.0015 & 28.4 & 1.000 & 0.4317 & 0.0186 & 0.0041 & 44.9 & 1.000 \\
spectral PE + all six 0.01 & 0.0578 & 0.0036 & 0.0011 & 3.80 & 0.005 & 0.1854 & 0.0108 & 0.0025 & 12.7 & 0.005 \\
spectral PE + all six 0.05 & 0.0581 & 0.0036 & 0.0013 & 4.56 & 0.005 & 0.1668 & 0.0099 & 0.0023 & 11.8 & 0.005 \\
PE + mover, $\lambda{=}0$ (Trinity at $\lambda{=}0$) & 0.1123 & 0.0048 & 0.0015 & 21.7 & 1.000 & 0.4264 & 0.0159 & 0.0042 & 35.1 & 0.980 \\
PE + mover + all six 0.01 (Trinity) & 0.0423 & 0.0075 & 0.0012 & 2.48 & 0.005 & 0.1624 & 0.0173 & 0.0025 & 9.63 & 0.005 \\
PE + mover + all six 0.05 & 0.0366 & 0.0026 & 0.0007 & 3.44 & 0.005 & 0.1109 & 0.0085 & 0.0012 & 10.0 & 0.005 \\
PE + mover + all six $0.1\,t$ & 0.0416 & 0.0025 & 0.0011 & 3.98 & 0.005 & 0.1218 & 0.0065 & 0.0019 & 9.41 & 0.005 \\
PE + mover + all six $0.2\,t$ & 0.0487 & 0.0074 & 0.0011 & 6.27 & 0.005 & 0.1395 & 0.0125 & 0.0018 & 12.8 & 0.005 \\
\bottomrule
\end{tabular}
\end{table}

Overlap across $\nfe = 1 \ldots 32$: Trinity 0.0316 / 0.0306 / 0.0288 / 0.0271 / 0.0248 / 0.0225, the plain backbone with all six 0.01 0.0319 / 0.0312 / 0.0297 / 0.0279 / 0.0256 / 0.0234, Trinity at $\lambda{=}0$ 0.0357 / 0.0339 / 0.0308 / 0.0279 / 0.0250 / 0.0222, the plain control 0.0386 / 0.0368 / 0.0339 / 0.0313 / 0.0286 / 0.0260. The terms do the same work on every backbone (boundary $-70$ to $-80\%$, MIB $-70$ to $-90\%$, area gap to about 0, soft cost $-25\%$), and the backbones differ in where they start: the mover starts 15\% lower on overlap and keeps that lead through the ladder. With terms on, the spectral encoding alone is a near-null change on the soft metrics, and the mover is the component that moves the constraint metrics: at both weights the composed backbone leads the grid on overlap (at 0.01), boundary, group and MIB. On the distribution metrics the mover cuts FLD8b by a quarter to a third at both weights (0.042 / 0.037) and at 0.05 is best on every Fr\'echet column at both $\nfe$; all nine grid arms with terms are significantly closer to the reference than the plain control. Trinity's FLD8f of 0.0075 at 0.01 is a seed outlier (its other two seeds give 0.0036 and 0.0034, Appendix~\ref{sec:appendix-seeds}). The composed backbone at $0.1\,t$ sits between constant 0.01 and 0.05 on the constraint columns and $0.2\,t$ repeats the constant-0.1 failure (overlap $+21\%$, fixed $+89\%$).

\subsection{The Prior Arts and the Regressors}
\label{sec:appendix-prior-full}

\begin{table}[htbp]\centering\scriptsize
\caption{Every generator at $\nfe{=}32$ and 1, the annealer and the regressors on their single layout: overlap ratio, cluster gap, the two gaps, the soft cost (the regressors' as the per-case median) and the distribution metrics, $p$ against the plain control at the same $\nfe$. Normalized wirelength is 0.297 to 0.303 for every diffusion row at $\nfe{=}32$; the boundary, MIB and fixed columns are in Table~\ref{tab:app_ranking}'s sources.}
\label{tab:app_prior_full}
\label{tab:app_prior_dist}
\setlength{\tabcolsep}{2.5pt}
\begin{tabular}{lrrrrrrrrrrr}
\toprule
Arm & $\nfe$ & Overlap & Group & HPWL gap & Area gap & Soft cost & FLD8b & FLD8f & FPD & MMD8b & $p$ \\
\midrule
the plain control & 32 & 0.02602 & 2.651 & $-$0.00214 & 0.01037 & 2.0915 & 0.1031 & 0.0033 & 0.0011 & 15.6 & -- \\
Trinity & 32 & 0.02253 & 2.325 & $-$0.00069 & 0.00224 & 1.5137 & 0.0423 & 0.0075 & 0.0012 & 2.5 & 0.005 \\
FlowPlace & 32 & 0.04383 & 3.196 & $-$0.00231 & 0.00871 & 2.4564 & 0.2274 & 0.0088 & 0.0013 & 24.4 & 1.00 \\
ChipDiffusion & 32 & 0.05113 & 3.581 & $-$0.00076 & 0.01185 & 2.8020 & 0.2733 & 0.0098 & 0.0011 & 31.2 & 1.00 \\
MacroDiff+ & 32 & 0.0639 & 4.33 & $+$0.00236 & 0.02417 & 3.8458 & 0.624 & 0.0276 & 0.280 & 81.5 & 1.00 \\
DiffPlace & 32 & 0.1126 & 4.833 & $-$0.00291 & 0.01229 & 6.2293 & 2.855 & 0.191 & 0.0146 & 247 & 1.00 \\
\midrule
the plain control & 1 & 0.03864 & 2.158 & $-$0.0153 & 0.0093 & 2.4900 & 0.4325 & 0.0179 & 0.0036 & 32.6 & -- \\
Trinity & 1 & 0.03161 & 1.773 & $-$0.01237 & $-$0.00125 & 1.6785 & 0.1624 & 0.0173 & 0.0025 & 9.6 & 0.005 \\
FlowPlace & 1 & 0.1082 & 3.803 & $-$0.0148 & 0.0643 & 11.465 & 3.771 & 0.182 & 0.0192 & 350 & 1.00 \\
ChipDiffusion & 1 & 0.0421 & 198 & 5.93 & 19.8 & 151.6 & 102.4 & 17.7 & 5.50 & 1,223 & 1.00 \\
MacroDiff+ & 1 & 0.0444 & 194 & 5.66 & 19.5 & 150.2 & 102.3 & 17.5 & 5.72 & 1,216 & 1.00 \\
DiffPlace & 1 & 0.0420 & 198 & 5.84 & 19.9 & 149.8 & 102.5 & 17.8 & 5.36 & 1,230 & 1.00 \\
\midrule
sequence-pair SA & -- & 0.0000 & 59 & 1.54 & 1.11 & 12.341 & -- & -- & -- & -- & -- \\
Reg-$z$ & -- & 0.4116 & 6.303 & $-$0.0164 & 0.0134 & median 21.5 & 42.06 & 2.78 & 0.614 & 457 & 1.00 \\
Reg-$xywh$ & -- & 0.5884 & 5.028 & $-$0.0507 & $-$0.145 & median 99.5 & 66.56 & 5.55 & 1.418 & 587 & 1.00 \\
\bottomrule
\end{tabular}
\end{table}

Overlap across $\nfe = 1 \ldots 32$: FlowPlace 0.108 / 0.081 / 0.072 / 0.063 / 0.052 / 0.044, ChipDiffusion 0.042 / 0.196 / 0.114 / 0.059 / 0.054 / 0.051, MacroDiff+ 0.044 / 0.106 / 0.216 / 0.102 / 0.071 / 0.064, DiffPlace 0.042 / 0.106 / 0.251 / 0.146 / 0.116 / 0.113. FlowPlace at $\nfe{=}32$ has 1.7$\times$ the plain control's overlap and 1.2$\times$ the group gap with a lower boundary distance ($-13\%$), MIB variance ($-60\%$) and fixed deviation; ChipDiffusion 2.0$\times$ the overlap and 1.35$\times$ the group gap; MacroDiff+ is the weakest learned generator (overlap 2.5$\times$, boundary 1.8$\times$, wirelength above the reference, FLD8b 0.62); DiffPlace has the highest overlap of all learned models (4.3$\times$ the plain control) with boundary, wirelength and outline near the plain control's. None is closer to the reference than the plain control ($p = 1.00$ everywhere), and Trinity is closer than all of them by 5 to 70$\times$ on FLD8b. The regressors collapse blocks onto each other: Reg-$z$ keeps the outline at the reference size with 41\% of the block area claimed twice, Reg-$xywh$ is worse on every column (overlap 0.59, the outline 14.5\% below the reference, fixed 24$\times$), so the area-preserving latent alone is worth 30\% of the regressor's overlap.

\subsection{Every Trained Model by Soft Cost}
\label{sec:appendix-ranking}

\begin{table}[p]\centering\footnotesize
\caption{Every model of the study on our 12,000 val set, sorted by the $\nfe{=}32$ soft cost. Columns: soft cost / overlap at $\nfe{=}32$, the best-of-4 soft cost, soft cost / overlap at $\nfe{=}1$ (or on the single layout, the regressors' soft cost as the per-case median), and the wirelength / area gap at $\nfe{=}32$ in percent. Unless stated, a model is on the adopted recipe (wire-dropout mixture); "plain base" means no wire dropout.}
\label{tab:app_ranking}
\setlength{\tabcolsep}{5pt}
\begin{tabular}{lrrrr}
\toprule
Setup & $\nfe{=}32$ & Best of 4 & $\nfe{=}1$ & WL / Area (\%) \\
\midrule
\textbf{Trinity}: PE + mover + all six 0.01 & \textbf{1.5137} / 0.0225 & \textbf{1.4092} & 1.6785 / 0.0316 & $-0.07$/$+0.22$ \\
PE + mover + all six 0.05 & 1.5192 / 0.0257 & 1.4320 & 1.6341 / 0.0329 & $-0.26$/$-0.37$ \\
spectral PE + all six 0.05 & 1.5373 / 0.0256 & 1.4378 & 1.6649 / 0.0327 & $-0.18$/$+0.13$ \\
plain + all six 0.01 & 1.5565 / 0.0234 & 1.4479 & 1.7198 / 0.0319 & $-0.15$/$+0.20$ \\
plain + all six 0.05 & 1.5615 / 0.0263 & 1.4586 & 1.7001 / 0.0340 & $-0.32$/$-0.01$ \\
PE + mover + all six $0.1\,t$ & 1.5784 / 0.0246 & 1.4797 & 1.7064 / 0.0324 & $+0.00$/$+0.49$ \\
spectral PE + all six 0.01 & 1.5912 / 0.0234 & 1.4763 & 1.7489 / 0.0318 & $-0.06$/$+0.31$ \\
plain + all six $0.1\,t$ & 1.6175 / 0.0273 & 1.5094 & 1.7595 / 0.0350 & $-0.18$/$+0.03$ \\
plain + all six $0.05\,t$ & 1.6224 / 0.0259 & 1.5031 & 1.8007 / 0.0349 & $-0.25$/$+0.07$ \\
plain + boundary 0.01 & 1.6273 / 0.0247 & 1.5023 & 1.9471 / 0.0379 & $-0.11$/$+0.38$ \\
plain + all six 0.1 & 1.6312 / 0.0296 & 1.5305 & 1.7546 / 0.0363 & $-0.06$/$+0.27$ \\
plain + all six $0.2\,t$ & 1.6698 / 0.0323 & 1.5618 & 1.8238 / 0.0407 & $-0.04$/$-0.41$ \\
PE + mover + all six $0.2\,t$ & 1.6867 / 0.0316 & 1.5807 & 1.8199 / 0.0392 & $-0.17$/$-0.23$ \\
plain + boundary $0.05\,t$ & 1.7237 / 0.0295 & 1.5916 & 2.0439 / 0.0420 & $-0.25$/$+0.01$ \\
plain + boundary 0.05 & 1.7282 / 0.0309 & 1.6006 & 2.0425 / 0.0432 & $+0.12$/$+0.74$ \\
plain + area 0.05 & 1.8141 / 0.0295 & 1.6700 & 2.5639 / 0.0530 & $-0.19$/$-0.13$ \\
plain + area $0.05\,t$ & 1.8772 / 0.0280 & 1.7198 & 2.5955 / 0.0500 & $-0.21$/$+0.36$ \\
plain + area 0.1 & 1.8833 / 0.0358 & 1.7320 & 3.3476 / 0.0684 & $-0.44$/$-0.92$ \\
plain + overlap 0.1 & 1.8950 / 0.0180 & 1.7349 & 1.9991 / 0.0203 & $-0.06$/$+1.35$ \\
plain + boundary 0.1 & 1.9016 / 0.0385 & 1.7684 & 2.1868 / 0.0492 & $+0.09$/$+0.43$ \\
plain + overlap 0.05 & 1.9027 / 0.0195 & 1.7429 & 2.0049 / 0.0225 & $-0.11$/$+1.23$ \\
plain + area 0.01 & 1.9380 / 0.0265 & 1.7719 & 2.3380 / 0.0406 & $-0.20$/$+0.63$ \\
plain + overlap $0.05\,t$ & 1.9864 / 0.0211 & 1.8163 & 2.0836 / 0.0238 & $-0.09$/$+1.46$ \\
Trinity at $\lambda{=}0$: PE + mover & 2.0151 / 0.0222 & 1.8304 & 2.4271 / 0.0357 & $+0.22$/$+1.81$ \\
plain + cluster 0.05 & 2.0345 / 0.0251 & 1.8642 & 2.4845 / 0.0388 & $-0.13$/$+1.06$ \\
plain + overlap 0.01 & 2.0479 / 0.0242 & 1.8758 & 2.2334 / 0.0309 & $-0.04$/$+1.18$ \\
plain base + mover, $\lambda{=}0$ & 2.0498 / 0.0239 & 1.8560 & 2.4313 / 0.0362 & $-0.04$/$+1.48$ \\
plain + cluster 0.1 & 2.0643 / 0.0255 & 1.8923 & 2.5674 / 0.0402 & $-0.08$/$+1.03$ \\
plain + cluster $0.05\,t$ & 2.0754 / 0.0256 & 1.8912 & 2.5664 / 0.0405 & $-0.12$/$+1.26$ \\
plain + MIB 0.05 & 2.0812 / 0.0260 & 1.8994 & 2.4724 / 0.0385 & $-0.07$/$+1.37$ \\
the plain control: plain, $\lambda{=}0$ & 2.0915 / 0.0260 & 1.9074 & 2.4900 / 0.0386 & $-0.21$/$+1.04$ \\
plain + MIB 0.01 & 2.0960 / 0.0270 & 1.9125 & 2.4701 / 0.0392 & $-0.15$/$+1.05$ \\
plain + cluster 0.01 & 2.1048 / 0.0276 & 1.9237 & 2.5151 / 0.0405 & $-0.28$/$+0.76$ \\
the base: plain base, $\lambda{=}0$ & 2.1064 / 0.0262 & 1.9197 & 2.4906 / 0.0383 & $-0.24$/$+1.05$ \\
plain, heavy wire-dropout mixture & 2.1072 / 0.0262 & 1.9214 & 2.5229 / 0.0391 & $-0.07$/$+1.39$ \\
plain + wirelength $0.05\,t$ & 2.1072 / 0.0265 & 1.9147 & 2.7539 / 0.0449 & $-0.26$/$+1.20$ \\
plain + MIB $0.05\,t$ & 2.1140 / 0.0271 & 1.9221 & 2.5044 / 0.0396 & $-0.14$/$+1.24$ \\
plain + wirelength 0.01 & 2.1213 / 0.0269 & 1.9217 & 2.6143 / 0.0415 & $-0.20$/$+1.20$ \\
plain + wirelength 0.05 & 2.1297 / 0.0273 & 1.9440 & 2.8166 / 0.0466 & $-0.41$/$+1.14$ \\
spectral PE, $\lambda{=}0$ & 2.1305 / 0.0243 & 1.9404 & 2.5024 / 0.0363 & $+0.12$/$+1.91$ \\
plain, mild wire-dropout mixture & 2.1321 / 0.0259 & 1.9375 & 2.5308 / 0.0384 & $-0.08$/$+1.50$ \\
plain + MIB 0.1 & 2.1405 / 0.0279 & 1.9479 & 2.5551 / 0.0409 & $-0.19$/$+1.11$ \\
plain base + 4 adjacency heads & 2.1493 / 0.0279 & 1.9430 & 2.5585 / 0.0403 & $-0.27$/$+1.07$ \\
plain + wirelength 0.1 & 2.1499 / 0.0282 & 1.9586 & 3.1962 / 0.0532 & $-0.69$/$+0.98$ \\
plain base, $\epsilon$ head & 2.3872 / 0.0365 & 2.0053 & 152.43 / 0.0417 & $-0.08$/$+1.49$ \\
FlowPlace & 2.4564 / 0.0438 & 2.2110 & 11.465 / 0.1082 & $-0.23$/$+0.87$ \\
plain base, velocity head & 2.5970 / 0.0373 & 2.2538 & 77.23 / 0.2505 & $-0.47$/$+1.23$ \\
ChipDiffusion & 2.8020 / 0.0511 & 2.4824 & 151.57 / 0.0421 & $-0.08$/$+1.19$ \\
MacroDiff+ & 3.8458 / 0.0639 & 3.2835 & 150.20 / 0.0444 & $+0.24$/$+2.42$ \\
DiffPlace & 6.2293 / 0.1126 & 5.1245 & 149.79 / 0.0420 & $-0.29$/$+1.23$ \\
Reg-$xywh$ & -- & -- & 99.5 / 0.5884 & $-5.1$/$-14.5$ \\
Reg-$z$ & -- & -- & 21.5 / 0.4116 & $-1.6$/$+1.3$ \\
sequence-pair SA & -- & -- & 12.341 / 0.0000 & $+153.8$/$+111.0$ \\
\bottomrule
\end{tabular}
\end{table}

Trinity is the lowest raw model at $\nfe{=}32$ and the $3\times2$ backbone $\times$ term grid keeps its order (mover 0.01 $<$ mover 0.05 $<$ spectral 0.05 $<$ plain 0.01 $<$ plain 0.05). The overlap factor and the severity slots separate the term-trained models (1.51 to 1.63) from the recipes without the auxiliary loss (2.02 to 2.15) mainly through the soft slots, since overlap itself is 0.022 to 0.026 for all of them; the single-term overlap arms at 0.05 and 0.1 have the lowest overlap of the study (0.018 to 0.020) and land at 1.90.

\section{Refinement Results in Full}
\label{sec:appendix-refine}

\subsection{Trinity With and Without the Auxiliary Loss}
\label{sec:appendix-refine-grid}

\begin{table}[htbp]\centering\scriptsize
\caption{The full $\nfe \times$ steps grid, Trinity and Trinity at $\lambda{=}0$ (the same backbone without the auxiliary loss), two lines per cell: soft cost / overlap ratio, then wirelength gap / area gap to the reference in percent (positive above the reference).}
\label{tab:app_grid_full}
\label{tab:app_grid_overlap}
\label{tab:app_grid_hpwl}
\label{tab:app_grid_area}
\setlength{\tabcolsep}{2pt}
\begin{tabular}{llrrrrrrr}
\toprule
Model & $\nfe$ & 0 steps & 10 & 25 & 50 & 100 & 200 & 400 \\
\midrule
Trinity & 1 & 1.679\,/\,0.0316 & 1.517\,/\,0.0201 & 1.309\,/\,0.0152 & 1.208\,/\,0.0115 & 1.149\,/\,0.0083 & 1.104\,/\,0.0056 & 1.066\,/\,0.0034 \\
 &  & $-1.2$\,/\,$-0.1$ & $-1.0$\,/\,$+1.0$ & $-0.8$\,/\,$+1.1$ & $-0.6$\,/\,$+1.1$ & $-0.4$\,/\,$+1.2$ & $-0.3$\,/\,$+1.2$ & $-0.3$\,/\,$+1.2$ \\
\addlinespace[1.5pt]
 & 2 & 1.660\,/\,0.0305 & 1.480\,/\,0.0188 & 1.289\,/\,0.0141 & 1.194\,/\,0.0106 & 1.137\,/\,0.0076 & 1.096\,/\,0.0052 & 1.062\,/\,0.0031 \\
 &  & $-1.2$\,/\,$-0.0$ & $-0.9$\,/\,$+1.0$ & $-0.7$\,/\,$+1.1$ & $-0.6$\,/\,$+1.1$ & $-0.4$\,/\,$+1.2$ & $-0.4$\,/\,$+1.2$ & $-0.3$\,/\,$+1.1$ \\
\addlinespace[1.5pt]
 & 4 & 1.631\,/\,0.0288 & 1.460\,/\,0.0173 & 1.274\,/\,0.0128 & 1.182\,/\,0.0096 & 1.129\,/\,0.0068 & 1.090\,/\,0.0047 & 1.059\,/\,0.0029 \\
 &  & $-1.0$\,/\,$+0.0$ & $-0.7$\,/\,$+1.1$ & $-0.5$\,/\,$+1.2$ & $-0.4$\,/\,$+1.2$ & $-0.3$\,/\,$+1.2$ & $-0.2$\,/\,$+1.2$ & $-0.2$\,/\,$+1.0$ \\
\addlinespace[1.5pt]
 & 8 & 1.598\,/\,0.0271 & 1.435\,/\,0.0157 & 1.257\,/\,0.0116 & 1.171\,/\,0.0086 & 1.123\,/\,0.0062 & 1.088\,/\,0.0043 & 1.059\,/\,0.0028 \\
 &  & $-0.7$\,/\,$+0.1$ & $-0.4$\,/\,$+1.2$ & $-0.2$\,/\,$+1.2$ & $-0.1$\,/\,$+1.2$ & $-0.0$\,/\,$+1.2$ & $-0.0$\,/\,$+1.2$ & $-0.0$\,/\,$+1.0$ \\
\addlinespace[1.5pt]
 & 16 & 1.554\,/\,0.0248 & 1.401\,/\,0.0137 & 1.236\,/\,0.0099 & 1.159\,/\,0.0074 & 1.117\,/\,0.0054 & 1.086\,/\,0.0039 & 1.059\,/\,0.0026 \\
 &  & $-0.3$\,/\,$+0.2$ & $-0.1$\,/\,$+1.2$ & $+0.1$\,/\,$+1.3$ & $+0.1$\,/\,$+1.3$ & $+0.2$\,/\,$+1.3$ & $+0.2$\,/\,$+1.2$ & $+0.2$\,/\,$+1.0$ \\
\addlinespace[1.5pt]
 & 32 & 1.514\,/\,0.0225 & 1.369\,/\,0.0117 & 1.216\,/\,0.0084 & 1.147\,/\,0.0064 & 1.111\,/\,0.0048 & 1.083\,/\,0.0035 & 1.058\,/\,0.0023 \\
 &  & $-0.1$\,/\,$+0.2$ & $+0.1$\,/\,$+1.3$ & $+0.3$\,/\,$+1.4$ & $+0.4$\,/\,$+1.4$ & $+0.4$\,/\,$+1.4$ & $+0.4$\,/\,$+1.3$ & $+0.3$\,/\,$+1.1$ \\
\midrule
Trinity, $\lambda{=}0$ & 1 & 2.427\,/\,0.0357 & 1.578\,/\,0.0229 & 1.355\,/\,0.0173 & 1.234\,/\,0.0127 & 1.155\,/\,0.0087 & 1.103\,/\,0.0057 & 1.065\,/\,0.0034 \\
 &  & $-1.1$\,/\,$+1.6$ & $-0.9$\,/\,$+1.6$ & $-0.7$\,/\,$+1.6$ & $-0.5$\,/\,$+1.6$ & $-0.4$\,/\,$+1.6$ & $-0.3$\,/\,$+1.5$ & $-0.3$\,/\,$+1.4$ \\
\addlinespace[1.5pt]
 & 2 & 2.364\,/\,0.0339 & 1.540\,/\,0.0212 & 1.330\,/\,0.0159 & 1.218\,/\,0.0116 & 1.146\,/\,0.0080 & 1.098\,/\,0.0053 & 1.063\,/\,0.0032 \\
 &  & $-1.1$\,/\,$+1.6$ & $-0.8$\,/\,$+1.6$ & $-0.7$\,/\,$+1.6$ & $-0.5$\,/\,$+1.6$ & $-0.3$\,/\,$+1.6$ & $-0.3$\,/\,$+1.5$ & $-0.3$\,/\,$+1.3$ \\
\addlinespace[1.5pt]
 & 4 & 2.253\,/\,0.0308 & 1.483\,/\,0.0184 & 1.293\,/\,0.0136 & 1.195\,/\,0.0100 & 1.132\,/\,0.0069 & 1.091\,/\,0.0047 & 1.060\,/\,0.0029 \\
 &  & $-0.8$\,/\,$+1.6$ & $-0.6$\,/\,$+1.6$ & $-0.4$\,/\,$+1.6$ & $-0.3$\,/\,$+1.6$ & $-0.2$\,/\,$+1.5$ & $-0.1$\,/\,$+1.4$ & $-0.2$\,/\,$+1.1$ \\
\addlinespace[1.5pt]
 & 8 & 2.164\,/\,0.0279 & 1.436\,/\,0.0159 & 1.263\,/\,0.0117 & 1.177\,/\,0.0085 & 1.123\,/\,0.0060 & 1.087\,/\,0.0041 & 1.059\,/\,0.0027 \\
 &  & $-0.4$\,/\,$+1.7$ & $-0.2$\,/\,$+1.7$ & $-0.1$\,/\,$+1.7$ & $+0.0$\,/\,$+1.6$ & $+0.1$\,/\,$+1.6$ & $+0.1$\,/\,$+1.4$ & $+0.0$\,/\,$+1.1$ \\
\addlinespace[1.5pt]
 & 16 & 2.083\,/\,0.0250 & 1.392\,/\,0.0134 & 1.235\,/\,0.0097 & 1.161\,/\,0.0071 & 1.116\,/\,0.0051 & 1.085\,/\,0.0037 & 1.059\,/\,0.0025 \\
 &  & $-0.1$\,/\,$+1.7$ & $+0.1$\,/\,$+1.7$ & $+0.2$\,/\,$+1.7$ & $+0.3$\,/\,$+1.7$ & $+0.3$\,/\,$+1.6$ & $+0.3$\,/\,$+1.4$ & $+0.2$\,/\,$+1.1$ \\
\addlinespace[1.5pt]
 & 32 & 2.015\,/\,0.0222 & 1.356\,/\,0.0110 & 1.212\,/\,0.0079 & 1.148\,/\,0.0059 & 1.109\,/\,0.0044 & 1.082\,/\,0.0032 & 1.058\,/\,0.0022 \\
 &  & $+0.2$\,/\,$+1.8$ & $+0.4$\,/\,$+1.8$ & $+0.5$\,/\,$+1.8$ & $+0.5$\,/\,$+1.8$ & $+0.5$\,/\,$+1.7$ & $+0.5$\,/\,$+1.5$ & $+0.4$\,/\,$+1.2$ \\
\bottomrule
\end{tabular}
\end{table}

Steps to threshold, the smallest step count with soft cost at most $\tau$, per $\nfe = 1/2/4/8/16/32$: $\tau = 1.3$: Trinity 50 / 25 / 25 / 25 / 25 / 25, Trinity at $\lambda{=}0$ 50 / 50 / 25 / 25 / 25 / 25; $\tau = 1.2$: 100 / 50 / 50 / 50 / 50 / 50 against 100 / 100 / 50 / 50 / 50 / 50; $\tau = 1.15$: 100 / 100 / 100 / 100 / 100 / 50 against 200 / 100 / 100 / 100 / 100 / 50; $\tau = 1.1$: 400 / 200 / 200 / 200 / 200 / 200 on both. Every one of the 42 cells is Pareto-optimal on both models. At fixed steps the gain from $\nfe{=}1$ to 32 is 0.16 raw, 0.09 at 25 steps, 0.04 at 100, 0.02 at 200 and 0.008 at 400; at fixed $\nfe$ each doubling of steps gains 0.04 to 0.10. Trinity's advantage over Trinity at $\lambda{=}0$ is the raw sample (1.51 against 2.02 at $\nfe{=}32$, 1.68 against 2.43 at $\nfe{=}1$) and the low-$\nfe$ cells through 50 to 100 steps; the raw samples differ in the soft slots and the outline ($\mathrm{sev}_{\mathrm{rel}}$ 0.085 and area gap 0.002 against 0.221 and 0.018) at the same overlap, and ten refiner steps bring Trinity at $\lambda{=}0$ to Trinity's raw $\mathrm{sev}_{\mathrm{rel}}$.

\begin{table}[htbp]\centering\footnotesize
\caption{Every soft column along the descent on Trinity at the setting of Appendix~\ref{sec:appendix-refiner} (Trinity at $\lambda{=}0$ is within 0.005 on every column from 25 steps on). Units as in Section~\ref{sec:exp-setup}.}
\label{tab:app_trajectory}
\setlength{\tabcolsep}{4pt}
\begin{tabular}{rrrrrrrrrr}
\toprule
$\nfe$ & Steps & Overlap & Boundary & Group & MIB & HPWL gap & Area gap & $\mathrm{sev}_{\mathrm{rel}}$ & Soft cost \\
\midrule
1 & 0 & 0.0316 & 0.8 & 1.8 & 1.7 & $-$0.0124 & $-$0.0013 & 0.0819 & 1.6785 \\
1 & 10 & 0.0201 & 1.4 & 1.3 & 0.0080 & $-$0.0096 & 0.0100 & 0.0911 & 1.5170 \\
1 & 25 & 0.0152 & 0.7 & 1.1 & $5.7\times10^{-4}$ & $-$0.0075 & 0.0106 & 0.0464 & 1.3090 \\
1 & 50 & 0.0115 & 0.4 & 0.9 & $5.1\times10^{-5}$ & $-$0.0058 & 0.0113 & 0.0279 & 1.2084 \\
1 & 100 & 0.0083 & 0.3 & 0.8 & $1.2\times10^{-5}$ & $-$0.0043 & 0.0122 & 0.0206 & 1.1490 \\
1 & 400 & 0.0034 & 0.1 & 0.5 & $7.0\times10^{-7}$ & $-$0.0029 & 0.0116 & 0.0100 & 1.0661 \\
32 & 0 & 0.0225 & 0.8 & 2.3 & 1.7 & $-$0.0007 & 0.0022 & 0.0849 & 1.5137 \\
32 & 10 & 0.0117 & 1.2 & 1.8 & 0.010 & 0.0014 & 0.0129 & 0.0874 & 1.3695 \\
32 & 25 & 0.0084 & 0.6 & 1.6 & $3.3\times10^{-4}$ & 0.0028 & 0.0135 & 0.0464 & 1.2159 \\
32 & 50 & 0.0064 & 0.3 & 1.5 & $5.1\times10^{-5}$ & 0.0036 & 0.0137 & 0.0289 & 1.1473 \\
32 & 100 & 0.0048 & 0.2 & 1.3 & $1.2\times10^{-5}$ & 0.0040 & 0.0137 & 0.0215 & 1.1114 \\
32 & 200 & 0.0035 & 0.1 & 1.2 & $3.0\times10^{-6}$ & 0.0038 & 0.0128 & 0.0158 & 1.0833 \\
32 & 400 & 0.0023 & 0.1 & 1.0 & $7.0\times10^{-7}$ & 0.0032 & 0.0106 & 0.0110 & 1.0583 \\
\bottomrule
\end{tabular}
\end{table}

Overlap, the cluster gap, the MIB variance and the fixed deviation fall monotonically through 400 steps on both models and overlap does not turn up; on Trinity the boundary distance and $\mathrm{sev}_{\mathrm{rel}}$ rise at 10 steps (0.8 to 1.2 and 0.085 to 0.087 at $\nfe{=}32$) and fall from there, and the remaining cost at 400 steps is wirelength and area at $+0.3\%$ and $+1.1\%$ of the reference and a $\mathrm{sev}_{\mathrm{rel}}$ of 0.011. Along the $\nfe$ axis at 400 steps the columns trade: overlap and the boundary distance fall with $\nfe$ while the cluster gap grows (0.5 to 1.0).

\subsection{Every Model on the Reduced Grid}
\label{sec:appendix-refine-all}

\begin{table}[htbp]\centering\scriptsize
\caption{The reduced grid for every model, one column per $\nfe$ / refiner steps, two lines per model: soft cost / overlap ratio with the rule (the mean soft cost over the nine refined cells; the regressors' raw soft cost is the per-case median, their means overflow), then wirelength gap / area gap to the reference in percent (positive above the reference).}
\label{tab:app_reduced}
\label{tab:app_reduced_overlap}
\label{tab:app_reduced_hpwl}
\label{tab:app_reduced_area}
\setlength{\tabcolsep}{2pt}
\begin{tabular}{lrrrrrrr}
\toprule
Model & 32 / 0 & 32 / 25 & 32 / 100 & 32 / 400 & 1 / 0 & 1 / 400 & Rule \\
\midrule
Trinity & 1.514\,/\,0.0225 & 1.216\,/\,0.0084 & 1.111\,/\,0.0048 & 1.058\,/\,0.0023 & 1.679\,/\,0.0316 & 1.066\,/\,0.0034 & -- \\
 & $-0.1$\,/\,$+0.2$ & $+0.3$\,/\,$+1.4$ & $+0.4$\,/\,$+1.4$ & $+0.3$\,/\,$+1.1$ & $-1.2$\,/\,$-0.1$ & $-0.3$\,/\,$+1.2$ &  \\
\addlinespace[1.5pt]
PE + mover, $\lambda{=}0$ & 2.015\,/\,0.0222 & 1.212\,/\,0.0079 & 1.109\,/\,0.0044 & 1.058\,/\,0.0022 & 2.427\,/\,0.0357 & 1.065\,/\,0.0034 & -- \\
 & $+0.2$\,/\,$+1.8$ & $+0.5$\,/\,$+1.8$ & $+0.5$\,/\,$+1.7$ & $+0.4$\,/\,$+1.2$ & $-1.1$\,/\,$+1.6$ & $-0.3$\,/\,$+1.4$ &  \\
\addlinespace[1.5pt]
PE + mover + all six 0.05 & 1.519\,/\,0.0257 & 1.232\,/\,0.0101 & 1.125\,/\,0.0060 & 1.066\,/\,0.0028 & 1.634\,/\,0.0329 & 1.077\,/\,0.0038 & 1.1587 \\
 & $-0.3$\,/\,$-0.4$ & $+0.1$\,/\,$+1.0$ & $+0.3$\,/\,$+1.3$ & $+0.4$\,/\,$+1.3$ & $-1.5$\,/\,$-0.9$ & $-0.4$\,/\,$+1.2$ &  \\
\addlinespace[1.5pt]
PE + mover + all six $0.1\,t$ & 1.578\,/\,0.0246 & 1.244\,/\,0.0098 & 1.128\,/\,0.0058 & 1.069\,/\,0.0027 & 1.706\,/\,0.0324 & 1.082\,/\,0.0039 & 1.1669 \\
 & $+0.0$\,/\,$+0.5$ & $+0.4$\,/\,$+1.7$ & $+0.6$\,/\,$+1.8$ & $+0.5$\,/\,$+1.6$ & $-1.4$\,/\,$-0.2$ & $-0.3$\,/\,$+1.4$ &  \\
\addlinespace[1.5pt]
PE + mover + all six $0.2\,t$ & 1.687\,/\,0.0316 & 1.307\,/\,0.0136 & 1.158\,/\,0.0079 & 1.083\,/\,0.0034 & 1.820\,/\,0.0392 & 1.097\,/\,0.0047 & 1.2015 \\
 & $-0.2$\,/\,$-0.2$ & $+0.3$\,/\,$+1.3$ & $+0.5$\,/\,$+1.7$ & $+0.6$\,/\,$+1.8$ & $-1.7$\,/\,$-0.9$ & $-0.4$\,/\,$+1.6$ &  \\
\addlinespace[1.5pt]
plain + all six 0.01 & 1.556\,/\,0.0234 & 1.217\,/\,0.0091 & 1.115\,/\,0.0053 & 1.062\,/\,0.0026 & 1.720\,/\,0.0319 & 1.066\,/\,0.0034 & \textbf{1.1504} \\
 & $-0.1$\,/\,$+0.2$ & $+0.2$\,/\,$+1.2$ & $+0.3$\,/\,$+1.3$ & $+0.3$\,/\,$+1.1$ & $-1.3$\,/\,$-0.0$ & $-0.3$\,/\,$+1.2$ &  \\
\addlinespace[1.5pt]
plain + all six 0.05 & 1.562\,/\,0.0263 & 1.248\,/\,0.0110 & 1.132\,/\,0.0064 & 1.069\,/\,0.0030 & 1.700\,/\,0.0340 & 1.079\,/\,0.0040 & 1.1703 \\
 & $-0.3$\,/\,$-0.0$ & $+0.1$\,/\,$+1.2$ & $+0.3$\,/\,$+1.4$ & $+0.3$\,/\,$+1.3$ & $-1.5$\,/\,$-0.4$ & $-0.4$\,/\,$+1.2$ &  \\
\addlinespace[1.5pt]
spectral PE + all six 0.05 & 1.537\,/\,0.0256 & 1.264\,/\,0.0109 & 1.137\,/\,0.0064 & 1.072\,/\,0.0029 & 1.665\,/\,0.0327 & 1.085\,/\,0.0040 & 1.1783 \\
 & $-0.2$\,/\,$+0.1$ & $+0.2$\,/\,$+1.4$ & $+0.4$\,/\,$+1.6$ & $+0.4$\,/\,$+1.4$ & $-1.3$\,/\,$-0.2$ & $-0.3$\,/\,$+1.4$ &  \\
\addlinespace[1.5pt]
the plain control & 2.091\,/\,0.0260 & 1.214\,/\,0.0092 & 1.114\,/\,0.0053 & 1.061\,/\,0.0026 & 2.490\,/\,0.0386 & 1.068\,/\,0.0037 & 1.1580 \\
 & $-0.2$\,/\,$+1.0$ & $+0.1$\,/\,$+1.1$ & $+0.3$\,/\,$+1.2$ & $+0.3$\,/\,$+1.1$ & $-1.5$\,/\,$+0.9$ & $-0.4$\,/\,$+1.2$ &  \\
\addlinespace[1.5pt]
the base (no wire dropout) & 2.106\,/\,0.0262 & 1.219\,/\,0.0094 & 1.115\,/\,0.0053 & 1.061\,/\,0.0026 & 2.491\,/\,0.0383 & 1.068\,/\,0.0037 & 1.1590 \\
 & $-0.2$\,/\,$+1.1$ & $+0.1$\,/\,$+1.1$ & $+0.3$\,/\,$+1.2$ & $+0.2$\,/\,$+1.0$ & $-1.5$\,/\,$+0.9$ & $-0.4$\,/\,$+1.2$ &  \\
\addlinespace[1.5pt]
FlowPlace & 2.456\,/\,0.0438 & 1.415\,/\,0.0221 & 1.222\,/\,0.0128 & 1.107\,/\,0.0050 & 11.5\,/\,0.1082 & 1.134\,/\,0.0066 & 1.5003 \\
 & $-0.2$\,/\,$+0.9$ & $+0.2$\,/\,$+1.3$ & $+0.6$\,/\,$+1.9$ & $+0.9$\,/\,$+2.5$ & $-1.5$\,/\,$+6.4$ & $+0.4$\,/\,$+3.3$ &  \\
\addlinespace[1.5pt]
ChipDiffusion & 2.802\,/\,0.0511 & 1.516\,/\,0.0272 & 1.271\,/\,0.0158 & 1.131\,/\,0.0062 & 151.6\,/\,0.0421 & 72.6\,/\,0.0039 & 32.15 \\
 & $-0.1$\,/\,$+1.2$ & $+0.4$\,/\,$+1.5$ & $+0.8$\,/\,$+2.2$ & $+1.2$\,/\,$+3.0$ & $+593.1$\,/\,$+1982.8$ & $+499.2$\,/\,$+1614.3$ &  \\
\addlinespace[1.5pt]
DiffPlace & 6.229\,/\,0.1126 & 2.606\,/\,0.0683 & 1.570\,/\,0.0330 & 1.202\,/\,0.0111 & 149.8\,/\,0.0420 & 71.4\,/\,0.0039 & 32.31 \\
 & $-0.3$\,/\,$+1.2$ & $-0.0$\,/\,$+1.6$ & $+0.6$\,/\,$+2.5$ & $+1.4$\,/\,$+3.7$ & $+583.8$\,/\,$+1986.3$ & $+487.4$\,/\,$+1617.2$ &  \\
\addlinespace[1.5pt]
MacroDiff+ & 3.846\,/\,0.0639 & 1.714\,/\,0.0351 & 1.348\,/\,0.0197 & 1.167\,/\,0.0078 & 150.2\,/\,0.0444 & 69.4\,/\,0.0041 & 31.56 \\
 & $+0.2$\,/\,$+2.4$ & $+0.7$\,/\,$+2.1$ & $+1.3$\,/\,$+2.9$ & $+1.8$\,/\,$+3.7$ & $+565.9$\,/\,$+1946.8$ & $+472.0$\,/\,$+1578.1$ &  \\
\addlinespace[1.5pt]
SP-SA (one shard) & -- & -- & -- & -- & 12.3\,/\,0.0000 & 10.9\,/\,0.0392 & 12.61 \\
 & -- & -- & -- & -- & $+153.8$\,/\,$+111.0$ & $+126.9$\,/\,$+64.2$ &  \\
\addlinespace[1.5pt]
Reg-$z$ (one layout) & -- & -- & -- & -- & 21.5\,/\,0.4116 & 42.5\,/\,0.0500 & -- \\
 & -- & -- & -- & -- & $-1.6$\,/\,$+1.3$ & $+4.5$\,/\,$+1.9$ &  \\
\addlinespace[1.5pt]
Reg-$xywh$ (one layout) & -- & -- & -- & -- & 99.5\,/\,0.5884 & 35,214\,/\,0.0800 & -- \\
 & -- & -- & -- & -- & $-5.1$\,/\,$-14.5$ & $+5.0$\,/\,$-2.2$ &  \\
\bottomrule
\end{tabular}
\end{table}

\begin{table}[htbp]\centering\footnotesize
\caption{Per-case median / 90th percentile of the soft cost along the descent, for the lines whose means are dominated by a few cases.}
\label{tab:app_medians}
\setlength{\tabcolsep}{7pt}
\begin{tabular}{lrrrrr}
\toprule
Model & $\nfe$ & 0 steps & 25 & 100 & 400 \\
\midrule
Trinity & 1 & 1.492 / 2.329 & 1.224 / 1.647 & 1.102 / 1.331 & 1.038 / 1.162 \\
Trinity & 32 & 1.423 / 1.876 & 1.174 / 1.421 & 1.079 / 1.244 & 1.033 / 1.144 \\
Trinity, $\lambda{=}0$ & 1 & 2.014 / 3.704 & 1.203 / 1.817 & 1.091 / 1.365 & 1.034 / 1.163 \\
Trinity, $\lambda{=}0$ & 32 & 1.888 / 2.611 & 1.161 / 1.426 & 1.074 / 1.245 & 1.032 / 1.146 \\
FlowPlace & 1 & 7.340 / 16.53 & 2.122 / 4.766 & 1.235 / 1.907 & 1.083 / 1.297 \\
FlowPlace & 32 & 2.246 / 3.352 & 1.319 / 1.809 & 1.163 / 1.463 & 1.072 / 1.238 \\
ChipDiffusion & 1 & 152.7 / 182.5 & 117.0 / 143.3 & 93.8 / 114.0 & 73.6 / 90.8 \\
ChipDiffusion & 32 & 2.578 / 3.864 & 1.418 / 1.961 & 1.214 / 1.526 & 1.096 / 1.270 \\
DiffPlace & 1 & 151.3 / 180.3 & 116.5 / 142.1 & 93.0 / 112.8 & 72.5 / 89.7 \\
DiffPlace & 32 & 5.038 / 11.31 & 2.203 / 4.340 & 1.437 / 2.165 & 1.162 / 1.402 \\
MacroDiff+ & 1 & 151.4 / 180.7 & 115.2 / 140.4 & 90.9 / 110.3 & 70.1 / 86.6 \\
MacroDiff+ & 32 & 3.487 / 5.465 & 1.576 / 2.297 & 1.278 / 1.644 & 1.127 / 1.326 \\
sequence-pair SA & -- & 12.55 / 14.91 & 13.93 / 17.24 & 13.27 / 16.14 & 10.97 / 13.53 \\
Reg-$z$ & -- & 21.53 / 24,599 & 5.618 / 1,754 & 2.253 / 55.1 & 1.367 / 3.966 \\
\bottomrule
\end{tabular}
\end{table}

Across the models of this work the refined endpoints are within 0.01 of each other (1.058 to 1.072 at $\nfe{=}32$ and 400 steps, 1.066 to 1.097 at $\nfe{=}1$), and the plain backbone with all six terms at 0.01 has the lowest reduced-grid rule. Through the legalizer at 400 steps with the overlap weight at 30, Trinity at $\lambda{=}0$ reaches a first-draw / best-of-4 hard cost of 1.192 / 1.107 at $\nfe{=}1$ and 1.206 / 1.076 at $\nfe{=}32$, against 1.193 / 1.111 and 1.214 / 1.080 for Trinity on the same 12,000 cases; every paired difference is at most 0.008 and 2.5 paired standard errors, below the seed std of the raw soft cost (0.015 to 0.037), so at this budget the training term leaves no end-to-end difference beyond the seed spread. This reproduces the effect the stage-by-stage evaluation is built to expose: the full pipeline absorbs the weaker raw samples of an early component, so the endpoint alone cannot tell the two generators apart, while the raw and short-budget stages do (Tables~\ref{tab:nfe} and~\ref{tab:refine}). The published-method generators refine much more slowly on the same refiner (1.107, 1.131, 1.167 and 1.202 at $\nfe{=}32$ and 400 steps against 1.058), start far worse, and DiffPlace keeps an overlap of 0.011 after 400 steps: the generator's quality survives 400 refiner steps. The annealer's layouts, legal and overlap-free at the start, move under the wirelength and area pull (overlap 0 to 0.039) while their soft cost falls from 12.3 to 10.9. The regressors are out of the refiner's reach (overlap 0.41 to 0.59 raw and 0.05 to 0.08 after 400 steps).

\section{The Ported Placers With Their Own In-Sampler Guidance}
\label{sec:appendix-guided}

Three of the four ported placers also correct the sample inside their samplers, and the generator comparison of Section~\ref{sec:exp-generator} leaves those corrections out so that every network meets the same Euler sampler. ChipDiffusion, DiffPlace and MacroDiff+ are sampled with the guidance their released code applies at every step, ported onto our sampler as Table~\ref{tab:guided_ports} describes, on the 12,000 val set at $\nfe \in \{1, 2, 4, 8, 16, 32\}$ with four draws from the noise of their free shards, so that every guided draw pairs with a free one. FlowPlace's in-sampler step is a projection onto the hard constraints that its own evaluation configuration disables, and it is not reproduced. The guided samples are scored on the fifteen soft columns, refined by the closed-form refiner on the reduced grid of Appendix~\ref{sec:appendix-refine-all}, passed through the three ported loops at $\nfe{=}32$, and measured against the reference distribution as in Appendix~\ref{sec:appendix-scoring}. The legalizer is not run on them.

DiffPlace's released update raises overlap. Its overlap loss is the positive pairwise overlap area and its sampler subtracts the loss gradient from the predicted noise, which moves the implied clean sample up that gradient. The port runs the released sign as the reproduction and a reference arm with the sign flipped, whose per-block normalization still pushes every overlapping block by the same $0.03\,\lambda_t$ whatever its overlap.

\begin{table}[htbp]\centering\scriptsize
\caption{What each released sampler does to the sample at every step, and how it is ported onto our Euler sampler. Every port moves positions only, keeps the pre-placed blocks fixed, works on the post-hoc ports' canvas (the predicted layout's bounding box plus 5\%), and guides the final prediction as well.}
\label{tab:guided_ports}
\begin{tabular}{l>{\raggedright\arraybackslash}p{6.4cm}>{\raggedright\arraybackslash}p{4.6cm}}
\toprule
Placer & Inside its released sampler & The port \\
\midrule
ChipDiffusion & Every step, ten Adam iterations at learning rate $8\times10^{-3}$ on the predicted clean sample of $\alpha\cdot\text{legality} + 10^{-4}\cdot\text{HPWL}$, with $\alpha$ adapted per sample from 0 while $t < 0.5$, and the diffusion step continues from the guided prediction. & The same loop on the post-hoc port's legality and wirelength functions, the next state the Euler step plus the scaled correction. \\
DiffPlace & From $1 - t \ge 0.85$ on, the predicted noise is moved by $\lambda_t \nabla\,\text{overlap}(x_t)$ with $\lambda_t = 200\,(1-t)^2$, the gradient rescaled per block to 0.03 and clipped at 0.08. The released update moves the sample up the overlap gradient. & The same on the port's overlap term, the step continuing from the guided noise. A reference arm flips the sign of the update. \\
MacroDiff+ & Every DDIM step, an Adam loop on the predicted clean sample with HPWL and overlap phases, clipping and a die clamp, 300 to 700 iterations per design in its table and 500 by default. & 500 iterations at learning rate 0.01 each step, the post-hoc port's settings, the step continuing from the guided prediction. \\
FlowPlace & A projection onto the hard constraints inside the sampler, disabled in its evaluation configuration. & Not reproduced. \\
\bottomrule
\end{tabular}
\end{table}

\subsection{Raw Samples}
\label{sec:appendix-guided-raw}

Figure~\ref{fig:guided_nfe} and Table~\ref{tab:guided_raw} give the raw samples. Each guidance does what its code targets and pays in the terms it does not. At $\nfe{=}32$ ChipDiffusion's guidance cuts overlap from 0.051 to 0.018 and MacroDiff+'s from 0.064 to 0.0008, while DiffPlace's released update raises it from 0.113 to 0.328. On the same samples the area gap grows nine to thirteen times, to 0.13 to 0.22 of the reference area, the wirelength gap goes from within 0.3\% of the reference to 3 to 13\% above it, the cluster gap grows 1.8 to 2.5 times on all three, and the boundary distance grows five to fourteen times on ChipDiffusion and DiffPlace, because the corrections separate blocks without a term that holds clusters together or the outline closed. Net on the soft cost, the guidance helps where the raw draw is poor and hurts where it is good. ChipDiffusion's soft cost halves at $\nfe = 1$ to 4 (151.6 to 88.3 at $\nfe{=}1$, 11.9 to 5.96 at 4) and rises from $\nfe{=}8$ on (3.83 to 3.97, and 2.80 to 3.78 at 32), so its guided $\nfe{=}32$ sample stands about where its free $\nfe{=}8$ sample already does. MacroDiff+ gains at every $\nfe$, 3.85 to 2.04 at 32, its 500 Adam iterations per step leaving an overlap of 0.0008. DiffPlace's released update loses everywhere, 6.23 to 132 at $\nfe{=}32$, and the flipped sign is below free sampling at $\nfe = 1$ to 4 but far above it from $\nfe{=}8$ on (40.1 against 6.23 at 32), because the normalized push scatters a nearly finished layout. No guided sample reaches Trinity's free sample at 1.514, nor the 192 / 12 rung at 1.827.

\begin{figure}[htbp]\centering
\includegraphics[width=\linewidth]{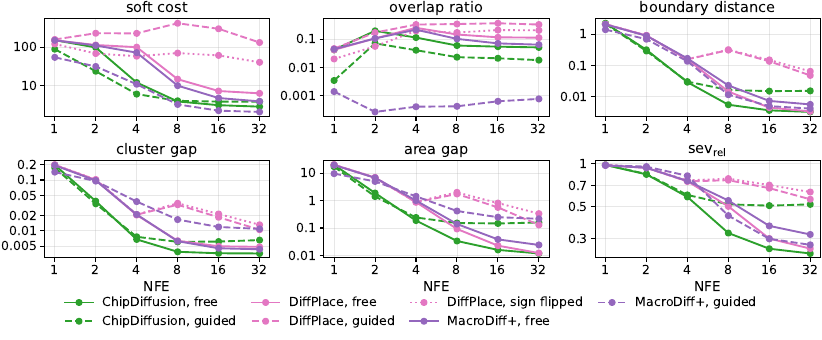}
\caption{The raw samples against $\nfe$, free sampling solid, the placer's own guidance dashed, DiffPlace's update with the sign flipped dotted: the soft cost, the overlap ratio, the boundary distance, the cluster gap, the area gap and $\mathrm{sev}_{\mathrm{rel}}$, log axes.}
\label{fig:guided_nfe}
\end{figure}

\begin{table}[htbp]\centering\scriptsize
\caption{The raw samples of the three placers at every $\nfe$, free (our Euler sampler) and guided (their own in-sampler correction), and DiffPlace's update with its sign flipped: the soft columns and the soft cost, means over four draws and the 12,000 cases. Boundary, cluster and fixed in $10^{-3}$, MIB in $10^{-6}$, the gaps as fractions of the reference.}
\label{tab:guided_raw}
\setlength{\tabcolsep}{3pt}
\begin{tabular}{llrrrrrrrrrr}
\toprule
Placer & Sampling & $\nfe$ & Overlap & Boundary & Cluster & MIB & Fixed & HPWL gap & Area gap & $\mathrm{sev}_{\mathrm{rel}}$ & Soft cost \\
\midrule
ChipDiffusion & free & 1 & 0.0421 & 2,218 & 198 & 347,608 & 1,304 & $+5.931$ & $+19.8$ & 0.976 & 151.6 \\
 &  & 2 & 0.1961 & 316 & 38.3 & 13,364 & 216 & $+0.841$ & $+1.870$ & 0.840 & 97.81 \\
 &  & 4 & 0.1138 & 29.4 & 6.77 & 275 & 26.3 & $+0.053$ & $+0.186$ & 0.585 & 11.91 \\
 &  & 8 & 0.0595 & 5.59 & 3.86 & 38.1 & 4.40 & $+0.003$ & $+0.034$ & 0.327 & 3.829 \\
 &  & 16 & 0.0543 & 3.70 & 3.61 & 19.6 & 2.50 & $-0.002$ & $+0.016$ & 0.254 & 3.057 \\
 &  & 32 & 0.0511 & 3.30 & 3.58 & 16.0 & 2.20 & $-0.001$ & $+0.012$ & 0.236 & 2.802 \\
\addlinespace[1.5pt]
 & guided & 1 & 0.0035 & 2,135 & 174 & 347,506 & 1,304 & $+4.905$ & $+17.0$ & 0.979 & 88.29 \\
 &  & 2 & 0.0711 & 289 & 34.0 & 13,320 & 216 & $+0.655$ & $+1.398$ & 0.844 & 23.49 \\
 &  & 4 & 0.0407 & 30.5 & 7.62 & 275 & 26.3 & $+0.053$ & $+0.241$ & 0.604 & 5.956 \\
 &  & 8 & 0.0231 & 16.6 & 6.07 & 39.1 & 4.63 & $+0.025$ & $+0.152$ & 0.519 & 3.974 \\
 &  & 16 & 0.0211 & 15.2 & 6.15 & 20.6 & 2.77 & $+0.027$ & $+0.146$ & 0.508 & 3.794 \\
 &  & 32 & 0.0182 & 15.5 & 6.56 & 17.6 & 2.66 & $+0.033$ & $+0.158$ & 0.519 & 3.782 \\
\midrule
DiffPlace & free & 1 & 0.0420 & 2,067 & 198 & 358,943 & 1,243 & $+5.838$ & $+19.9$ & 0.973 & 149.8 \\
 &  & 2 & 0.1062 & 900 & 100 & 95,322 & 588 & $+2.659$ & $+6.806$ & 0.935 & 113.5 \\
 &  & 4 & 0.2510 & 157 & 20.5 & 4,838 & 131 & $+0.367$ & $+0.881$ & 0.752 & 99.49 \\
 &  & 8 & 0.1457 & 14.7 & 6.37 & 175 & 16.3 & $+0.024$ & $+0.092$ & 0.502 & 14.56 \\
 &  & 16 & 0.1162 & 4.52 & 4.96 & 45.9 & 3.75 & $-0.001$ & $+0.023$ & 0.299 & 7.159 \\
 &  & 32 & 0.1126 & 3.43 & 4.83 & 31.3 & 2.34 & $-0.003$ & $+0.012$ & 0.255 & 6.229 \\
\addlinespace[1.5pt]
 & guided & 1 & 0.0463 & 2,067 & 198 & 358,942 & 1,243 & $+5.837$ & $+19.9$ & 0.974 & 158.0 \\
 &  & 2 & 0.1704 & 900 & 101 & 95,322 & 588 & $+2.655$ & $+6.795$ & 0.934 & 229.7 \\
 &  & 4 & 0.3275 & 157 & 20.8 & 4,838 & 131 & $+0.363$ & $+0.865$ & 0.750 & 228.1 \\
 &  & 8 & 0.3419 & 310 & 32.6 & 258 & 23.2 & $+0.848$ & $+1.732$ & 0.765 & 418.3 \\
 &  & 16 & 0.3637 & 136 & 18.5 & 67.1 & 6.89 & $+0.382$ & $+0.561$ & 0.673 & 300.3 \\
 &  & 32 & 0.3279 & 48.2 & 10.5 & 39.6 & 3.79 & $+0.125$ & $+0.129$ & 0.563 & 131.8 \\
\addlinespace[1.5pt]
 & sign flipped & 1 & 0.0200 & 2,068 & 199 & 358,943 & 1,243 & $+5.851$ & $+19.9$ & 0.976 & 120.6 \\
 &  & 2 & 0.0557 & 902 & 102 & 95,322 & 588 & $+2.678$ & $+6.833$ & 0.943 & 67.95 \\
 &  & 4 & 0.1961 & 159 & 20.9 & 4,838 & 131 & $+0.380$ & $+0.906$ & 0.762 & 57.90 \\
 &  & 8 & 0.1713 & 317 & 34.9 & 266 & 23.9 & $+0.909$ & $+1.983$ & 0.785 & 69.73 \\
 &  & 16 & 0.2093 & 150 & 21.5 & 70.6 & 7.47 & $+0.452$ & $+0.806$ & 0.707 & 60.83 \\
 &  & 32 & 0.2046 & 64.9 & 13.1 & 42.4 & 4.36 & $+0.192$ & $+0.333$ & 0.632 & 40.14 \\
\midrule
MacroDiff+ & free & 1 & 0.0444 & 2,040 & 194 & 359,381 & 1,234 & $+5.659$ & $+19.5$ & 0.972 & 150.2 \\
 &  & 2 & 0.1058 & 885 & 97.8 & 96,614 & 589 & $+2.561$ & $+6.699$ & 0.933 & 110.2 \\
 &  & 4 & 0.2162 & 168 & 20.9 & 5,624 & 141 & $+0.368$ & $+0.997$ & 0.756 & 71.66 \\
 &  & 8 & 0.1021 & 23.1 & 6.24 & 321 & 24.6 & $+0.029$ & $+0.140$ & 0.552 & 9.887 \\
 &  & 16 & 0.0708 & 7.40 & 4.53 & 79.1 & 7.92 & $+0.003$ & $+0.039$ & 0.367 & 4.697 \\
 &  & 32 & 0.0639 & 5.73 & 4.33 & 47.5 & 5.73 & $+0.002$ & $+0.024$ & 0.320 & 3.846 \\
\addlinespace[1.5pt]
 & guided & 1 & 0.0014 & 1,388 & 143 & 359,385 & 1,234 & $+3.711$ & $+9.499$ & 0.972 & 54.05 \\
 &  & 2 & 0.0003 & 689 & 95.4 & 96,701 & 589 & $+2.312$ & $+4.954$ & 0.951 & 31.21 \\
 &  & 4 & 0.0004 & 136 & 37.6 & 5,679 & 142 & $+0.694$ & $+1.417$ & 0.821 & 10.76 \\
 &  & 8 & 0.0004 & 11.7 & 16.6 & 326 & 25.4 & $+0.199$ & $+0.415$ & 0.433 & 3.196 \\
 &  & 16 & 0.0006 & 5.05 & 11.9 & 74.9 & 8.54 & $+0.123$ & $+0.247$ & 0.298 & 2.201 \\
 &  & 32 & 0.0008 & 4.23 & 10.9 & 42.2 & 6.28 & $+0.108$ & $+0.216$ & 0.271 & 2.042 \\
\bottomrule
\end{tabular}
\end{table}

\subsection{Through the Refiners}
\label{sec:appendix-guided-refine}

Tables~\ref{tab:guided_refine} and \ref{tab:guided_loops} and Figure~\ref{fig:guided_refiner} continue both sample sets through the closed-form refiner and through the three ported loops. The closed-form refiner is nearly indifferent to the start. At $\nfe{=}32$ with 400 steps it brings ChipDiffusion's guided samples to 1.152 against 1.131 from the free ones and MacroDiff+'s to 1.218 against 1.167, and only DiffPlace's damaged guided samples end higher, 1.869 against 1.202, with 5.62 against 1.23 at $\nfe{=}8$. At $\nfe{=}1$ the guided start is the better one on ChipDiffusion and MacroDiff+ (59.9 and 32.3 against 72.6 and 69.4), the low-$\nfe$ gain of the raw samples carried through. The ported loops do worse from the guided samples than from the free ones on all three placers. The complete published methods, guided sampling through the own loop, end at 3.58, 5.37 and 2.69 for ChipDiffusion, DiffPlace and MacroDiff+ against 2.05, 3.91 and 1.66 from free sampling, and MacroDiff+ loses its raw-sample gain here because its loop keeps the spread its guidance introduced, an area gap of 0.34 against 0.13 from the free start. The same guided samples through our refiner end at 1.15, 1.87 and 1.22.

\begin{figure}[htbp]\centering
\includegraphics[width=\linewidth]{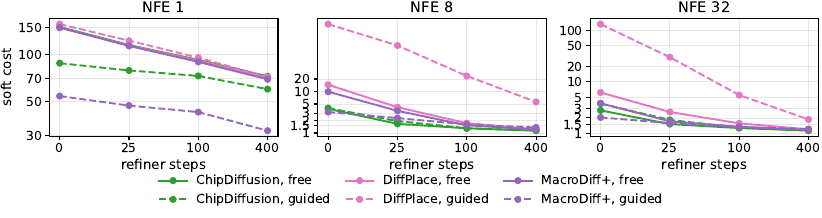}
\caption{The soft cost along the closed-form refiner from the free (solid) and the guided (dashed) samples of each placer at $\nfe = 1$, 8 and 32.}
\label{fig:guided_refiner}
\end{figure}

\begin{table}[htbp]\centering\scriptsize
\caption{The closed-form refiner on the free and the guided samples of each placer, the reduced grid of Appendix~\ref{sec:appendix-refine-all}: soft cost / overlap ratio per cell of $\nfe$ and refiner steps.}
\label{tab:guided_refine}
\setlength{\tabcolsep}{5pt}
\begin{tabular}{llrrrrr}
\toprule
Placer & Sampling & $\nfe$ & 0 steps & 25 & 100 & 400 \\
\midrule
ChipDiffusion & free & 1 & 151.6\,/\,0.0421 & 115.9\,/\,0.0287 & 92.74\,/\,0.0111 & 72.63\,/\,0.0039 \\
 &  & 8 & 3.829\,/\,0.0595 & 1.662\,/\,0.0307 & 1.299\,/\,0.0162 & 1.137\,/\,0.0061 \\
 &  & 32 & 2.802\,/\,0.0511 & 1.516\,/\,0.0272 & 1.271\,/\,0.0158 & 1.131\,/\,0.0062 \\
\addlinespace[1.5pt]
 & guided & 1 & 88.29\,/\,0.0035 & 79.26\,/\,0.0069 & 72.99\,/\,0.0041 & 59.94\,/\,0.0038 \\
 &  & 8 & 3.974\,/\,0.0231 & 1.940\,/\,0.0138 & 1.305\,/\,0.0071 & 1.151\,/\,0.0028 \\
 &  & 32 & 3.782\,/\,0.0182 & 1.828\,/\,0.0107 & 1.291\,/\,0.0057 & 1.152\,/\,0.0025 \\
\midrule
DiffPlace & free & 1 & 149.8\,/\,0.0420 & 115.1\,/\,0.0292 & 91.73\,/\,0.0113 & 71.39\,/\,0.0039 \\
 &  & 8 & 14.56\,/\,0.1457 & 4.181\,/\,0.0868 & 1.728\,/\,0.0347 & 1.231\,/\,0.0101 \\
 &  & 32 & 6.229\,/\,0.1126 & 2.606\,/\,0.0683 & 1.570\,/\,0.0330 & 1.202\,/\,0.0111 \\
\addlinespace[1.5pt]
 & guided & 1 & 158.0\,/\,0.0463 & 123.6\,/\,0.0360 & 95.91\,/\,0.0161 & 71.85\,/\,0.0048 \\
 &  & 8 & 418.3\,/\,0.3419 & 128.3\,/\,0.2471 & 23.68\,/\,0.1075 & 5.619\,/\,0.0269 \\
 &  & 32 & 131.8\,/\,0.3279 & 30.09\,/\,0.2290 & 5.552\,/\,0.1129 & 1.869\,/\,0.0373 \\
\midrule
MacroDiff+ & free & 1 & 150.2\,/\,0.0444 & 114.2\,/\,0.0308 & 90.00\,/\,0.0119 & 69.44\,/\,0.0041 \\
 &  & 8 & 9.887\,/\,0.1021 & 3.427\,/\,0.0537 & 1.537\,/\,0.0211 & 1.211\,/\,0.0071 \\
 &  & 32 & 3.846\,/\,0.0639 & 1.714\,/\,0.0351 & 1.348\,/\,0.0197 & 1.167\,/\,0.0078 \\
\addlinespace[1.5pt]
 & guided & 1 & 54.05\,/\,0.0014 & 46.93\,/\,0.0064 & 42.59\,/\,0.0035 & 32.26\,/\,0.0040 \\
 &  & 8 & 3.196\,/\,0.0004 & 2.290\,/\,0.0025 & 1.605\,/\,0.0014 & 1.360\,/\,0.0013 \\
 &  & 32 & 2.042\,/\,0.0008 & 1.589\,/\,0.0007 & 1.322\,/\,0.0010 & 1.218\,/\,0.0012 \\
\bottomrule
\end{tabular}
\end{table}

\begin{table}[tbp]\centering\scriptsize
\caption{Each placer's $\nfe{=}32$ samples, free and guided, after the closed-form refiner (400 steps) and after the three ported loops at their released iteration counts (ChipDiffusion 5,000 in its scheduled variant, DiffPlace and MacroDiff+ 500); the raw rows are the $\nfe{=}32$ rows of Table~\ref{tab:guided_raw}, whose units apply. A dagger marks guided sampling through the method's own loop, its complete published method.}
\label{tab:guided_loops}
\setlength{\tabcolsep}{2pt}
\begin{tabular}{lllrrrrrrrrr}
\toprule
Placer & Sampling & Refiner & Overlap & Boundary & Cluster & MIB & Fixed & HPWL gap & Area gap & $\mathrm{sev}_{\mathrm{rel}}$ & Soft cost \\
\midrule
ChipDiffusion & free & ours, 400 steps & 0.0062 & 0.192 & 1.13 & 0.0000 & 0.0001 & $+0.012$ & $+0.030$ & 0.017 & 1.131 \\
 &  & ChipDiffusion loop & 0.0069 & 4.56 & 6.68 & 15.7 & 2.19 & $+0.053$ & $+0.117$ & 0.273 & 2.048 \\
 &  & DiffPlace loop & 0.0008 & 29.1 & 6.10 & 15.7 & 2.19 & $+0.058$ & $+0.246$ & 0.563 & 3.614 \\
 &  & MacroDiff+ loop & 0.0032 & 2.21 & 6.80 & 15.7 & 2.19 & $+0.061$ & $+0.116$ & 0.174 & 1.602 \\
\addlinespace[1.5pt]
 & guided & ours, 400 steps & 0.0025 & 0.699 & 1.11 & 0.0000 & 0.0001 & $+0.024$ & $+0.089$ & 0.029 & 1.152 \\
 &  & ChipDiffusion loop$^{\dagger}$ & 0.0022 & 15.5 & 11.5 & 17.2 & 2.66 & $+0.099$ & $+0.273$ & 0.533 & 3.582 \\
 &  & DiffPlace loop & 0.0007 & 28.0 & 7.57 & 17.2 & 2.66 & $+0.069$ & $+0.272$ & 0.580 & 3.803 \\
 &  & MacroDiff+ loop & 0.0006 & 7.21 & 12.2 & 17.2 & 2.66 & $+0.126$ & $+0.276$ & 0.319 & 2.334 \\
\midrule
DiffPlace & free & ours, 400 steps & 0.0111 & 0.256 & 1.03 & 0.0059 & 0.0001 & $+0.014$ & $+0.037$ & 0.019 & 1.202 \\
 &  & ChipDiffusion loop & 0.0124 & 5.48 & 7.10 & 31.6 & 2.35 & $+0.058$ & $+0.119$ & 0.294 & 2.286 \\
 &  & DiffPlace loop & 0.0024 & 36.1 & 6.99 & 31.6 & 2.35 & $+0.067$ & $+0.299$ & 0.580 & 3.914 \\
 &  & MacroDiff+ loop & 0.0044 & 2.57 & 7.14 & 31.6 & 2.35 & $+0.069$ & $+0.116$ & 0.175 & 1.633 \\
\addlinespace[1.5pt]
 & guided & ours, 400 steps & 0.0373 & 4.58 & 1.99 & 0.014 & 0.0001 & $+0.131$ & $+0.066$ & 0.066 & 1.869 \\
 &  & ChipDiffusion loop & 0.0155 & 33.1 & 13.3 & 39.7 & 3.77 & $+0.236$ & $+0.245$ & 0.496 & 4.048 \\
 &  & DiffPlace loop$^{\dagger}$ & 0.0053 & 73.7 & 13.1 & 39.7 & 3.77 & $+0.239$ & $+0.479$ & 0.654 & 5.369 \\
 &  & MacroDiff+ loop & 0.0035 & 23.7 & 14.7 & 39.7 & 3.77 & $+0.277$ & $+0.245$ & 0.350 & 2.702 \\
\midrule
MacroDiff+ & free & ours, 400 steps & 0.0078 & 0.508 & 1.36 & 1.21 & 0.0001 & $+0.018$ & $+0.037$ & 0.021 & 1.167 \\
 &  & ChipDiffusion loop & 0.0075 & 5.61 & 7.61 & 129 & 5.71 & $+0.063$ & $+0.132$ & 0.293 & 2.180 \\
 &  & DiffPlace loop & 0.0010 & 31.5 & 6.99 & 129 & 5.71 & $+0.068$ & $+0.269$ & 0.574 & 3.755 \\
 &  & MacroDiff+ loop & 0.0031 & 2.53 & 7.75 & 129 & 5.71 & $+0.072$ & $+0.127$ & 0.186 & 1.661 \\
\addlinespace[1.5pt]
 & guided & ours, 400 steps & 0.0012 & 1.44 & 1.49 & 0.036 & 0.0001 & $+0.060$ & $+0.162$ & 0.039 & 1.218 \\
 &  & ChipDiffusion loop & 0.0001 & 12.7 & 15.5 & 104 & 6.25 & $+0.147$ & $+0.334$ & 0.552 & 3.801 \\
 &  & DiffPlace loop & 0.0001 & 8.05 & 11.2 & 104 & 6.25 & $+0.110$ & $+0.239$ & 0.381 & 2.592 \\
 &  & MacroDiff+ loop$^{\dagger}$ & 0.0004 & 8.67 & 15.0 & 104 & 6.25 & $+0.163$ & $+0.340$ & 0.369 & 2.694 \\
\bottomrule
\end{tabular}
\end{table}

\begin{table}[tbp]\centering\scriptsize
\caption{The distribution metrics of Appendix~\ref{sec:appendix-scoring} for the free and the guided samples at $\nfe{=}1$ and 32: the Fr\'echet distances on the density descriptors and on positions, the MMD in $10^{-3}$, and the permutation $p$ that the run is closer to the reference than the plain control.}
\label{tab:guided_dist}
\setlength{\tabcolsep}{4pt}
\begin{tabular}{llrrrrrrrr}
\toprule
Placer & Sampling & $\nfe$ & FLD8b & FLD8f & FLD12b & FLD12f & FPD & MMD8b & $p$ \\
\midrule
ChipDiffusion & free & 1 & 102 & 17.67 & 234 & 48.12 & 5.501 & 1,223 & 1.00 \\
 &  & 32 & 0.2733 & 0.009784 & 1.361 & 0.06643 & 0.001091 & 31.2 & 1.00 \\
 & guided & 1 & 101 & 15.71 & 230 & 43.13 & 7.418 & 1,171 & 1.00 \\
 &  & 32 & 3.1 & 0.2972 & 8.633 & 1.071 & 0.00531 & 512 & 1.00 \\
\midrule
DiffPlace & free & 1 & 102 & 17.78 & 234 & 48.36 & 5.361 & 1,230 & 1.00 \\
 &  & 32 & 2.855 & 0.1911 & 10.73 & 1.139 & 0.01458 & 247 & 1.00 \\
 & guided & 1 & 102 & 17.87 & 234 & 48.65 & 5.364 & 1,231 & 1.00 \\
 &  & 32 & 19.72 & 1.613 & 64.18 & 7.93 & 0.1404 & 594 & 1.00 \\
\midrule
MacroDiff+ & free & 1 & 102 & 17.51 & 234 & 47.81 & 5.724 & 1,216 & 1.00 \\
 &  & 32 & 0.624 & 0.02763 & 2.679 & 0.1678 & 0.2799 & 81.5 & 1.00 \\
 & guided & 1 & 92.14 & 15.52 & 210 & 43.44 & 3.102 & 1,077 & 1.00 \\
 &  & 32 & 4.361 & 1.157 & 11.3 & 3.63 & 0.02319 & 615 & 1.00 \\
\bottomrule
\end{tabular}
\end{table}

\subsection{Distance to the Reference}
\label{sec:appendix-guided-dist}

Table~\ref{tab:guided_dist} measures the guided samples against the reference distribution. At $\nfe{=}32$ every guidance moves its placer away from it, FLD8b by seven to eleven times (ChipDiffusion 0.273 to 3.10, DiffPlace 2.86 to 19.7, MacroDiff+ 0.624 to 4.36) and MMD8b by two to sixteen times, and no guided run is closer to the reference than the plain control ($p = 1.00$). At $\nfe{=}1$ the guided and the free samples sit at the same distance (FLD8b 92 to 102), where the one-step draw dominates whatever the guidance adds.

The three corrections move the sample along the terms they model, overlap and wirelength, and away from the reference layouts on every term they do not, which is the pattern the ported loops show on the reference layouts in Appendix~\ref{sec:appendix-theory-measured}. Measured at the stage where the guidance acts, the trade is visible in the columns of Table~\ref{tab:guided_raw}. After the legalizer, which removes the overlap on every arm, the same trade would appear as a few percent of area and wirelength with no column to name it.

\section{Hard-Cost Results in Full}
\label{sec:appendix-hard}

\subsection{Trinity's Grid}
\label{sec:appendix-hard-grid}

\begin{table}[htbp]\centering\scriptsize
\caption{Trinity through the legalizer on the $\nfe \times$ steps grid (selected cells; every cell is in the supplementary data): the soft cost of the refined layout, the hard cost of the first legalized draw of every case as mean / median / 90th percentile, its hard-cost total, its wirelength / area gap in percent, the best of four legalized draws as mean / median / 90th percentile and total, the mean of the soft pick of four, and the legalizer's time per draw. Every cell is feasible on all draws.}
\label{tab:app_hard_grid}
\label{tab:app_hard_grid_best}
\setlength{\tabcolsep}{3.5pt}
\begin{tabular}{rrrrrrrrrr}
\toprule
$\nfe$ & Steps & Soft cost & First draw & Total & WL / area & Best of 4 & Total & Soft pick & ms / draw \\
\midrule
1 & 0 & 1.6785 & 1.4903\,/\,1.2875\,/\,2.362 & 1.7111 & $+15.5$\,/\,$+17.8$ & 1.3763\,/\,1.2394\,/\,1.787 & 1.5513 & 1.4841 & 358 \\
1 & 10 & 1.5170 & 1.3375\,/\,1.1177\,/\,2.148 & 1.5836 & $+15.1$\,/\,$+17.3$ & 1.2296\,/\,1.0747\,/\,1.516 & 1.4183 & 1.3309 & 351 \\
1 & 25 & 1.3090 & 1.3239\,/\,1.1123\,/\,2.071 & 1.5612 & $+14.2$\,/\,$+16.6$ & 1.2171\,/\,1.0698\,/\,1.485 & 1.3976 & 1.3166 & 348 \\
1 & 50 & 1.2084 & 1.3088\,/\,1.1046\,/\,1.975 & 1.5470 & $+13.6$\,/\,$+15.8$ & 1.2071\,/\,1.0651\,/\,1.450 & 1.3968 & 1.3038 & 342 \\
1 & 100 & 1.1490 & 1.2932\,/\,1.0965\,/\,1.865 & 1.5226 & $+12.6$\,/\,$+14.7$ & 1.1903\,/\,1.0595\,/\,1.413 & 1.3557 & 1.2847 & 330 \\
1 & 200 & 1.1042 & 1.2624\,/\,1.0838\,/\,1.632 & 1.4648 & $+10.9$\,/\,$+12.9$ & 1.1623\,/\,1.0498\,/\,1.335 & 1.3024 & 1.2527 & 314 \\
1 & 400 & 1.0661 & 1.2222\,/\,1.0682\,/\,1.486 & 1.3827 & $+8.7$\,/\,$+10.5$ & 1.1298\,/\,1.0394\,/\,1.267 & 1.2337 & 1.2081 & 290 \\
2 & 0 & 1.6603 & 1.4928\,/\,1.2869\,/\,2.403 & 1.7236 & $+15.2$\,/\,$+17.6$ & 1.3294\,/\,1.2184\,/\,1.652 & 1.4820 & 1.4748 & 344 \\
2 & 400 & 1.0617 & \textbf{1.2095}\,/\,\textbf{1.0645}\,/\,1.453 & \textbf{1.3535} & $+8.1$\,/\,$+9.9$ & 1.0978\,/\,1.0281\,/\,1.221 & 1.1775 & 1.1918 & 275 \\
4 & 0 & 1.6306 & 1.4945\,/\,1.2888\,/\,2.384 & 1.7198 & $+15.4$\,/\,$+17.7$ & 1.3104\,/\,1.2133\,/\,1.620 & 1.4510 & 1.4723 & 342 \\
4 & 400 & 1.0593 & 1.2153\,/\,1.0652\,/\,1.477 & 1.3701 & $+8.3$\,/\,$+9.9$ & 1.0879\,/\,1.0177\,/\,1.204 & 1.1597 & 1.1832 & 272 \\
8 & 0 & 1.5976 & 1.5097\,/\,1.2968\,/\,2.433 & 1.7359 & $+15.9$\,/\,$+18.0$ & 1.3080\,/\,1.2133\,/\,1.605 & 1.4523 & 1.4785 & 340 \\
8 & 100 & 1.1232 & 1.2961\,/\,1.0981\,/\,1.865 & 1.4925 & $+12.4$\,/\,$+14.2$ & 1.1313\,/\,1.0413\,/\,1.301 & 1.2479 & 1.2575 & 308 \\
8 & 400 & 1.0593 & 1.2249\,/\,1.0685\,/\,1.514 & 1.3658 & $+8.6$\,/\,$+10.1$ & \textbf{1.0861}\,/\,\textbf{1.0173}\,/\,1.205 & \textbf{1.1473} & 1.1826 & 267 \\
16 & 400 & 1.0593 & 1.2286\,/\,1.0714\,/\,1.522 & 1.3669 & $+9.0$\,/\,$+10.4$ & 1.0878\,/\,1.0181\,/\,1.207 & 1.1534 & 1.1847 & 269 \\
32 & 0 & 1.5137 & 1.5125\,/\,1.3019\,/\,2.451 & 1.7319 & $+16.4$\,/\,$+18.0$ & 1.3057\,/\,1.2135\,/\,1.601 & 1.4489 & 1.4836 & 341 \\
32 & 10 & 1.3695 & 1.3561\,/\,1.1318\,/\,2.219 & 1.5948 & $+15.8$\,/\,$+17.5$ & 1.1697\,/\,1.0544\,/\,1.383 & 1.3309 & 1.3191 & 337 \\
32 & 25 & 1.2159 & 1.3453\,/\,1.1230\,/\,2.163 & 1.5667 & $+15.2$\,/\,$+16.9$ & 1.1596\,/\,1.0517\,/\,1.364 & 1.3126 & 1.3090 & 331 \\
32 & 50 & 1.1473 & 1.3321\,/\,1.1162\,/\,2.091 & 1.5476 & $+14.3$\,/\,$+16.0$ & 1.1475\,/\,1.0492\,/\,1.339 & 1.2860 & 1.2913 & 323 \\
32 & 100 & 1.1114 & 1.3093\,/\,1.1046\,/\,1.951 & 1.5055 & $+13.2$\,/\,$+14.8$ & 1.1353\,/\,1.0441\,/\,1.316 & 1.2550 & 1.2672 & 311 \\
32 & 200 & 1.0833 & 1.2785\,/\,1.0914\,/\,1.766 & 1.4558 & $+11.5$\,/\,$+13.0$ & 1.1136\,/\,1.0370\,/\,1.269 & 1.2040 & 1.2327 & 294 \\
32 & 400 & 1.0583 & 1.2341\,/\,1.0734\,/\,1.551 & 1.3773 & $+9.2$\,/\,$+10.5$ & 1.0887\,/\,1.0197\,/\,1.207 & 1.1580 & 1.1853 & 270 \\
\bottomrule
\end{tabular}
\end{table}

Hard-cost Pareto sets over $(\nfe, \text{steps})$, a cell kept when no cell with fewer sampler steps and fewer refiner steps is cheaper: for the first-draw mean the $\nfe{=}1$ column, $\nfe{=}2$ at 10 to 400 steps (1.210 at 400) and $(4, 200)$; for the best-of-4 mean the whole $\nfe = 1, 2, 4, 8$ columns, $\nfe{=}16$ at 0 / 10 / 25 / 100 and $(32, 0)$; for the best-of-4 total the $\nfe = 1, 2, 4$ columns, $\nfe{=}8$ at 25 / 50 / 100 / 400 (1.147 at 400, the grid's minimum) and $\nfe{=}16$ at 0 to 200. The refiner pays after legalization in every column at every $\nfe$ (first draw 1.49 to 1.51 $\to$ 1.21 to 1.23, best of 4 1.31 to 1.38 $\to$ 1.09 to 1.13, $\Vrel$ 0.10 $\to$ 0.04, area gap $+0.18 \to +0.10$ from raw to 400 steps), the legalizer gets faster on refined input (340 to 358 $\to$ 267 to 290 ms per draw), and the largest single move is raw $\to$ 10 steps. Along the $\nfe$ axis at 400 steps the soft cost is monotone (1.066 $\to$ 1.058) while the first-draw cost has its minimum at $\nfe{=}2$ and the best of 4 at $\nfe{=}8$. The soft pick of four lands at 1.18 to 1.21 at 400 steps, between the first draw and the best of 4: choosing by the soft cost recovers a third of the best-of-4 gain without legalizing the other three draws.

\begin{table}[htbp]\centering\scriptsize
\caption{Per cell: the share of draws that ended on each rung of the legalizer's ladder, the expansion factor $\lambda$, the rank correlation of $\lambda$ with the area gap and the cost, and the gaps after legalization.}
\label{tab:app_rungs}
\setlength{\tabcolsep}{2.5pt}
\begin{tabular}{rrrrrrrrrrrrr}
\toprule
$\nfe$ & Steps & Hard LP & Penalties & Flips & Last rungs & $\lambda$ p50 & $\lambda$ p90 & $\rho(\lambda, \text{area gap})$ & $\rho(\lambda, \text{cost})$ & Area gap & HPWL gap & $\Vrel$ \\
\midrule
1 & 0 & 0.114 & 0.443 & 0.325 & 0.117 & 2.54 & 8.21 & 0.33 & 0.19 & $+$0.178 & $+$0.155 & 0.104 \\
1 & 400 & 0.248 & 0.491 & 0.189 & 0.073 & 1.39 & 3.65 & 0.46 & 0.42 & $+$0.105 & $+$0.088 & 0.041 \\
8 & 0 & 0.113 & 0.467 & 0.302 & 0.118 & 2.40 & 8.24 & 0.37 & 0.22 & $+$0.180 & $+$0.159 & 0.107 \\
8 & 400 & 0.252 & 0.514 & 0.168 & 0.066 & 1.35 & 3.48 & 0.50 & 0.46 & $+$0.101 & $+$0.086 & 0.043 \\
32 & 0 & 0.108 & 0.470 & 0.303 & 0.120 & 2.17 & 6.54 & 0.33 & 0.20 & $+$0.180 & $+$0.164 & 0.109 \\
32 & 400 & 0.245 & 0.518 & 0.170 & 0.068 & 1.32 & 2.80 & 0.49 & 0.48 & $+$0.105 & $+$0.092 & 0.045 \\
\bottomrule
\end{tabular}
\end{table}

\subsection{Every Line on the Same Pipeline}
\label{sec:appendix-hard-lines}

\begin{table}[htbp]\centering\footnotesize
\caption{The prior arts and Trinity on the same pipeline at 400 refiner steps, per $\nfe$ (the $\epsilon$-head models produce no layout at $\nfe = 1$ and 2, where their legalized costs are 9 to 33). Columns: the soft cost, the first draw's mean and median cost, the best-of-4 mean and hard-cost total, $\Vrel$ and the area gap after legalization, the median expansion factor $\lambda$, the share of draws the legalizer's hard packing program solved directly (LP), and the per-draw soft--hard Spearman $\rho$.}
\label{tab:app_lines}
\scriptsize
\setlength{\tabcolsep}{6pt}
\begin{tabular}{lrrrrrrrrrrrr}
\toprule
Line & $\nfe$ & Soft & First & Median & Best 4 & Total & $\Vrel$ & Area & $\lambda$ & LP & $\rho$ & HPWL \\
\midrule
FlowPlace & 4 & 1.116 & 1.270 & 1.091 & 1.119 & 1.214 & 0.047 & $+$0.13 & 1.53 & 0.202 & 0.41 & $+0.12$ \\
FlowPlace & 8 & 1.114 & 1.275 & 1.096 & 1.118 & 1.215 & 0.048 & $+$0.14 & 1.53 & 0.194 & 0.42 & $+0.12$ \\
FlowPlace & 16 & 1.109 & 1.287 & 1.101 & 1.121 & 1.215 & 0.050 & $+$0.14 & 1.51 & 0.190 & 0.45 & $+0.12$ \\
FlowPlace & 32 & 1.107 & 1.296 & 1.106 & 1.124 & 1.228 & 0.051 & $+$0.15 & 1.48 & 0.186 & 0.46 & $+0.13$ \\
ChipDiffusion & 4 & 1.226 & 1.325 & 1.154 & 1.143 & 1.232 & 0.054 & $+$0.17 & 1.66 & 0.162 & 0.62 & $+0.15$ \\
ChipDiffusion & 8 & 1.137 & 1.309 & 1.118 & 1.130 & 1.234 & 0.053 & $+$0.16 & 1.54 & 0.174 & 0.48 & $+0.14$ \\
ChipDiffusion & 16 & 1.132 & 1.316 & 1.119 & 1.134 & 1.238 & 0.053 & $+$0.16 & 1.55 & 0.169 & 0.46 & $+0.14$ \\
ChipDiffusion & 32 & 1.131 & 1.322 & 1.122 & 1.137 & 1.253 & 0.054 & $+$0.16 & 1.55 & 0.166 & 0.47 & $+0.14$ \\
DiffPlace & 4 & 2.68 & 2.044 & 1.874 & 1.624 & 1.843 & 0.150 & $+$0.48 & 3.30 & 0.013 & 0.51 & $+0.45$ \\
DiffPlace & 8 & 1.231 & 1.343 & 1.145 & 1.152 & 1.262 & 0.056 & $+$0.18 & 1.90 & 0.156 & 0.55 & $+0.15$ \\
DiffPlace & 16 & 1.199 & 1.351 & 1.134 & 1.155 & 1.280 & 0.056 & $+$0.18 & 1.81 & 0.157 & 0.48 & $+0.16$ \\
DiffPlace & 32 & 1.202 & 1.355 & 1.142 & 1.158 & 1.281 & 0.057 & $+$0.18 & 1.83 & 0.151 & 0.47 & $+0.16$ \\
MacroDiff+ & 4 & 2.62 & 1.976 & 1.813 & 1.570 & 1.737 & 0.143 & $+$0.45 & 2.97 & 0.015 & 0.49 & $+0.42$ \\
MacroDiff+ & 8 & 1.211 & 1.342 & 1.156 & 1.150 & 1.248 & 0.058 & $+$0.18 & 1.68 & 0.153 & 0.58 & $+0.15$ \\
MacroDiff+ & 16 & 1.172 & 1.349 & 1.143 & 1.154 & 1.274 & 0.058 & $+$0.18 & 1.67 & 0.149 & 0.50 & $+0.16$ \\
MacroDiff+ & 32 & 1.167 & 1.352 & 1.147 & 1.159 & 1.286 & 0.059 & $+$0.18 & 1.65 & 0.144 & 0.50 & $+0.16$ \\
Trinity & 4 & 1.059 & 1.215 & 1.065 & 1.088 & 1.160 & 0.041 & $+$0.10 & 1.35 & 0.255 & 0.55 & $+0.08$ \\
Trinity & 8 & 1.059 & 1.225 & 1.069 & 1.086 & 1.147 & 0.043 & $+$0.10 & 1.35 & 0.252 & 0.58 & $+0.09$ \\
Trinity & 16 & 1.059 & 1.229 & 1.071 & 1.088 & 1.153 & 0.044 & $+$0.10 & 1.33 & 0.247 & 0.59 & $+0.09$ \\
Trinity & 32 & 1.058 & 1.234 & 1.073 & 1.089 & 1.158 & 0.045 & $+$0.10 & 1.32 & 0.245 & 0.60 & $+0.09$ \\
\bottomrule
\end{tabular}
\end{table}

FlowPlace at $\nfe = 1 / 2$ and 400 steps: first draw 1.261 / 1.266, best of 4 1.134 / 1.126 (Trinity 1.222 / 1.210 and 1.130 / 1.098). The order on the same pipeline at $\nfe{=}8$ and 400 steps, first draw / best of 4: Trinity 1.225 / 1.086, FlowPlace 1.275 / 1.118, ChipDiffusion 1.309 / 1.130, MacroDiff+ 1.342 / 1.150, DiffPlace 1.343 / 1.152, Reg-$z$ 1.55, Reg-$xywh$ 1.74, the annealer 13.8 / 10.4. The two linear-$\beta$ lines are within 0.02 of each other cell by cell at $\nfe = 8$ to 32, and on them $\nfe = 16$ and 32 add nothing over 8. FlowPlace shows Trinity's $\nfe$ inversion, larger: at 400 steps the first-draw cost rises with $\nfe$ (1.261 $\to$ 1.296) and its $\nfe{=}1$ column dominates every refined cell of $\nfe = 8$ to 32. After legalization the prior arts keep more residual soft violations, a larger area gap, need more expansion and reach the hard program on 14 to 21\% of draws against 25\%.

\begin{table}[htbp]\centering\scriptsize
\caption{The single-shard lines (the sequence-pair annealer and the two regressors) across refiner steps: the soft cost (median / p90 where the mean overflows), the first draw's mean, median and 90th-percentile hard cost (the single layout for the regressors), the best-of-4 mean, the share of draws solved by the hard LP and by the last rungs, the median expansion factor $\lambda$, and the area gap, $\Vrel$ and HPWL gap after legalization.}
\label{tab:app_single_shard}
\setlength{\tabcolsep}{3pt}
\begin{tabular}{lrrrrrrrrrrrr}
\toprule
Line & Steps & Soft & First & Median & p90 & Best 4 & LP & Last & $\lambda$ & Area & $\Vrel$ & HPWL \\
\midrule
SP-SA & 0 & 12.341 & 19.019 & 18.266 & 28.18 & 14.382 & 0.000 & 0.170 & 1.51 & $+$3.28 & 0.755 & $+2.88$ \\
SP-SA & 100 & 13.094 & 17.195 & 16.510 & 25.43 & 12.984 & 0.000 & 0.164 & 1.94 & $+$3.10 & 0.721 & $+2.76$ \\
SP-SA & 400 & 10.891 & 13.809 & 13.294 & 20.13 & 10.427 & 0.000 & 0.150 & 2.76 & $+$2.61 & 0.660 & $+2.50$ \\
\addlinespace[1.5pt]
Reg-$z$ & 0 & 21.53 / 24,599 & 2.306 & 1.910 & 3.932 & -- & 0.034 & 0.137 & 10.0 (cap) & $+$0.61 & 0.162 & $+0.52$ \\
Reg-$z$ & 100 & 2.25 / 55.1 & 1.766 & 1.442 & 3.018 & -- & 0.065 & 0.137 & 8.50 & $+$0.40 & 0.095 & $+0.37$ \\
Reg-$z$ & 400 & 1.37 / 3.97 & 1.552 & 1.278 & 2.587 & -- & 0.096 & 0.125 & 4.56 & $+$0.27 & 0.078 & $+0.28$ \\
\addlinespace[1.5pt]
Reg-$xywh$ & 0 & 99.5 / 239,235 & 2.623 & 2.067 & 4.355 & -- & 0.023 & 0.145 & 10.0 (cap) & $+$0.71 & 0.185 & $+0.60$ \\
Reg-$xywh$ & 100 & 4.57 / 174 & 2.013 & 1.560 & 3.351 & -- & 0.040 & 0.155 & 10.0 & $+$0.49 & 0.115 & $+0.46$ \\
Reg-$xywh$ & 400 & 1.76 / 6.40 & 1.736 & 1.371 & 2.929 & -- & 0.065 & 0.143 & 5.74 & $+$0.34 & 0.095 & $+0.36$ \\
\bottomrule
\end{tabular}
\end{table}

The legalizer takes 379 to 393 ms per draw on the annealer's layouts and 329 to 375 ms on the regressors'. The annealer's layouts are overlap-free but far from the cases' reference (wirelength and area gaps 1.5 and 1.1) and do not hold the preplaced blocks; the anchor step restores those blocks, the expansion prices the overlaps this creates, the ladder never reaches the hard program, and the refiner moves the soft cost up before down (12.34 $\to$ 14.04 at 10 steps, 10.89 at 400) while the hard cost falls monotonically (19.0 $\to$ 13.8). The regressors sit at the expansion cap on every raw draw; the refiner moves both costs monotonically and the soft cost by orders of magnitude at the tail (Reg-$z$ cases above 100: 37\% $\to$ 0.1\%), and Reg-$xywh$ is worse than Reg-$z$ at every step count on every column.

\subsection{The Soft--Hard Correlation}
\label{sec:appendix-correlation}

\begin{table}[htbp]\centering\footnotesize
\caption{Trinity, one column per $\nfe$ / refiner steps: the rank correlation of the soft cost with the legalized cost over the 48,000 draws ($\rho$, $\tau$) and over the 12,000 per-case means, then the Spearman of every soft column with the legalized cost.}
\label{tab:app_corr_flagship}
\setlength{\tabcolsep}{3pt}
\begin{tabular}{lrrrrrrrrr}
\toprule
Quantity & 1 / 0 & 1 / 10 & 1 / 400 & 8 / 0 & 8 / 400 & 32 / 0 & 32 / 10 & 32 / 100 & 32 / 400 \\
\midrule
$\rho$ over draws & 0.198 & 0.491 & 0.495 & 0.261 & 0.580 & 0.267 & 0.551 & 0.572 & \textbf{0.603} \\
$\tau$ over draws & 0.134 & 0.338 & 0.345 & 0.177 & 0.412 & 0.180 & 0.382 & 0.401 & 0.430 \\
$\rho$ over case means & 0.203 & 0.505 & 0.502 & 0.261 & 0.583 & 0.271 & 0.583 & 0.581 & \textbf{0.610} \\
\midrule
overlap ratio & 0.223 & 0.365 & 0.418 & 0.278 & 0.506 & 0.272 & 0.450 & 0.460 & 0.531 \\
$\mathrm{sev}_{\mathrm{rel}}$ & 0.058 & 0.482 & 0.507 & 0.102 & 0.556 & 0.112 & 0.496 & 0.558 & 0.572 \\
boundary distance & 0.034 & 0.381 & 0.410 & 0.043 & 0.396 & 0.044 & 0.339 & 0.421 & 0.388 \\
$\mathrm{sev}_{\mathrm{bnd}}$ & 0.138 & 0.391 & 0.427 & 0.136 & 0.402 & 0.137 & 0.356 & 0.448 & 0.394 \\
area gap & 0.074 & 0.297 & 0.257 & 0.096 & 0.328 & 0.100 & 0.303 & 0.260 & 0.317 \\
compactness & 0.173 & 0.338 & 0.289 & 0.189 & 0.347 & 0.197 & 0.348 & 0.297 & 0.339 \\
$\mathrm{sev}_{\mathrm{grp}}$ & 0.158 & 0.237 & 0.092 & 0.209 & 0.178 & 0.220 & 0.308 & 0.242 & 0.220 \\
group gap & 0.113 & 0.171 & 0.012 & 0.161 & 0.101 & 0.175 & 0.243 & 0.182 & 0.145 \\
HPWL gap & $-$0.116 & $-$0.112 & 0.005 & $-$0.105 & 0.077 & $-$0.019 & 0.021 & 0.089 & 0.157 \\
normalized wirelength & $-$0.128 & $-$0.217 & $-$0.190 & $-$0.121 & $-$0.173 & $-$0.099 & $-$0.178 & $-$0.163 & $-$0.151 \\
fixed deviation & 0.228 & 0.059 & 0.055 & 0.222 & 0.051 & 0.226 & 0.055 & 0.055 & 0.046 \\
MIB variance & 0.099 & $-$0.029 & $-$0.026 & 0.098 & $-$0.023 & 0.090 & $-$0.035 & $-$0.035 & $-$0.024 \\
\bottomrule
\end{tabular}
\end{table}

$\rho$ over draws at 400 steps by $\nfe = 1 / 2 / 4 / 8 / 16 / 32$ is 0.495 / 0.519 / 0.553 / 0.580 / 0.595 / 0.603, and at $\nfe{=}32$ by steps $0 / 10 / 25 / 50 / 100 / 200 / 400$ is 0.267 / 0.551 / 0.547 / 0.558 / 0.572 / 0.584 / 0.603. Across the 42 cell means: Spearman 0.938, Kendall 0.754; pooled over the 2,016,000 draws: Spearman 0.472, Kendall 0.326. With the overlap ratio alone in place of the soft cost: across cells 0.904, pooled 0.437, per cell 0.22 to 0.28 raw and 0.33 to 0.53 refined.

\begin{table}[htbp]\centering\scriptsize
\caption{Every line: Spearman $\rho$, Kendall $\tau$, Pearson $r$ and $R^2$, log-log $r$ and $R^2$ of soft against hard cost, across the cell means (first six columns: 42 cells per diffusion line, 7 per single-shard line, 231 for all eight lines, 175 for the five lines with finite means) and pooled over every draw (last six: 2,016,000 per diffusion line, 336,000 for the annealer, 84,000 per regressor, 10,584,000 and 8,400,000 for the two unions); n/a where the regressors' overflowing means make a linear fit meaningless.}
\label{tab:app_corr_lines}
\setlength{\tabcolsep}{3pt}
\begin{tabular}{lrrrrrr@{\hspace{9pt}}rrrrrr}
\toprule
Line & $\rho$ & $\tau$ & $r$ & $R^2$ & $r_{\log}$ & $R^2_{\log}$ & $\rho$ & $\tau$ & $r$ & $R^2$ & $r_{\log}$ & $R^2_{\log}$ \\
\midrule
Trinity & 0.938 & 0.754 & 0.899 & 0.808 & 0.899 & 0.809 & 0.472 & 0.326 & 0.238 & 0.057 & 0.311 & 0.097 \\
FlowPlace & 0.923 & 0.738 & 0.747 & 0.558 & 0.846 & 0.716 & 0.369 & 0.252 & 0.065 & 0.004 & 0.237 & 0.056 \\
ChipDiffusion & 0.989 & 0.919 & 0.925 & 0.855 & 0.935 & 0.875 & 0.805 & 0.619 & 0.843 & 0.711 & 0.901 & 0.811 \\
DiffPlace & 0.975 & 0.898 & 0.892 & 0.796 & 0.906 & 0.822 & 0.858 & 0.665 & 0.796 & 0.633 & 0.873 & 0.763 \\
MacroDiff+ & 0.985 & 0.919 & 0.923 & 0.853 & 0.925 & 0.856 & 0.861 & 0.671 & 0.830 & 0.688 & 0.890 & 0.793 \\
SP-SA & 0.643 & 0.619 & 0.795 & 0.633 & 0.822 & 0.676 & 0.484 & 0.335 & 0.459 & 0.210 & 0.538 & 0.290 \\
Reg-$z$ & 1.000 & 1.000 & n/a & n/a & n/a & n/a & 0.608 & 0.442 & 0.031 & 0.001 & 0.554 & 0.307 \\
Reg-$xywh$ & 1.000 & 1.000 & n/a & n/a & n/a & n/a & 0.579 & 0.417 & 0.083 & 0.007 & 0.589 & 0.347 \\
\midrule
all eight lines & \textbf{0.947} & \textbf{0.807} & n/a & n/a & n/a & n/a & \textbf{0.768} & \textbf{0.578} & n/a & n/a & \textbf{0.827} & \textbf{0.684} \\
five finite lines & 0.962 & 0.836 & 0.856 & 0.732 & 0.901 & 0.812 & 0.760 & 0.570 & 0.774 & 0.598 & 0.865 & 0.748 \\
\bottomrule
\end{tabular}
\end{table}

Within a cell the relation is a ranking: Trinity's per-draw Spearman reaches 0.60 at 400 steps while the linear $R^2$ is 0.05 to 0.10 (log-log 0.08 to 0.15), because the legalized cost has a heavy tail (p90 1.5 against a median 1.07) that no linear fit on a compact soft cost captures. Within-cell ranges from the raw to the 400-step cells: Trinity Spearman 0.20--0.27 $\to$ 0.50--0.60, FlowPlace 0.21--0.27 $\to$ 0.38--0.46, ChipDiffusion 0.24--0.34 $\to$ 0.21--0.62, DiffPlace 0.27--0.40 $\to$ 0.39--0.55, MacroDiff+ 0.25--0.35 $\to$ 0.36--0.58, the annealer 0.46 $\to$ 0.56. The soft column that carries the correlation is $\mathrm{sev}_{\mathrm{rel}}$ (0.48 to 0.57 in the refined cells), then the overlap ratio (0.33 to 0.53, rising with $\nfe$ and steps), the boundary distance (0.34 to 0.48) and the area columns; group gap, HPWL gap, fixed and MIB are at or near zero after refinement, and normalized wirelength is negative ($-0.15$ to $-0.28$: longer wires, cheaper legalized layout). In the raw cells the fixed deviation (0.22) is among the top columns: the raw layouts' hard cost is set by their preplaced-block and overlap defects, the refined layouts' by their residual soft slots. The soft cost hardly predicts the legalizer's milliseconds per draw ($\rho$ $-0.08$ to 0.35) but does predict the ladder rung reached (0.28 to 0.40 at 400 steps), and the milliseconds correlate with the final cost (0.33 raw, 0.39 to 0.51 refined).

\subsection{The Prior Arts' Loops in Hard Cost}
\label{sec:appendix-ports-hard}

\begin{table}[htbp]\centering\scriptsize
\caption{The prior-art loops at their published iteration counts (MacroDiff+ 500, ChipDiffusion 5,000 in its scheduled variant, DiffPlace 500) and the closed-form refiner (ours) on Trinity's $\nfe{=}32$ samples, then the legalizer (the raw row reproduces the grid's $(32, 0)$ cell): the soft cost, the first draw's mean, median and p90 hard cost, the best-of-4 mean and median, the share of draws solved by the hard LP and by the last rungs, the median expansion factor $\lambda$, the area gap and $\Vrel$ after legalization, the per-draw soft--hard Spearman $\rho$ and the milliseconds per draw, and the HPWL gap.}
\label{tab:app_ports_hard}
\setlength{\tabcolsep}{2pt}
\begin{tabular}{lrrrrrrrrrrrrr}
\toprule
Refiner & Soft & First & Median & p90 & Best 4 & Med.\,4 & LP & Last & $\lambda$ & Area & $\Vrel$ & $\rho$ / ms & HPWL \\
\midrule
none (raw) & 1.514 & 1.5125 & 1.302 & 2.451 & 1.3057 & 1.214 & 0.108 & 0.120 & 2.17 & $+$0.180 & 0.109 & 0.267 / 341 & $+0.164$ \\
MacroDiff+ loop & 1.564 & 1.4424 & 1.284 & 2.007 & 1.2588 & 1.200 & 0.107 & 0.078 & 1.48 & $+$0.139 & 0.106 & 0.164 / 271 & $+0.121$ \\
ChipDiffusion loop & 1.793 & 1.4563 & 1.284 & 2.075 & 1.2699 & 1.205 & 0.118 & 0.085 & 1.46 & $+$0.150 & 0.106 & 0.171 / 278 & $+0.130$ \\
DiffPlace loop & 3.035 & 1.4415 & 1.286 & 1.957 & 1.2581 & 1.200 & 0.090 & 0.073 & 1.53 & $+$0.137 & 0.108 & 0.351 / 258 & $+0.114$ \\
ours, 10 steps & 1.370 & 1.3561 & 1.132 & 2.219 & 1.1697 & 1.054 & 0.136 & 0.118 & 1.93 & $+$0.175 & 0.058 & 0.551 / 337 & $+0.158$ \\
ours, 100 steps & 1.111 & 1.3093 & 1.105 & 1.951 & 1.1353 & 1.044 & 0.170 & 0.098 & 1.54 & $+$0.148 & 0.054 & 0.572 / 311 & $+0.132$ \\
ours, 400 steps & 1.058 & \textbf{1.2341} & \textbf{1.073} & 1.551 & \textbf{1.0887} & \textbf{1.020} & 0.245 & 0.068 & 1.32 & $+$0.105 & 0.045 & 0.603 / 270 & $+0.092$ \\
\bottomrule
\end{tabular}
\end{table}

The three ported loops remove overlap ($\lambda$ p50 2.17 $\to$ 1.46 to 1.53) and lower the legalized cost by a similar amount while leaving the soft slots untouched ($\Vrel$ 0.109 $\to$ 0.106 to 0.108) and raising the soft cost (1.51 $\to$ 1.56, 1.79 and 3.03). The closed-form refiner at 10 steps is already below all three on every hard column and at 400 steps by 0.21 to 0.22 on the first draw. After the MacroDiff+ and ChipDiffusion loops the per-draw soft-hard correlation drops to 0.16 to 0.17, and after the DiffPlace loop it is 0.35, carried by the area columns that loop inflates.

\begin{table}[htbp]\centering\scriptsize
\caption{Generator $\times$ refiner at $\nfe{=}32$, then the legalizer, two lines per generator: the hard cost of the first draw / best of four, then the first draw's wirelength gap / area gap to the reference in percent. Every prior-art loop runs at its published iteration count on every generator's own samples, the closed-form refiner for 400 steps; every draw of the 13 loop rows is feasible, the raw cell of every row reproduces its grid's $(32, 0)$ cell, and a dagger marks a method's published generator and loop pair. Every loop moves the outline and the wires it did not model; the closed-form refiner leaves the smallest gaps on every generator.}
\label{tab:app_gen_ref_hard}
\label{tab:app_gen_ref_gaps}
\setlength{\tabcolsep}{7pt}
\begin{tabular}{lrrrrr}
\toprule
Generator & Raw sample & ChipDiffusion loop & DiffPlace loop & MacroDiff+ loop & Closed-form refiner \\
\midrule
Trinity & 1.513 / 1.306 & 1.456 / 1.270 & 1.442 / 1.258 & 1.442 / 1.259 & \textbf{1.234 / 1.089} \\
 & $+16.2$\,/\,$+18.0$ & $+13.0$\,/\,$+14.9$ & $+11.4$\,/\,$+13.7$ & $+12.1$\,/\,$+13.9$ & $+9.3$\,/\,$+10.5$ \\
\addlinespace[1.5pt]
FlowPlace & 1.567 / 1.343 & 1.495 / 1.284 & 1.454 / 1.265 & 1.471 / 1.267 & 1.296 / 1.124 \\
 & $+18.5$\,/\,$+21.6$ & $+14.5$\,/\,$+16.9$ & $+11.2$\,/\,$+14.7$ & $+13.1$\,/\,$+15.2$ & $+12.8$\,/\,$+14.6$ \\
\addlinespace[1.5pt]
ChipDiffusion & 1.609 / 1.364 & 1.521 / 1.300$^{\dagger}$ & 1.483 / 1.280 & 1.492 / 1.285 & 1.322 / 1.137 \\
 & $+20.1$\,/\,$+23.7$ & $+15.5$\,/\,$+18.2$ & $+12.2$\,/\,$+16.1$ & $+13.7$\,/\,$+16.3$ & $+14.3$\,/\,$+16.2$ \\
\addlinespace[1.5pt]
DiffPlace & 1.731 / 1.447 & 1.587 / 1.334 & 1.537 / 1.313$^{\dagger}$ & 1.539 / 1.304 & 1.355 / 1.158 \\
 & $+24.7$\,/\,$+30.9$ & $+18.7$\,/\,$+21.0$ & $+13.9$\,/\,$+19.1$ & $+15.7$\,/\,$+18.1$ & $+16.1$\,/\,$+18.3$ \\
\addlinespace[1.5pt]
MacroDiff+ & 1.672 / 1.406 & 1.571 / 1.326 & 1.536 / 1.310 & 1.546 / 1.308$^{\dagger}$ & 1.352 / 1.159 \\
 & $+22.4$\,/\,$+26.5$ & $+17.5$\,/\,$+20.1$ & $+14.1$\,/\,$+18.4$ & $+15.9$\,/\,$+18.3$ & $+15.6$\,/\,$+17.7$ \\
\bottomrule
\end{tabular}
\end{table}

Table~\ref{tab:app_gen_ref_hard} completes the grid with every loop on every prior art's own samples, and Figure~\ref{fig:app_g6} draws the same cells against the refiner budget. The refiner axis dominates: exchanging a ported loop for the closed-form refiner moves the first-draw cost by 0.16 to 0.23 on every generator, and exchanging a prior-art generator for Trinity moves it by 0.06 to 0.12 under the closed-form refiner and by 0.01 to 0.13 under the ported loops, so the loops erase most of the generator difference after legalization as they do on the soft side (Appendix~\ref{sec:appendix-refiner-ports}). Trinity is the best row of every column and the closed-form refiner the best column of every row. A prior-art generator with the closed-form refiner (1.296 to 1.355 for the first draw, 1.124 to 1.158 for the best of four) beats the same generator with its best ported loop (1.454 to 1.537 and 1.265 to 1.308) by 0.16 to 0.18 and 0.14 to 0.15, and the closed-form refiner is past that point by 25 steps on every generator (1.395 to 1.513 against 1.454 to 1.537). After legalization the three loops rank the reverse of their soft order. The DiffPlace loop, the worst in soft cost (3.5 to 3.9), separates blocks hardest ($\lambda$ p50 1.32 to 1.47 on the prior arts) and the packing program recovers the area it spends, so it is the best of the three on every generator's first draw, and the ChipDiffusion loop, which leaves the most overlap, is the worst. A method's own generator and loop pair is not the best cell of its row, except DiffPlace by 0.002. The loops act on overlap only: after legalization $\Vrel$ stays at 0.111 to 0.122 on every prior-art and loop cell (raw 0.115 to 0.130) against 0.051 to 0.059 after the closed-form refiner, the hard-LP share at 0.05 to 0.09 against 0.14 to 0.19, and the per-draw soft-hard correlation at 0.22 to 0.30 against 0.46 to 0.50.

\subsection{Best of $N$}
\label{sec:appendix-bestofn}

\begin{table}[htbp]\centering\footnotesize
\caption{Best of $N$ on the 12,000 cases from Trinity's 16-draw caches (fresh noise), every draw refined 400 steps and legalized: the cheapest of the first $N$ legalized draws per case (mean, median, 90th percentile, hard-cost total), and the soft pick (the legalized cost of the lowest-soft-cost draw among $N$, mean / median). Every one of the 768,000 legalized draws is feasible.}
\label{tab:app_bestofn}
\scriptsize
\setlength{\tabcolsep}{7pt}
\begin{tabular}{rrrrrrrrr}
\toprule
$\nfe$ & Steps & $N$ & Mean & Median & p90 & Total & Soft pick (mean / median) & HPWL / Area (best of $N$) \\
\midrule
32 & 0 & 1 & 1.5079 & 1.3003 & 2.427 & 1.7300 & -- & -- \\
32 & 0 & 4 & 1.3014 & 1.2098 & 1.595 & 1.4423 & 1.4758 / 1.2775 & -- \\
32 & 0 & 16 & 1.2140 & 1.1696 & 1.459 & 1.2857 & 1.4548 / 1.2629 & -- \\
32 & 400 & 1 & 1.2331 & 1.0736 & 1.552 & 1.3857 & -- & $+9.3$ / $+10.5$ \\
32 & 400 & 2 & 1.1412 & 1.0435 & 1.313 & 1.2452 & 1.2079 / 1.0597 & $+4.9$ / $+6.1$ \\
32 & 400 & 4 & 1.0885 & 1.0203 & 1.208 & 1.1622 & 1.1870 / 1.0500 & $+2.6$ / $+3.6$ \\
32 & 400 & 8 & 1.0585 & 1.0066 & 1.151 & 1.1070 & 1.1742 / 1.0437 & $+1.4$ / $+2.2$ \\
32 & 400 & 16 & \textbf{1.0406} & \textbf{1.0024} & 1.116 & \textbf{1.0765} & 1.1600 / 1.0406 & $+0.6$ / $+1.3$ \\
8 & 400 & 1 & 1.2245 & 1.0689 & 1.523 & 1.3743 & -- & $+8.8$ / $+10.3$ \\
8 & 400 & 2 & 1.1393 & 1.0408 & 1.298 & 1.2486 & 1.2036 / 1.0563 & $+4.8$ / $+6.2$ \\
8 & 400 & 4 & 1.0875 & 1.0164 & 1.206 & 1.1612 & 1.1806 / 1.0474 & $+2.4$ / $+3.7$ \\
8 & 400 & 8 & 1.0596 & 1.0054 & 1.151 & 1.1107 & 1.1639 / 1.0427 & $+1.3$ / $+2.3$ \\
8 & 400 & 16 & 1.0424 & 1.0021 & 1.118 & 1.0775 & 1.1546 / 1.0397 & $+0.6$ / $+1.5$ \\
1 & 400 & 1 & 1.2223 & 1.0678 & 1.491 & 1.3848 & -- & $+8.7$ / $+10.5$ \\
1 & 400 & 2 & 1.1663 & 1.0488 & 1.346 & 1.2908 & 1.2144 / 1.0641 & $+6.1$ / $+7.8$ \\
1 & 400 & 4 & 1.1285 & 1.0392 & 1.269 & 1.2303 & 1.2108 / 1.0621 & $+4.4$ / $+5.9$ \\
1 & 400 & 8 & 1.1011 & 1.0290 & 1.225 & 1.1794 & 1.2072 / 1.0586 & $+3.1$ / $+4.6$ \\
1 & 400 & 16 & 1.0839 & 1.0153 & 1.195 & 1.1499 & 1.2064 / 1.0571 & $+2.3$ / $+3.8$ \\
\bottomrule
\end{tabular}
\end{table}

Each doubling of $N$ removes 31 to 39\% of the excess over 1 at $\nfe = 8$ and 32, the median reaches 1.007 at $N = 8$ and 1.002 at $N = 16$, the p90 falls from 1.55 to 1.12, and the mean is under 1.1 from $N = 4$. Raw samples also gain from $N$ (1.508 $\to$ 1.214 at $N = 16$) but the refiner is the larger lever: $N = 2$ refined (1.141) beats $N = 16$ raw. The $\nfe{=}8$ ladder is the $\nfe{=}32$ ladder within 0.002 at every $N$. At $\nfe{=}1$ the ladder starts at the same $N = 1$ but gains less from $N$ (22 to 25\% of the excess per doubling) and ends at 1.084: the one-step draws of a case are more alike, and $N = 16$ at $\nfe{=}1$ equals $N = 4$ at $\nfe{=}8$. The $N = 1$ and $N = 4$ points reproduce the four-draw grid's cells within 0.002 on a fresh-noise cache, and the per-draw soft--hard Spearman at 400 steps is 0.603 / 0.578 / 0.492 for $\nfe = 32 / 8 / 1$.

\subsection{The FloorSet Val Set in Full}
\label{sec:appendix-official}
Best of $N$ keeps, per case, the feasible draw with the lowest hard cost of the official evaluator, which reads the reference wirelength and area of the instance; a reference-free selection rule is not evaluated here. Table~\ref{tab:app_official_full} lists all twelve settings run on the val set, and Table~\ref{tab:official} shows five of them, picked after the runs to span cost and time; the sampler, refiner and legalizer settings were fixed on our 12,000-case split before any val-set run (Appendices~\ref{sec:appendix-refiner} and~\ref{sec:appendix-legalizer}).

\begin{table}[htbp]\centering\footnotesize
\caption{The FloorSet val set per sample on an idle machine: the $e^{n/12}$-weighted hard-cost total, the mean and median cost, and the wall time per case (mean, median, max).}
\label{tab:app_official_full}
\setlength{\tabcolsep}{5.5pt}
\scriptsize
\begin{tabular}{rrrrrrrrrrr}
\toprule
$\nfe$ & Steps & $N$ & Total & Mean & Median & Mean s / case & Median s / case & Max s / case & HPWL gap & Area gap \\
\midrule
8 & 400 & 48 & \textbf{1.0213} & \textbf{1.0144} & 1.0000 & 1.63 & 1.34 & 5.10 & $-0.004$ & $+0.000$ \\
32 & 400 & 48 & 1.0223 & 1.0147 & 1.0000 & 1.86 & 1.59 & 5.61 & $-0.003$ & $-0.001$ \\
8 & 400 & 16 & 1.0290 & 1.0193 & 1.0009 & 1.14 & 0.98 & 2.27 & $-0.002$ & $+0.004$ \\
8 & 400 & 8 & 1.0469 & 1.0284 & 1.0035 & 1.04 & 0.88 & 2.21 & $-0.002$ & $+0.008$ \\
8 & 400 & 4 & 1.1316 & 1.0896 & 1.0081 & 1.01 & 0.87 & 2.22 & $+0.032$ & $+0.032$ \\
4 & 400 & 16 & 1.0351 & 1.0243 & 1.0007 & 1.08 & 0.99 & 2.34 & $-0.002$ & $+0.007$ \\
1 & 400 & 16 & 1.0564 & 1.0447 & 1.0081 & 1.02 & 0.93 & 2.22 & $+0.001$ & $+0.014$ \\
8 & 100 & 16 & 1.0670 & 1.0417 & 1.0054 & 0.70 & 0.52 & 1.93 & $+0.002$ & $+0.015$ \\
4 & 100 & 16 & 1.0574 & 1.0425 & 1.0059 & 0.63 & 0.51 & 1.88 & $+0.003$ & $+0.017$ \\
1 & 100 & 16 & 1.1528 & 1.1063 & 1.0194 & 0.55 & 0.41 & 1.78 & $+0.030$ & $+0.046$ \\
8 & 50 & 16 & 1.0882 & 1.0512 & 1.0082 & 0.59 & 0.44 & 1.85 & $+0.004$ & $+0.022$ \\
1 & 50 & 16 & 1.1793 & 1.1146 & 1.0380 & 0.48 & 0.36 & 1.72 & $+0.027$ & $+0.055$ \\
\bottomrule
\end{tabular}
\end{table}

\begin{table}[htbp]\centering\footnotesize
\caption{The same settings: mean seconds per case by stage (legalization is the wall time over the $N$ parallel draws on 16 workers), the legalizer's milliseconds per draw and the solver's share of it.}
\label{tab:app_official_stages}
\setlength{\tabcolsep}{6.5pt}
\begin{tabular}{rrrrrrrr}
\toprule
$\nfe$ & Steps & $N$ & Sample s & Refine s & Legalize s & Legalizer ms / draw & Solver ms / draw \\
\midrule
8 & 400 & 48 & 0.10 & 0.61 & 0.91 & 240 & 87 \\
32 & 400 & 48 & 0.32 & 0.61 & 0.93 & 244 & 88 \\
8 & 400 & 16 & 0.07 & 0.62 & 0.45 & 258 & 92 \\
8 & 400 & 8 & 0.07 & 0.61 & 0.35 & 240 & 87 \\
8 & 400 & 4 & 0.06 & 0.62 & 0.33 & 249 & 89 \\
4 & 400 & 16 & 0.04 & 0.62 & 0.42 & 264 & 94 \\
1 & 400 & 16 & 0.02 & 0.62 & 0.38 & 252 & 89 \\
8 & 100 & 16 & 0.06 & 0.16 & 0.48 & 281 & 97 \\
4 & 100 & 16 & 0.04 & 0.16 & 0.43 & 275 & 95 \\
1 & 100 & 16 & 0.02 & 0.16 & 0.37 & 284 & 97 \\
8 & 50 & 16 & 0.06 & 0.08 & 0.45 & 289 & 98 \\
1 & 50 & 16 & 0.02 & 0.08 & 0.38 & 291 & 99 \\
\bottomrule
\end{tabular}
\end{table}

At $N = 48$, $\nfe{=}8$ and 400 steps the gaps are HPWL $-0.0045$ and area $+0.0003$ with $\Vrel = 0.0048$ (one cluster count and 23 boundary counts on 17 cases); at $\nfe{=}32$: $-0.0027$ / $-0.0009$ / 0.0053. The compile warm-up of the refiner (about 4.5 minutes) and the pool warm-up run once and are outside the times. $N$ is the strongest lever at fixed $\nfe$ and steps (1.132 $\to$ 1.047 $\to$ 1.029 $\to$ 1.021 on the total for $N = 4 / 8 / 16 / 48$ at $\nfe{=}8$ and 400 steps, for 1.0 to 1.6 seconds per case), $\nfe$ trades against steps ($\nfe{=}4 \times 100$ at 1.057 equals $\nfe{=}1 \times 400$ at 1.056 on the total, $\nfe{=}8 \times 50$ at 1.088 between), and the rows with $N \geq 8$ reproduce an earlier run of the same settings to four decimals while the $N = 4$ row moves by 0.06 on the total, a single case's outcome on 100 cases.

\makeatletter
\setlength{\@fptop}{0pt}\setlength{\@fpsep}{\floatsep}\setlength{\@fpbot}{0pt plus 1fil}
\makeatother
\section{Classical Floorplanners}
\label{sec:appendix-classical}
\suppressfloats[t]

The full pipeline of \method\ meets the classical floorplanners on two protocols: the FloorSet val set under the contest cost, and the soft-block benchmarks on which the prior work reports its numbers. The pipeline is always the same: \method\ samples at $\nfe{=}8$, the closed-form refiner runs 400 steps, the legalizer closes every draw on 16 CPU workers and the cheapest feasible draw is kept. Every number is the legalized layout's, and every timed run had the machine to itself.

\paragraph{The classical lines.}
PARSAC \citep{mostafa2024parsacfasthumanqualityfloorplanning} is the constraints-aware B*-tree annealer of the group that built FloorSet. We load its C++ engine as released and run its published schedule of 100 stages from a random tree, with the boundary codes, clusters, fixed and preplaced blocks and MIB shapes of each case mapped one to one, for $10^7$ steps per worker, its budget in the paper. A worker is one independent run on one CPU core, and the best of $k$ workers is the per-case minimum over $k$ runs at $k$ times the CPU time. Every run is snapshotted after each stage and every snapshot is legalized and scored, so the curves below are one worker's best legalized cost so far against its own elapsed seconds. The sequence-pair annealer is the one of BBOPlace-Bench \citep{xue2026bboplacebenchbenchmarkingblackboxoptimization} with $10^5$ cost evaluations, and Astra is the public contest solver, whose published layouts we re-score with our checker and which reproduce its own numbers.

\subsection{The FloorSet Val Set}
\label{sec:appendix-classical-official}

Table~\ref{tab:classical_official} runs every line to its final layout. With 16 draws the pipeline reaches a mean hard cost of 1.019 in 1.14 seconds per case, and PARSAC reaches 1.222 with the best of 16 workers after 1,121 seconds per worker, about five CPU hours per case, 1.348 with the best of 4 and 1.685 with one worker. The gap is in the objectives: PARSAC's layouts are 26 to 47\% above the reference in wirelength, while the pipeline's are within 0.4\%. Astra's published layouts lie between the two at 1.282, and the sequence-pair annealer stays above 9 at its whole budget.

\begin{table}[t]\centering
\caption{The FloorSet val set (100 cases), every line to its final legalized layout: seconds per case of wall time (for an annealer, one worker's run), the CPU core-seconds per case (for \method, the legalizer's, plus the GPU seconds of sampling and refinement), the mean and median hard cost, and the legalized layout's wirelength and area gaps to the reference in percent. PARSAC workers are independent runs from random trees, one CPU core each. $^\dagger$Astra's reported runtime on its own CPU.}
\label{tab:classical_official}
\scriptsize\setlength{\tabcolsep}{7pt}
\begin{tabular}{llrrrrrr}
\toprule
Line & Draws or workers & s / case & Core-s / case & Mean & Median & WL & Area \\
\midrule
\rowcolor{ourscolor} \method, $\nfe{=}8$, 400 steps & 48 draws & 1.62 & 12 + 0.72 GPU & \textbf{1.014} & \textbf{1.000} & $-0.4$ & $+0.0$ \\
\method, $\nfe{=}8$, 400 steps & 16 draws & 1.14 & 4 + 0.69 GPU & 1.019 & 1.001 & $-0.2$ & $+0.4$ \\
\method, $\nfe{=}4$, 100 steps & 16 draws & 0.63 & 4 + 0.20 GPU & 1.042 & 1.006 & $+0.2$ & $+1.8$ \\
\method, $\nfe{=}1$, 50 steps & 16 draws & 0.48 & 5 + 0.10 GPU & 1.115 & 1.038 & $+3.6$ & $+5.0$ \\
\midrule
PARSAC, $10^7$ steps & best of 16 workers & 1,121 & 17,937 & 1.222 & 1.179 & $+26.0$ & $+4.7$ \\
PARSAC, $10^7$ steps & best of 4 workers & 1,121 & 4,484 & 1.348 & 1.241 & $+32.9$ & $+8.8$ \\
PARSAC, $10^7$ steps & one worker & 1,121 & 1,121 & 1.685 & 1.426 & $+47.2$ & $+22.2$ \\
SP-SA, $10^5$ evaluations & best of 4 workers & 104 & 416 & 9.24 & 9.34 & $+146.4$ & $+155.7$ \\
SP-SA, $10^5$ evaluations & one worker & 104 & 104 & 12.80 & 12.03 & $+182.8$ & $+223.8$ \\
Astra, published layouts & -- & 0.45$^\dagger$ & -- & 1.282 & 1.221 & $+15.5$ & $+14.0$ \\
\bottomrule
\end{tabular}
\end{table}

\begin{table}[t]
\begin{minipage}[b]{0.6\linewidth}\centering
\caption{PARSAC on the FloorSet val set after the given seconds per worker: the mean (median) over cases of the best legalized snapshot so far. The best of 16 workers is flat from 2,400 seconds on; \method\ reaches 1.019 (1.001) in 1.14 seconds per case with 16 draws.}
\label{tab:classical_anytime}
\scriptsize\setlength{\tabcolsep}{4pt}
\begin{tabular}{lrrrrrr}
\toprule
PARSAC & 60 s & 300 s & 600 s & 1,200 s & 2,400 s & 3,600 s \\
\midrule
one worker, mean & 2.34 & 1.62 & 1.51 & 1.45 & 1.42 & 1.41 \\
best of 4 workers, mean & 1.83 & 1.39 & 1.32 & 1.29 & 1.27 & 1.27 \\
best of 16 workers, mean & 1.60 & 1.28 & 1.23 & 1.20 & 1.19 & 1.19 \\
best of 16 workers, median & 1.34 & 1.20 & 1.18 & 1.17 & 1.16 & 1.16 \\
\bottomrule
\end{tabular}
\end{minipage}\hfill
\begin{minipage}[b]{0.38\linewidth}\centering
\caption{Per-unit CPU costs on the FloorSet val set by block count, one call at a time on an idle machine: the legalizer on one draw of \method, one PARSAC annealing step and one SP-SA cost evaluation, in milliseconds.}
\label{tab:classical_units}
\scriptsize\setlength{\tabcolsep}{3pt}
\begin{tabular}{lrrr}
\toprule
Blocks & Legalizer & PARSAC & SP-SA \\
\midrule
21 to 40 & 41 & 0.025 & 0.48 \\
41 to 60 & 98 & 0.057 & 0.75 \\
61 to 80 & 184 & 0.096 & 1.04 \\
81 to 100 & 401 & 0.151 & 1.34 \\
101 to 120 & 476 & 0.222 & 1.66 \\
\bottomrule
\end{tabular}
\end{minipage}
\end{table}

\begin{figure}[t]\centering
\includegraphics[width=\linewidth]{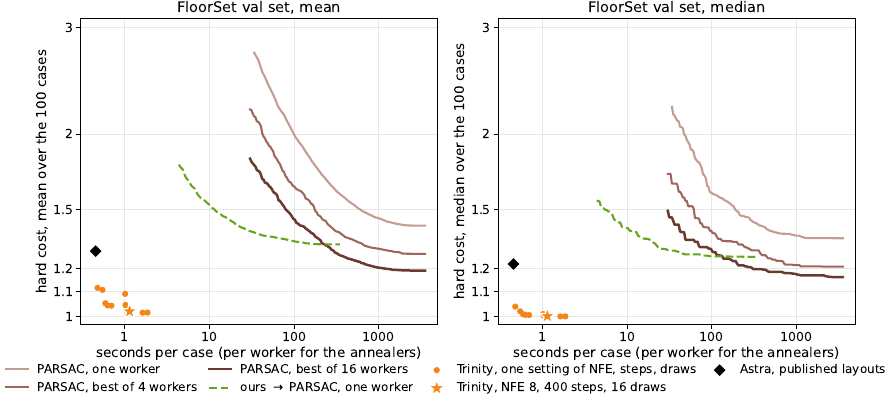}
\caption{The FloorSet val set: the mean (left) and median (right) hard cost against seconds per case. PARSAC as the best legalized snapshot so far of one worker and of the best of 4 and 16 workers, against seconds per worker on one CPU core; \method\ at every setting of $\nfe$, refiner steps and draws, on one GPU with 16 legalizer workers, the star at $\nfe{=}8$, 400 steps and 16 draws; ours $\to$ PARSAC is PARSAC started from the tree of the pipeline's legalized layout, drawn from the pipeline's time on. The sequence-pair annealer stays above 10 and is off the axis.}
\label{fig:classical_floorset}
\end{figure}

Table~\ref{tab:classical_anytime} and Figure~\ref{fig:classical_floorset} read PARSAC at a budget. No PARSAC line reaches the pipeline's point at any time: one worker needs 300 seconds to reach 1.62 and an hour to reach 1.41, and the best of 16 workers is flat at 1.19 from 2,400 seconds per worker on, so a longer run would not close the gap and only more workers narrow it. Table~\ref{tab:classical_units} gives the per-unit CPU costs behind these times: one PARSAC step costs 0.03 to 0.22 milliseconds from 21 to 120 blocks, so its $10^7$ steps take 4 to 37 minutes, while one legalizer call costs 41 to 476 milliseconds and runs once per draw.

\paragraph{Continuing from the pipeline.}
PARSAC started from the tree of the pipeline's legalized layout does not improve on it (the dashed line of Figure~\ref{fig:classical_floorset}). The first snapshot after $10^4$ steps is already at a mean of 2.05 against the pipeline's 1.019, and $10^6$ steps bring it back only to 1.31. A step-by-step probe of the hand-over on eight cases places the loss before any annealing: the compaction into a B*-tree moves 15 to 26 blocks, the tree does not reproduce the compacted layout on six of the eight cases, and the first greedy moves in PARSAC's own cost, which weighs white space and normalized wirelength rather than the contest cost, raise the contest cost further. A continuation that helps would need the legalized layout's exact representation and the contest cost as its acceptance rule.

\subsection{The Soft-Block Benchmarks of the Prior Work}
\label{sec:appendix-classical-bookshelf}

The prior floorplanning work reports on the GSRC and MCNC benchmarks with every block soft at its area, the aspect ratio within $[1/3, 3]$ (GSRC) or the file's range (MCNC), a square outline of 1.1 times the total block area, terminals at their benchmark positions, and the wirelength of block centres and terminals. We run the pipeline unchanged on this protocol with four sampling seeds and 16 or 64 draws each; a layout fits when it is overlap-free, keeps every area and aspect bound and lies inside the outline. \method\ was trained on FloorSet cases with 21 to 120 blocks and never saw these benchmarks.

\begin{table}[t]\centering
\caption{\method\ on the soft-block benchmarks of the prior work: every block soft at its area, a square outline of 1.1 times the block area, terminals at their benchmark positions. Per case, how many of four sampling seeds give a fitting legalized layout with 16 draws, the best fitting wirelength over the seeds with 16 and 64 draws (HPWL of block centres and terminals, benchmark units / 1,000), the dead space of that layout in percent, and the seconds per case of one seed with 16 draws. $^{\dagger}$Oversized: no seed fits, the best legal layout.}
\label{tab:classical_bookshelf}
\footnotesize\setlength{\tabcolsep}{10pt}
\begin{tabular}{lrrrrrr}
\toprule
Case & Blocks & Seeds fitting & HPWL, 16 & HPWL, 64 & Dead space & s / case \\
\midrule
\multicolumn{7}{l}{\textit{GSRC}} \\
n10 & 10 & 4 / 4 & 50.6 & 50.3 & 3.4 & 0.84 \\
n30 & 30 & 4 / 4 & 150.6 & 150.0 & 1.1 & 4.53 \\
n50 & 50 & 4 / 4 & 178.1 & 177.9 & 1.6 & 1.10 \\
n100 & 100 & 4 / 4 & 294.7 & 294.0 & 2.1 & 2.20 \\
n200 & 200 & 4 / 4 & 554.9 & 553.1 & 0.7 & 8.60 \\
n300 & 300 & 0 / 4 & 681.3$^{\dagger}$ & 681.8$^{\dagger}$ & 13.8 & 7.01 \\
\midrule
\multicolumn{7}{l}{\textit{MCNC}} \\
apte & 9 & 4 / 4 & 566.1 & 565.4 & 0.6 & 0.78 \\
xerox & 10 & 3 / 4 & 574.5 & 548.8 & 3.4 & 1.68 \\
hp & 11 & 4 / 4 & 173.6 & 173.1 & 2.3 & 0.71 \\
ami33 & 33 & 4 / 4 & 69.3 & 67.8 & 3.5 & 0.99 \\
ami49 & 49 & 4 / 4 & 932.5 & 883.3 & 4.6 & 1.20 \\
\bottomrule
\end{tabular}
\end{table}

Table~\ref{tab:classical_bookshelf} gives the pipeline on every case. It fits the outline on every MCNC case and on GSRC n10 to n200 on every seed, at 1 to 5\% dead space and in one to nine seconds per case. The one failure is GSRC n300, two and a half times the largest block count of training, where no seed fits and the best layout leaves 14\% dead space.

\begin{table}[t]\centering
\caption{GSRC soft blocks in a 10\% square outline, per column the best fitting wirelength in bold and the second underlined: the wirelength in mm (benchmark units / 1,000) of \method\ with 16 and 64 draws per seed over 4 seeds, of PARSAC run here from random trees with its published schedule for $10^7$ steps (the best final layout of 16 and of 4 workers, one CPU core each) and of the published rows. Seconds per case for \method\ with one seed and 16 draws (n100 / n200 / n300); minutes per worker for PARSAC. $^\dagger$From Table 2 of PARSAC \citep{mostafa2024parsacfasthumanqualityfloorplanning}, 112 workers of $10^7$ steps; best and worst are over its reported runs. $^{\ddagger}$Oversized: no draw fits, the best legal layout.}
\label{tab:classical_gsrc}
\scriptsize\setlength{\tabcolsep}{4pt}
\begin{tabular}{lrrrl}
\toprule
Line & n100 & n200 & n300 & Time \\
\midrule
\rowcolor{ourscolor} \method, 4 seeds, 16 / 64 draws per seed & 294.7 / 294.0 & 554.9 / 553.1 & 681.3$^{\ddagger}$ / 681.8$^{\ddagger}$ & 2.2 / 8.6 / 7.0 s \\
PARSAC here, best of 16 / 4 workers & 303.5 / 310.1 & 576.0 / 586.6 & 709.3 / 720.2 & 22 / 68 / 138 min \\
\midrule
PARSAC (soft), 112 workers, best$^\dagger$ & \textbf{288} & \textbf{526} & \textbf{627} & -- \\
PARSAC (soft), 112 workers, worst$^\dagger$ & 301 & 553 & 663 & -- \\
PARSAC (hard), 112 workers, best$^\dagger$ & \underline{291} & \underline{530} & \underline{628} & -- \\
GoodFloorplan$^\dagger$ & 319 & 575 & 705 & -- \\
SP-FOPP$^\dagger$ & 323 & 599 & 757 & -- \\
B*-FOPP$^\dagger$ & 344 & 654 & 799 & -- \\
\bottomrule
\end{tabular}
\end{table}

\begin{figure}[t]\centering
\includegraphics[width=\linewidth]{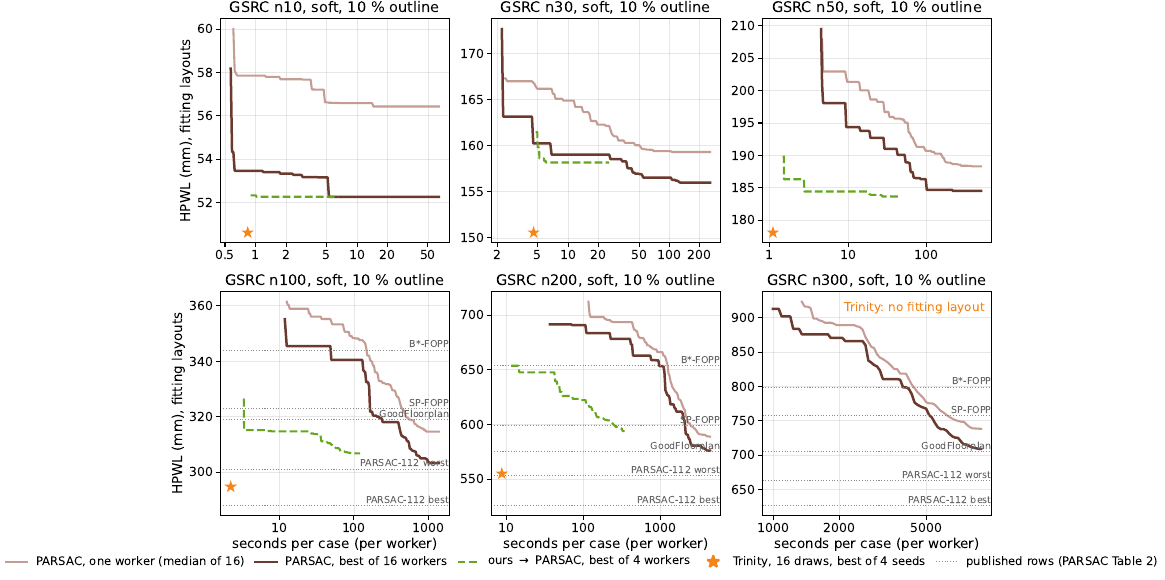}
\caption{GSRC soft blocks in a 10\% square outline: the wirelength of fitting layouts in mm against seconds per case. PARSAC as the median over 16 workers of one worker's best fitting snapshot so far and as the best of 16 workers; ours $\to$ PARSAC from the pipeline's layout, drawn from the pipeline's time on; \method\ with 16 draws, the best of four seeds; the published rows of Table~\ref{tab:classical_gsrc} as dotted levels.}
\label{fig:classical_gsrc}
\end{figure}

Table~\ref{tab:classical_gsrc} and Figure~\ref{fig:classical_gsrc} set the pipeline against PARSAC run here and the published rows. On n100 and n200 the pipeline's wirelength of 294.7 and 554.9 mm is 3 to 4\% below PARSAC's best of 16 workers, which takes 22 and 68 minutes per worker, inside the range of PARSAC's published best of 112 workers, and below GoodFloorplan \citep{Xu_2022} and both FOPP rows, in seconds per case. PARSAC's best of 4 workers here sits 3 to 9\% above the top of its published range, the size of the gap between 4 and 112 workers, so our run of PARSAC agrees with its paper. On n300 PARSAC fits after about 84 minutes per worker and the pipeline does not fit. As on FloorSet, PARSAC started from the pipeline's layout ends 3 to 7\% longer than where it started.

\section{Seeds}
\label{sec:appendix-seeds}

Four models were trained three times (the original seed plus two more, the data split unchanged) and evaluated as generators: the standard 12k $\times$ 6 $\nfe$ $\times$ 4-draw generation and scoring, soft and distribution metrics. Per metric the three values, their mean and the sample standard deviation. The four models are Trinity (spectral PE, information mover, all six terms at 0.01, the adopted recipe), the same backbone without the auxiliary loss (Trinity at $\lambda{=}0$), the plain backbone without the auxiliary loss (the plain control) and FlowPlace.

\begin{table}[htbp]\centering\footnotesize
\caption{Std / mean per model and column at $\nfe = 32$, 8 and 1, and the absolute std of the soft cost. Trinity's area gap at $\nfe{=}1$ has a mean near zero ($-0.0013$ to $+0.0043$), so no ratio is given. MIB variance and fixed deviation are the wide columns everywhere.}
\label{tab:app_seed_ratio}
\setlength{\tabcolsep}{3pt}
\begin{tabular}{lrrrrrrrrrr}
\toprule
Model & $\nfe$ & Overlap & Boundary & Group & MIB & Fixed & Area gap & $\mathrm{sev}_{\mathrm{rel}}$ & Soft cost & abs. std \\
\midrule
Trinity & 32 & 2.8\% & 9.3\% & 3.2\% & 31\% & 40\% & 68\% & 8.2\% & 1.1\% & 0.017 \\
Trinity & 8 & 2.4\% & 9.8\% & 3.7\% & 31\% & 40\% & 91\% & 8.6\% & 1.1\% & 0.018 \\
Trinity & 1 & 2.9\% & 10.6\% & 4.4\% & 32\% & 36\% & -- & 9.6\% & 1.0\% & 0.017 \\
Trinity, $\lambda{=}0$ & 32 & 1.0\% & 1.0\% & 3.1\% & 16\% & 17\% & 12.5\% & 1.9\% & 0.7\% & 0.015 \\
Trinity, $\lambda{=}0$ & 8 & 0.4\% & 1.1\% & 3.3\% & 12\% & 16\% & 14\% & 1.9\% & 1.0\% & 0.021 \\
Trinity, $\lambda{=}0$ & 1 & 0.9\% & 1.3\% & 3.3\% & 12\% & 11\% & 14.5\% & 2.0\% & 1.6\% & 0.037 \\
the plain control & 32 & 1.0\% & 3.0\% & 1.2\% & 4.3\% & 8.8\% & 8.4\% & 1.3\% & 0.4\% & 0.007 \\
the plain control & 8 & 1.5\% & 2.9\% & 1.3\% & 3.6\% & 9.8\% & 10\% & 1.4\% & 0.5\% & 0.011 \\
the plain control & 1 & 1.1\% & 2.5\% & 1.9\% & 4.2\% & 12\% & 12\% & 1.2\% & 0.3\% & 0.009 \\
FlowPlace & 32 & 2.9\% & 0.8\% & 1.4\% & 62\% & 1.4\% & 8.2\% & 1.5\% & 2.0\% & 0.050 \\
FlowPlace & 8 & 3.1\% & 3.4\% & 1.7\% & 23\% & 3.1\% & 7.4\% & 2.5\% & 3.5\% & 0.132 \\
FlowPlace & 1 & 3.0\% & 1.6\% & 1.0\% & 20\% & 4.2\% & 3.7\% & 1.2\% & 3.4\% & 0.38 \\
\bottomrule
\end{tabular}
\end{table}

The soft cost is reproducible to 0.3 to 1.1\% of its mean on the three models of this work and to 2 to 3.5\% on FlowPlace, overlap to 1 to 3\%, group to 1 to 4\%, boundary to 1 to 3\% on the two $\lambda{=}0$ models and FlowPlace and 9 to 11\% on Trinity, whose boundary slot is 3 to 4$\times$ smaller. The wide columns everywhere are the fixed deviation, the area gap and the MIB variance at $10^{-6}$ levels; the HPWL gap at $\nfe{=}32$ sits within $\pm0.001$ of zero on every model. A sweep-set delta below 2 to 3\% on a constraint column or 1\% on the soft cost is within one seed standard deviation of its control.

\begin{table}[htbp]\centering\scriptsize
\caption{Trinity per column, three training seeds, the five columns at $\nfe{=}32$ and then at $\nfe{=}1$ (std / mean in Table~\ref{tab:app_seed_ratio}). Units as in Section~\ref{sec:exp-setup}.}
\label{tab:app_seed_flagship}
\setlength{\tabcolsep}{2pt}
\begin{tabular}{lrrrrr@{\hspace{10pt}}rrrrr}
\toprule
Column & Original & Seed 1 & Seed 2 & Mean & Std & Original & Seed 1 & Seed 2 & Mean & Std \\
\midrule
overlap & 0.02253 & 0.02217 & 0.02132 & 0.02201 & 0.00062 & 0.03161 & 0.03054 & 0.02983 & 0.03066 & 0.00089 \\
boundary & 0.817 & 0.898 & 0.985 & 0.900 & 0.084 & 0.799 & 0.889 & 0.988 & 0.892 & 0.095 \\
group gap & 2.325 & 2.370 & 2.474 & 2.390 & 0.077 & 1.773 & 1.813 & 1.927 & 1.838 & 0.080 \\
MIB variance & 1.72 & 2.95 & 3.28 & 2.65 & 0.82 & 1.72 & 2.93 & 3.42 & 2.69 & 0.87 \\
fixed & 8.337 & 5.055 & 11.802 & 8.398 & 3.37 & 8.837 & 5.712 & 12.203 & 8.917 & 3.25 \\
HPWL gap & $-$0.00069 & $-$0.00054 & $+$0.00014 & $-$0.00036 & 0.00044 & $-$0.01237 & $-$0.01216 & $-$0.01183 & $-$0.01212 & 0.00027 \\
area gap & 0.00224 & 0.00259 & 0.00705 & 0.00396 & 0.00268 & $-$0.00125 & $+$0.00027 & $+$0.00429 & $+$0.00111 & 0.00286 \\
normalized wirelength & 0.30190 & 0.30193 & 0.30215 & 0.30199 & 0.00014 & -- & -- & -- & -- & -- \\
compactness & 1.0291 & 1.0295 & 1.0340 & 1.0309 & 0.0028 & -- & -- & -- & -- & -- \\
$\mathrm{sev}_{\mathrm{rel}}$ & 0.0849 & 0.0906 & 0.0998 & 0.0918 & 0.0075 & 0.0819 & 0.0888 & 0.0991 & 0.0899 & 0.0086 \\
soft cost & 1.5137 & 1.5277 & 1.5470 & 1.5295 & 0.0167 & 1.6785 & 1.6855 & 1.7108 & 1.6916 & 0.0170 \\
\bottomrule
\end{tabular}
\end{table}

\begin{table}[htbp]\centering\scriptsize
\caption{Trinity at $\lambda{=}0$ per column, the five columns at $\nfe{=}32$ and then at $\nfe{=}1$. Units as in Section~\ref{sec:exp-setup}.}
\label{tab:app_seed_control}
\setlength{\tabcolsep}{2pt}
\begin{tabular}{lrrrrr@{\hspace{10pt}}rrrrr}
\toprule
Column & Original & Seed 1 & Seed 2 & Mean & Std & Original & Seed 1 & Seed 2 & Mean & Std \\
\midrule
overlap & 0.02220 & 0.02261 & 0.02255 & 0.02245 & 0.00022 & 0.03573 & 0.03545 & 0.03508 & 0.03542 & 0.00032 \\
boundary & 2.997 & 2.946 & 2.947 & 2.963 & 0.029 & 3.011 & 2.943 & 2.941 & 2.965 & 0.040 \\
group gap & 2.827 & 2.675 & 2.682 & 2.728 & 0.086 & 2.320 & 2.177 & 2.219 & 2.239 & 0.073 \\
MIB variance & 6.66 & 6.30 & 8.38 & 7.11 & 1.1 & 7.19 & 7.32 & 8.85 & 7.79 & 0.92 \\
fixed & 7.474 & 6.905 & 9.472 & 7.951 & 1.35 & 7.956 & 7.221 & 9.027 & 8.068 & 0.91 \\
HPWL gap & $+$0.00219 & $-$0.00001 & $+$0.00024 & $+$0.00081 & 0.0012 & $-$0.01129 & $-$0.01308 & $-$0.01269 & $-$0.01235 & 0.00094 \\
area gap & 0.01810 & 0.01441 & 0.01498 & 0.01583 & 0.0020 & 0.01642 & 0.01262 & 0.01323 & 0.01409 & 0.0020 \\
$\mathrm{sev}_{\mathrm{rel}}$ & 0.2214 & 0.2137 & 0.2154 & 0.2168 & 0.0040 & 0.2216 & 0.2134 & 0.2151 & 0.2167 & 0.0044 \\
soft cost & 2.0151 & 1.9877 & 1.9919 & 1.9982 & 0.0148 & 2.4271 & 2.3692 & 2.3571 & 2.3845 & 0.0374 \\
\bottomrule
\end{tabular}
\end{table}

\begin{table}[htbp]\centering\scriptsize
\caption{The plain control (left five columns) and FlowPlace (right five) per column at $\nfe{=}32$. Units as in Section~\ref{sec:exp-setup}.}
\label{tab:app_seed_base}
\setlength{\tabcolsep}{2pt}
\begin{tabular}{lrrrrr@{\hspace{10pt}}rrrrr}
\toprule
Column & Original & Seed 1 & Seed 2 & Mean & Std & Original & Seed 1 & Seed 2 & Mean & Std \\
\midrule
overlap & 0.02602 & 0.02638 & 0.02656 & 0.02632 & 0.00028 & 0.04383 & 0.04602 & 0.04370 & 0.04452 & 0.0013 \\
boundary & 3.227 & 3.156 & 3.041 & 3.141 & 0.094 & 2.811 & 2.854 & 2.817 & 2.827 & 0.024 \\
group gap & 2.651 & 2.702 & 2.712 & 2.688 & 0.033 & 3.196 & 3.211 & 3.126 & 3.178 & 0.046 \\
MIB variance & 8.75 & 9.28 & 9.53 & 9.19 & 0.40 & 3.51 & 8.54 & 2.88 & 4.98 & 3.1 \\
fixed & 5.817 & 5.909 & 5.016 & 5.581 & 0.49 & 2.266 & 2.246 & 2.206 & 2.239 & 0.031 \\
HPWL gap & $-$0.00214 & $-$0.00089 & $-$0.00123 & $-$0.00142 & 0.00065 & $-$0.00231 & $-$0.00229 & $-$0.00279 & $-$0.00246 & 0.00028 \\
area gap & 0.01037 & 0.01224 & 0.01108 & 0.01123 & 0.00095 & 0.00870 & 0.00932 & 0.00791 & 0.00865 & 0.00071 \\
$\mathrm{sev}_{\mathrm{rel}}$ & 0.2238 & 0.2235 & 0.2185 & 0.2219 & 0.0030 & 0.2099 & 0.2140 & 0.2077 & 0.2105 & 0.0032 \\
soft cost & 2.0915 & 2.0993 & 2.0845 & 2.0918 & 0.0074 & 2.4564 & 2.5331 & 2.4406 & 2.4767 & 0.0495 \\
\bottomrule
\end{tabular}
\end{table}

At $\nfe{=}1$ the soft cost is 2.4900 / 2.5053 / 2.4905 ($2.4953 \pm 0.0087$) for the plain control and 11.465 / 11.351 / 10.755 ($11.19 \pm 0.38$) for FlowPlace.

\begin{figure}[htbp]\centering
\includegraphics[width=\linewidth]{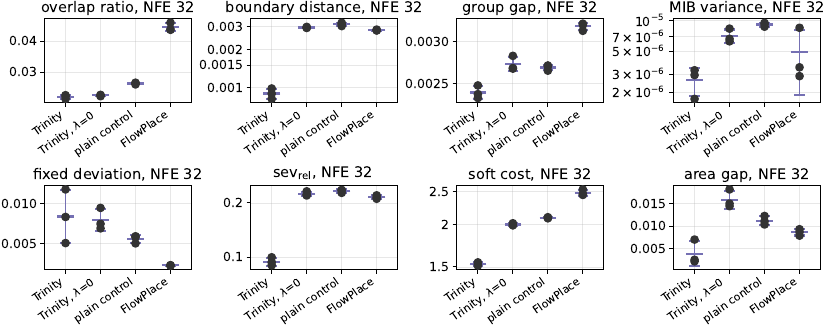}
\caption{Three training seeds per model at $\nfe{=}32$: the values, the mean and the standard deviation per soft column, for Trinity, Trinity at $\lambda{=}0$, the plain control and FlowPlace. The term effect on boundary, $\mathrm{sev}_{\mathrm{rel}}$ and the soft cost is many times the spread.}
\label{fig:app_h1}
\end{figure}

\begin{figure}[htbp]
\begin{minipage}[b]{0.52\linewidth}\centering
\includegraphics[width=\linewidth]{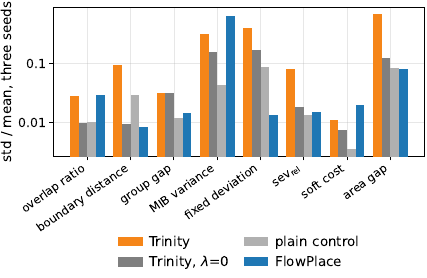}
\figcaption{The relative seed spread per column and model. The wide columns are the fixed deviation, the area gap and the MIB variance.}
\label{fig:app_h2}
\end{minipage}\hfill
\begin{minipage}[b]{0.46\linewidth}\centering
\includegraphics[width=\linewidth]{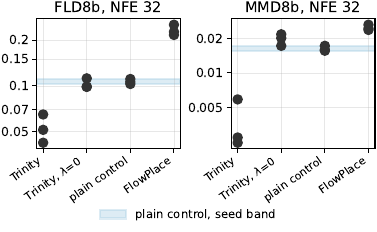}
\figcaption{FLD and MMD of every seed at $\nfe{=}32$ with the plain control's seed band. Trinity's worst seed is below the best seed of the plain control and of Trinity at $\lambda{=}0$.}
\label{fig:app_h3}
\end{minipage}
\end{figure}

\paragraph{The term effect against the seed spread.}
Trinity minus Trinity at $\lambda{=}0$ at $\nfe{=}32$, in units of the seed std of Trinity at $\lambda{=}0$ and of Trinity: boundary 0.00082 against 0.00296 is 70 std of Trinity at $\lambda{=}0$ and 25 of Trinity, $\mathrm{sev}_{\mathrm{rel}}$ 0.085 against 0.217 is 33 and 17, soft cost 1.514 against 1.998 is 33 and 28, MIB variance $1.7\times10^{-6}$ against $7.1\times10^{-6}$ is 5 and 4, group gap 0.00232 against 0.00273 is 5 and 5, overlap 0.0225 against 0.0225 is no effect at $\nfe{=}32$ (12 std of Trinity at $\lambda{=}0$ at $\nfe{=}1$, 0.0316 against 0.0354), and the fixed deviation 0.0083 against $0.0080 \pm 0.0014$ is no effect. Trinity's soft cost against FlowPlace's ($1.53 \pm 0.017$ against $2.48 \pm 0.05$) is 19 FlowPlace-std and 57 Trinity-std.

\begin{table}[htbp]\centering\scriptsize
\caption{Distribution metrics of the twelve caches at $\nfe{=}1$ and 32; the reference is the plain control's original run, $p$ the permutation test against it. Units as in Section~\ref{sec:exp-setup}.}
\label{tab:app_seed_dist1}
\label{tab:app_seed_dist32}
\setlength{\tabcolsep}{4pt}
\begin{tabular}{lrrrrrr}
\toprule
Run & $\nfe$ & FLD8b & FLD8f & FPD & MMD8b & $p$ \\
\midrule
the plain control, original & 1 & 0.4325 & 0.0179 & 0.0036 & 32.6 & -- \\
 & 32 & 0.1031 & 0.0033 & 0.0011 & 15.6 & -- \\
the plain control, seed 1 & 1 & 0.4474 & 0.0181 & 0.0038 & 34.4 & 0.199 \\
 & 32 & 0.1112 & 0.0041 & 0.0013 & 17.2 & 1.000 \\
the plain control, seed 2 & 1 & 0.4274 & 0.0161 & 0.0040 & 32.1 & 0.333 \\
 & 32 & 0.1066 & 0.0031 & 0.0013 & 15.9 & 0.955 \\
the plain control, 3 seeds & 1 & 0.436 $\pm$ 0.010 & 0.0174 $\pm$ 0.0011 & 0.0038 $\pm$ 0.0002 & 33.0 $\pm$ 1.2 &  \\
 & 32 & 0.1070 $\pm$ 0.0041 & 0.0035 $\pm$ 0.0005 & 0.0012 $\pm$ 0.0001 & 16.2 $\pm$ 0.9 &  \\
\midrule
Trinity, $\lambda{=}0$, original & 1 & 0.4264 & 0.0159 & 0.0042 & 37.6 & 0.821 \\
 & 32 & 0.1123 & 0.0048 & 0.0015 & 21.9 & 1.000 \\
Trinity, $\lambda{=}0$, seed 1 & 1 & 0.3885 & 0.0142 & 0.0040 & 30.2 & 0.020 \\
 & 32 & 0.0990 & 0.0031 & 0.0014 & 17.2 & 1.000 \\
Trinity, $\lambda{=}0$, seed 2 & 1 & 0.3708 & 0.0148 & 0.0039 & 30.9 & 0.378 \\
 & 32 & 0.0986 & 0.0045 & 0.0014 & 20.1 & 1.000 \\
Trinity, $\lambda{=}0$, 3 seeds & 1 & 0.395 $\pm$ 0.028 & 0.0150 $\pm$ 0.0009 & 0.0040 $\pm$ 0.0002 & 32.9 $\pm$ 4.1 &  \\
 & 32 & 0.1033 $\pm$ 0.0078 & 0.0041 $\pm$ 0.0009 & 0.0014 $\pm$ 0.0001 & 19.7 $\pm$ 2.4 &  \\
\midrule
Trinity, original & 1 & 0.1624 & 0.0173 & 0.0025 & 9.23 & 0.005 \\
 & 32 & 0.0423 & 0.0075 & 0.0012 & 2.34 & 0.005 \\
Trinity, seed 1 & 1 & 0.1698 & 0.0089 & 0.0022 & 8.14 & 0.005 \\
 & 32 & 0.0514 & 0.0036 & 0.0011 & 2.75 & 0.005 \\
Trinity, seed 2 & 1 & 0.1863 & 0.0072 & 0.0026 & 12.2 & 0.005 \\
 & 32 & 0.0650 & 0.0034 & 0.0013 & 5.89 & 0.005 \\
Trinity, 3 seeds & 1 & 0.173 $\pm$ 0.012 & 0.0111 $\pm$ 0.0054 & 0.0024 $\pm$ 0.0002 & 9.9 $\pm$ 2.1 &  \\
 & 32 & 0.0529 $\pm$ 0.0114 & 0.0048 $\pm$ 0.0023 & 0.0012 $\pm$ 0.0001 & 3.7 $\pm$ 1.9 &  \\
\midrule
FlowPlace, original & 1 & 3.7712 & 0.1823 & 0.0192 & 344 & 1.000 \\
 & 32 & 0.2274 & 0.0088 & 0.0013 & 24.9 & 1.000 \\
FlowPlace, seed 1 & 1 & 3.9361 & 0.1760 & 0.0195 & 382 & 1.000 \\
 & 32 & 0.2530 & 0.0103 & 0.0015 & 26.2 & 1.000 \\
FlowPlace, seed 2 & 1 & 3.6919 & 0.1777 & 0.0174 & 360 & 1.000 \\
 & 32 & 0.2167 & 0.0084 & 0.0012 & 23.7 & 1.000 \\
FlowPlace, 3 seeds & 1 & 3.80 $\pm$ 0.12 & 0.1787 $\pm$ 0.0033 & 0.0187 $\pm$ 0.0011 & 362 $\pm$ 19 &  \\
 & 32 & 0.232 $\pm$ 0.019 & 0.0092 $\pm$ 0.0010 & 0.0013 $\pm$ 0.0002 & 24.9 $\pm$ 1.3 &  \\
\bottomrule
\end{tabular}
\end{table}

Trinity is closer to the reference distribution than the plain control on every seed at both $\nfe$ ($p = 0.005$ for all six tests; MMD8b 2.3 to 5.9 at $\nfe{=}32$ against the plain control's 15.6 to 17.2 and Trinity at $\lambda{=}0$'s 17.2 to 21.9, FLD8b 0.042 to 0.065 against 0.099 to 0.112): its worst seed is 2.7$\times$ below the best control seed on MMD8b and 1.5$\times$ below on FLD8b. Its own spread on these two is the widest of the four models in relative terms (MMD8b 53\%, FLD8b 22\%), which a 2 to 5$\times$ advantage absorbs, and the original run's FLD8f of 0.0075 is the outlier of the three. Trinity at $\lambda{=}0$'s three seeds on FLD8b lie inside the plain control's at $\nfe{=}32$, so the backbone without the auxiliary loss is not separated from the plain control on the density descriptors within seed spread. The plain control's own replicates test at $p = 0.20$ to 0.33 at $\nfe{=}1$ and 0.96 to 1.00 at $\nfe{=}32$ against the original, the null band for the same model with another seed, and $p = 0.005$ is far outside it. FlowPlace's seed std on the distribution metrics is 3 to 8\% of the mean, and it tests at $p = 1.000$ on every seed.

\section{Figures: Every View of the Data}
\label{sec:appendix-figures}

Every figure below is rendered from the result files of the supplementary material with one panel per setup ($\nfe$, step count, term or column), the models as the lines of every panel and one shared legend. The main text uses the simplest view of each experiment; the views here show the same data from every side, and one result may appear in several of them.

\begin{figure}[htbp]
\subsection{The Soft Frontier}
\label{sec:appendix-fig-soft}
\centering
\includegraphics[width=\linewidth]{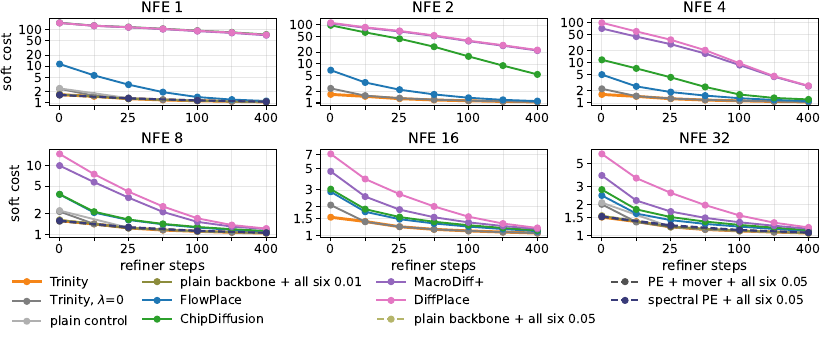}
\caption{Soft cost against refiner steps, one panel per $\nfe$, on our 12,000 val set with four draws. Trinity starts lowest in every panel, every line of this work converges to the same endpoint, and the prior arts stay above at 400 steps; DiffPlace and MacroDiff+ at $\nfe \leq 2$ start two decades higher.}
\label{fig:app_a1}
\end{figure}

\begin{figure}[htbp]\centering
\includegraphics[width=\linewidth]{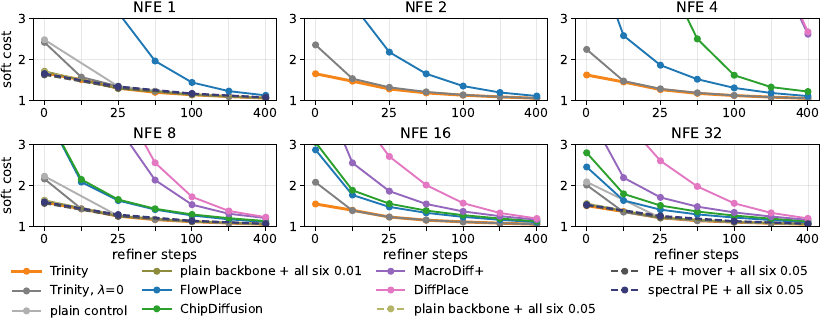}
\caption{The same data as Figure~\ref{fig:app_a1} on a linear axis from 1 to 3, where the lines of this work separate: the terms lift the raw sample and the first 25 steps, and the lead is largest at $\nfe{=}1$.}
\label{fig:app_a1_zoom}
\end{figure}

\begin{figure}[htbp]\centering
\includegraphics[width=\linewidth]{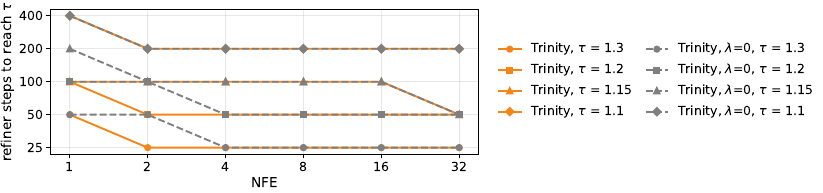}
\caption{Refiner steps needed to reach a soft cost of $\tau$ against $\nfe$, Trinity and the same backbone without the auxiliary loss. Trinity reaches each threshold one rung earlier at the low $\nfe$ values and the two agree from $\nfe{=}4$.}
\label{fig:app_a4}
\end{figure}

\begin{figure}[htbp]
\centering
\includegraphics[width=\linewidth]{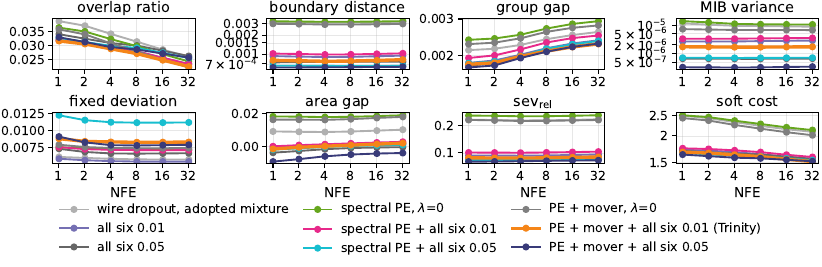}
\caption{Every soft column of the raw sample against $\nfe$ for the nine generators of the composed grid (three backbones $\times$ three term settings). Boundary and MIB variance fall with the terms at every $\nfe$, overlap falls with $\nfe$ on every model, and the fixed deviation is where the plain backbone leads.}
\label{fig:app_a5}
\end{figure}

\begin{figure}[htbp]
\centering
\includegraphics[width=\linewidth]{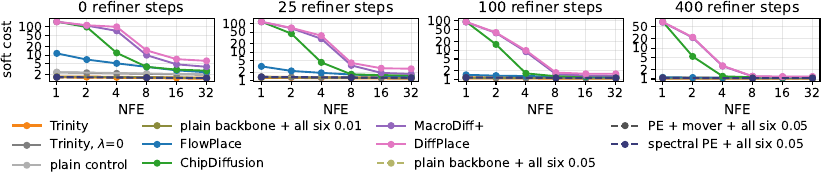}
\caption{Soft cost against $\nfe$, one panel per step count. The $\nfe$ gain shrinks with refinement: 0.16 on the raw Trinity sample, 0.008 at 400 steps.}
\label{fig:app_a2}
\end{figure}

\begin{figure}[htbp]
\centering
\includegraphics[width=\linewidth]{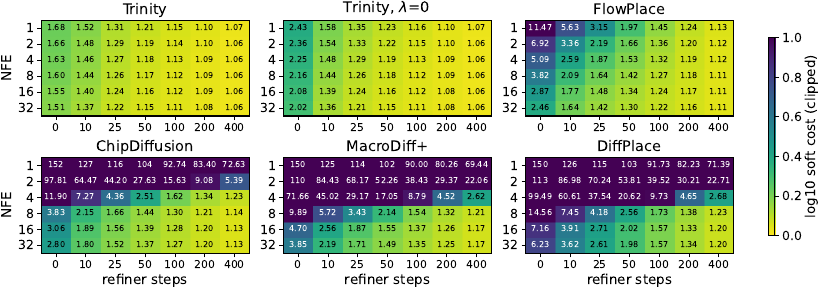}
\caption{The soft grid as heatmaps, rows $\nfe$, columns refiner steps, colour = $\log_{10}$ soft cost clipped at 10, one panel per line. Every grid is monotone in both axes.}
\label{fig:app_a3}
\end{figure}

\begin{figure}[htbp]
\subsection{The Hard Frontier}
\label{sec:appendix-fig-hard}
\centering
\includegraphics[width=\linewidth]{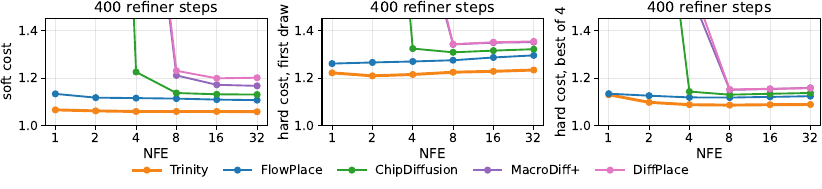}
\caption{Soft cost, first-draw hard cost and best-of-four hard cost against $\nfe$ at 400 refiner steps. The soft cost is monotone in $\nfe$ while the first draw has its minimum at $\nfe{=}2$ and the best of four at $\nfe{=}8$.}
\label{fig:app_b2}
\end{figure}

\begin{figure}[htbp]
\centering
\includegraphics[width=\linewidth]{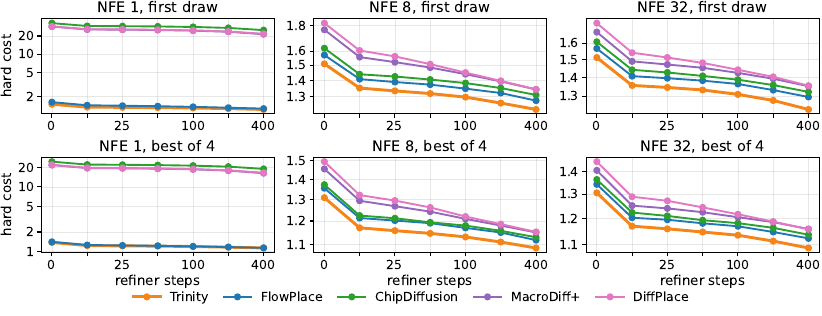}
\caption{Hard cost after legalization against refiner steps, one panel per $\nfe$ and selection (first draw, best of four), Trinity and the four prior arts. The refiner pays after legalization on every line, the largest move is from raw to 10 steps, and the order of the lines is the order of the raw soft cost.}
\label{fig:app_b1}
\end{figure}

\begin{figure}[htbp]\centering
\includegraphics[width=\linewidth]{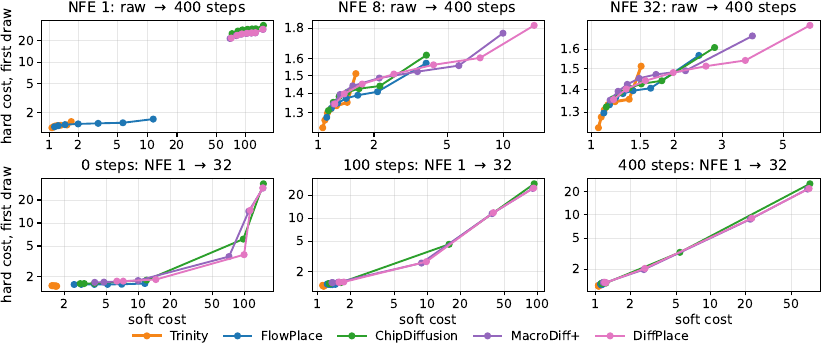}
\caption{Soft cost against first-draw hard cost, cell means, one connected path per line. Top: along the step axis (raw to 400 steps), one panel per $\nfe$; the paths are monotone along steps, and the prior arts' paths end above Trinity's. Bottom: along the $\nfe$ axis, one panel per step count; at 400 steps Trinity's path runs upward, the soft cost keeps falling with $\nfe$ while the first-draw cost rises slightly.}
\label{fig:app_b4}
\end{figure}

\begin{figure}[htbp]\centering
\includegraphics[width=\linewidth]{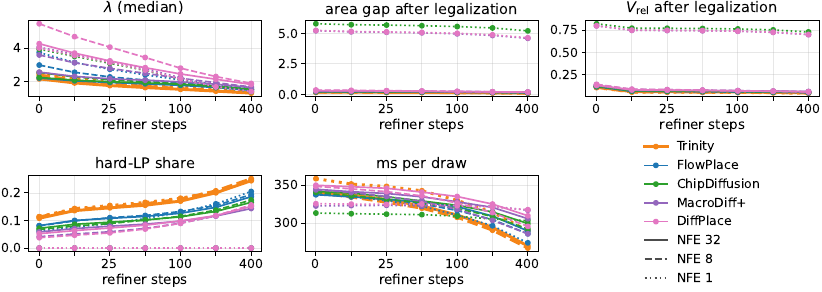}
\caption{The legalizer's ladder against refiner steps at $\nfe \in \{1, 8, 32\}$: the median expansion factor $\lambda$, the area gap and $\Vrel$ after legalization, the share of draws solved by the hard LP, and the milliseconds per draw. $\lambda$, the price of residual overlap, falls with refinement and Trinity needs the least; the two $\epsilon$-head prior arts at $\nfe{=}1$ have no layout to legalize.}
\label{fig:app_b6}
\end{figure}

\begin{figure}[htbp]
\centering
\includegraphics[width=\linewidth]{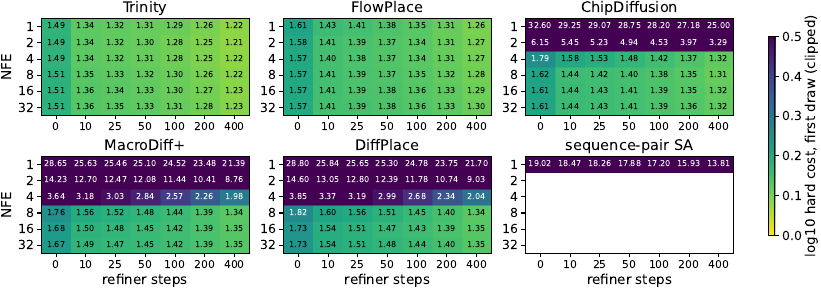}
\caption{The first-draw hard cost on the $\nfe \times$ steps grid, colour = $\log_{10}$ cost clipped at 3.2, one panel per line; the annealer has no sampler and fills one row.}
\label{fig:app_b3_n1}
\end{figure}

\begin{figure}[htbp]
\centering
\includegraphics[width=\linewidth]{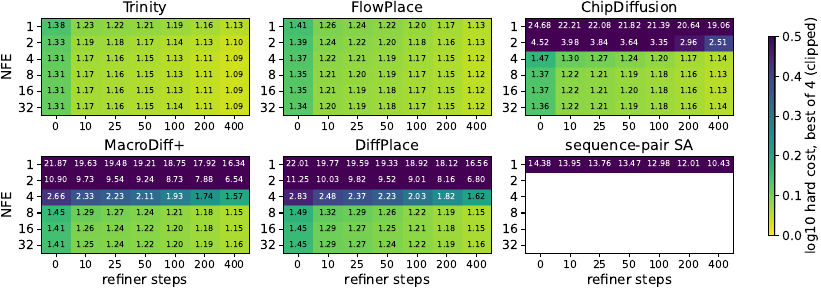}
\caption{The best-of-four hard cost on the $\nfe \times$ steps grid, same layout as Figure~\ref{fig:app_b3_n1}.}
\label{fig:app_b3_best4}
\end{figure}

\begin{figure}[htbp]\centering
\includegraphics[width=\linewidth]{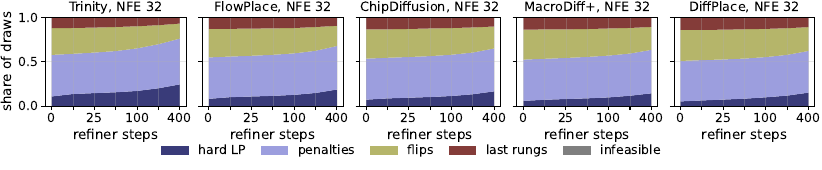}
\caption{Which rung of the legalizer solves each draw, as stacked shares against refiner steps at $\nfe{=}32$, one panel per line. The hard-LP share doubles from the raw sample to 400 steps and no draw is infeasible.}
\label{fig:app_b7}
\end{figure}

\begin{figure}[htbp]
\subsection{Soft Against Hard per Draw}
\label{sec:appendix-fig-corr}
\centering
\includegraphics[width=\linewidth]{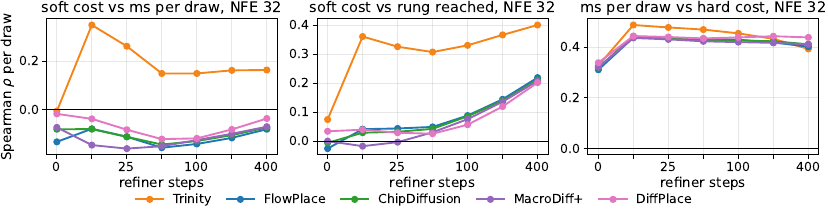}
\caption{Per-draw correlations with the legalizer's behaviour against refiner steps at $\nfe{=}32$: the soft cost predicts the rung a draw reaches and the time it takes, and the time predicts the hard cost on every line.}
\label{fig:app_c6}
\end{figure}

\begin{figure}[htbp]
\centering
\includegraphics[width=\linewidth]{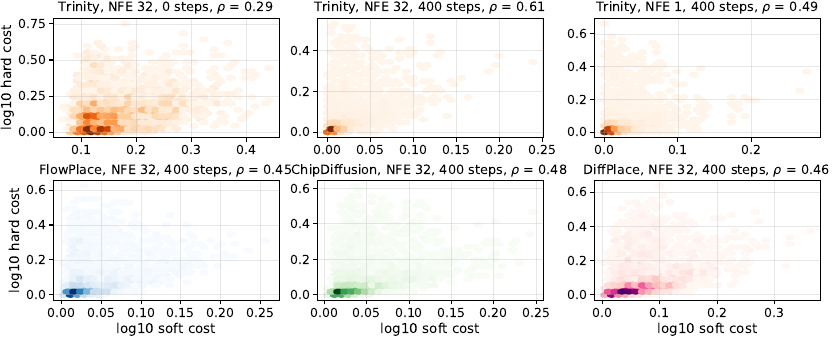}
\caption{The same six cells as Figure~\ref{fig:app_c1} on the per-case means over four draws.}
\label{fig:app_c2}
\end{figure}

\begin{figure}[htbp]
\centering
\includegraphics[width=\linewidth]{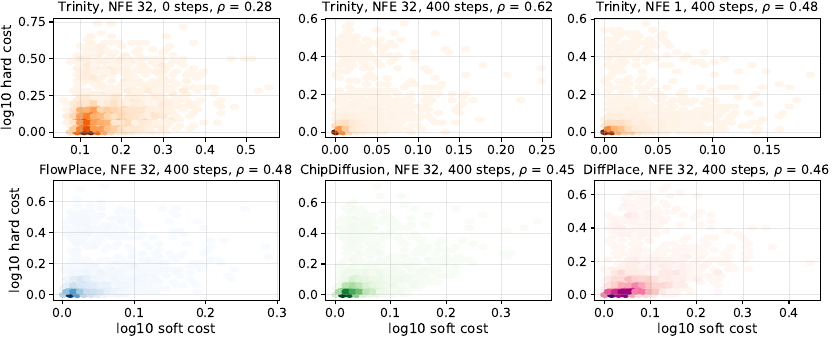}
\caption{Soft cost against legalized hard cost per draw, a 2,000-draw subsample of six cells. The relation is a ranking, not a line: the Spearman correlation reaches 0.62 at Trinity's $(\nfe{=}32, 400)$ cell, and the hard cost keeps a heavy tail.}
\label{fig:app_c1}
\end{figure}

\begin{figure}[htbp]\centering
\includegraphics[width=\linewidth]{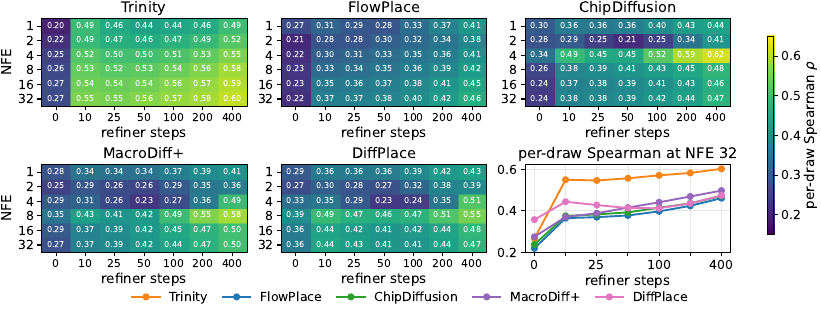}
\caption{The per-draw Spearman correlation between soft cost and hard cost per cell, one heatmap per line, and its course against refiner steps at $\nfe{=}32$. The correlation rises with both axes on Trinity, from 0.20 to 0.60, and less on the prior arts.}
\label{fig:app_c3}
\end{figure}

\begin{figure}[htbp]\centering
\includegraphics[width=\linewidth]{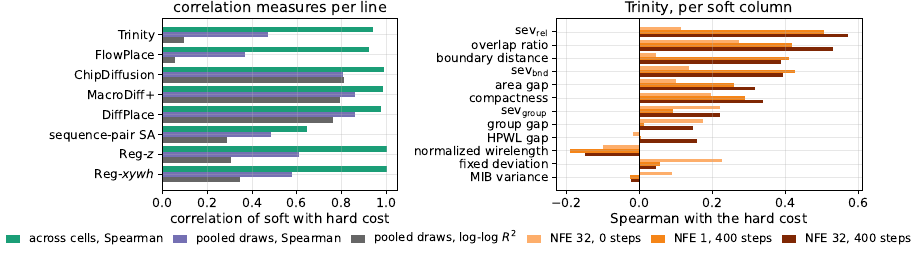}
\caption{Left: correlation measures per line, across cell means and pooled over draws. Right: the Spearman correlation of every soft column with the hard cost on Trinity at three cells; $\mathrm{sev}_{\mathrm{rel}}$ carries it, then overlap and boundary, and wirelength is negative because a shorter wirelength goes with a larger die.}
\label{fig:app_c4c5}
\end{figure}

\begin{figure}[htbp]\centering
\includegraphics[width=\linewidth]{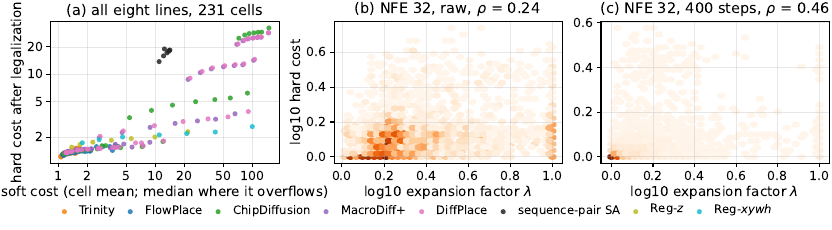}
\caption{\textbf{(a)} Figure~\ref{fig:results}b with all eight lines: the regressors' cells sit at a soft cost of 20 to 100, so their per-case median is drawn where the mean overflows. \textbf{(b, c)} The expansion factor $\lambda$ against the hard cost per draw on Trinity at $\nfe{=}32$, raw and after 400 refiner steps. Refinement moves the mass to $\lambda{=}1$, the draws that need no expansion.}
\label{fig:app_c_all_lines}
\label{fig:app_c7}
\end{figure}

\begin{figure}[htbp]
\subsection{The Term Ladders}
\label{sec:appendix-fig-terms}
\centering
\includegraphics[width=\linewidth]{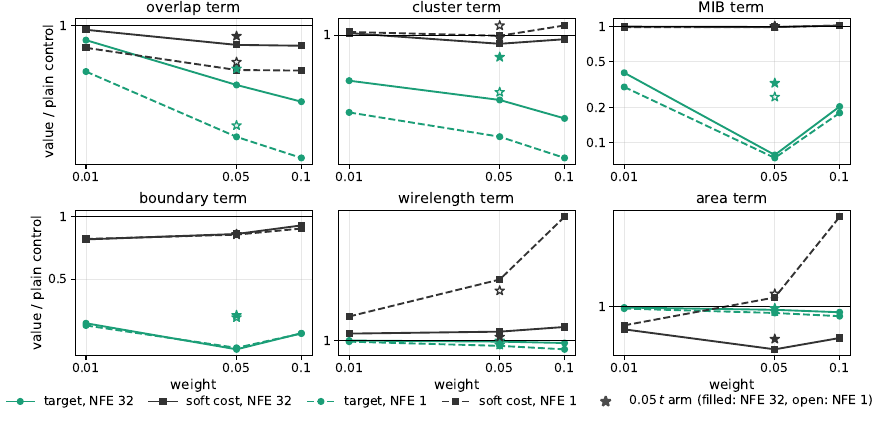}
\caption{Single-term ladders on the plain backbone: the term's own target metric and the soft cost relative to the plain control against the weight, at $\nfe{=}32$ (solid) and $\nfe{=}1$ (dashed), with the $0.05\,t$ arm as a star. Each term moves its own target; boundary and area move the soft cost, the others hardly.}
\label{fig:app_d1}
\end{figure}

\begin{figure}[htbp]\centering
\includegraphics[width=\linewidth]{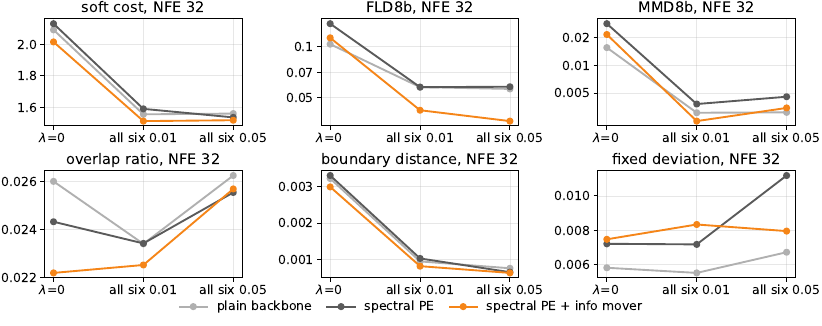}
\caption{The term ladder on the three backbones, one panel per metric. The terms do the same work on every backbone, and the backbones differ in where they start: the information mover lowers overlap and the distribution distances, the spectral encoding alone raises the fixed deviation.}
\label{fig:app_d5}
\end{figure}

\begin{figure}[htbp]
\centering
\includegraphics[width=\linewidth]{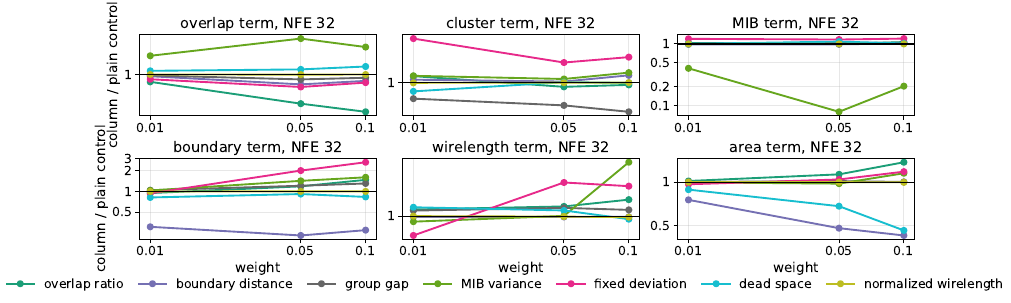}
\caption{The side effects: every soft column relative to the plain control against the weight, one panel per term. The boundary term raises the fixed deviation, the area term lowers the boundary distance, and the MIB term raises nothing.}
\label{fig:app_d2}
\end{figure}

\begin{figure}[htbp]\centering
\includegraphics[width=\linewidth]{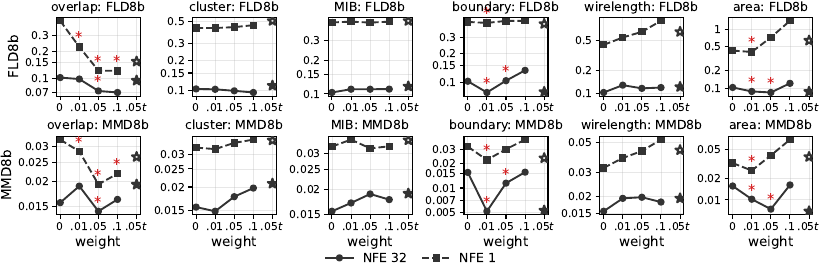}
\caption{FLD (top) and MMD (bottom) against the weight per term at $\nfe{=}32$ and 1, with a star where the arm is closer to the reference than the plain control at $p < 0.05$ under the permutation test of Appendix~\ref{sec:appendix-dist}. The distribution-closest weight is one step below the best raw target (overlap 0.05, boundary 0.01).}
\label{fig:app_d3_fld}
\label{fig:app_d3_mmd}
\end{figure}

\begin{figure}[htbp]\centering
\includegraphics[width=\linewidth]{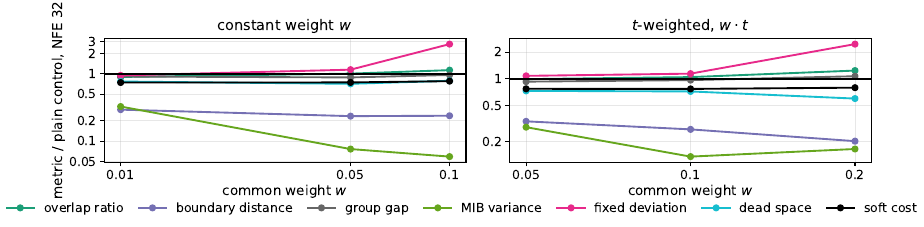}
\caption{All six terms at one common weight, constant (left) and $t$-weighted (right): every constraint metric and the soft cost relative to the plain control. The weight 0.01 improves every constraint at once, 0.1 breaks the fixed deviation, and the $t$-weighted arms trade the same way at twice the weight.}
\label{fig:app_d4}
\end{figure}

\begin{figure}[htbp]
\subsection{Metric Against Metric}
\label{sec:appendix-fig-metrics}
\centering
\includegraphics[width=\linewidth]{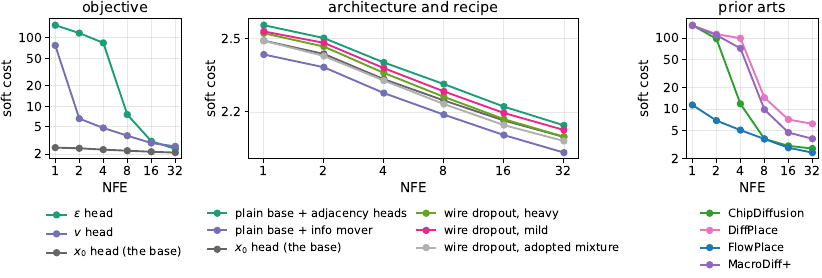}
\caption{Raw soft cost against $\nfe$ for the objective sweep, the architecture and recipe sweep, and the prior arts. The clean-sample head is flat in $\nfe$; the $\epsilon$ and $v$ heads, ours and the prior arts', are not.}
\label{fig:app_e4a}
\end{figure}

\begin{figure}[htbp]
\centering
\includegraphics[width=\linewidth]{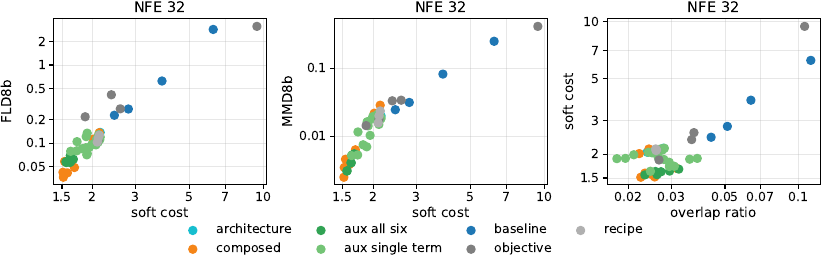}
\caption{Every model at $\nfe{=}32$: soft cost against FLD and against MMD, and overlap against soft cost. The two distribution metrics agree on the order, and the single-term overlap arms have the lowest overlap at the plain control's soft cost.}
\label{fig:app_e2e3}
\end{figure}

\begin{figure}[htbp]
\centering
\includegraphics[width=\linewidth]{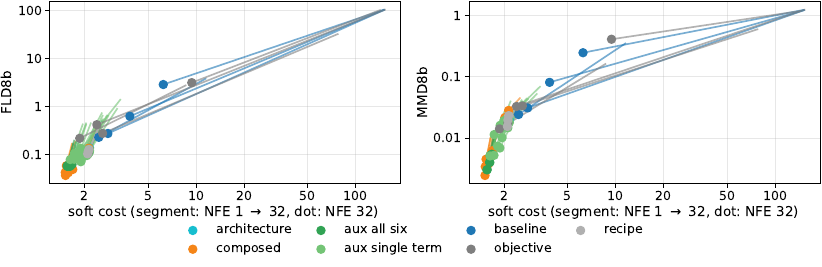}
\caption{Soft cost against FLD and MMD, one segment per model from $\nfe{=}1$ to 32, coloured by sweep set. The term-trained arms occupy the lower left, and the segments of the $\epsilon$-head and $v$-head models are long because they have no layout at $\nfe{=}1$.}
\label{fig:app_e1}
\end{figure}

\begin{figure}[htbp]
\centering
\includegraphics[width=\linewidth]{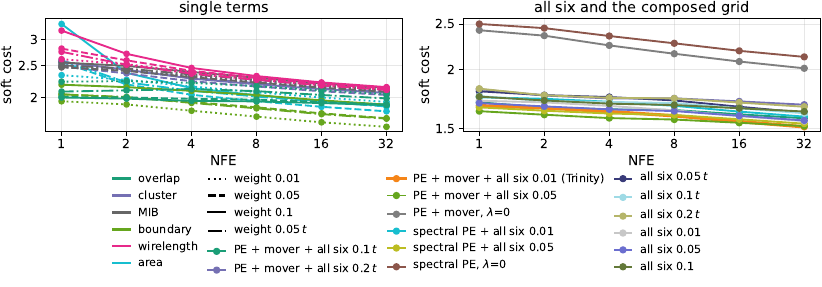}
\caption{Raw soft cost against $\nfe$ for every single-term arm and for the all-six and composed arms.}
\label{fig:app_e4b}
\end{figure}

\begin{figure}[htbp]
\subsection{Best of $N$ and the FloorSet Val Set}
\label{sec:appendix-fig-official}
\centering
\includegraphics[width=\linewidth]{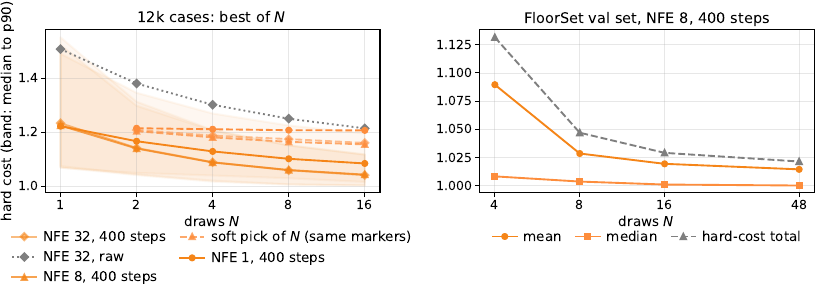}
\caption{Left: Trinity's best-of-$N$ mean hard cost on the 12,000 cases with the median-to-p90 band, at $\nfe \in \{32, 8, 1\}$ with 400 steps, at $\nfe{=}32$ raw, and the soft pick (the legalized cost of the draw with the lowest soft cost among $N$). $\nfe{=}8$ equals 32, $\nfe{=}1$ gains less from $N$, and the soft pick does not replace legalizing every draw. Right: the $N$ ladder on the FloorSet val set at $\nfe{=}8$ and 400 steps.}
\label{fig:app_f1f6}
\end{figure}

\begin{figure}[htbp]\centering
\includegraphics[width=\linewidth]{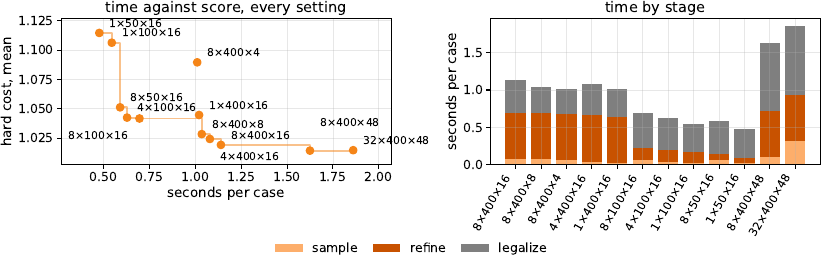}
\caption{The FloorSet val set solved one case at a time. Left: mean hard cost against seconds per case for every $\nfe \times$ steps $\times$ draws setting with the envelope; $N$ is the strongest lever and 16 draws at $\nfe{=}4$ with 100 steps reach 1.043 in 0.63 seconds. Right: the seconds per case by stage; the refiner is a fixed 0.6 seconds at 400 steps and the legalizer grows with $N$.}
\label{fig:app_f3f4}
\end{figure}

\begin{figure}[htbp]\centering
\includegraphics[width=\linewidth]{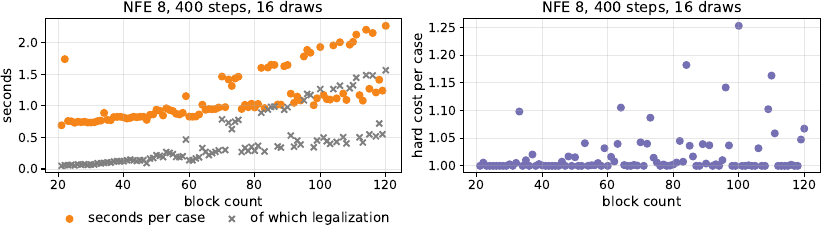}
\caption{Per FloorSet val set case at $\nfe{=}8$, 400 steps and 16 draws: seconds and hard cost against block count. The legalizer's time grows with the block count and the cost does not.}
\label{fig:app_f5}
\end{figure}

\begin{figure}[htbp]\centering
\includegraphics[width=\linewidth]{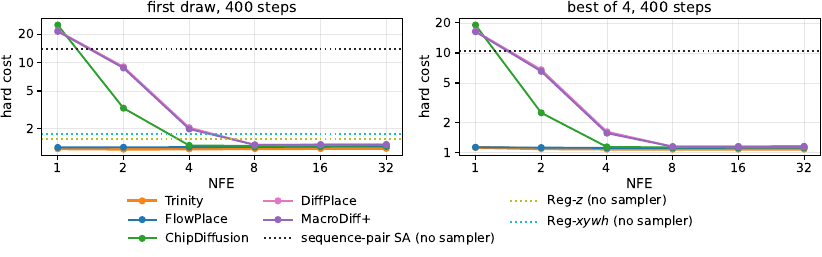}
\caption{First-draw and best-of-four hard cost at 400 steps against $\nfe$ for every line; the annealer and the regressors have no sampler and are drawn as levels.}
\label{fig:app_f2}
\end{figure}

\begin{figure}[htbp]
\subsection{The Refiner Sweeps}
\label{sec:appendix-fig-refiner}
\centering
\includegraphics[width=\linewidth]{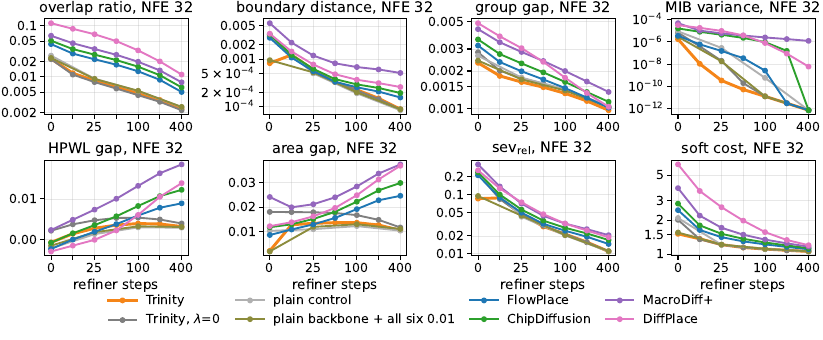}
\caption{Every soft column against refiner steps at $\nfe{=}32$ for the models of this work and the prior arts. Overlap, the cluster gap, the MIB variance and the fixed deviation fall monotonically, the boundary distance after a rise at 10 steps; the HPWL and area gaps are the residual the refiner trades for them.}
\label{fig:app_g3}
\end{figure}

\begin{figure}[htbp]\centering
\includegraphics[width=\linewidth]{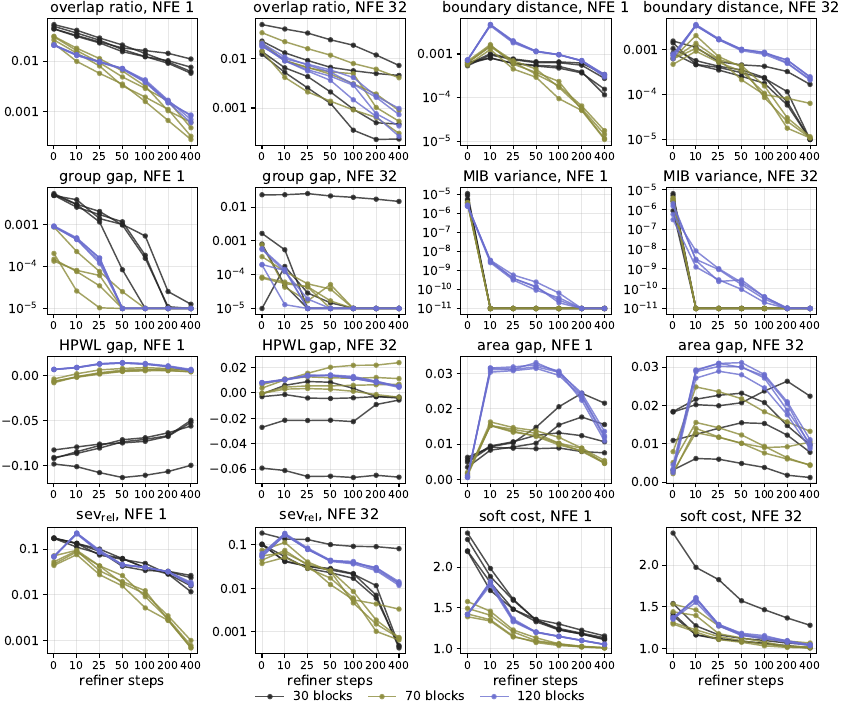}
\caption{Every draw of three cases of our 12,000 val set (30, 70 and 120 blocks, four draws each) along the refiner at $\nfe = 1$ and 32. Overlap and $\mathrm{sev}_{\mathrm{rel}}$ fall on every draw, the MIB variance and the cluster gap reach zero within 10 to 200 steps, the boundary distance rises at 10 steps on most draws before it falls, and the HPWL and area gaps are the residual each draw trades for them. One draw of the 30-block case keeps a cluster gap of 0.02 at $\nfe{=}32$ and ends at a soft cost of 1.28.}
\label{fig:app_g5}
\end{figure}

\begin{figure}[htbp]
\centering
\includegraphics[width=\linewidth]{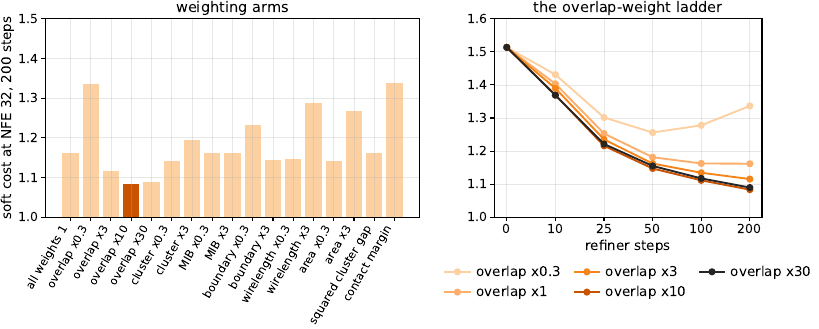}
\caption{Left: the seventeen weighting arms at 200 steps; the overlap weight of 10 is the one change that moves the endpoint. Right: the overlap-weight ladder against steps.}
\label{fig:app_g2}
\end{figure}

\begin{figure}[htbp]
\centering
\includegraphics[width=0.49\linewidth]{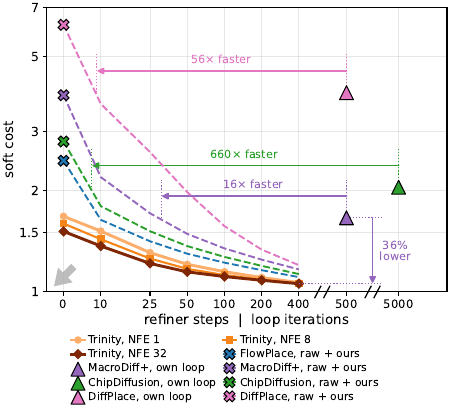}\hfill
\includegraphics[width=0.49\linewidth]{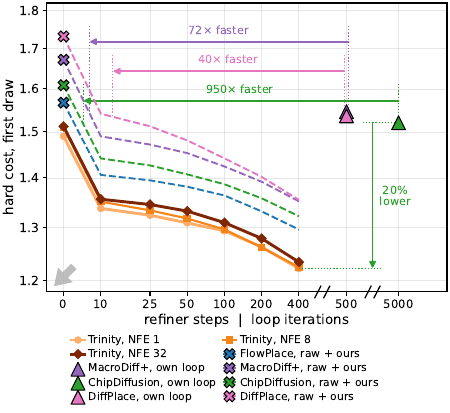}
\caption{The budget panel of Figure~\ref{fig:teaser} in soft cost (left) and in the hard cost of the first legalized draw (right): Trinity at $\nfe = 1$, 8 and 32, every prior-art generator at $\nfe{=}32$ raw (cross) and under the closed-form refiner (dashed), and its own loop at the published iteration count (triangle). A horizontal arrow ends at the refiner budget that matches the loop, by linear interpolation between the measured budgets. The vertical arrow runs from the cheapest loop to our cost at 400 steps.}
\label{fig:app_g6}
\end{figure}

\begin{figure}[htbp]
\centering
\includegraphics[width=\linewidth]{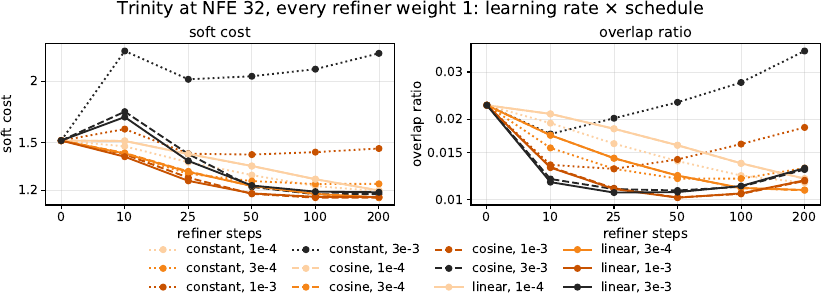}
\caption{Soft cost and overlap against refiner steps for the twelve schedule $\times$ peak-rate arms on Trinity at $\nfe{=}32$ with every weight at 1. Linear and cosine decay at $10^{-3}$ coincide, and a constant rate over-refines: overlap turns up after 50 steps.}
\label{fig:app_g1}
\end{figure}

\FloatBarrier
\subsection{Qualitative Examples}
\label{sec:appendix-qualitative}

Figure~\ref{fig:qualitative} follows three FloorSet val set cases through the pipeline of Section~\ref{sec:exp-hard} at the setting of Table~\ref{tab:official}: Trinity samples 48 draws at $\nfe{=}8$, the closed-form refiner runs 400 steps, the legalizer closes every draw, and the draw with the cheapest legalized layout is shown at every stage. The raw sample already has the outline, the boundary-coded blocks on their edges and the clusters together, with thin overlaps between neighbours; the refiner removes the overlaps and the legalizer packs the result. The gaps under the panels are signed, so a refined layout smaller than the reference, as on the 21-block case with an area gap of $-0.021$, is visible where the hard cost would clamp it to zero. The best of 48 lands at a hard cost of 1.000, 1.000 and 1.074 against 1.091, 1.000 and 1.127 for the reference layouts of the 21-, 60- and 120-block cases, and every one of the 144 legalized draws is feasible.

\begin{figure}[tp]\centering
\includegraphics[width=\linewidth]{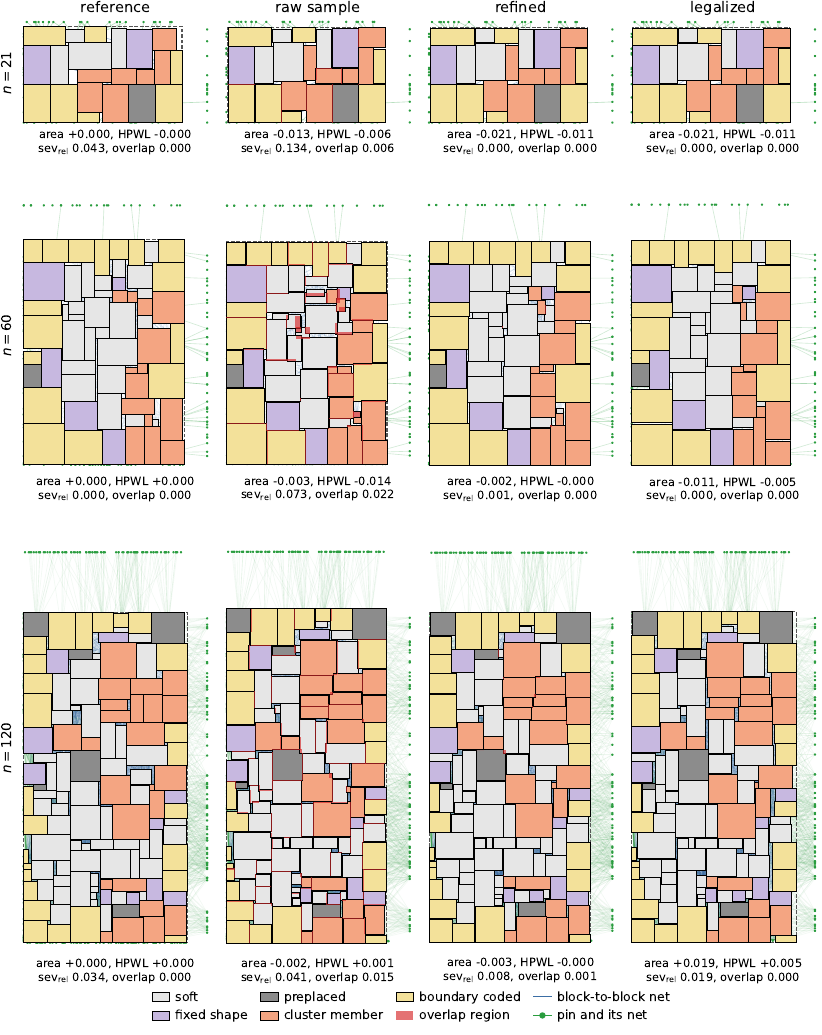}
\caption{Three FloorSet val set cases with 21, 60 and 120 blocks through the pipeline, 48 draws at $\nfe{=}8$ with 400 refiner steps and the shared legalizer, the draw with the cheapest legalized layout shown at every stage: the reference layout, the raw sample, the refined layout and the legalized layout. Blocks are coloured by constraint kind, every pairwise overlap region is red, block-to-block nets are faint lines between block centres, pins are dots on the outline with faint lines to their blocks, and the dashed box is the bounding outline. Under each panel: the signed area gap and HPWL gap to the reference, the severity $\mathrm{sev}_{\mathrm{rel}}$ and the overlap ratio of the soft vector of Appendix~\ref{sec:appendix-scoring}, which unlike the hard cost keep a layout smaller or shorter than the reference visible.}
\label{fig:qualitative}
\end{figure}

\begin{figure}[p]\centering
\includegraphics[width=0.78\linewidth]{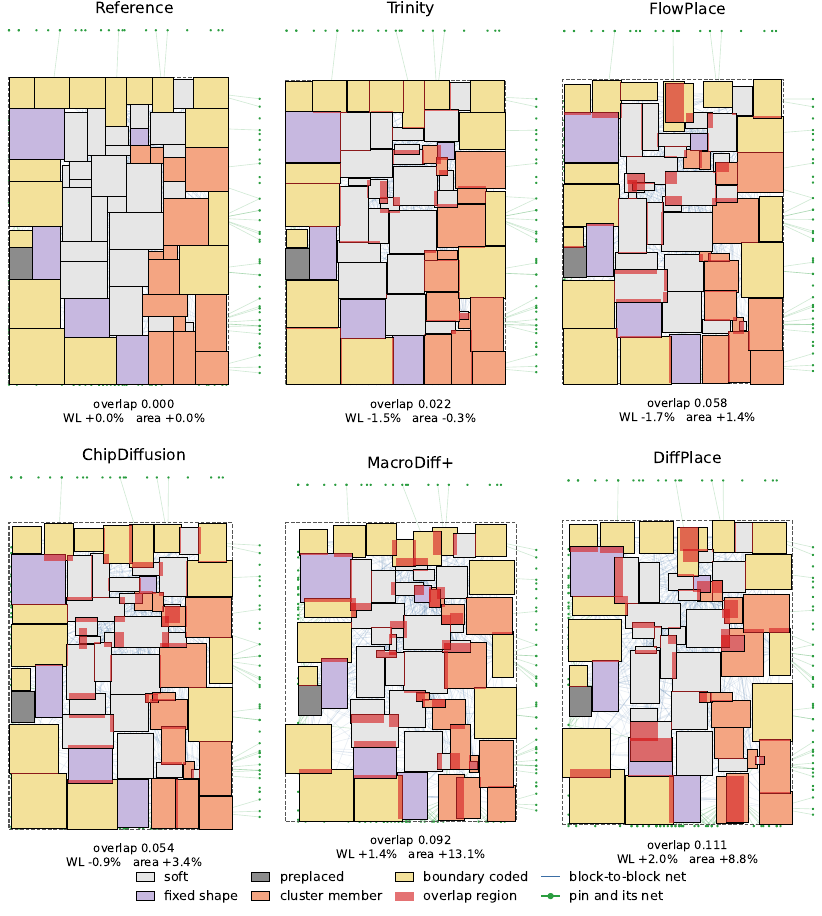}
\caption{Every generator on one 60-block FloorSet val set case from the same noise: the reference layout and the first draw of each line at $\nfe{=}8$, before any refinement. Colours as in Figure~\ref{fig:qualitative}. Under each panel, the overlap ratio and the soft cost of that layout.}
\label{fig:app_generators}
\label{fig:qual_row}
\end{figure}

\begin{figure}[p]\centering
\includegraphics[width=0.7\linewidth]{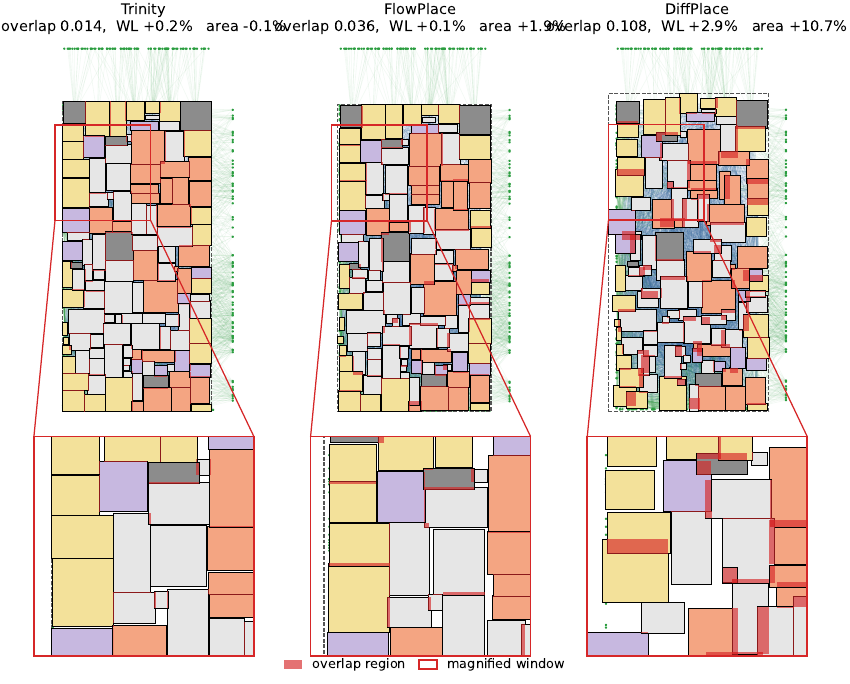}
\caption{A 120-block case under the protocol of Figure~\ref{fig:app_generators}, with one region magnified. The window is placed on the largest overlap of the worst line and is the same in all three columns.}
\label{fig:app_generator_zoom}
\end{figure}

Figure~\ref{fig:app_generators} puts the generators side by side on one of those cases. Because the sampler draws its noise from the case alone, the first draw of every line starts from one noise tensor, so the panels are paired rather than merely comparable. The ordering of Table~\ref{tab:prior} is visible block by block. On the case shown Trinity overlaps 0.022 of the block area at a soft cost of 1.44, FlowPlace and ChipDiffusion 0.058 and 0.054 at 3.24 and 3.55, and MacroDiff+ and DiffPlace 0.092 and 0.111 at 7.57 and 8.43; over the 21-, 60- and 120-block cases the same draw gives 0.014 to 0.022 for Trinity, 0.036 to 0.067 for the two velocity and cosine heads and 0.068 to 0.111 for the two remaining lines. The kind of failure also differs. Every line finds the outline and puts the boundary-coded blocks on their edges, so the gap is not in the global arrangement but in the local packing, which Figure~\ref{fig:app_generator_zoom} magnifies on a larger case: Trinity leaves thin contacts where a neighbour is a few thousandths of the layout scale too wide, while DiffPlace leaves whole blocks lying across one another. This is the defect the closed-form refiner of Section~\ref{sec:exp-refine} removes in ten steps on Trinity and cannot remove on the prior arts in four hundred.

\FloatBarrier

\section{Model Size}
\label{sec:appendix-size}

\begin{wraptable}{r}{0.36\linewidth}\centering\footnotesize
\vspace{-\baselineskip}
\caption{The size rungs of Trinity recipe, defined by the hidden width $d$ and the layer count $L$ (head dimension 64 throughout). Everything not listed is Trinity's.}
\label{tab:size_recipe}
\scriptsize
\begin{tabular}{lrrr}
\toprule
Rung & $d$ & $L$ & Parameters \\
\midrule
\multicolumn{4}{l}{\textit{Width rungs}} \\
192 / 12 & 192 & 12 & 5.9M \\
256 / 12 & 256 & 12 & 10.6M \\
384 / 12 & 384 & 12 & 23.3M \\
512 / 12 & 512 & 12 & 42.1M \\
\midrule
\multicolumn{4}{l}{\textit{Depth rungs}} \\
768 / 4 & 768 & 4 & 31.5M \\
768 / 6 & 768 & 6 & 46.8M \\
768 / 8 & 768 & 8 & 62.2M \\
\midrule
\multicolumn{4}{l}{\textit{Off both axes}} \\
512 / 16 & 512 & 16 & 56.0M \\
\midrule
768 / 12, Trinity & 768 & 12 & 92.9M \\
\bottomrule
\end{tabular}
\vspace{-\baselineskip}
\end{wraptable}
\paragraph{Setup.}
Two variables define a rung of Trinity recipe: the hidden width $d$ and the layer count $L$ (Table~\ref{tab:size_recipe}). The width rungs fix $L = 12$ and vary $d$, the depth rungs fix $d = 768$ and vary $L$, and one rung sits off both axes at width 512 with 16 layers; the head dimension is 64 throughout, so the head count follows $d$. The spectral positional encoding, the information mover, all six terms at $\lambda{=}0.01$, the wire-dropout mixture and the frozen training recipe of Appendix~\ref{sec:appendix-train} (AdamW at a peak learning rate of $2\times10^{-4}$ with cosine decay, effective batch 256, 200,000 steps, one GPU) are identical on every rung, so the runs differ in $d$ and $L$ alone. The width rungs sit at the sizes of the ported prior arts, 5.9M to 42.1M parameters against 5.0M to 23.9M for their networks; the depth rungs span 31.5M to 62.2M, so the two axes overlap in parameter count and separate the effect of the shape from that of the size. Every rung of the first six goes through the standard evaluation: our 12,000 val set at $\nfe \in \{1, 2, 4, 8, 16, 32\}$ with four draws and the fifteen soft columns, the distribution metrics at $\nfe{=}1$ and 32 against the plain control with 200 permutations, the full $\nfe \times$ steps refiner grid of Appendix~\ref{sec:appendix-refine}, and the same grid through the legalizer with the first draw and the best of four, as in Appendix~\ref{sec:appendix-hard}. The two rungs trained last, 192 / 12 at 5.9M parameters and 512 / 16 at 56.0M, have the raw scores of the val set only and appear in Tables~\ref{tab:size_recipe} and \ref{tab:size_raw}.

\paragraph{Width rungs, raw samples.}
Table~\ref{tab:size_raw} and Figure~\ref{fig:scaling} give the widths at $L = 12$ on our 12,000 val set, the figure the four with full grids. At width 512, with 45\% of Trinity's parameters, the recipe reproduces Trinity within its seed spread on the raw sample (soft cost 1.534 against 1.514, 1.2 seed standard deviations; overlap within 0.5) and is equal to it at $\nfe{=}1$; on the distribution metrics it is at or below Trinity at both $\nfe$ (FLD 0.0416 against 0.0423, MMD 0.0020 against 0.0025). Every rung is closer to the reference than the plain control at both $\nfe$ with $p = 0.005$, every rung keeps the constraint level the terms train (boundary distance 0.86 to $0.97 \times 10^{-3}$ against $3.0 \times 10^{-3}$ for Trinity at $\lambda{=}0$, MIB variance 2 to $4 \times 10^{-6}$ against $6.7 \times 10^{-6}$), and every rung improves the deviation of fixed and preplaced blocks over Trinity. Widths 384 and 256 give back overlap (23\% and 32\% more at $\nfe{=}32$) and soft cost (1.612 and 1.703), width 256 also on the cluster gap and the boundary distance, and the rung at 10.6M parameters still stands at a soft cost of 1.70 against 2.46 for FlowPlace at 6.3M and 6.23 for DiffPlace at 23.9M under the same recipe and steps. Width 192 at 5.9M parameters, the size of the smallest ported network, continues the ladder to a soft cost of 1.827 at $\nfe{=}32$ and 1.903 at $\nfe{=}1$ with an overlap of 0.0347, the boundary distance and the MIB variance at $1.0 \times 10^{-3}$ and $13.8 \times 10^{-6}$, and it stays below every ported placer's soft cost at the same $\nfe$. The shape of the $\nfe$ curve is the same at every width.

\paragraph{Depth rungs, raw samples.}
The depth rungs keep the width at 768 and remove layers (Table~\ref{tab:size_raw}, Figure~\ref{fig:scaling_depth}). Every depth rung is behind Trinity on every constraint column at $\nfe{=}32$: at $L = 8$ the overlap is 6\% higher, the boundary distance 17\%, the cluster gap 11\% and the MIB variance 2.2 times, for a soft cost of 1.593 against 1.514; at $L = 6$ the overlap is 37\% higher and the soft cost 1.733; at $L = 4$ the overlap doubles to 0.048, FlowPlace's level, the MIB variance is 3.9 times Trinity's and the soft cost is 2.128. Depth costs more than width per parameter, and depth beyond twelve layers buys little: width 512 with 16 layers (56.0M) sits between 512 / 12 and Trinity at every $\nfe$, 1.530 against 1.534 and 1.514 at $\nfe{=}32$ and 1.679 against 1.682 and 1.679 at $\nfe{=}1$, with the overlap of 512 / 12 (0.0226) and the boundary distance and MIB variance of the width rungs, so four more layers at width 512 are worth 0.004 in soft cost for 14M parameters. By soft cost at $\nfe{=}32$ the family orders as 768 / 12 (1.514), 512 / 16 (1.530), 512 / 12 (1.534), 768 / 8 (1.593), 384 / 12 (1.612), 256 / 12 (1.703), 768 / 6 (1.733), 192 / 12 (1.827) and 768 / 4 (2.128), so the 8-layer model at 62.2M parameters is within 0.02 of the width-384 model at 23.3M (and behind it after the refiner, 1.067 against 1.065, and on both distribution distances), the 6- and 4-layer models at 46.8M and 31.5M below the width-256 model at 10.6M, the 4-layer model below width 192 at 5.9M, and the loss per removed layer grows, $+0.08$ from 12 to 8 layers, $+0.14$ from 8 to 6 and $+0.39$ from 6 to 4. The deviation of fixed blocks is the one column that does not follow the depth. On the distribution metrics the 8- and 6-layer rungs are closer to the reference than the plain control at both $\nfe$ with $p = 0.005$ and inside Trinity's seed band on FLD, while the 4-layer rung is not ($p = 1.000$ at $\nfe{=}32$, FLD 0.151 against 0.103 for the plain control): it is the one rung of the study whose layouts are farther from the reference distribution than the plain control's while its soft cost is lower. The reading is that the graph-convolution residual needs depth. The information mover moves information along the netlist one hop per layer, so a 4-layer model cannot carry the netlist far enough to place its blocks, while a narrow model at 12 layers still can.

\begin{table}[htbp]\centering\footnotesize
\caption{The size rungs ($d$ / $L$) on raw samples, $\nfe{=}32$ unless stated: the constraint columns, the soft cost at $\nfe{=}32$ / 1, and the two distribution distances against the reference, which the two rungs trained last do not have. Units as in Section~\ref{sec:exp-setup}; \colorbox{ourscolor}{orange} marks Trinity.}
\label{tab:size_raw}
\setlength{\tabcolsep}{4pt}
\scriptsize
\begin{tabular}{lrrrrrrrrrr}
\toprule
$d$ / $L$ & Params & Overlap & Boundary & MIB & Fixed & Area gap & Soft cost 32 / 1 & FLD & MMD & HPWL gap \\
\midrule
\multicolumn{10}{l}{\textit{Width rungs}} \\
192 / 12 & 5.9M & 0.0347 & 1.00 & 13.8 & 7.2 & $+$0.003 & 1.827 / 1.903 & -- & -- & $+0.001$ \\
256 / 12 & 10.6M & 0.0297 & 0.97 & 3.7 & 6.6 & $+$0.003 & 1.703 / 1.812 & 0.055 & 5.0 & $+0.001$ \\
384 / 12 & 23.3M & 0.0277 & 0.86 & 2.0 & 6.2 & \textbf{$-$0.004} & 1.612 / 1.759 & 0.044 & 3.7 & $-0.002$ \\
512 / 12 & 42.1M & 0.0229 & 0.89 & 2.4 & 5.8 & $+$0.000 & 1.534 / 1.682 & \textbf{0.042} & \textbf{2.0} & $-0.001$ \\
\midrule
\multicolumn{10}{l}{\textit{Depth rungs}} \\
768 / 4 & 31.5M & 0.0483 & 1.17 & 6.6 & 12.3 & $-$0.009 & 2.128 / 2.258 & 0.151 & 17.2 & $-0.005$ \\
768 / 6 & 46.8M & 0.0309 & 1.06 & 4.2 & 6.0 & $+$0.000 & 1.733 / 1.865 & 0.051 & 3.6 & $-0.002$ \\
768 / 8 & 62.2M & 0.0239 & 0.96 & 3.8 & 11.2 & $+$0.005 & 1.593 / 1.740 & 0.048 & 4.0 & $-0.001$ \\
\midrule
\multicolumn{10}{l}{\textit{Off both axes}} \\
512 / 16 & 56.0M & 0.0226 & 0.93 & 3.0 & \textbf{5.1} & $-$0.002 & 1.530 / \textbf{1.679} & -- & -- & $-0.002$ \\
\midrule
\rowcolor{ourscolor} \textbf{768 / 12} & 92.9M & \textbf{0.0225} & \textbf{0.82} & \textbf{1.7} & 8.3 & $+$0.002 & \textbf{1.514} / \textbf{1.679} & \textbf{0.042} & 2.5 & $-0.001$ \\
\bottomrule
\end{tabular}
\end{table}

\begin{figure}[htbp]\centering
\includegraphics[width=\linewidth]{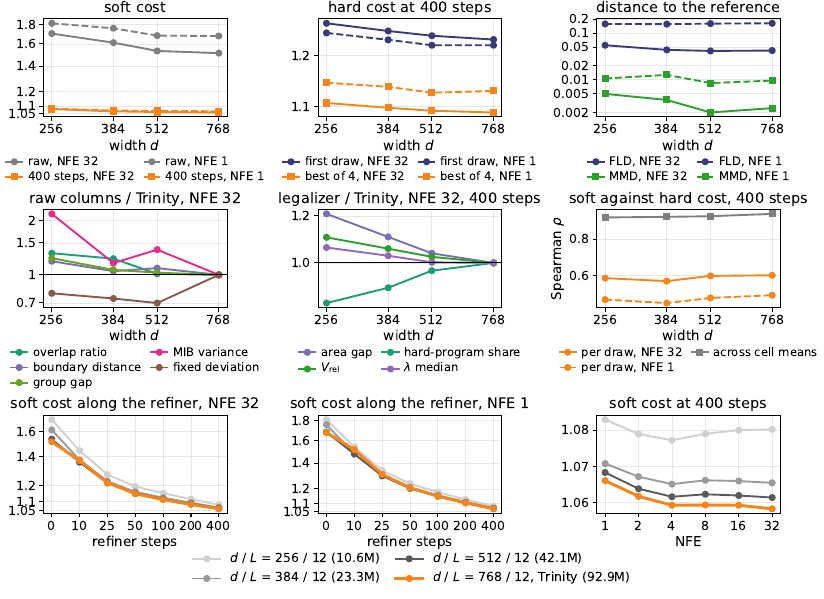}
\caption{The width rungs ($d$ / $L$ with $L = 12$) against the width $d$ and along the refiner. Top row: the soft cost raw and after 400 refiner steps at $\nfe{=}32$ and 1, the hard cost of the first draw and of the best of four at 400 steps, and the two distribution distances. Middle row: the raw constraint columns at $\nfe{=}32$ relative to Trinity, the legalizer's statistics at $\nfe{=}32$ with 400 steps relative to Trinity (area gap, $\Vrel$, share of draws closed by the hard program, median expansion $\lambda$), and the Spearman correlation between soft and hard cost per draw and across the cell means. Bottom row: the soft cost along the refiner at $\nfe{=}32$ and 1, and the 400-step endpoint against $\nfe$, one line per width.}
\label{fig:scaling}
\end{figure}

\begin{figure}[htbp]\centering
\includegraphics[width=\linewidth]{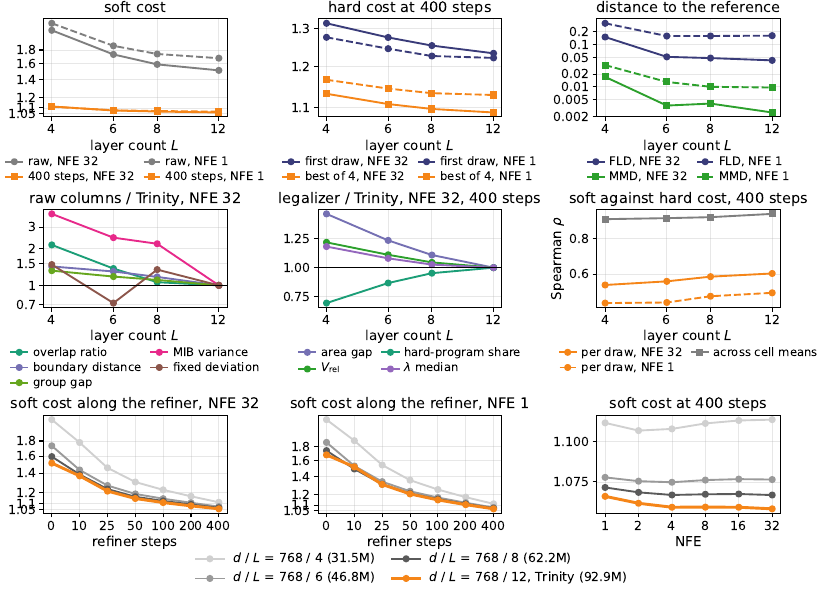}
\caption{The depth rungs ($d$ / $L$ with $d = 768$) against the layer count $L$ and along the refiner, the panels of Figure~\ref{fig:scaling}: the soft cost raw and after 400 refiner steps, the hard cost of the first draw and of the best of four at 400 steps, the two distribution distances, the raw constraint columns and the legalizer's statistics relative to Trinity, the soft--hard Spearman correlation, and the soft cost along the refiner and against $\nfe$, one line per depth.}
\label{fig:scaling_depth}
\end{figure}

\paragraph{Size rungs, after the refiner.}
The closed-form refiner narrows the size gaps by an order of magnitude (Table~\ref{tab:size_refined}). The raw width gaps to Trinity at $\nfe{=}32$ of $+0.020$, $+0.099$ and $+0.190$ become $+0.003$, $+0.007$ and $+0.022$ at 400 steps, the raw depth gaps of $+0.08$, $+0.22$ and $+0.61$ become $+0.009$, $+0.018$ and $+0.056$, and every rung's $\nfe{=}1$ endpoint (1.068 to 1.112) is at or below Trinity's $\nfe{=}32$ cell at 200 steps (1.083) except the 4-layer rung. The refined order across both axes at $\nfe{=}32$ with 400 steps is 768 / 12 (1.058), 512 / 12 (1.061), 384 / 12 (1.065), 768 / 8 (1.067), 768 / 6 (1.077), 256 / 12 (1.080) and 768 / 4 (1.114); the steps to a soft cost of 1.1 at $\nfe{=}32$ are 200 for widths 768, 512 and 384 and for 8 layers, 400 for width 256 and 6 layers, and the 4-layer rung never reaches 1.1 inside the grid, with an endpoint row that is not monotone in $\nfe$ (its minimum is at $\nfe{=}2$). The configs of the six rungs and of their grids are in the supplementary material.

\begin{table}[htbp]\centering\scriptsize
\caption{The size rungs ($d$ / $L$) along the closed-form refiner, one column per $\nfe$ / refiner steps, two lines per rung: soft cost / overlap ratio, then wirelength gap / area gap to the reference in percent (positive above the reference); \colorbox{ourscolor}{orange} marks Trinity.}
\label{tab:size_refined}
\setlength{\tabcolsep}{3pt}
\begin{tabular}{lrrrrrrr}
\toprule
$d$ / $L$ & 32 / 0 & 32 / 25 & 32 / 100 & 32 / 400 & 1 / 0 & 1 / 100 & 1 / 400 \\
\midrule
256 / 12 & 1.703\,/\,0.0297 & 1.270\,/\,0.0123 & 1.150\,/\,0.0072 & 1.080\,/\,0.0033 & 1.812\,/\,0.0365 & 1.177\,/\,0.0101 & 1.083\,/\,0.0040 \\
 & $+0.1$\,/\,$+0.3$ & $+0.4$\,/\,$+1.5$ & $+0.6$\,/\,$+1.8$ & $+0.7$\,/\,$+1.8$ & $-1.2$\,/\,$+0.0$ & $-0.4$\,/\,$+1.5$ & $-0.1$\,/\,$+1.7$ \\
\addlinespace[1.5pt]
384 / 12 & 1.612\,/\,0.0277 & 1.226\,/\,0.0103 & 1.123\,/\,0.0060 & 1.065\,/\,0.0028 & 1.759\,/\,0.0356 & 1.155\,/\,0.0094 & 1.071\,/\,0.0038 \\
 & $-0.2$\,/\,$-0.4$ & $+0.2$\,/\,$+0.9$ & $+0.4$\,/\,$+1.2$ & $+0.4$\,/\,$+1.2$ & $-1.5$\,/\,$-0.8$ & $-0.6$\,/\,$+0.9$ & $-0.3$\,/\,$+1.2$ \\
\addlinespace[1.5pt]
512 / 12 & 1.534\,/\,0.0228 & 1.215\,/\,0.0088 & 1.115\,/\,0.0051 & 1.061\,/\,0.0024 & 1.682\,/\,0.0312 & 1.149\,/\,0.0084 & 1.068\,/\,0.0034 \\
 & $-0.1$\,/\,$+0.0$ & $+0.3$\,/\,$+1.2$ & $+0.4$\,/\,$+1.3$ & $+0.4$\,/\,$+1.1$ & $-1.3$\,/\,$-0.3$ & $-0.5$\,/\,$+1.1$ & $-0.3$\,/\,$+1.2$ \\
\addlinespace[1.5pt]
768 / 4 & 2.128\,/\,0.0483 & 1.459\,/\,0.0237 & 1.228\,/\,0.0135 & 1.114\,/\,0.0055 & 2.258\,/\,0.0542 & 1.254\,/\,0.0162 & 1.112\,/\,0.0060 \\
 & $-0.5$\,/\,$-0.9$ & $+0.1$\,/\,$+0.8$ & $+0.6$\,/\,$+1.5$ & $+0.9$\,/\,$+2.2$ & $-1.9$\,/\,$-1.3$ & $-0.6$\,/\,$+1.1$ & $-0.1$\,/\,$+1.9$ \\
\addlinespace[1.5pt]
768 / 6 & 1.733\,/\,0.0309 & 1.270\,/\,0.0129 & 1.146\,/\,0.0076 & 1.077\,/\,0.0034 & 1.865\,/\,0.0380 & 1.172\,/\,0.0106 & 1.078\,/\,0.0042 \\
 & $-0.2$\,/\,$+0.0$ & $+0.2$\,/\,$+1.2$ & $+0.4$\,/\,$+1.5$ & $+0.5$\,/\,$+1.6$ & $-1.5$\,/\,$-0.3$ & $-0.6$\,/\,$+1.2$ & $-0.3$\,/\,$+1.5$ \\
\addlinespace[1.5pt]
768 / 8 & 1.593\,/\,0.0239 & 1.235\,/\,0.0098 & 1.126\,/\,0.0058 & 1.067\,/\,0.0027 & 1.740\,/\,0.0315 & 1.157\,/\,0.0089 & 1.072\,/\,0.0036 \\
 & $-0.1$\,/\,$+0.5$ & $+0.3$\,/\,$+1.5$ & $+0.5$\,/\,$+1.6$ & $+0.4$\,/\,$+1.4$ & $-1.3$\,/\,$+0.2$ & $-0.5$\,/\,$+1.4$ & $-0.3$\,/\,$+1.4$ \\
\addlinespace[1.5pt]
\rowcolor{ourscolor} \textbf{768 / 12} & 1.514\,/\,0.0225 & 1.216\,/\,0.0084 & 1.111\,/\,0.0048 & 1.058\,/\,0.0023 & 1.679\,/\,0.0316 & 1.149\,/\,0.0083 & 1.066\,/\,0.0034 \\
\rowcolor{ourscolor}  & $-0.1$\,/\,$+0.2$ & $+0.3$\,/\,$+1.4$ & $+0.4$\,/\,$+1.4$ & $+0.3$\,/\,$+1.1$ & $-1.2$\,/\,$-0.1$ & $-0.4$\,/\,$+1.2$ & $-0.3$\,/\,$+1.2$ \\
\bottomrule
\end{tabular}
\end{table}

\paragraph{Size rungs, through the legalizer.}
Table~\ref{tab:size_hard}, the top and middle rows of Figures~\ref{fig:scaling} and \ref{fig:scaling_depth}, and Figures~\ref{fig:scaling_hard} and \ref{fig:scaling_hard_depth} pass the six rungs through the pipeline of Appendix~\ref{sec:appendix-hard}: their own four-draw caches, the closed-form refiner and the shared legalizer on the full $\nfe \times$ steps grid, 2,016,000 legalized draws per rung, every one of the 12,096,000 feasible. The hard-cost order across both axes is the soft-cost order. At $\nfe{=}32$ with 400 steps the first draw runs 1.234 (768 / 12), 1.242 (512 / 12), 1.252 (384 / 12), 1.255 (768 / 8), 1.269 (256 / 12), 1.276 (768 / 6) and 1.315 (768 / 4), and the best of four 1.089, 1.092, 1.097, 1.097, 1.107, 1.108 and 1.133; the depth rungs pair with width rungs of far fewer parameters, 768 / 8 at 62.2M with 384 / 12 at 23.3M and 768 / 6 at 46.8M with 256 / 12 at 10.6M, within 0.007 at every key cell. After the refiner and the legalizer the gaps are of the order of the refined soft gaps, not of the raw ones: the family spans 0.08 on the first draw and 0.04 on the best of four against 0.61 on the raw soft cost, every rung except 768 / 4 is within 0.035 and 0.020 of Trinity, and 768 / 4 at 1.315 and 1.133 lands among the prior arts (FlowPlace 1.296 and 1.124 at the same cell). At the cheap corner, $\nfe{=}1$ with 400 steps, width 512 equals Trinity (first draw 1.222 against 1.222, best of four 1.126 against 1.130); the best-of-four minimum is at $\nfe{=}8$ for 768 / 12, 512 / 12, 384 / 12, 768 / 8 and 768 / 4 and at $\nfe{=}4$ for 256 / 12 and 768 / 6. The raw hard cost barely sees the size, 1.49 to 1.59 on every rung at every $\nfe$, and Trinity's $\nfe$ inversion holds on every rung: the first draw at 400 steps rises with $\nfe$, so the low-$\nfe$ columns are the hard-cost Pareto set. The legalizer statistics move with the soft cost, the hard-program share from 0.245 to 0.170, the median $\lambda$ from 1.32 to 1.55 and the area gap from $+0.105$ to $+0.153$ from Trinity to 768 / 4, and the soft--hard correlation keeps Trinity's shape and carrying columns ($\mathrm{sev}_{\mathrm{rel}}$, overlap and boundary) on every rung with a size that falls slowly with the soft cost, per draw at $\nfe{=}32$ with 400 steps from 0.603 to 0.539 and across the 42 cells from 0.938 to 0.908.

\begin{figure}[htbp]\centering
\includegraphics[width=\linewidth]{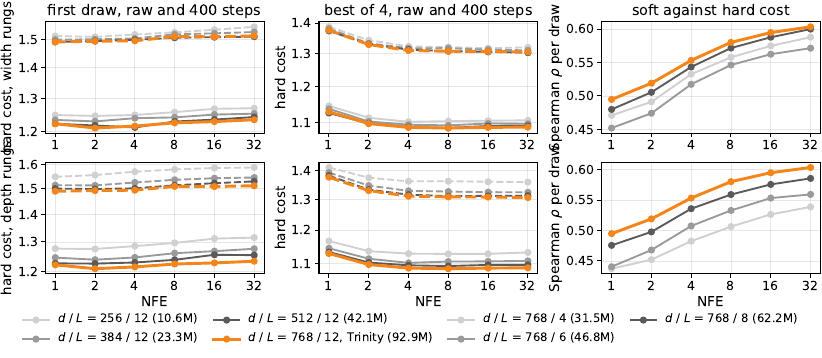}
\caption{The width rungs (top) and the depth rungs (bottom) through the legalizer against $\nfe$: the hard cost of the first draw and of the best of four, raw (dashed) and after 400 refiner steps (solid), and the per-draw Spearman correlation between soft and hard cost at 400 steps.}
\label{fig:scaling_hard}
\label{fig:scaling_hard_depth}
\end{figure}

\begin{table}[htbp]\centering\footnotesize
\caption{The size rungs ($d$ / $L$) through the legalizer at 400 refiner steps and $\nfe = 1$, 8 and 32: the soft cost of the refined layout, the hard cost of the first legalized draw as mean / median and its 90th percentile, the mean and the hard-cost total of the best of four, the legalized layout's area gap and $\Vrel$, and the per-draw Spearman correlation $\rho$ between soft and hard cost. Every draw of every rung is feasible; \colorbox{ourscolor}{orange} marks Trinity.}
\label{tab:size_hard}
\setlength{\tabcolsep}{5pt}
\scriptsize
\begin{tabular}{lrrrrrrrrrr}
\toprule
$d$ / $L$ & $\nfe$ & Soft cost & First draw & p90 & Best of 4 & Total & Area gap & $\Vrel$ & $\rho$ & HPWL gap \\
\midrule
\rowcolor{ourscolor} \textbf{768 / 12} & 1 & 1.0661 & 1.2222 / 1.0682 & 1.486 & 1.1298 & 1.2337 & $+$0.105 & 0.041 & 0.495 & $+0.087$ \\
512 / 12 & 1 & 1.0683 & 1.2221 / 1.0701 & 1.485 & 1.1263 & 1.2223 & $+$0.104 & 0.042 & 0.480 & $+0.085$ \\
384 / 12 & 1 & 1.0708 & 1.2336 / 1.0738 & 1.514 & 1.1378 & 1.2494 & $+$0.110 & 0.043 & 0.452 & $+0.092$ \\
256 / 12 & 1 & 1.0829 & 1.2483 / 1.0807 & 1.567 & 1.1459 & 1.2632 & $+$0.119 & 0.045 & 0.471 & $+0.099$ \\
768 / 8 & 1 & 1.0716 & 1.2271 / 1.0709 & 1.505 & 1.1341 & 1.2352 & $+$0.109 & 0.042 & 0.476 & $+0.090$ \\
768 / 6 & 1 & 1.0778 & 1.2463 / 1.0780 & 1.572 & 1.1452 & 1.2639 & $+$0.119 & 0.044 & 0.441 & $+0.099$ \\
768 / 4 & 1 & 1.1118 & 1.2767 / 1.0949 & 1.716 & 1.1668 & 1.3018 & $+$0.139 & 0.048 & 0.437 & $+0.117$ \\
\midrule
\rowcolor{ourscolor} \textbf{768 / 12} & 8 & 1.0593 & 1.2249 / 1.0685 & 1.514 & 1.0861 & 1.1473 & $+$0.101 & 0.042 & 0.580 & $+0.086$ \\
512 / 12 & 8 & 1.0623 & 1.2288 / 1.0733 & 1.516 & 1.0863 & 1.1490 & $+$0.104 & 0.044 & 0.572 & $+0.088$ \\
384 / 12 & 8 & 1.0661 & 1.2412 / 1.0782 & 1.564 & 1.0932 & 1.1718 & $+$0.112 & 0.045 & 0.546 & $+0.096$ \\
256 / 12 & 8 & 1.0790 & 1.2572 / 1.0878 & 1.615 & 1.1044 & 1.1856 & $+$0.123 & 0.047 & 0.558 & $+0.104$ \\
768 / 8 & 8 & 1.0673 & 1.2388 / 1.0758 & 1.562 & 1.0937 & 1.1655 & $+$0.111 & 0.044 & 0.559 & $+0.095$ \\
768 / 6 & 8 & 1.0762 & 1.2611 / 1.0846 & 1.672 & 1.1064 & 1.1991 & $+$0.125 & 0.047 & 0.533 & $+0.107$ \\
768 / 4 & 8 & 1.1114 & 1.2965 / 1.1066 & 1.802 & 1.1285 & 1.2334 & $+$0.148 & 0.052 & 0.506 & $+0.126$ \\
\midrule
\rowcolor{ourscolor} \textbf{768 / 12} & 32 & 1.0583 & 1.2341 / 1.0734 & 1.551 & 1.0887 & 1.1580 & $+$0.105 & 0.045 & 0.603 & $+0.092$ \\
512 / 12 & 32 & 1.0614 & 1.2422 / 1.0777 & 1.586 & 1.0919 & 1.1624 & $+$0.109 & 0.046 & 0.600 & $+0.096$ \\
384 / 12 & 32 & 1.0654 & 1.2524 / 1.0840 & 1.612 & 1.0975 & 1.1778 & $+$0.116 & 0.047 & 0.572 & $+0.103$ \\
256 / 12 & 32 & 1.0802 & 1.2691 / 1.0952 & 1.670 & 1.1067 & 1.1904 & $+$0.127 & 0.049 & 0.588 & $+0.112$ \\
768 / 8 & 32 & 1.0669 & 1.2545 / 1.0819 & 1.624 & 1.0968 & 1.1796 & $+$0.116 & 0.047 & 0.586 & $+0.103$ \\
768 / 6 & 32 & 1.0765 & 1.2763 / 1.0947 & 1.722 & 1.1083 & 1.1968 & $+$0.129 & 0.049 & 0.559 & $+0.114$ \\
768 / 4 & 32 & 1.1138 & 1.3152 / 1.1160 & 1.914 & 1.1327 & 1.2391 & $+$0.153 & 0.054 & 0.539 & $+0.134$ \\
\bottomrule
\end{tabular}
\end{table}

\section{Limitations and Future Work}
\label{sec:appendix-limitations}

\paragraph{Scope.}
Every result is on FloorSet at 21 to 120 blocks with its synthetic constraint statistics, in the MIB-consistent variant of the data. The reference layouts leave 9.1\% of the boundary-coded blocks off their edge, which bounds what a reference-matching metric can say about that constraint; the soft metrics read the constraint annotations instead, and the hard cost counts the violation, so the two costs of Section~\ref{sec:method-problem} are the right instruments for it.

\paragraph{Prior arts.}
The four diffusion placers are their published networks and objectives ported into the FloorSet vocabulary and trained under our recipe; their post-hoc loops are ported at their published settings, and their own legalizers are outside the study. This is the comparison that isolates the generator and the refiner, and it measures what a position-only design leaves on the table on a floorplanning task. Reproducing their published numbers on their own benchmarks was not part of this work.

\paragraph{Model size.}
Trinity has 92.9M parameters against 5 to 24M for the ported networks. Sampling is 0.02 to 0.32 seconds of the 0.5 to 1.9 seconds per case, so the size does not appear in the time budget, and FlowPlace's own objective on our 85.8M backbone lands at a soft cost of 2.60 against 2.46 for its 6.3M network, which places the gap in the objective and the physics rather than in the parameter count. The size study of Appendix~\ref{sec:appendix-size} trains Trinity recipe at widths $d = 512$, 384, 256 and 192 with $L = 12$ layers (42.1M, 23.3M, 10.6M and 5.9M parameters, the sizes of the ported networks), at $L = 8$, 6 and 4 layers with $d = 768$ (62.2M, 46.8M and 31.5M) and at width 512 with 16 layers (56.0M) under one learning rate: width 512 reproduces Trinity within its seed spread and four more layers on it add 0.004, the two smaller widths give back part of the overlap and soft cost on the raw sample while staying far ahead of the prior arts, and the refiner closes most of the remaining gap. Depth matters more than width: the 8-layer model at 62.2M parameters is within 0.02 of the 23.3M model of width 384 on the raw soft cost and behind it after the refiner and on both distribution distances, and the 4-layer model is the one rung whose layouts are farther from the reference distribution than the plain control's. The information mover, a one-hop graph-convolution residual, moves information along the netlist one hop per layer, so a shallow model cannot carry the netlist far enough to place its blocks.

\paragraph{Seeds.}
Three seeds cover Trinity, Trinity at $\lambda{=}0$, the plain control and FlowPlace; every other arm is a single run. A sweep-set delta under 2 to 3\% on a constraint column or 1\% on the soft cost is within one seed std of its control, and the claims of the paper are 17 to 70 std wide.

\paragraph{The legalizer.}
The legalizer is a fixed, minimal stage so that every comparison stays about the generator and the refiner. Its expansion step prices residual overlap in outline area, and after legalization Trinity still carries an area gap of 10\% and a $\Vrel$ of 0.04 that best-of-$N$ recovers; a legalizer designed for numerically generated layouts, with wirelength in its objective, is the natural next stage of the pipeline and is future work.

\paragraph{Future work.}
Beyond the legalizer: the soft cost as a training signal for the sampler itself, distillation of the one-step model to remove the refiner from the time budget, and interactive use, where a designer edits one constraint and expects a local change; the overrides make that possible and it has not been evaluated.

\section{Broader Impact}
\label{sec:appendix-impact}

\method\ is a physical-design study. Its positive impact is methodological: a way to write the constraints of a layout problem once and reuse them across training, refinement and evaluation, a generator that reaches a usable layout in one sampler step, and a three-stage evaluation that turns a discrete final score into a continuous development signal. Each of these transfers to other layout problems with hard rules and continuous objectives. The pipeline introduces no data set of proprietary designs: it trains on FloorSet's synthetic layouts and evaluates on the public FloorSet validation set. A learned placer is used behind a verifying legalizer and a design-rule check, as it is here, and we see no misuse risk specific to this work beyond those shared by any design-automation tool.

\section{Data, Licenses, and Supplementary Material}
\label{sec:appendix-data}

\paragraph{Data.}
FloorSet \citep{DBLP:conf/iccad/MallappaM0PM24} is released by Intel Labs and distributed through HuggingFace under the Creative Commons Attribution 4.0 license (CC BY 4.0).\footnote{\url{https://huggingface.co/datasets/IntelLabs/FloorSet}} We use the Lite training set and the 100-case validation set, transcoded into a columnar store without modification of content. In the raw Lite training set 99.25\% of MIB groups have members with different $(w, h)$, which makes every MIB loss and metric ill-defined; the variant used throughout sets each such group to one shape, and every model of the paper, the prior arts included, trains and is scored on it. The 100 validation cases carry consistent MIB annotations as released,\footnote{\url{https://github.com/IntelLabs/FloorSet/issues/12}} so they are used unchanged. No proprietary design data is used at any stage.

\paragraph{Code and models.}
The prior arts are ported from their public repositories into one codebase; their weights are not reused. The annealer follows BBOPlace-Bench at its published constants \citep{xue2026bboplacebenchbenchmarkingblackboxoptimization}.

\paragraph{Supplementary material.}
The supplementary material is the code needed to reproduce training, sampling, refinement, legalization and scoring: the solver-agnostic domain library (data, scoring, legalization), the generator library (framings, the six terms, the backbone, the sampler, the refiner), the ported prior arts, the configuration file of every trained arm, refiner sweep and legalizer run, and the scripts that print every table of this paper from what a run writes. It carries no weights, no data and no result files, and a run reproduces the numbers from FloorSet alone.

\end{document}